\newcommand{\bea}{\begin{eqnarray}} 
\newcommand{\eea}{\end{eqnarray}}
\newcommand{\cmark}{\textcolor{brightGreen}{\ding{51}}}
\newcommand{\xmark}{\textcolor{brightRed}{\ding{55}}}
\def\tsc#1{\csdef{#1}{\textsc{\lowercase{#1}}\xspace}}
\begin{document}
\let\WriteBookmarks\relax
\def\floatpagepagefraction{1}
\def\textpagefraction{.001}

\shorttitle{Learning Deterministic Policies with Policy Gradients in Constrained Markov Decision Processes}

\shortauthors{A. Montenegro et~al.}

\title [mode = title]{Learning Deterministic Policies with Policy Gradients in Constrained Markov Decision Processes} 

\allowdisplaybreaks[4]

\pgfplotsset{compat=1.18}

\author[1]{Alessandro Montenegro}[
    orcid=0009-0000-2034-7106
]
\cormark[1]
\ead{alessandro.montenegro@polimi.it}

\affiliation[1]{organization={Politecnico di Milano},
    addressline={Piazza Leonardo Da Vinci 32}, 
    city={Milan},
    postcode={20133}, 
    country={Italy}}

\author[1]{Leonardo Cesani}[
    orcid=0009-0009-9329-5349
]
\ead{leonardo.cesani@polimi.it}

\author[1]{Marco Mussi}[
    orcid=0000-0001-8356-6744
]
\ead{marco.mussi@polimi.it}

\author[1]{Matteo Papini}[
    orcid=0000-0002-3807-3171
]
\ead{matteo.papini@polimi.it}

\author[1]{Alberto Maria Metelli}[
    orcid=0000-0002-3424-5212
]
\ead{albertomaria.metelli@polimi.it}

\cortext[cor1]{Corresponding author}

\begin{abstract}
    \emph{Constrained Reinforcement Learning} (CRL) addresses sequential decision-making problems where agents are required to achieve goals by maximizing the expected return while meeting domain-specific constraints. In this setting, \emph{policy-based} methods are widely used thanks to their advantages when dealing with continuous-control problems. These methods search in the policy space with an \emph{action-based} or a \emph{parameter-based} exploration strategy, depending on whether they learn the parameters of a stochastic policy or those of a stochastic hyperpolicy.  
    We introduce an exploration-agnostic algorithm, called \cpg, which enjoys \emph{global last-iterate convergence} guarantees under gradient domination assumptions.
    Furthermore, under specific noise models where the (hyper)policy is expressed as a \emph{stochastic perturbation} of the actions or of the parameters of an underlying deterministic policy, we additionally establish global last-iterate convergence guarantees of \cpg to the \emph{optimal deterministic policy}. This holds when learning a stochastic (hyper)policy and subsequently \emph{switching off the stochasticity} at the \emph{end} of training, thereby deploying a deterministic policy.
    Finally, we empirically validate both the action-based (\cpgae) and parameter-based (\cpgpe) variants of \cpg on constrained control tasks, and compare them against state-of-the-art baselines, demonstrating their effectiveness, in particular when deploying deterministic policies after training.\footnote{This work extends the preliminary version presented in~\citep{montenegro2024constraints} by providing a theoretical analysis and empirical evaluation of deterministic policy deployment.}
\end{abstract}

\begin{keywords}
Reinforcement Learning \sep Constrained RL \sep Last-Iterate Global Convergence \sep Deterministic Policies \sep Policy Gradients \sep Primal-Dual
\end{keywords}

\maketitle
\section{Introduction} \label{sec:intro}
\noindent When applying Reinforcement Learning~\citep[RL,][]{sutton2018reinforcement} to real-world scenarios, we aim at solving large-scale continuous control problems where, in addition to reaching a goal, it is necessary to meet structural or utility-based constraints. 
For instance, an autonomous-driving car has its main objective of getting to the desired destination (i.e., goal) while avoiding collisions, ensuring the safety of people on the streets, adhering to traffic rules, and respecting the physical requirements of the engine to avoid damaging it (i.e., constraints)~\citep{likmeta2020autonomous}.
To pursue such an objective, it is necessary to extend the RL problem formulation with the possibility to account for constraints.
Constrained Reinforcement Learning~\citep[CRL,][]{uchibe2007constrained} aims at solving this family of problems by employing RL techniques to tackle Constrained Markov Decision Processes~\citep[CMDPs,][]{altman1999constrained}, which provide an established and widely-used framework for modeling constrained control tasks.
The conventional CRL framework primarily focuses on constraints related directly to \emph{expected costs}~\citep{stooke2020responsive,ding2020natural,ying2022dual,ding2024last}. 


Among the RL methods applicable to CMDPs, Policy Gradients~\citep[PGs,][]{deisenroth13survey} are particularly appealing. 
Indeed, PGs have demonstrably achieved impressive results in continuous-control problems due to several advantages that make them well-suited for real-world applications. 
These advantages include the ability to handle continuous state and action spaces~\citep{peters2006policy}, resilience to sensor and actuator noise~\citep{gravell2020learning}, robustness in partially-observable environments~\citep{azizzadenesheli2018policy}, and the possibility of incorporating expert knowledge during the policy design phase~\citep{ghavamzadeh2006bayesian}, thus improving the efficacy, safety, and interpretability of the learned policy~\citep{likmeta2020autonomous}. 
PGs can be categorized into two key families depending on the way exploration is carried out in the policy space
~\citep{montenegro2024learning}. Following their taxonomy, we distinguish between the \emph{action-based} (AB) and the \emph{parameter-based} (PB) exploration paradigms. The former, employed by \reinforce~\citep{williams1992simple} and \gpomdp~\citep{baxter2001infinite}, focuses on directly learning the parameters of a parametric stochastic \emph{policy}. The latter, employed by \pgpe~\citep{SEHNKE2010551}, is tasked with learning the parameters of a parametric stochastic \emph{hyperpolicy} from which the parameters of the actual policy (often deterministic) are sampled.

PGs have gained significant popularity in solving constrained control problems~\citep{achiam2017constrained}.
Within this field, algorithms are primarily developed using \emph{primal-dual} methods~\citep{chow2017risk,tessler2018reward,ding2020natural,ding2021provably,bai2022achieving}, which can be formulated through \emph{Lagrangian optimization} of the primal (i.e., policy or hyperpolicy parameters) and dual variable (i.e., Lagrange multipliers).
Even though the distinction between the exploration paradigms is well known in the PG methods literature, the current state of the art in policy-based CRL focuses only on the \emph{action-based} exploration approach~\citep{achiam2017constrained,stooke2020responsive,bai2023achieving}, while the \emph{parameter-based} one remains unexplored~\citep{montenegro2024constraints}.
A critical challenge for policy-based Lagrangian optimization algorithms is ensuring convergence guarantees. Existing works have spent a notable effort in this direction~\citep{ying2022dual,gladin2023algorithm,ding2024last}. 
Recently,~\citet{ying2022dual}, \citet{gladin2023algorithm}, and~\citet{ding2024last} manage to ensure \emph{global last-iterate} convergence guarantees. However, these approaches are affected by some notable limitations: ($i$) the provided convergence rates depend on the cardinality of the state and action spaces, limiting their applicability to tabular CMPDs and preventing scaling to realistic continuous control problems; ($ii$) they focus on \emph{softmax} policies only, disregarding other more realistic policy models (\eg Gaussian ones); ($iii$) ensure convergence when a single constraint only is present \citep{ding2024last, rozada2024deterministic}.

Real-world problems not only require RL algorithms to produce policies satisfying constraints, but they also often demand the resulting policy to be deterministic to meet reliability, safety, and traceability requirements. To this end, PGs remain a recommended choice.
Considering unconstrained scenarios, the challenge of learning deterministic policies was first addressed by~\citep{silver2014deterministic}, that introduced the \emph{deterministic policy gradient} (DPG) method, later inspiring successful deep RL algorithms such as DDPG~\citep{lillicrap2016deterministic,fujimoto2018deterministic}. However, DPG-based approaches present notable drawbacks due to their off-policy nature, which makes the theoretical analysis complex and limits local convergence guarantees to restrictive assumptions~\citep{xiong2022deterministic}. 
More recently, \citet{montenegro2024learning} proposed a unified framework for deterministic policy deployment that bridges action-based and parameter-based exploration paradigms. Their approach is grounded on specific noise models that represent stochastic policies and hyperpolicies as \emph{perturbations} of the actions or the parameters of an underlying deterministic policy. The core idea is to \emph{train stochastic} (hyper)policies via policy gradient algorithms and \emph{deploy} their \emph{deterministic} counterparts by \emph{switching off} the stochasticity, thereby offering a principled methodology for incorporating deterministic policies within the policy gradient framework.
The latter contribution is a recent advancement in the unconstrained setting, while in the constrained one, despite the advancements in policy-based CRL, the integration of deterministic policies within this framework remains mostly unexplored.
A recent contribution in this direction is presented by~\citet{rozada2024deterministic}, who introduce the \emph{Deterministic Policy Gradient Primal-Dual} (D-PGPD) algorithm, a novel approach designed to directly learn deterministic policies in CMDPs with continuous state and action spaces. D-PGPD is a primal-dual algorithm that incorporates entropy regularization \wrt the policy and ridge regularization \wrt the dual variable. 
Unlike traditional stochastic policy-based CRL methods, which introduce exploration through policy randomness, D-PGPD learns a fully deterministic policy and relies solely on the inherent stochasticity of the environment. While this design ensures stable and consistent policy execution, it may face limitations in environments where intrinsic stochasticity is insufficient to explore the state-action space effectively.
Moreover, the algorithm is developed for single-constraint settings, which may reduce its applicability to real-world problems involving multiple constraints. In the considered setup, and under additional assumptions, such as boundedness of the action space, the sample-based version of D-PGPD requires a sample complexity of order $\cO(\epsilon^{-18})$ to the optimal feasible deterministic policy in the last iterate. This unsatisfactory rate highlights the need for further research on more efficient methods.


\paragraph{Original Contribution.}
The goal of this work is to introduce a framework for solving constrained continuous control problems using policy-based primal-dual algorithms that operate in both the \emph{action-based} and \emph{parameter-based} policy gradient exploration scenarios, while providing global last-iterate convergence guarantees with general (hyper)policy parameterization to both optimal stochastic (hyper)policies and deterministic policies. Specifically, the main contributions of this work can be summarized as follows:
\begin{itemize}
    \item In Section~\ref{sec:setting}, we introduce a general constrained optimization problem, which is agnostic \wrt both the \emph{action-based} and \emph{parameter-based} exploration paradigm.
    \item In Section~\ref{sec:alg_general}, we introduce \cpg, a general policy-based primal-dual algorithm optimizing the regularized Lagrangian function associated with the general constrained optimization problem shown in Section~\ref{sec:setting}. We show that, under (weak) gradient domination assumptions, it simultaneously achieves the following: ($i$) \emph{last-iterate} convergence guarantees to a globally optimal feasible policy (\ie satisfying all the constraints); ($ii$) compatibility with CMDPs having \emph{continuous state and action spaces}; ($iii$) the ability to handle \emph{multiple constraints}.
    \item In Section~\ref{sec:deterministic}, we restrict action-based and parameter-based exploration paradigms as white-noise perturbations applied to the actions or parameters of an underlying parametric deterministic policy. Based on this characterization, we define \emph{deterministic deployment} as the process of \emph{switching off} the noise~\citep{montenegro2024learning} in the learned stochastic (hyper)policies. Within this framework, we derive all the conditions required to ensure the last-iterate global convergence of \cpg, as presented in Section~\ref{sec:alg_general}. Finally, we show that this approach guarantees last-iterate convergence to the optimal deterministic policy in the constrained setting.
\end{itemize}
In Section~\ref{sec:experiments}, we numerically validate the parameter-based and the action-based variants of \cpg against state-of-the-art baselines in constrained control problems. 
Related work is discussed in Section~\ref{apx:related}.
Omitted proofs and additional technical results are reported in Appendix~\ref{apx:proofs}. 

\section{Preliminaries} \label{sec:setting}

In this section, we present the notation we will use throughout this manuscript and the preliminaries needed to understand its content. Moreover, after having introduced the the \AB and \PB exploration paradigms, we introduce the exploration-agnostic constrained optimization problem we aim at solving with the introduced method.

\paragraph{Notation.}
For a measurable set $\mathcal{X}$, we denote as $\Delta(\mathcal{X})$ the set of probability measures over $\mathcal{X}$. For $P \in \Delta(\mathcal{X})$, we denote with $p$ its density function \wrt a reference measure that we assume to exist whenever needed. With a little abuse of notation, we will interchangeably use $x \sim P$ or $x \sim p$ to express that random variable $x$ is distributed according to $P$. 
For $n, m \in \mathbb{N}$ with $n \le m$, we denote $\dsb{n} \coloneqq \left\{1,2, \ldots , n\right\}$ and with $\dsb{n,m} \coloneqq \left\{n,n+1, \ldots , m\right\}$.
For a vector $\bm{x} \in \mR^{d}$, we denote as $x_i$ the $i$-th component of $\bm{x}$.
For $a \in \mR$, we define $(a)^+ \coloneqq \max\{0, a\}$ and we extend the notation to vectors as $(\bm{x})^+=((x_1)^+,\dots,(x_d)^+)^\top$. Given a set $\mathcal{X} \subseteq \mathbb{R}^d$, we denote with $\Pi_{\mathcal{X}}$ the Euclidean-norm projection, i.e., $\Pi_{\mathcal{X}} \bm{x} \in \argmin_{\bm{y} \in \mathcal{X}} \|\bm{y} - \bm{x} \|_2$ for any $\bm{x} \in \mathbb{R}^d$. For two vectors $\bm{x},\bm{y} \in \mathbb{R}^d$, we denote with $\langle\bm{x},\bm{y} \rangle$ their inner product. A function $f:\mathbb{R}^d \rightarrow \mathbb{R}$ is $L_1$-Lipschitz Continuous ($L_1$-LC) if $|f(\bm{x}) - f(\bm{x}')| \le L_1 \| \bm{x}-\bm{x}'\|_2$ and $L_2$-Lipschitz Smooth ($L_2$-LS) if it is differentiable and $\|\nabla_{\bm{x}}f(\bm{x}) - \nabla_{\bm{x}}f(\bm{x}')\|_2 \le L_2 \| \bm{x}-\bm{x}'\|_2$ for every $\bm{x},\bm{x}'\in \mathbb{R}^d$.

\paragraph{Constrained Markov Decision Processes.}
A Constrained Markov Decision Process~\citep[CMDP,][]{altman1999constrained} with $U \in \mathbb{N}$ constraints is represented by $\cM_{\cC} \coloneqq \left( \cS, \cA, p, r, \{c_i\}_{i\in\dsb{U}}, \{b_i\}_{i\in\dsb{U}}, \phi_{0}, \gamma \right)$, where $\cS \subseteq \mR^{d_{\cS}}$ and $\cA \subseteq \mR^{d_{\cA}}$ are the measurable state and action spaces; $p: \cS \times \cA \to \Delta\left(\cS\right)$ is the transition model, where $p(\bm{s}' | \bm{s}, \bm{a})$ is the probability density of getting to state $\bm{s}'\in\cS$ given that action $\bm{a}\in\cA$ is taken in state $\bm{s}\in\cS$; $r: \cS \times \cA \to [-1, 0]$ is the reward function, where $r(\bm{s},\bm{a})$ is the instantaneous reward obtained by playing action $\bm{a}$ in state $\bm{s}$; $c_i: \cS \times \cA \to [0, 1]$ is the $i$-th cost function, where $c_i(\bm{s},\bm{a})$ is the $i$-th instantaneous cost obtained by playing action \hll{$\bm{a}$ in state $\bm{s}$; $b_i \in [0,J_{\max}]$ is the threshold for the $i$-th cost for every $i \in \dsb{U}$; $\phi_{0} \in \Delta(\cS)$ is the initial state distribution; and $\gamma \in [0,1]$ is the discount factor.
A trajectory $\tau$ of length $T \in \mathbb{N} \cup \{+\infty\}$\footnote{We admit $\gamma=1$ just when $T<+\infty$.} is a sequence of $T$ state-action pairs: $\tau = \left(\bm{s}_{\tau, 0}, \bm{a}_{\tau, 0}, \dots, \bm{s}_{\tau, T-1}, \bm{a}_{\tau, T-1}\right)$. 
The \emph{discounted return} over a trajectory $\tau$ is $R(\tau) \coloneqq \sum_{t=0}^{T-1} \gamma^t r(\bm{s}_{\tau, t}, \bm{a}_{\tau, t})$, while the $i$-th \emph{discounted cumulative cost} is $C_{i}(\tau) \coloneqq \sum_{t=0}^{T-1} \gamma^t c_{i}(\bm{s}_{\tau, t}, \bm{a}_{\tau, t})$.  We define the additional cost function $c_0(\bm{s}, \bm{a}) \coloneqq - r(\bm{s}, \bm{a}) \in [0,1]$ and $C_0(\tau) \coloneqq - R(\tau)$ just for presentation purposes. \hll{Note that, with $J_{\max} \coloneqq \frac{1-\gamma^T}{1-\gamma}$,} $R(\tau) \in [-J_{\max}, 0]$ and $C_i(\tau) \in [0,J_{\max}]$, for every $i \in \dsb{U}$ and trajectory $\tau$.} Our goal is to minimize $\E [C_0(\tau)]$ subject to the constraints $\E[C_i(\tau)] \le b_{i}$ for every $i \in \dsb{U}$.

\paragraph{\textcolor{vibrantBlue}{Action-based Policy Gradients.}}
Action-based (AB) PG methods focus on learning the parameters $\vtheta \in \Theta \subseteq \mR^{d_{\Theta}}$ of a parametric stochastic policy $\pi_{\vtheta}: \cS \to \Delta(\cA)$, where $\pi_{\vtheta}(\bm{a} | \bm{s})$ represents the probability density of selecting action $\bm{a}\in\cA$ being in state $\bm{s}\in\cS$.
At each step $t$ of the interaction with the environment, the stochastic policy is employed to sample an action $\bm{a}_t \sim \pi_{\vtheta_{t}}(\cdot | \bm{s}_t)$.
To assess the performance of $\pi_{\vtheta}$ \wrt the $i$-th cost function, with $i \in \dsb{0,U}$, we employ the \emph{AB performance index} $\Ja:\Theta \to \mR$, which is defined as $\Ja(\vtheta) \coloneqq \E_{\tau \sim \pA(\cdot | \vtheta)} \left[ C_{i}(\tau) \right]$, 
where $\pA(\tau, \vtheta) \coloneqq \phi_{0}(\bm{s}_{\tau, 0}) \prod_{t=0}^{T-1} \pi_{\vtheta}(\bm{a}_{\tau, t} | \bm{s}_{\tau, t}) p(\bm{s}_{\tau, t+1} | \bm{s}_{\tau, t}, \bm{a}_{\tau, t})$ is the density of trajectory $\tau$ induced by policy $\pi_{\vtheta}$.

\paragraph{\textcolor{vibrantRed}{Parameter-based Policy Gradients.}}
Parameter-based (PB) PG methods focus on learning the parameters $\vrho \in \mathcal{R} \subseteq \mR^{d_{\mathcal{R}}}$ of a parametric stochastic hyperpolicy $\nu_{\vrho} \in \Delta(\Theta)$. 
The hyperpolicy $\nu_{\vrho}$ is used to sample parameter configurations $\vtheta \sim \nu_{\vrho}$ to be plugged into an underlying parametric policy $\pi_{\vtheta}$, that will be then used for the interaction with the environment. Notice that $\pi_{\vtheta}$ can also be deterministic.
To assess the performance of $\nu_{\vrho}$ \wrt the $i$-th cost function, with $i \in \dsb{0,U}$, we employ the \emph{PB performance index} $\Jp:\mathcal{R} \to \mR$, which is defined as $\Jp(\vrho) \coloneqq \E_{\vtheta \sim \nu_{\vrho}} \left[\E_{\tau \sim \pA(\cdot | \vtheta)} \left[ C_{i}(\tau) \right]\right]$.

\paragraph{Constrained Optimization Problem.}
Having introduced the \textcolor{vibrantBlue}{AB} and \textcolor{vibrantRed}{PB} performance indices, we formulate a \emph{constrained optimization problem} (COP), which is agnostic \wrt the exploration paradigm:
\begin{align}\label{eq:gen_opt_prob}
    \min_{\vupsilon \in \mathcal{V}} J_{\dagger, 0}(\vupsilon) \quad \sucht \quad J_{\dagger, i}(\vupsilon) \le b_i , \; \; \forall i \in \dsb{U}, 
\end{align}
where $\dagger \in \{\text{\textcolor{vibrantBlue}{A}}, \text{\textcolor{vibrantRed}{P}}\}$ and $\vupsilon$ is a generic parameter vector belonging to the parameter space $\mathcal{V}$. 
When $\dagger = \text{\textcolor{vibrantBlue}{A}}$, we are considering the \textcolor{vibrantBlue}{AB} exploration paradigm, so $\mathcal{V} = \Theta$.
On the other hand, when $\dagger = \text{\textcolor{vibrantRed}{P}}$, we are in the \textcolor{vibrantRed}{PB} exploration paradigm, thus $\mathcal{V} = \mathcal{R}$.

\section{Last-Iterate Global Convergence of \cpg}
\label{sec:alg_general}

In this section, we present \cpg, a general primal-dual algorithm that optimizes a regularized version of the Lagrangian function (Section~\ref{sec:reglag}) associated with the COP of Equation~(\ref{eq:gen_opt_prob}). After having introduced the necessary assumptions (Section~\ref{sec:ass}), we show that \cpg exhibit \emph{dimension-free}\footnote{The \emph{dimension-free} property \citep{liu2021policy,ding2020natural,ding2022convergence,ding2024last} is achieved when the convergence rates do not depend on the cardinality of the state and/or action spaces.} \emph{last-iterate global} convergence guarantees (Section~\ref{sec:conv}). 
For notational convenience, in the rest of this section, we use $J_i$ in place of $J_{\dagger,i}$.

\subsection{Regularized Lagrangian Approach}\label{sec:reglag}
To solve the COP of Equation~(\ref{eq:gen_opt_prob}) we resort to the method of Lagrange multipliers~\citep{bertsekas2014constrained} introducing the Lagrangian function $\cL_{0} (\vupsilon, \vlambda) \coloneqq J_{ 0}(\vupsilon) + \sum_{i=1}^{U} \lambda_{i} \left( J_i(\vupsilon) - b_i \right) = J_0(\vupsilon) +\langle \vlambda, \mathbf{J}(\vupsilon) - \mathbf{b}\rangle$, where $\vupsilon \in \mathcal{V}$ is the primal variable and $\vlambda {\in \mathbb{R}^U_{\ge 0}}$ are the Lagrangian multipliers or dual variable,  $\mathbf{J} = (J_1,\dots,J_U)^\top$, and $\mathbf{b} = (b_1,\dots,b_U)^\top$. This allows rephrasing the COP in Equation~\eqref{eq:gen_opt_prob} as a min-max optimization problem $\min_{\vupsilon \in \mathcal{V}} \max_{\vlambda {\in \mathbb{R}^U_{\ge 0}}}  \cL_{0} (\vupsilon, \vlambda)$ and we denote with $H_0(\vupsilon) \coloneqq  \max_{\vlambda {\in \mathbb{R}^{U}_{\ge 0}}}  \cL_{0} (\vupsilon, \vlambda)$ the \emph{primal function} and its optimum with $H^*_0 \coloneqq \min_{\vupsilon \in \mathcal{V}} H_0(\vupsilon)$.
To obtain a \emph{last-iterate} convergence guarantee, we make use of a regularization approach. Specifically, let $\omega > 0$ be a regularization parameter, we define the \emph{$\omega$-regularized Lagrangian function} as follows:
\begin{align*}
    \cL_{\omega} (\vupsilon, \vlambda) \coloneqq  J_{0}(\vupsilon) + \sum_{i=1}^{U} \lambda_{i} \left( J_i(\vupsilon) - b_i \right) - \frac{\omega}{2} \left\| \vlambda \right\|_2^2 = J_{0}(\vupsilon) + \langle \vlambda,  \mathbf{J}(\vupsilon) - \mathbf{b} \rangle - \frac{\omega}{2} \left\| \vlambda \right\|_2^2 = \cL_{0}(\vupsilon,\vlambda) - \frac{\omega}{2} \left\| \vlambda \right\|_2^2.
\end{align*}
The ridge regularization makes $\cL_{\omega} (\vupsilon, \vlambda)$ a strongly concave function of $\vlambda$ at the price of a bias that is quantified in Lemmas~\ref{lemma:boundLagrangeMult}, \ref{lemma:regularizationBias}, and~\ref{lemma:33}. Thus, we address the $\omega$-regularized min-max optimization problem $\min_{\vupsilon \in \mathcal{V}} \max_{\vlambda {\in \Lambda}}  \cL_{\omega} (\vupsilon, \vlambda)$, where \hll{$\Lambda \coloneqq \{\vlambda \in \mathbb{R}_{\ge 0}^U \,:\, \|\vlambda \|_2\le \omega^{-1} \sqrt{U} J_{\max}\}$}, in replacement of the original (non-regularized) one. 
We stress that this choice of $\Lambda$ guarantees that the optimal Lagrange multipliers $\vlambda_{\omega}^{*}$ lie within $\Lambda$.
For this problem, we introduce the primal function $H_\omega(\vupsilon) \coloneqq \max_{\vlambda {\in \Lambda}} \cL_{\omega} (\vupsilon, \vlambda)$, that, thanks to the ridge regularization, admits the closed-form expression: 
\begin{align*}
    H_\omega(\vupsilon) =  J_{0}(\vupsilon) + \frac{1}{2\omega} \sum_{i=1}^{U}  \left(\left( J_i(\vupsilon) - b_i \right)^+\right)^2 =  J_{0}(\vupsilon) + \frac{1}{2\omega} \| (\mathbf{J}(\vupsilon) - \mathbf{b})^+\|^2_2, 
\end{align*}
where the optimal values of the Lagrange multipliers are given by: 
\begin{align*}
    \vlambda^*(\vupsilon) = \Pi_{\Lambda} \left( \frac{1}{\omega} (\mathbf{J}(\vupsilon) - \mathbf{b}) \right) = \frac{1}{\omega} (\mathbf{J}(\vupsilon) - \mathbf{b})^+ ,
\end{align*}
that is guaranteed to have norm smaller than $\omega^{-1} \sqrt{U} J_{\max}$. Furthermore, we define $H^*_\omega \coloneqq \min_{\vupsilon \in \mathcal{V}} H_\omega(\vupsilon)$. \cpg updates the parameters $(\vupsilon_k,\vlambda_k)$ with an \emph{alternate gradient descent-ascent} scheme for every iterate $k \in \mathbb{N}$:

\begin{tabular}{ll}
    \textbf{Primal Update}: & $\vupsilon_{k+1} \leftarrow \Pi_{\mathcal{V}} \left(\vupsilon_{k} - \zeta_{\vupsilon,k} \widehat{\nabla}_{\vupsilon} \cL_{\omega}(\vupsilon_{k},\vlambda_{k})\right),$ \\
    \textbf{Dual Update}: & $\vlambda_{k+1} \leftarrow \Pi_{\Lambda} \left(\vlambda_{k} + \zeta_{\vlambda,k} \widehat{\nabla}_{\vlambda} \cL_{\omega}(\vupsilon_{k+1},\vlambda_{k})\right),$
\end{tabular}

\noindent where $\zeta_{\vupsilon,k},\zeta_{\vlambda,k}>0$ are the learning rates and $\widehat{\nabla}_{\vupsilon} \cL_{\omega}(\vupsilon_{k},\vlambda_{k}), \widehat{\nabla}_{\vlambda} \cL_{\omega}(\vupsilon_{k},\vlambda_{k})$ are (unbiased) estimators of the gradients ${\nabla}_{\vupsilon} \cL_{\omega}(\vupsilon_{k},\vlambda_{k}), {\nabla}_{\vlambda} \cL_{\omega}(\vupsilon_{k},\vlambda_{k})$ of the regularized Lagrangian function. Notice that \cpg performs \emph{alternate} descent-ascent, as the update value for the dual variable is performed employing the \emph{already updated} primal variable.

\subsection{Assumptions}\label{sec:ass}
Before diving into the study of the convergence guarantees of \cpg, %
we list and motivate the assumptions necessary for our analysis.

\begin{ass}[Existence of Saddle Points]\label{asm:assunzione}
	There exist $\vupsilon^*_0 \in \mathcal{V}$ and $\vlambda^*_0 \in \mathbb{R}^U_{\ge 0}$ such that $ \cL_{0} (\vupsilon^*_0, \vlambda^*_0) = \min_{\vupsilon \in \mathcal{V}} \max_{\vlambda {\in \mathbb{R}^U_{\ge 0}}}  \cL_{0} (\vupsilon, \vlambda)$.
\end{ass}

Assumption~\ref{asm:assunzione} ensures that the value of the min-max problem is attained by a pair of primal-dual values $\vupsilon^*_0 \in \mathcal{V}$ and $\vlambda^*_0 \in \mathbb{R}^U_{\ge 0}$ which, consequently, satisfy $\cL_{0} (\vupsilon^*_0, \vlambda) \le \cL_{0} (\vupsilon^*_0, \vlambda^*_0) \le \cL_{0} (\vupsilon, \vlambda^*_0) $ for every $\vupsilon \in \mathcal{V}$ and $\vlambda \in \mathbb{R}^U_{\ge 0}$. Analogous assumptions have been considered by~\citet{yang2020minimax} and~\citet{ying2022dual}. Thus, $(\vupsilon^*_0,\vlambda^*_0)$ is a saddle point of the Lagrangian function $\cL_{0}$ and, consequently, \emph{strong duality} holds. Alternatively, as commonly requested in CRL works, assuming \emph{Slater's condition} combined with the requirement that the policy space covers all Markovian policies ensures strong duality~\citep[e.g.,][]{paternain2019constrained, ding2020natural, ding2024last}.\footnote{Assumption~\ref{asm:assunzione} combined with Slater's condition, i.e., the existence of a parametrization $\widetilde{\vupsilon} \in \mathcal{V}$ for which there exists $\xi > 0$ such that $J_i(\widetilde{\vupsilon}) - b < -\xi$ for all $i \in \dsb{U}$ (strictly feasible), allows providing an upper bound to the Lagrange multipliers $\|\vlambda^*_0\|_2 \le \xi^{-1}(J_0(\widetilde{\vupsilon}) - J_0(\vupsilon^*_0))$ using standard arguments~\citep[see][]{ying2022dual}.}
\begin{ass}[Weak $\psi$-Gradient Domination] \label{asm:wgd}
    Let $\psi \in [1,2]$. There exist $\alpha_{1} \in \mR_{>0}$ and $\beta_{1} \in \mR_{\ge0}$ such that, for every $\vupsilon \in \mathcal{V}$ and $\vlambda {\in \Lambda}$, it holds that:
    \begin{align}\label{eq:wgd}
        \left\| \nabla_{\vupsilon} \cL_{0}(\vupsilon, \vlambda) \right\|_2^{\psi} \ge \alpha_{1} \Big( \cL_{0}(\vupsilon, \vlambda) - \min_{\vupsilon' \in \mathcal{V}} \cL_{0}(\vupsilon', \vlambda) \Big) - \beta_{1}.
    \end{align}
\end{ass}

Assumption~\ref{asm:wgd} is customary in the convergence analysis of policy gradient methods and it is usually enforced on the objective $J_0$ only~\citep{yuan2022general,masiha2022stochastic,fatkhullin2023stochastic}. In particular, when $\beta_1=0$, we speak of strong $\psi$-gradient domination. In this form, for a generic exponent $\psi \in [1,2]$, this assumption has been employed by~\cite{masiha2022stochastic}. Particular cases are $\psi=1$, which corresponds to the standard weak \emph{gradient domination} (GD), while for $\psi=2$ we have the so-called \emph{Polyak-\L ojasiewicz} (PL) condition. Notice that Assumption~\ref{asm:wgd} is enforced on the non-regularized Lagrangian function $\cL_0$ (i.e., $\omega=0$). However, it is easy to realize that it holds for the regularized one $\cL_\omega$ by simply replacing $\cL_0$ with $\cL_\omega$ in Equation~\eqref{eq:wgd}.

\begin{remark}[When does Assumption~\ref{asm:wgd} holds?]\label{remark:interp}
    As remarked by~\cite{ding2024last}, the Lagrangian function, for a fixed value of $\vlambda$ can be regarded as the return of a new reward function $-C_0 - \langle \vlambda, \mathbf{C} \rangle$, where $\mathbf{C}=(C_1,\dots,C_U)^\top$. As a consequence, a sufficient condition for Assumption~\ref{asm:wgd} is when the selected class of policies guarantees the $\psi$-gradient domination \emph{regardless} of the reward function. For instance, in tabular environments with natural policy parametrization, i.e., $\pi_{\vtheta}(s) = \vtheta_s$ for every $s \in \mathcal{S}$, the PL condition ($\psi=2$ and $\beta_1=0$) holds~\citep{bhandari2024global}. Moreover, in tabular environments with softmax policy, i.e., $\pi_{\vtheta}(a|s) \propto \exp(\theta(s,a))$, GD ($\psi=1$ and $\beta_1=0$) holds~\citep{mei2020global}. This enables a meaningful comparison of our results when resorting to softmax policies~\citep[e.g.,][]{ding2020natural,gladin2023algorithm,ding2024last}. More in general, when ($i$) the Fisher information matrix induced by policy $\pi_{\vtheta}$ is non-degenerate for every $\vtheta \in \Theta$, i.e.,  $\mathbf{F}(\vtheta)= \E_{\pi_{\vtheta}}[\nabla_{\vtheta} \log \pi_{\vtheta}(\bm{a}|\bm{s})\nabla_{\vtheta} \log \pi_{\vtheta}(\bm{a}|\bm{s})^\top]\succeq \mu_{\text{F}} \mathbf{I}$ for some   $\mu_{\text{F}} > 0$
    and ($ii$) a compatible function approximation bias bound holds, i.e., $\E_{\pi_{\vtheta^*}}[(A^{\pi_{\vtheta}}(\bm{s},\bm{a}) - (1-\gamma) \bm{u}^\top \nabla_{\vtheta} \log \pi_{\vtheta}(\bm{a}|\bm{s}))^2] \le \epsilon_{\text{bias}}$ being $\bm{u} = \mathbf{F}(\vtheta)^{\dagger} \nabla_{\vtheta} J_0(\vtheta)$ and the advantage function $A^{\pi_{\vtheta}}$ computed \wrt reward $-c_0 - \langle \vlambda, \mathbf{c} \rangle$, the weak GD ($\psi=1$) holds with $\alpha_1 = G \mu^{-1}_F$ and $\beta_1 = (1-\gamma)^{-1}\sqrt{\epsilon_{\text{bias}}}$, where $G$ is such that $\|\nabla_{\vtheta} \log \pi_{\vtheta}(\bm{a}|\bm{s}) \|_2 \le G$~\citep{masiha2022stochastic}. 
\end{remark}

In principle, we could have enforced Assumption~\ref{asm:wgd} on the primal function $H_\omega(\vupsilon)$ only. However, this would come with two drawbacks: ($i$) the assumption would now depend explicitly on $\omega$; ($ii$) the considerations of Remark~\ref{remark:interp} would no longer hold. Nevertheless, in Lemma~\ref{lemma:wgdH}, we prove that Assumption~\ref{asm:wgd} induces an analogous property on the primal function $H_\omega(\vupsilon)$ in the regularized case.

\begin{ass}[Regularity of the Regularized Lagrangian $\cL_{0}$] \label{asm:L_grad_lip}
    There exist $L_1,L_{2},L_3 \in \mR_{>0}$ such that, for every $\vupsilon,\vupsilon' \in \mathcal{V}$, and for every $\vlambda,\vlambda' {\in \Lambda}$, the following holds:
    \begin{align}
        & \text{$\nabla_{\vlambda} \cL_0(\cdot, \vlambda)$ $L_{1}$-Lipschitz w.r.t. $\vupsilon$:} && \left\| \nabla_{\vlambda} \cL_{0}(\vupsilon, \vlambda) -  \nabla_{\vlambda} \cL_{0}(\vupsilon', \vlambda) \right\|_{2} \le L_1 \! \left\| \vupsilon - \vupsilon'\right\|_{2},\label{eq:lip} \\
        & \text{$\cL_0(\cdot,\vlambda)$ $L_{2}$-Smooth w.r.t. $\vupsilon$}: && \left\| \nabla_{\vupsilon} \cL_{0}(\vupsilon, \vlambda) - \nabla_{\vupsilon} \cL_{0}(\vupsilon', \vlambda) \right\|_{2} \le L_2 \| \vupsilon - \vupsilon'\|_{2},\label{eq:smooth}\\
        & \text{$\nabla_{\vupsilon}\cL_0(\vupsilon,\cdot)$ $L_{3}$-Lipschitz w.r.t $\vlambda$:} &&  \left\| \nabla_{\vupsilon} \cL_{0}(\vupsilon, \vlambda) - \nabla_{\vupsilon} \cL_{0}(\vupsilon, \vlambda') \right\|_{2} \le L_3 \| \vlambda - \vlambda'\|_{2}.\label{eq:lip2}
    \end{align}
\end{ass}

Notice that, similarly to Assumption~\ref{asm:wgd}, we realize that if Assumption~\ref{asm:L_grad_lip} holds for the non-regularized Lagrangian $\cL_{0}$, it also holds (with the same constants) for the regularized one $\cL_\omega$ for every $\omega>0$. The regularity conditions of Assumption~\ref{asm:L_grad_lip} are common in the literature~\citep{yang2020minimax} and mild when regarded from the policy optimization perspective.
Equation~\eqref{eq:lip} is satisfied whenever the constraint functions $J_i$ are Lipschitz continuous w.r.t. $\vupsilon$. Indeed, $\left\| \nabla_{\vlambda} \cL_{0}(\vupsilon, \vlambda) -  \nabla_{\vlambda} \cL_{0}(\vupsilon', \vlambda) \right\|_{2} = \left\| \mathbf{J}(\vupsilon) -   \mathbf{J}(\vupsilon') \right\|_{2}$. Equation~\eqref{eq:smooth} is fulfilled when the objective function $J_0$ and the constraint functions $J_i$  are smooth w.r.t. $\vupsilon$ and the Lagrange multipliers are bounded (guaranteed thanks to the projection $\Pi_\Lambda$), since $\left\| \nabla_{\vupsilon} \cL_{0}(\vupsilon, \vlambda) - \nabla_{\vupsilon} \cL_{0}(\vupsilon', \vlambda) \right\|_{2} \le |\nabla_{\vupsilon}J_0(\vupsilon) -  \nabla_{\vupsilon}J_0(\vupsilon')| + \sum_{i=1}^U \lambda_i |\nabla_{\vupsilon} J_i(\vupsilon) - \nabla_{\vupsilon} J_i(\vupsilon)|$. Finally, Equation~\eqref{eq:lip2} is fulfilled whenever functions $J_i$ admit bounded gradients, since $\left\| \nabla_{\vupsilon} \cL_{0}(\vupsilon, \vlambda) - \nabla_{\vupsilon} \cL_{0}(\vupsilon, \vlambda') \right\|_{2} \le \| \nabla_{\vupsilon} \mathbf{J}(\vupsilon) ( \vlambda - \vlambda')\|_2 $. 
 \hll{It is worth noting that $L_2$ depends on the norm of the Lagrange multipliers and, consequently, due to the projection operator $\Pi_{\Lambda}$, we have that $L_2 = \cO(\omega^{-1})$, whereas $L_1$ and $L_3$ are independent on $\omega$.\footnote{We highlight the dependencies on $\omega$ since, as we shall see later, we will set $\omega = \cO(\epsilon)$ having, consequently, an effect on the convergence rate.}}
Explicit conditions on the constitutive elements of the MDP and (hyper)policies to ensure Lipshitzness and smoothness of these quantities are reported in~\citep[][Appendix E]{montenegro2024learning} for both the \textcolor{vibrantBlue}{AB} and \textcolor{vibrantRed}{PB} cases.
These regularity properties enforced on $\cL_\omega$ are inherited by the primal function $H_\omega$ which results to be $\left( L_2 + L_1^2 \omega^{-1}\right)$-LS (Lemma~\ref{lemma:smoothH}).
Concerning the regularity of $\cL_\omega$ w.r.t. $\vlambda$, we observe that it is a quadratic function and, therefore, it is $\omega$-smooth and satisfies the PL condition, i.e., Assumption~\ref{asm:wgd} with $\psi=2$, $\beta_1=0$, and with $\alpha_1=\omega$ (Lemma~\ref{lemma:propLLambda}).

\begin{ass}[Bounded Estimator Variance]\label{ass:boundedVariance}
    For every $\vupsilon \in \mathcal{V}$ and $\vlambda \in \Lambda$, the estimators $\widehat{\nabla}_{\vupsilon} \cL_\omega(\vupsilon,\vlambda)$ and $\widehat{\nabla}_{\vlambda} \cL_\omega(\vupsilon,\vlambda)$ are unbiased for ${\nabla}_{\vupsilon} \cL_\omega(\vupsilon,\vlambda) =\nabla_{\vupsilon}J_0(\vupsilon) + \sum_{i=1}^U \lambda_i \nabla_{\vupsilon} J_i(\vupsilon)$ and ${\nabla}_{\vlambda} \cL_\omega(\vupsilon,\vlambda) = \mathbf{J}(\vupsilon) - \mathbf{b} - \omega \vlambda$ with bounded variance, i.e., there exist $V_{\vupsilon},V_{\vlambda} \in \mR_{\ge 0}$ such that:
    \begin{align*}
        \Var[\widehat{\nabla}_{\vupsilon} \cL_\omega(\vupsilon,\vlambda)] \le V_{\vupsilon}, \qquad \Var[\widehat{\nabla}_{\vlambda} \cL_\omega(\vupsilon,\vlambda)] \le V_{\vlambda}.
    \end{align*}
\end{ass}
\hll{Note that $V_{\vupsilon}$ typically depends on the Lagrange multipliers and, for standard sample-mean estimators, it is of order $V_{\vupsilon} = \cO(\omega^{-2})$ thanks to the projection operator. In contrast, $V_{\vlambda}$ is usually not affected by $\omega$ since the term $\omega \vlambda$ is not estimated and, thus, it does not affect the variance of the sample mean estimator.}
The variance of such estimators can be easily controlled by leveraging on the properties of the score function as done in previous works (see~\citealt{papini2022smoothing} and \citealt{montenegro2024learning}, Appendix~E).

\subsection{Convergence Analysis}\label{sec:conv}
We are now ready to tackle the convergence analysis of \cpg to the global optimum of the COP of Equation~\eqref{eq:gen_opt_prob}. To this end, we study the \emph{potential function} defined as $\mathcal{P}_k(\chi) \coloneqq a_k + \chi b_k$, where $a_k \coloneqq \E[H_\omega(\vupsilon_k) - H^*_\omega]$ and $b_k \coloneqq \E[H_\omega(\vupsilon_k) - \cL_\omega(\vupsilon_k,\vlambda_k)]$, $\chi \in (0,1)$ will be specified later, and the expectation is taken \wrt the stochastic process generating samples. Since $a_k,b_k \ge 0$, intuitively, if $\mathcal{P}_k(\chi) \approx 0$ we have that both $a_k,b_k \approx 0$ and, consequently, convergence is achieved. 
Let us start relating $\mathcal{P}_k(\chi)$, with the solution of the COP in Equation~\eqref{eq:gen_opt_prob}.

\begin{restatable}[Objective Function Gap and Constraint Violation]{thr}{conversion}\label{thr:conversion}
    Let $\epsilon \in \mR_{> 0}$. Under Assumption~\ref{asm:assunzione}, if $\mathcal{P}_k(\chi) \le \epsilon$, it holds that:
    \begin{align}
        \E[J_0(\vupsilon_k) - J_0(\vupsilon^*_0)] \le \epsilon + \frac{\omega}{2} \| \vlambda^*_0\|_2^2 , \qquad \E[(J_i(\vupsilon_k) - b_i)^+] \le 4\epsilon +  \omega \|\vlambda^*_0\|_2 , \quad \forall i \in \dsb{U}.
    \end{align}
\end{restatable}
\begin{proof}
Since $\mathcal{P}_k(\chi) \le \epsilon$, it follows that $a_k \le \epsilon$ and, consequently, $0 \le \E[H_\omega(\vupsilon_k) - H^*_\omega] \le \epsilon$. We start by bounding the norm of the dual variables:
\begin{align*}
    \| \vlambda^*(\vupsilon_k) \|_2 \le \| \vlambda^*_\omega \|_2 + \| \vlambda^*(\vupsilon_k) - \vlambda^*_\omega \|_2 \le \| \vlambda^*_\omega \|_2 + \frac{4}{\omega} (H_\omega(\vupsilon_k) - H^*_\omega),
\end{align*}
where we applied the triangular inequality and Lemma~\ref{lemma:techtech}, which proves that, for any $\vupsilon \in \mathcal{V}$, $H_{\omega}(\vupsilon)  - H_{\omega}^*  \ge \frac{\omega}{4} \|\vlambda^*(\vupsilon) -  \vlambda^*_\omega\|_2$.
The projection $\Pi_\Lambda$ is such that $\vlambda^*(\vupsilon) = \Pi_{\Lambda} \left(\frac{1}{\omega}  (\mathbf{J}(\vupsilon) - \mathbf{b}) \right) = \frac{1}{\omega}  (\mathbf{J}(\vupsilon) - \mathbf{b})^+$ and, consequently, we have:
\begin{align*}
    \|   (\mathbf{J}(\vupsilon_k) - \mathbf{b})^+\|_2 - \| (\mathbf{J}(\vupsilon^*_\omega) - \mathbf{b})^+\|_2 \le 4 (H_\omega(\vupsilon_k) - H^*_\omega).
\end{align*}
By the last inequality, together with Lemma~\ref{lemma:33}, which states that: 
\begin{align*}
        0 \le J_0(\vupsilon^*_0)  - J_0(\vupsilon^*_\omega)  \le \omega \| \vlambda^*_0\|_2^2 \quad \text{and} \quad \| (\mathbf{J}(\vupsilon^*_\omega)- \mathbf{b})^+\|_2 \le \omega  \| \vlambda^*_0\|_2,
\end{align*}
and applying the expectation on both sides, we have the following:
\begin{align*}
    \E[ \|  (\mathbf{J}(\vupsilon_k) - \mathbf{b})^+ \|_2 ] \le \| (\mathbf{J}(\vupsilon^*_\omega) - \mathbf{b})^+\|_2  + 4 \E[H_\omega(\vupsilon_k) - H^*_\omega] \le  \omega\|\vlambda^*_0\|_2 + {4\epsilon}.
\end{align*}
We obtain the constraint violation bound recalling that:
\begin{align*}
    \E[ \|  (\mathbf{J}(\vupsilon_k) - \mathbf{b}) )^+ \|_2] \ge  \left\|\E[    (\mathbf{J}(\vupsilon_k) - \mathbf{b})^+ ]\right\|_2 \ge \|\E[ (\mathbf{J}(\vupsilon_k) - \mathbf{b})^+]\|_\infty.
\end{align*}

For the objective function bound, let us consider the following derivation. By definition of $H_\omega(\vupsilon)$ and $\vlambda^*(\vupsilon)$ we have:
\begin{align*}
    J_0(\vupsilon_k) - J_0(\vupsilon^*_\omega) & = H_\omega(\vupsilon_k) - H^*_\omega - \frac{\omega}{2} \left( \| \vlambda^*(\vupsilon_k)\|_2^2 - \|\vlambda^*_\omega\|_2^2 \right).
\end{align*} 
Taking the expectation on both sides and upper bounding $\|\vlambda^*_\omega\|$ with $\|\vlambda^*_0\|$ from Lemma~\ref{lemma:boundLagrangeMult}, which states that $0 \le \cL_0(\vupsilon^*_0,\vlambda^*_0)- \cL_0(\vupsilon^*_\omega,\vlambda^*_\omega) \le \frac{\omega}{2} \left( \|\vlambda^*_0\|_2^2 - \|\vlambda^*_\omega\|_2^2\right)$, the following holds:
\begin{align*}
    \E[J_0(\vupsilon_k) - J_0(\vupsilon^*_\omega)] & = \E[H_\omega(\vupsilon_k) - H^*_\omega] - \frac{\omega}{2} \E[ \| \vlambda^*(\vupsilon_k)\|_2^2 - \|\vlambda^*_\omega\|_2^2 ]\\
    & \le \E[H_\omega(\vupsilon_k) - H^*_\omega] + \frac{\omega}{2} \|\vlambda^*_\omega\|_2^2 \\
    & \le \epsilon + \frac{\omega}{2} \|\vlambda^*_0\|_2^2. 
\end{align*}
The result is obtained by applying Lemma~\ref{lemma:33} (already stated in this proof) as follows:
\begin{align*}
    \E[J_0(\vupsilon_k) - J_0(\vupsilon^*_0)] = \E[J_0(\vupsilon_k) - J_0(\vupsilon^*_\omega)] + \underbrace{J_0(\vupsilon^*_\omega) - J_0(\vupsilon^*_0)}_{\le 0}.
\end{align*}
\end{proof}

Theorem~\ref{thr:conversion} justifies the study of the potential $\mathcal{P}_k(\chi)$ as a technical tool to ensure convergence. Indeed, whenever $\mathcal{P}_k(\chi) \le \epsilon$ both ($i$) 
the objective function gap and ($ii$) the constraint violation scale linearly with $\epsilon$ and with the regularization parameter $\omega$ of the regularized Lagrangian $\cL_\omega$ multiplied by the norm of the Lagrange multipliers of the non-regularized problem $\|\vlambda^*_0\|_2$, which are finite under Assumption~\ref{asm:assunzione}. This expression also suggests a choice of $\omega = \cO(\epsilon)$ to enforce an overall $\epsilon$ error on both quantities. Note that, from Theorem~\ref{thr:conversion}, it is immediate to employ a \emph{conservative constraint} ($b_i' \approx b_i - 4\epsilon - \omega \| \vlambda^*_0 \|_2$) to achieve zero constraint violation with no modification of the algorithm.

We are now ready to state the convergence guarantees for the potential function. 

\begin{restatable}[Convergence of $\mathcal{P}_K$]{thr}{convergencePot}\label{thr:convergencePot}
Under Assumptions~\ref{asm:wgd},~\ref{asm:L_grad_lip},~\ref{ass:boundedVariance}, for $\chi < 1/5$, sufficiently small $\epsilon$ and $\omega$, and a choice of \emph{constant} learning rates $\zeta_{\vupsilon},\zeta_{\vlambda}$, we have $\mathcal{P}_K(\chi) \le \epsilon + \beta_1/\alpha_1$ whenever:\footnote{In the context of this statement, the $\cO(\cdot)$ notation preserves dependences on $\epsilon$ and $\omega$ only.}
\begin{itemize}
	\item $K = \cO(\omega^{-1} \log (\epsilon^{-1}))$ if $\psi=2$ and the gradients are exact (i.e., $V_{\vupsilon}=V_{\vlambda}=0$);
	\item $K = \cO(\omega^{-1} \epsilon^{-\frac{2}{\psi}-1})$ if $\psi\in[1,2)$ and the gradients are exact (i.e., $V_{\vupsilon}=V_{\vlambda}=0$);
	\item \hll{$K = \cO(\omega^{-3} \epsilon^{-\frac{4}{\psi}+1})$ if $\psi\in[1,2]$ and the gradients are estimated (i.e., $V_{\vupsilon} = \cO(\omega^{-2})$ and $V_{\vlambda}= \cO(1) $).}
\end{itemize}
\end{restatable}
\begin{proofsketch}
    The proof of Theorem~\ref{thr:convergencePot} is quite technical, thus we report here just its sketch, which we divide into five parts.

    \textbf{Part I: bounding $a_{k}$.}~~The \emph{first part} of the proof consists of bounding $\E[a_{k+1} \mid \mathcal{F}_{k-1}] = \E[H_{\omega}(\vupsilon_{k+1}) - H^{*} \mid \mathcal{F}_{k-1}]$, considering to be at a generic $k$\textsuperscript{th} iterate of \cpg with $\mathcal{F}_{k-1}$ a filtration up to iteration $k-1$. In particular, by exploiting the update rule of \cpg, via Lemma~\ref{lemma:smoothH} stating that $H_{\omega}$ is $L_{H}$-LS, and by selecting $\zeta_{\vupsilon,k} \le L_H$, we can conclude that:
    \begin{align*}
       &\E \left[H_\omega(\vupsilon_{k+1})  | \mathcal{F}_{k-1} \right] - H^*\\ 
       &\le H_\omega(\vupsilon_{k}) - H^* - \frac{\zeta_{\vupsilon, k}}{2} \left\| \nabla_{\vupsilon} H_\omega(\vupsilon_{k}) \right\|_2^2 + \frac{\zeta_{\vupsilon, k}}{2} \left\| \nabla_{\vupsilon} \cL_{\omega} (\vupsilon_{k}, \vlambda_{k}) - \nabla_{\vupsilon} H_\omega(\vupsilon_{k}) \right\|_2^2 + \frac{L_H}{2} \zeta_{\vupsilon, k}^2 V_{\vupsilon},
    \end{align*}
    where the constant $V_{\vupsilon}$, coming from Assumption~\ref{ass:boundedVariance}, is such that $\Var[\nabla_{\vupsilon} \cL_{\omega}(\vupsilon_{k}, \vlambda_{k})] \le V_{\vupsilon}$.

    \textbf{Part II: bounding $b_{k}$.}~~Similarly to what shown in the first part, the \emph{second part} consists of bounding $\E[b_{k+1} \mid \mathcal{F}_{k-1}] = \E[H_{\omega}(\vupsilon_{k-1}) - \cL_{\omega}(\vupsilon_{k+1},\vlambda_{k+1}) \mid \mathcal{F}_{k-1}]$. To do so, we exploit Assumption~\ref{asm:L_grad_lip} stating that $\cL_{\omega}$ is $L_{2}$-LS. In particular, in Lemma~\ref{lemma:propLLambda}, we show that $\cL_{\omega}$ is $\omega$-LS and that it fulfills the PL condition with constant $\omega$. From these observations, together with the update rule of \cpg and the selection $\zeta_{\vlambda,k} \le 1/\omega$, we conclude that:
    \begin{align*}
        &\E \left[ H_\omega(\vupsilon_{k+1}) - \cL_{\omega}(\vupsilon_{k+1}, \vlambda_{k+1})  | \mathcal{F}_{k-1} \right] \\
        &\le \left( 1 - \frac{\zeta_{\vlambda, k}}{2} \omega \right) \left( H_\omega(\vupsilon_{k}) - \cL_{\omega}(\vupsilon_{k}, \vlambda_{k}) \right) + \left( 1 - \frac{\zeta_{\vlambda, k}}{2} \omega  \right) \left(\zeta_{\vupsilon, k} \left(1 + \frac{L_2}{2} \zeta_{\vupsilon, k}\right) \left\| \nabla_{\vupsilon} \cL_{\omega}(\vupsilon_{k}, \vlambda_{k}) \right\|_2^2 + \frac{L_2}{2} \zeta_{\vupsilon, k}^2 V_{\vupsilon}\right) \\
        &\quad + \left( 1 - \frac{\zeta_{\vlambda, k}}{2} \omega  \right) \left( - \frac{\zeta_{\vupsilon, k}}{2} \left\| \nabla_{\vupsilon} H_\omega(\vupsilon_{k}) \right\|_2^2 + \frac{\zeta_{\vupsilon, k}}{2} \left\| \nabla_{\vupsilon} \cL_{\omega} (\vupsilon_{k}, \vlambda_{k}) - \nabla_{\vupsilon} H_\omega(\vupsilon_{k}) \right\|_2^2 + \frac{L_H}{2} \zeta_{\vupsilon, k}^2 V_{\vupsilon} \right) + \frac{\omega}{2} \zeta_{\vlambda, k}^2 V_{\vlambda}.
    \end{align*}

    \textbf{Part III: bounding $\mathcal{P}_{k}(\chi)$.}~~Having bounded separately $a_{k}$ and $b_{k}$, and being $\mathcal{P}_{k}(\chi) = a_{k} + \chi b_{k}$, we can just put together the previously obtained results to have a bound on $\mathcal{P}_{k}(\chi)$. Moreover, exploiting Assumption~\ref{asm:L_grad_lip} and by noticing that $\cL_{\omega}$ satisfies the quadratic growth condition (since Lemma~\ref{lemma:propLLambda} states that $\cL_{\omega}$ satisfies the PL condition with $\omega$ as constant), we obtain the following inequality:
    \begin{align*}
        &a_{k+1} + \chi b_{k+1} \\
        &\le a_{k} + \chi \left( 1 - \frac{\zeta_{\vlambda, k}}{2} \omega \right) b_{k} \\
        &\quad + \left(2 \zeta_{\vupsilon, k} \left(1 + \frac{L_2}{2} \zeta_{\vupsilon, k}\right) \chi \left( 1 - \frac{\zeta_{\vlambda, k}}{2} \omega \right) \right.
        \end{align*}
    \begin{align*}
        &\qquad \left. - \frac{\zeta_{\vupsilon, k}}{2} \left( 1 + \chi\left( 1 - \frac{\zeta_{\vlambda, k}}{2}\omega\right) \right) \right) \E\left[\left\| \nabla_{\vupsilon} H_\omega(\vupsilon_{k}) \right\|_2^2\right] \\
        &\quad + \left( 2 \zeta_{\vupsilon, k} \left(1 + \frac{L_2}{2} \zeta_{\vupsilon, k}\right) \chi \left( 1 - \frac{\zeta_{\vlambda, k}}{2} \omega \right) + \frac{\zeta_{\vupsilon, k}}{2} \left( 1 + \chi\left( 1 - \frac{\zeta_{\vlambda, k}}{2}\omega\right) \right) \right)\frac{4 L_3^2}{\omega} b_{k}  \\
        &\quad + \frac{\zeta_{\vupsilon, k}^2}{2} \left( L_H  + \chi \left( 1 - \frac{\zeta_{\vlambda, k}}{2} \omega \right) (L_H + L_2) \right) V_{\vupsilon} + \chi \frac{\omega}{2} \zeta_{\vlambda, k}^2 V_{\vlambda}.
    \end{align*}

    \textbf{Part IV: applying the $\psi$-gradient domination.}~~Form the previously highlighted inequality, we aim at recovering a recursive equation in $\mathcal{P}_{k}(\chi)$. To this end, we apply Assumption~\ref{asm:wgd}, from which it follows that
    \begin{align*}
        \E \left[\left\| \nabla_{\vupsilon} H_\omega(\vupsilon) \right\|_2^2 \right] \ge \alpha_1^{\frac{2}{\psi}} &\max\left\{0, \; \E \left[H_\omega(\vupsilon) - \widetilde{H}^* \right] \right\}^{\frac{2}{\psi}},
    \end{align*}
    where $\widetilde{H}^* \coloneqq H^{*} + \beta_{1}/\alpha_{1}$. Now, exploiting this last result and enforcing $\chi \le \min\{ 1/5, \; 1/(\max_{k\in \dsb{K}} b_{k})\}$, after many algebraic steps, we obtain the following inequality:
    \begin{align*}
        \widetilde{P}_{k+1}(\chi) \le \widetilde{P}_k(\chi)  - \widetilde{C} \max\left\{ 0, \; \widetilde{P}_{k}(\chi) \right\}^{\frac{2}{\psi}} + \widetilde{V},
    \end{align*}
    where $\widetilde{P}_{k}(\chi) \coloneqq a_{k} + \chi b_{k} - \beta_{1}/\alpha_{1}$, $\widetilde{C} \coloneqq  2^{1- \frac{1}{\psi}} \frac{\zeta_{\vupsilon, k}\alpha_1^{\frac{2}{\psi}}}{2}$, and $\widetilde{V} \coloneqq \frac{\zeta_{\vupsilon, k}^2}{2}\left((1+2\chi)L_2 + \hll{(1+\chi)}\frac{L_1^2}{\omega}\right) V_{\vupsilon} + \chi  \frac{\omega}{2}\zeta_{\vlambda, k}^2 V_{\vlambda}$. We highlight that to get to this result, the learning rates have been selected as:
    \begin{align*}
        \zeta_{\vupsilon, k} \le \min\left\{ \frac{1}{L_H} , \frac{1}{L_2}, \frac{\omega^2 \chi \zeta_{\vlambda, k}}{(1+\chi)\omega \alpha_1^{\frac{2}{\psi}}  + 4L^2_3(1+7\chi)} \right\} \quad \text{and} \quad \zeta_{\vlambda, k} \le \frac{1}{\omega}.
    \end{align*}

    \textbf{Part V: rates computation.}~~Equipped with the recursive inequality reported in Part IV, we just have to compute the rates guaranteeing $\mathcal{P}_{K}(\chi) \le \epsilon + \beta_{1}/\alpha_{1}$. In particular, we first analyze the \emph{exact gradient} case, \ie $\widetilde{V} = 0$, for when $\psi = 2$, and $\psi \in [1,2)$. Then, we do the same in the case of \emph{estimated gradients}, \ie $\widetilde{V} > 0$. All the results are reported in Table~\ref{tab:summary}.
\end{proofsketch}

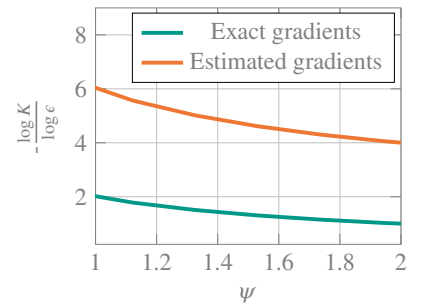
\begin{wrapfigure}{r}{0.33\textwidth}
\centering
\resizebox{\linewidth}{!}{
        \begin{tikzpicture}
        \begin{axis}[width=6cm, height=5cm, xmin=1, xmax=2, ymax=9, samples=50, xlabel=$\psi$, ylabel=-$\frac{\log K}{\log \epsilon} $, grid=both]
          \addplot[vibrantTeal, ultra thick] (x,2/x);\addlegendentry{Exact gradients}
          \addplot[vibrantOrange,  ultra thick] (x, 4/x+2);\addlegendentry{Estimated gradients}
        \end{axis}
    \end{tikzpicture}}

    \vspace{-0.2cm}

    \caption{Plot of the exponents of $\epsilon^{-1}$ in the cases of Table~\ref{tab:summary}.}\label{fig:summary}
\end{wrapfigure}

Some comments are in order. First, Theorem~\ref{thr:convergencePot} holds for a specific choice of the constant $\chi \in (0,1/5)$ defining the potential function $\mathcal{P}_K(\chi)$. Second, the presented rates hold for sufficiently small values of $\epsilon$ and $\omega$. This is just for presentation purposes, as the sample complexity\footnote{Theorem~\ref{thr:convergencePot} provides an \emph{iteration-complexity} guarantee. Concerning the \emph{estimated gradient} case, this translates into a \emph{sample complexity} guarantee since we are allowed to estimate gradients with a single sample.} can only improve if we increase the values of $\epsilon$ and $\omega$. Third, in the proof, an explicit expression of the learning rates is provided. Concerning their orders, for the case of exact gradients, we choose $\zeta_{\vlambda} = \omega^{-1}$ and $\zeta_{\vupsilon} = \cO(\omega)$, whereas for the estimated gradient case, we choose $\zeta_{\vlambda} = \cO(\omega \epsilon^{2/\psi})$ and $\zeta_{\vupsilon} = \cO(\omega^3 \epsilon^{2/\psi})$.

Assuming $\omega$ to be a constant, we observe that both learning rates display the same dependence on $\epsilon$ and, consequently, they are in \emph{single-time scale}. However, as we have seen in Theorem~\ref{thr:conversion}, in order to obtain guarantees on the original non-regularized problem, we have to set $\omega = \cO(\epsilon)$, leading to a \emph{two-time scales} algorithm. Fourth, we observe that, for both exact and estimated gradients, the sample complexity degrades as the constant $\psi$ of the gradient domination moves from $2$ to $1$, delivering the smallest sample complexity when the PL condition holds.
Finally, we highlight that \cpg jointly: ($i$) converges to the global optimum of the COP problem of Equation~\eqref{eq:gen_opt_prob}; ($ii$) delivers a \emph{last-iterate} guarantee; ($iii$) has no dependence on the cardinality of the state or action spaces, making it completely \emph{dimension-free}.
Table~\ref{tab:summary} and Figure~\ref{fig:summary} summarize the results of Theorem~\ref{thr:convergencePot}.

\subsection{\textcolor{vibrantBlue}{Action-based} and \textcolor{vibrantRed}{Parameter-based} Variants of \cpg} \label{subsec:alg-specific}

So far, we have focused on the exploration-agnostic formulation of the proposed method. In this section, we introduce its action-based and parameter-based variants, namely \cpgae and \cpgpe.
These variants differ in the considered cost functions, $\Ja$ or $\Jp$ for every $i \in \dsb{0,U}$, and the estimators employed to update the optimization variables.
We recall that the general Lagrangian function for the problem in Equation~(\ref{eq:gen_opt_prob}) is the following:
\begin{align*}
    \cL_{\dagger, \omega} (\vupsilon, \vlambda) \coloneqq J_{\dagger, 0}(\vupsilon ) + \sum_{u=1}^{U} \lambda_{u} \left( J_{\dagger, u}(\vupsilon ) - b_u \right) - \frac{\omega}{2} \left\| \vlambda \right\|_2^2,
\end{align*}
being $\vupsilon \in \mathcal{V}$ a generic parameter vector to be optimized. As highlighted in Section~\ref{sec:setting}, in the case of action-based exploration ($\dagger = \text{\textcolor{vibrantBlue}{A}}$) $\vupsilon$ corresponds to the policy parameterization $\vupsilon = \vtheta \in \Theta$, while in the case of parameter-based exploration ($\dagger = \text{\textcolor{vibrantRed}{P}}$) it coincides with the hyperpolicy parameterization $\vupsilon = \vrho \in \mathcal{R}$. In the following, we are going to consider the gradients \wrt both the parameterization $\vupsilon$ and the Lagrange multipliers $\vlambda$, having the following explicit forms in the exploration-agnostic setting:
\begin{align*}
    \nabla_{\vupsilon} \cL_{\dagger,\omega}(\vupsilon,\vlambda) = \nabla_{\vupsilon} J_{\dagger, 0}(\vupsilon ) + \sum_{u=1}^{U} \lambda_{u} \nabla_{\vupsilon} J_{\dagger, u}(\vupsilon ) \quad \text{and} \quad \nabla_{\vlambda} \cL_{\dagger,\omega}(\vupsilon,\vlambda) = \mathbf{J}_{\dagger}(\vupsilon) + \mathbf{b} + \omega \vlambda,
\end{align*}
where $\mathbf{J}_{\dagger}(\vupsilon) \coloneqq (J_{\dagger,1}(\vupsilon), \ldots, J_{\dagger,U}(\vupsilon))^{\top}$ and $\mathbf{b} \coloneqq (b_{1}, \ldots, b_{U})^{\top}$.

\begin{table}[t]
{\renewcommand{\arraystretch}{1.5}
\medmuskip=0mu
\thinmuskip=0mu
\thickmuskip=0mu
\begin{tabular}{|c||c|c|c||c|c|c|}
    \hline
    & \multicolumn{3}{|c||}{\cellcolor{vibrantTeal!40} \bfseries Exact Gradients} & \multicolumn{3}{|c|}{\cellcolor{vibrantOrange!40} \bfseries Estimated Gradients} \\\cline{2-7}
    & \cellcolor{vibrantTeal!20} $\psi=1$ (GD) & \cellcolor{vibrantTeal!20} $\psi \in (1,2)$ & \cellcolor{vibrantTeal!20} $\psi=2$ (PL) & \cellcolor{vibrantOrange!20} $\psi=1$ (GD) & \cellcolor{vibrantOrange!20} $\psi \in (1,2)$ & \cellcolor{vibrantOrange!20} $\psi=2$ (PL)   \\ \hline\hline
    \cellcolor{vibrantGrey!20} \bfseries Fixed $\omega$ & $\omega^{-1} \epsilon^{-1}$ & $\omega^{-1} \epsilon^{-\frac{2}{\psi}+1}$ & $\omega^{-1} \log (\epsilon^{-1}) $ &$\omega^{-3} \epsilon^{-3}\log(\epsilon^{-1})$ & $\omega^{-3} \epsilon^{-\frac{4}{\psi}+1}\log(\epsilon^{-1})$ & $\omega^{-3} \epsilon^{-1}\log(\epsilon^{-1})$ \\ \hline
    \cellcolor{vibrantGrey!20} \bfseries $\omega= \cO(\epsilon)$ & $ \epsilon^{-2}$ & $ \epsilon^{-\frac{2}{\psi}}$ & $\epsilon^{-1} \log (\epsilon^{-1})$ & $\epsilon^{-6}\log(\epsilon^{-1})$ & $\epsilon^{-\frac{4}{\psi}-2}\log(\epsilon^{-1})$ & $\epsilon^{-4}\log(\epsilon^{-1})$ \\ \hline
\end{tabular}
}
\caption{Summary of the sample complexity results of \cpg when either keeping $\omega$ fixed or setting it as $\omega = \cO(\epsilon)$
.}
\label{tab:summary}
\end{table}

\paragraph{\textcolor{vibrantBlue}{Action-based} Exploration for \emph{\cpg}.}
The action-based variant of \cpg, referred to as \cpgae, aims at optimizing the parameters $\vtheta$ of a parametric stochastic policy $\pi_{\vtheta}$. In particular, for every $i \in \dsb{0,U}$, we recall the definition of action-based cost functions: $\Ja(\vtheta) = \E_{\tau \sim \pA(\cdot | \vtheta)} \left[ C_i(\tau) \right]$, where $\pA(\tau, \vtheta)$ is the probability density of trajectory $\tau$ induced by the policy $\pi_{\vtheta}$.

Considering the gradient \wrt the parameters $\vtheta$, the following holds:
\begin{align*}
    \nabla_{\vtheta} \Ja(\vtheta) &= \nabla_{\vtheta} \E_{\tau \sim \pA(\cdot | \vtheta)} \left[ C_i(\tau) \right] = \E_{\tau \sim \pA(\cdot \mid \vtheta)} \left[ \nabla_{\vtheta} \log \pA(\tau , \vtheta) C_i(\tau) \right].
\end{align*}

As for standard PGs~\citep{williams1992simple,baxter2001infinite}, we can switch to its sample-based version to obtain an unbiased estimator of the gradient. In particular, we resort to a GPOMDP-like version~\citep{baxter2001infinite} for the proposed estimator:
\begin{align*}
    \widehat{\nabla}_{\vtheta} \Ja(\vtheta) \coloneqq \frac{1}{N} \sum_{j=1}^{N} \sum_{t=0}^{T-1} \left( \sum_{l=0}^{t} \nabla_{\vtheta} \log \pi_{\vtheta}(\bm{a}_{\tau_j,l}, \bm{s}_{\tau_j,l}) \right) \gamma^{t} c_{i}(\bm{s}_{\tau_{j},t}, \bm{a}_{\tau_{j},t}),
\end{align*}
where $N$, called batch size, is the number of independent trajectories $\{\tau_{j}\}_{j=1}^{N}$ such that $\tau_{j} \sim \pA(\cdot \mid \vtheta)$. We just consider a GPOMDP-like version of the estimator, since the REINFORCE-like one would suffer from a higher variance as for standard PGs~\citep{papini2022smoothing}. Thus, considering that:
\begin{align*}
    \rLa(\vtheta,\vlambda) = \textcolor{vibrantBlue}{J_{\text{A},0}}(\vtheta) + \sum_{u=1}^{U} \lambda_{u} \left(\Ja(\vtheta) - b_{u} \right) - \frac{\omega}{2} \left\| \vlambda \right\|_{2}^{2},
\end{align*}
\cpgae employs the following estimator to update the primal variable $\vtheta$:
\begin{align*}
    \widehat{\nabla}_{\vtheta} \rLa(\vtheta,\vlambda) = \widehat{\nabla}_{\vtheta} \textcolor{vibrantBlue}{J_{\text{A},0}}(\vtheta) + \sum_{u=1}^{U} \lambda_{u} \widehat{\nabla}_{\vtheta} \Ja(\vtheta).
\end{align*}

If we now focus on the action-based gradient \wrt Lagrange multipliers, we have the following:
\begin{align*}
    \nabla_{\vlambda} \rLa(\vtheta,\vlambda) = \textcolor{vibrantBlue}{\mathbf{J}_{\text{A}}}(\vtheta) - \mathbf{b} - \omega \vlambda,
\end{align*}
where $\textcolor{vibrantBlue}{\mathbf{J}_{\text{A}}}(\vtheta) \coloneqq (\textcolor{vibrantBlue}{J_{\text{A},1}}(\vtheta), \ldots, \textcolor{vibrantBlue}{J_{\text{A},U}}(\vtheta))^{\top}$. Thus, its sample-based version used by \cpgae to update the dual variable is the following:
\begin{align*}
    \frac{\widehat{\partial}}{\partial \lambda_i} \rLa (\vtheta, \vlambda) = \frac{1}{N} \sum_{j=1}^{N} C_i(\tau_j)  - b_i - \omega \lambda_i,
\end{align*}
where the $N$ independent trajectories $\{\tau_j\}_{j=1}^{N}$ are such that $\tau_j \sim \pA(\cdot \mid \vtheta)$.

\paragraph{\textcolor{vibrantRed}{Parameter-based} Exploration for \emph{\cpg}.}
The parameter-based variant of \cpg, referred to as \cpgpe, aims at optimizing the parameters $\vrho$ of a parametric stochastic hyperpolicy $\nu_{\vrho}$, used to sample the parameters $\vtheta$ for an underlying parametric policy $\pi_{\vtheta}$ (which can also be deterministic, as we shall see in the next section). In particular, for every $i \in \dsb{0,U}$, we recall the definition of parameter-based cost functions: $\Jp(\vrho) = \E_{\vtheta \sim \nu_{\vrho}} \left[\E_{\tau \sim \pA(\cdot | \vtheta)} \left[ C_i(\tau) \right]\right]$, where $\pA(\tau, \vtheta)$ is the probability density of trajectory $\tau$ induced by the policy $\pi_{\vtheta}$.

Considering the gradient \wrt the parameters $\vtheta$, the following holds:
\begin{align*}
    \nabla_{\vrho} \Jp(\vrho) &= \nabla_{\vrho} \E_{\vtheta \sim \nu_{\vrho}} \left[ \E_{\tau \sim \pA(\cdot | \vtheta)} \left[ C_i(\tau) \right] \right] = \E_{\vtheta \sim \nu_{\vrho}} \left[ \nabla_{\vrho} \log \nu_{\vrho}(\vtheta) \E_{\tau \sim \pA(\cdot, \vtheta)} \left[ \nabla_{\vtheta} \log \pA(\tau | \vtheta) C_i(\tau) \right] \right].
\end{align*}

As for the prototypical parameter-based method PGPE~\citep{SEHNKE2010551}, we switch to its sample-based version to obtain an unbiased estimator of the gradient:
\begin{align*}
    \widehat{\nabla}_{\vrho} \Jp(\vrho) \coloneqq \frac{1}{N} \sum_{j=1}^{N} \nabla_{\vrho} \log \nu_{\vrho}(\vtheta_{j}) C_{i}(\tau_{j}),
\end{align*}
where $N$, called batch size, is the number of independent parameter-trajectory pairs $\{(\vtheta_{j},\tau_{j})\}_{j=1}^{N}$ such that $\tau_{j} \sim \pA(\cdot \mid \vtheta_{j})$ and $\vtheta_{j} \sim \nu_{\vrho}$. We highlight that, for parameter-based exploration, we sample $N$ policy parameterizations $\vtheta_{j}$ from the hyperpolicy $\nu_{\vrho}$, for each of which we sample a single trajectory $\tau_j \sim \pA(\cdot \mid \vtheta_{j})$. That being said, considering that:
\begin{align*}
    \rLp(\vrho,\vlambda) = \textcolor{vibrantRed}{J_{\text{P},0}}(\vrho) + \sum_{u=1}^{U} \lambda_{u} \left(\Jp(\vrho) - b_{u} \right) - \frac{\omega}{2} \left\| \vlambda \right\|_{2}^{2},
\end{align*}
\cpgpe employs the following estimator to update the primal variable $\vrho$:
\begin{align*}
    \widehat{\nabla}_{\vrho} \rLp(\vrho,\vlambda) = \widehat{\nabla}_{\vrho} \textcolor{vibrantRed}{J_{\text{P},0}}(\vrho) + \sum_{u=1}^{U} \lambda_{u} \widehat{\nabla}_{\vrho} \Jp(\vrho).
\end{align*}

If we now focus on the parameter-based gradient \wrt Lagrange multipliers, we have the following:
\begin{align*}
    \nabla_{\vlambda} \rLp(\vrho,\vlambda) = \textcolor{vibrantRed}{\mathbf{J}_{\text{P}}}(\vrho) - \mathbf{b} - \omega \vlambda,
\end{align*}
where $\textcolor{vibrantRed}{\mathbf{J}_{\text{P}}}(\vrho) \coloneqq (\textcolor{vibrantRed}{J_{\text{P},1}}(\vrho), \ldots, \textcolor{vibrantRed}{J_{\text{P},U}}(\vrho))^{\top}$. Thus, its sample-based version used by \cpgpe to update the dual variable is the following:
\begin{align*}
    \frac{\widehat{\partial}}{\partial \lambda_i} \rLp (\vrho, \vlambda) = \frac{1}{N} \sum_{j=1}^{N} C_i(\tau_j)  - b_i - \omega \lambda_i,
\end{align*}
where the $N$ independent trajectories $\{\tau_j\}_{j=1}^{N}$ are such that $\tau_j \sim \pA(\cdot \mid \vtheta_j)$, where $\vtheta_j \sim \nu_{\vrho}$ for every $j \in \dsb{N}$.
\section{Deterministic Policy Deployment of \cpg}
\label{sec:deterministic}
In this section, we analyze the convergence guarantees of \cpg towards an optimal deterministic policy. To this end, we focus on the setting of \emph{white noise (hyper)policies}~\citep{montenegro2024learning}. Specifically, we restrict both action-based and parameter-based exploration strategies as stochastic perturbations of an underlying parametric deterministic policy~$\mu_{\vtheta}$. In this framework, stochastic policies~$\pi_{\vtheta}$ are modeled as perturbations of the actions prescribed by~$\mu_{\vtheta}$, while stochastic hyperpolicies~$\nu_{\vrho}$ are interpreted as perturbations of the parameters~$\vtheta$ of the deterministic policy.
Leveraging this structure, we study the behavior of the \cpg algorithm when it learns using stochastic (hyper)policies and subsequently \emph{deploys} their deterministic counterpart by \emph{switching off} the stochasticity at the end of training.
Our analysis establishes the sample complexity required by \cpg to guarantee that the deployed deterministic policy, regardless of the exploration paradigm used during training, is an optimal feasible one.

We begin by presenting the noise model used in our analysis (Section~\ref{subsec:white_noise}), followed by a description of the deterministic policy deployment process in CMDPs (Section~\ref{subsec:deploying}). We then state the assumptions required for convergence (Section~\ref{subsec:conditions_convergence}), and finally provide the sample complexity required by \cpg to converge in the last iterate to an optimal deterministic policy when the noise is \emph{switched off} at the end of training (Section~\ref{subsec:convergence_deterministic}).

\subsection{White Noise Exploration}
\label{subsec:white_noise}
While deterministic policies are desirable in real-world applications (see Section~\ref{sec:intro}), learning them directly typically requires off-policy actor-critic architectures~\citep{silver2014deterministic,lillicrap2016deterministic,xiong2022deterministic}, which pose significant challenges for convergence analysis even in unconstrained settings. In this part, we introduce a specific noise model that enables us to restrict both action-based and parameter-based exploration strategies as stochastic perturbations of deterministic policies. This formulation allows us to quantify the performance gap induced by a given parameterization of a stochastic (hyper)policy \wrt its associated deterministic policy.

We begin by defining deterministic policies and the related performance and cost functions. A parametric deterministic policy is a function $\mu_{\vtheta}: \mathcal{S} \rightarrow \mathcal{A},$
where $\vtheta \in \Theta \subseteq \Reals^{\dt}$ is the parameter vector.
For every $i \in \dsb{0,U}$, the performance and cost functions $\Jd:\Theta \to \Reals$ related to a deterministic policy are:
\begin{align*}
\label{eq:deterministic_performance_cost}
    \Jd(\vtheta) \coloneqq \E_{\tau \sim \pD(\cdot | \vtheta)} \left[C_i(\tau)\right],
\end{align*}
where $\pD(\tau; \vtheta) \coloneqq \phi_{0}(\bm{s}_{0}) \prod_{t=0}^{T-1} p(\bm{s}_{t+1} | \bm{s}_{t}, \mu_{\vtheta}(\bm{s}_{t}))$ is the probability density of trajectory $\tau$ induced by $\mu_{\vtheta}$. Using these definitions, we introduce the deterministic regularized Lagrangian function employed by \cpg as:
\begin{align*}
    \rLd(\vtheta,\vlambda) = J_{\text{D}, 0}(\vtheta) + \sum_{i=1}^{U}\left(\lambda_{i}(\Jd(\vtheta) - b_{i})\right) - \frac{\omega}{2}\|\bm{\vlambda}\|_{2}^{2}.    
\end{align*}

We can now redefine both \textcolor{vibrantBlue}{AB} and \textcolor{vibrantRed}{PB} exploration on top of deterministic policies~\citep{montenegro2024learning}. Considering \textcolor{vibrantBlue}{AB} exploration, stochasticity is injected at the action level, perturbing the deterministic policy’s output at each environmental interaction step. In \textcolor{vibrantRed}{PB} exploration, noise is applied directly to the policy parameters before execution, resulting in a fixed perturbed version of the underlying deterministic policy for an entire trajectory. Next, we formally present how we intend a perturbation in the action or parameter spaces.

\begin{defi}[White Noise] \label{def:white_noise}
    Let $d\in \mathbb{N}$ and $\sigma \in \mR_{>0}$. A probability distribution $\Phi_d \in \Delta\left(\mR^d\right)$ is defined as \emph{white noise} if it satisfies the following conditions:
    \begin{align}
        \mathbb{E}_{\vepsilon \sim \Phi_d}[\vepsilon] = \bm{0}_d, \quad
        \mathbb{E}_{\vepsilon \sim \Phi_d}[\left\| \vepsilon\right\|_{2}^{2}] \leq d\sigma^2,
    \end{align}
    where $\bm{0}_{d}\in \mathbb{R}^{d}$ is a $d$-dimensional vector of all zero components.
\end{defi}
Definition~\ref{def:white_noise} includes zero-mean Gaussian distributions $\vepsilon \sim \mathcal{N}(\bm{0}_{d}, \sigma \bm{\Lambda})$ with $\lambda_{\max} (\bm{\Lambda}) = 1$, ensuring that $\mathbb{E}[\|\vepsilon \|_{2}^{2}] = \sigma^{2} \text{tr}(\bm{\Lambda}) \leq d \sigma^{2}$. We stress that this noise has to be considered \emph{white} across exploration steps.  We can now redefine action-based and parameter-based explorations as white noise perturbations of the actions or the parameters of an underlying parametric deterministic policy $\mu_{\vtheta}$.

\paragraph{\textcolor{vibrantBlue}{Action-Based} (\textcolor{vibrantBlue}{AB}) Exploration.}
Considering \textcolor{vibrantBlue}{AB} PG methods (see Section \ref{sec:setting}), we consider a \emph{parametric stochastic policy} $\pi_{\vtheta}$ as built upon an underlying deterministic policy $\mu_{\vtheta}$ by perturbing each action suggested by $\mu_{\vtheta}$ with a white noise random vector. Formally, we consider the following definition of white noise policies.

\begin{defi}[\textcolor{vibrantBlue}{White Noise Policies}] \label{def:white_noise_policies}
    Let $\vtheta \in \Theta$ and $\mu_{\vtheta}: \mathcal{S} \to \mathcal{A}$ be a parametric deterministic policy. Given a white noise distribution $\Phi_{d_{\mathcal{A}}}$ (Definition~\ref{def:white_noise}), a white-noise-based policy $\pi_{\vtheta}: \mathcal{S} \rightarrow \Delta(\mathcal{A})$ is defined such that, for every state $\bm{s} \in \mathcal{S}$, the action $\bm{a} \sim \pi_{\vtheta}(\cdot|\bm{s})$ satisfies $\bm{a} = \mu_{\vtheta}(\bm{s}) + \vepsilon$, where $\vepsilon \sim \Phi_{d_{\mathcal{A}}}$ which is sampled independently at every step (\ie whenever an action is sampled).
\end{defi}

We highlight that Definition \ref{def:white_noise_policies} further justifies the name for \textcolor{vibrantBlue}{AB} exploration since the exploration is carried out at the action level.

Next, for every $i \in \dsb{0,U}$ we redefine the cost functions $\Ja$ leveraging the introduced characterization of white noise policies.
To this end, we need to introduce the concept of \emph{non stationary} deterministic cost functions. Let $\uvepsilon = (\vepsilon_{t})_{t=0}^{T-1}$ be a sequence of independently sampled white noise vectors satisfying Definition~\ref{def:white_noise}. Let $\uvmu = (\vmu_{t})_{t=0}^{T-1}$ be a non stationary deterministic policy where, at time step $t$, the deterministic policy $\vmu_{t}: \cS \to \cA$ is played, with $\vmu_{t} = \mu_{\vtheta} + \vepsilon_{t}$. For every $i \in \dsb{0,U}$, we introduce the cost functions for this kind of policy: $\Jd(\uvmu) \coloneqq \E_{\tau \sim \pD(\cdot \mid \uvmu)}[C_i(\tau)]$, where $\pD(\tau, \uvmu)$ is the density of a trajectory $\tau$ induced by the non stationary deterministic policy $\uvmu$.
Equipped with this new definition, we can reintroduce the \textcolor{vibrantBlue}{AB} cost functions $\Ja$ which admit the following definition, together with the one already provided in Section~\ref{sec:setting}, when the considered stochastic policy $\pi_{\vtheta}$ complies with Definition~\ref{def:white_noise_policies}:
\begin{align*}
    \Ja(\vtheta) \coloneqq \E_{\vepsilon \sim \Phi_{\da}^{T}} \left[ \Jd(\uvmu_{\vtheta} + \uvepsilon) \right],
\end{align*}
where $\uvmu_{\vtheta} + \uvepsilon = (\mu_{\vtheta} + \vepsilon_{t})_{t=0}^{T-1}$ and $\Phi_{\da}$ is a white noise distribution compliant with Definition~\ref{def:white_noise}..

\paragraph{\textcolor{vibrantRed}{Parameter-Based} (\textcolor{vibrantRed}{PB}) Exploration.}
Considering \textcolor{vibrantRed}{PB} PG methods (see Section \ref{sec:setting}), we consider a \emph{parametric stochastic hyperpolicy} $\nu_{\vtheta}$ as built upon an underlying deterministic policy $\mu_{\vtheta}$ by perturbing the parameter vector $\vtheta$ with a white noise random vector. Formally, we consider the following definition of white-noise hyperpolicies.

\begin{defi}[\textcolor{vibrantRed}{White Noise Hyperpolicies}] \label{def:white_noise_hyperpolicies}
    Let $\vtheta \in \Theta$ and $\mu_{\vtheta}: \mathcal{S} \to \mathcal{A}$ be a parametric deterministic policy. Given a white noise distribution $\Phi_{\dt}$ (Definition~\ref{def:white_noise}), a white-noise-based hyperpolicy $\nu_{\vtheta} \in \Delta(\Theta)$ is defined such that, for every parameter $\vtheta \in \Theta$, the perturbed parameter $\vtheta' \sim \nu_{\vtheta}$ satisfies $ \vtheta' = \vtheta + \vepsilon$, where $\vepsilon \sim \Phi_{\dt}$, independently for every trajectory.
\end{defi}

As previously done for action-based exploration, we stress that this definition further justifies the name of \textcolor{vibrantRed}{PB} exploration, since the exploration is carried out at parameter level.
Moreover, we let the reader note that the noise $\vepsilon$ is sampled once at the \emph{beginning} of each trajectory, meaning that the resulting policy $\mu_{\vtheta+\vepsilon}$ remains deterministic throughout the entire trajectory collection phase.

Next, for every $i \in \dsb{0,U}$ we redefine the cost functions $\Jp$ leveraging the introduced characterization of white noise hyperpolicies.
We reintroduce the \textcolor{vibrantRed}{PB} cost functions, which admit the following definition, together with the one already provided in Section~\ref{sec:setting}, when the considered stochastic hyperpolicy $\nu_{\vtheta}$ complies with Definition~\ref{def:white_noise_hyperpolicies}:
\begin{align*}
    \Jp(\vtheta) = \E_{\vepsilon \sim \Phi_{\dt}} \left[ \Jd(\vtheta+\vepsilon) \right],
\end{align*}
where $\Phi_{\dt}$ is a white noise distribution compliant with Definition~\ref{def:white_noise}.

In the remaining part of this section, we will consider an \emph{exploration-agnostic} setting, in which we denote with $\dagger \in \{\textcolor{vibrantBlue}{\text{A}}, \textcolor{vibrantRed}{\text{P}}\}$ the two different exploration approaches. We highlight that the problem formulation and all the theoretical results of Section~\ref{sec:alg_general} still hold.

\subsection{Deploying Deterministic Policies in CMDPs}
\label{subsec:deploying}

In this part, we analyze the effect of switching off the stochasticity on the regularized Lagrangian employed by \cpg to solve the COP in Equation~\eqref{eq:gen_opt_prob} when dealing with stochastic policies and hyperpolicies that satisfy Definitions~\ref{def:white_noise_policies} and~\ref{def:white_noise_hyperpolicies}, respectively.

Before presenting such results, we introduce two assumptions enforcing the regularity of the deterministic objectives $\Jd$ \wrt the parameters $\vtheta$ and the non stationary deterministic policies $\uvmu$ associated with \textcolor{vibrantBlue}{AB} exploration.

\begin{ass}[$\Jd$ Regularity \wrt $\vtheta$] \label{asm:Jd-reg-theta}
    For every $i \in \dsb{0,U}$, there exist $\LCJd,\LSJd \in \mR_{> 0}$ such that, for every $\vtheta,\vtheta' \in \Theta$, the following conditions hold:
    \begin{align*}
        \left| \Jd(\vtheta) - \Jd(\vtheta') \right| \le \LCJd \left\| \vtheta - \vtheta' \right\|_{2} \quad \text{and} \quad \left\| \nabla_{\vtheta} \Jd(\vtheta) - \nabla_{\vtheta} \Jd(\vtheta') \right\|_{2} \le \LSJd \left\| \vtheta - \vtheta' \right\|_{2}.
    \end{align*}
    Moreover, we denote $\LCJdmax \coloneqq \max_{i \in \dsb{0,U}} \LCJd$ and $\LSJdmax \coloneqq \max_{i \in \dsb{0,U}} \LSJd$.
\end{ass}

\begin{ass}[$\Jd$ Regularity \wrt $\uvmu$] \label{asm:Jd-reg-ns}
    For every $i \in \dsb{0,U}$, there exist $\LCns,\LSns \in \mR_{> 0}$ such that, for every pair of non stationary deterministic policies $\uvmu,\uvmu'$, the following conditions hold:
    \begin{align*}
        &\left| \Jd(\uvmu) - \Jd(\uvmu') \right| \le \LCns \sum_{t=0}^{T-1} \sup_{\bm{s} \in \cS} \left\| \vmu_{t}(\bm{s}) - \vmu_{t}'(\bm{s}) \right\|_{2} \quad \text{and} \\ 
        &\left\| \nabla_{\uvmu}\Jd(\uvmu) - \nabla_{\uvmu} \Jd(\uvmu') \right\|_{2} \le \LSns \sum_{t=0}^{T-1} \sup_{\bm{s} \in \cS} \left\| \vmu_{t}(\bm{s}) - \vmu_{t}'(\bm{s}) \right\|_{2}.
    \end{align*}
    Moreover, we denote $\LCnsmax \coloneqq \max_{i \in \dsb{0,U}} \LCns$ and $\LSnsmax \coloneqq \max_{i \in \dsb{0,U}} \LSns$.
\end{ass}

We stress that these assumptions will be crucial for presenting the core result of this section regarding the effects on $\rLg$ when switching off the stochasticity in the context of white noise exploration (see Section~\ref{subsec:white_noise}). Additionally, we let the reader note that Assumption~\ref{asm:Jd-reg-theta} induces both $\Ja$ and $\Jp$ to enjoy the same regularity condition stated in such an assumption when considering (hyper)policies complying with Definitions~\ref{def:white_noise_policies} and~\ref{def:white_noise_hyperpolicies}~\citep{montenegro2024learning}. Moreover, the Lipschitz constants are fully characterized in~\citep{montenegro2024learning}.

We are now ready to analyze the effect of switching off the stochasticity in \textcolor{vibrantRed}{PB} and \textcolor{vibrantBlue}{AB} exploration on the regularized Lagrangian $\rLg$ employed by the \cpg method.
\begin{thr} \label{thr:L-deploy}
    Considering (hyper)policies complying with Definitions~\ref{def:white_noise_policies} (\textcolor{vibrantBlue}{AB}) or~\ref{def:white_noise_hyperpolicies} (\textcolor{vibrantRed}{PB}), under Assumptions~\ref{asm:Jd-reg-theta} (\textcolor{vibrantRed}{PB}) or~\ref{asm:Jd-reg-ns} (\textcolor{vibrantBlue}{AB}), the following results hold:
    \begin{enumerate}[noitemsep,topsep=0pt,label=\textit{\roman{*}.}, ref=(\roman{*})]
        \item (Uniform Bound) for every $\vtheta \in \Theta$ and $\vlambda \in \mR^{U}_{\ge 0}$: $$\left| \rLd(\vtheta,\vlambda) - \rLg(\vtheta,\vlambda) \right| \le \left(1 + \left\| \vlambda \right\|_{1}\right) L_{1\dagger} \sigma \sqrt{d_{\dagger}}.$$
        \item ($\rLd$ Upper Bound) let $(\vtheta_{\text{D},\omega}^{*}, \vlambda_{\text{D},\omega}^{*})$ be a saddle point of $\rLd$ and let $(\vtheta^{*}_{\dagger,\omega}, \vlambda^{*}_{\dagger,\omega})$ be a saddle point of $\rLg$. Then:
        $$\rLd(\vtheta^{*}_{\dagger,\omega}, \vlambda^{*}_{\dagger,\omega}) - \rLd(\vtheta_{\text{D},\omega}^{*}, \vlambda_{\text{D},\omega}^{*}) \le 2 \left(1 + \| \vlambda^{*}_{\dagger,\omega} \|_{1}\right) L_{1\dagger} \sigma \sqrt{d_{\dagger}}.$$
    \end{enumerate}
    Where $L_{1\text{\textcolor{vibrantRed}{P}}} \coloneqq \LCJdmax$, $L_{1\text{\textcolor{vibrantBlue}{A}}} \coloneqq \LCnsmax$, $d_{\text{\textcolor{vibrantRed}{P}}}\coloneqq \dt$, and $d_{\text{\textcolor{vibrantBlue}{A}}}\coloneqq \da$.
\end{thr}
\begin{proof}
    We start the derivation by recalling the explicit form of $\left| \rLd(\vtheta,\vlambda) - \rLg(\vtheta,\vlambda) \right|$:
    \begin{align*}
        &\left| \rLd(\vtheta,\vlambda) - \rLg(\vtheta,\vlambda) \right| \\
        &= \left| \Jdzero(\vtheta) + \sum_{i=1}^{U} \lambda_{i} \left( \Jd(\vtheta) - b_{i} \right) - \frac{\omega}{2} \left\| \vlambda \right\|_{2}^{2} -\Jgenzero(\vtheta) - \sum_{i=1}^{U} \lambda_{i} \left( \Jgen(\vtheta) - b_{i} \right) + \frac{\omega}{2} \left\| \vlambda \right\|_{2}^{2} \right| \\
        &\le \left| \Jdzero(\vtheta) - \Jgenzero(\vtheta) \right| + \sum_{i=1}^{U} \lambda_{i} \left| \Jd(\vtheta) - \Jgen(\vtheta) \right|,
    \end{align*}
    where the last line follows by simply having applied the triangular inequality. 
    
    To continue the proof, we need to resort to Theorems~5.1 (\textcolor{vibrantRed}{PB}) and~5.2 (\textcolor{vibrantBlue}{AB}) by~\citep{montenegro2024learning}. These state that under Assumptions~\ref{asm:Jd-reg-theta} (\textcolor{vibrantRed}{PB}) or~\ref{asm:Jd-reg-ns} (\textcolor{vibrantBlue}{AB}), when dealing with an (hyper)policy complying with Definitions~\ref{def:white_noise_policies} (\textcolor{vibrantBlue}{AB}) and~\ref{def:white_noise_hyperpolicies} (\textcolor{vibrantRed}{PB}), the following holds:
    \begin{align*}
        \left| \Jd(\vtheta) - \Jgen(\vtheta) \right| \le L_{1\dagger,i} \sigma \sqrt{d_{\dagger}},
    \end{align*}
    where $L_{1\text{\textcolor{vibrantRed}{P}},i} \coloneqq \LCJd$, $L_{1\text{\textcolor{vibrantBlue}{A}},i} \coloneqq \LCns$, $d_{\text{\textcolor{vibrantRed}{P}}} = \dt$, and $d_{\text{\textcolor{vibrantBlue}{A}}} = \da$.

    By leveraging this result, the following holds:
    \begin{align*}
        \left| \rLd(\vtheta,\vlambda) - \rLg(\vtheta,\vlambda) \right| &\le \left| \Jdzero(\vtheta) - \Jgenzero(\vtheta) \right| + \sum_{i=1}^{U} \lambda_{i} \left| \Jd(\vtheta) - \Jgen(\vtheta) \right| \\
        &\le \left(1 + \sum_{i=1}^{U} \lambda_{i}\right) L_{1\dagger} \sigma \sqrt{d_{\dagger}} \\
        &= \left(1 + \left\| \vlambda \right\|_{1}\right) L_{1\dagger} \sigma \sqrt{d_{\dagger}},
    \end{align*}
    being $L_{1\text{\textcolor{vibrantRed}{P}}} \coloneqq \LCJdmax$ and $L_{1\text{\textcolor{vibrantBlue}{A}}} \coloneqq \LCnsmax$, which concludes the first part of the proof.

    We can now face the second part of the proof. In particular, let $(\vtheta^{*}_{\dagger,\omega}, \vlambda^{*}_{\dagger,\omega})$ be a saddle point of $\rLg$ and let $(\vtheta^{*}_{\text{D},\omega}, \vlambda^{*}_{\text{D},\omega})$ be a saddle point of $\rLd$. Before going on with the derivation, we recall that a saddle point by definition satisfies the following property:
    \begin{align*}
        \rLd(\vtheta^{*}_{\text{D},\omega}, \vlambda) \le \rLd(\vtheta^{*}_{\text{D},\omega}, \vlambda^{*}_{\text{D},\omega}) \le \rLd(\vtheta, \vlambda^{*}_{\text{D},\omega}),
    \end{align*}
    for every $\vtheta \in \Theta$ and $\vlambda \in \mR^{U}_{\ge 0}$. That being said, the following holds:
    \begin{align*}
        &\rLd(\vtheta^{*}_{\dagger,\omega}, \vlambda^{*}_{\dagger,\omega}) - \rLd(\vtheta_{\text{D},\omega}^{*}, \vlambda_{\text{D},\omega}^{*}) \\
        &\le \rLd(\vtheta^{*}_{\dagger,\omega}, \vlambda^{*}_{\dagger,\omega}) - \rLd(\vtheta_{\text{D},\omega}^{*}, \vlambda^{*}_{\dagger,\omega}) \\
        &= \rLd(\vtheta^{*}_{\dagger,\omega}, \vlambda^{*}_{\dagger,\omega}) - \rLd(\vtheta_{\text{D},\omega}^{*}, \vlambda^{*}_{\dagger,\omega}) \pm \rLg(\vtheta^{*}_{\dagger,\omega}, \vlambda^{*}_{\dagger,\omega}) \\
        &\le \rLd(\vtheta^{*}_{\dagger,\omega}, \vlambda^{*}_{\dagger,\omega}) - \rLg(\vtheta^{*}_{\dagger,\omega}, \vlambda^{*}_{\dagger,\omega}) + \rLg(\vtheta_{\text{D},\omega}^{*}, \vlambda^{*}_{\dagger,\omega}) - \rLd(\vtheta_{\text{D},\omega}^{*}, \vlambda^{*}_{\dagger,\omega}) \\
        &\le \left| \rLd(\vtheta^{*}_{\dagger,\omega}, \vlambda^{*}_{\dagger,\omega}) - \rLg(\vtheta^{*}_{\dagger,\omega}, \vlambda^{*}_{\dagger,\omega}) \right| + \left| \rLg(\vtheta_{\text{D},\omega}^{*}, \vlambda^{*}_{\dagger,\omega}) - \rLd(\vtheta_{\text{D},\omega}^{*}, \vlambda^{*}_{\dagger,\omega}) \right| \\
        &\le 2 \left( 1 + \left\| \vlambda^{*}_{\dagger,\omega} \right\|_{1} \right) L_{1\dagger} \sigma \sqrt{d_{\dagger}},
    \end{align*}
    where we have just exploited the previously recalled property of saddle points and, in the last line, the result proved in the first part of this proof.
\end{proof}

Some comments are in order. Theorem~\ref{thr:L-deploy} quantifies two sources of error: $(i)$ is the gap $|\rLd(\vtheta, \vlambda) - \rLg(\vtheta, \vlambda)|$ incurred when \emph{switching off} the stochasticity of the (hyper)policy; $(ii)$ is the error $\rLd(\vtheta^{*}_{\dagger,\omega}, \vlambda^{*}_{\dagger,\omega}) - \rLd(\vtheta^{*}_{\text{D},\omega}, \vlambda^{*}_{\text{D},\omega})$ arising when \emph{deploying} the parameters of the learned stochastic (hyper)policy.
We highlight that both error terms scale linearly with the stochasticity level $\sigma$ in the (hyper)policy, and with the regularity constants introduced in Assumptions~\ref{asm:Jd-reg-theta} (\textcolor{vibrantRed}{PB}) and~\ref{asm:Jd-reg-ns} (\textcolor{vibrantBlue}{AB}).
In addition, the losses depend on the $\ell_1$-norm of the Lagrange multipliers. In particular, the second bound depends on the $\ell_1$-norm of the Lagrange multiplier at the saddle point of the regularized stochastic Lagrangian $\cL_{\dagger,\omega}$. They also depend on the problem dimensionality, denoted $d_{\dagger}$, which corresponds to the parameter space dimensionality $\dt$ in the \textcolor{vibrantRed}{PB} case and to the action space dimensionality $\da$ in the \textcolor{vibrantBlue}{AB} case.
We further note that \textcolor{vibrantBlue}{AB} exploration embeds an additional dependence on the interaction horizon $T$ within the constant $\LCnsmax$.
Finally, this result allows us to recover the well-known trade-off between \textcolor{vibrantRed}{PB} and \textcolor{vibrantBlue}{AB} exploration strategies~\citep{metelli2018policy, montenegro2024learning}: the former may suffer in high-dimensional parameter spaces (large $\dt$), whereas the latter may struggle with high-dimensional action spaces or long interaction horizons (large $\da$ or large $T$).

The analysis presented here will play a central role for establishing the sample complexity of \cpg when learning an optimal feasible deterministic policy via stochastic (hyper)policies and subsequently deploying their deterministic counterpart by switching off the stochasticity at the end of training.

\subsection{Conditions for Convergence}
\label{subsec:conditions_convergence}

In order to establish the convergence guarantees of \cpg to the optimal feasible deterministic policy, achieved by switching off the stochasticity after learning an optimal stochastic (hyper)policy, we proceed as follows. First, we leverage Theorem~\ref{thr:convergencePot} to quantify the sample complexity required for learning an optimal stochastic (hyper)policy in the last iterate of \cpg. Then, we leverage Theorem~\ref{thr:L-deploy} to characterize the loss incurred in terms of potential function $\cP_{K}(\chi)$ when transitioning from a stochastic (hyper)policy to its deterministic counterpart by setting $\sigma = 0$.

To apply Theorem~\ref{thr:convergencePot}, it is necessary to verify the set of assumptions introduced in Section~\ref{sec:ass}. In this part, we revisit those assumptions and, when possible, we aim to minimize their number by showing that, under the noise model introduced in Section~\ref{subsec:white_noise} to represent both the \textcolor{vibrantBlue}{AB} and \textcolor{vibrantRed}{PB} exploration paradigms, most of the required conditions can be inherited from analogous regularity properties imposed on the underlying deterministic-policy-dependent quantities.

\paragraph{Saddle Point Existence.}
Assumption~\ref{asm:assunzione} enforces the existence of a saddle point $(\vtheta^{*}_{\dagger, \omega}, \vlambda^{*}_{\dagger, \omega})$ for the \emph{stochastic} Lagrangian $\cL_{\dagger,\omega}$. This assumption is only needed in Theorem~\ref{thr:conversion} to map the \emph{stochastic} (\ie associated to $\cL_{\dagger,\omega}$) potential function $\cP_{\dagger,K}(\chi)$ to the $\Jgen$ terms, demonstrating it is a useful tool in quantifying the sample complexity to ensure last-iterate global convergence of \cpg. Similarly, it is possible to show the same mapping between $\cP_{\text{D},K}(\chi)$ to the $\Jd$ terms as done in Theorem~\ref{thr:conversion}, by applying the same theorem straightforwardly under the existence of a saddle point $(\vtheta^{*}_{\text{D}, \omega}, \vlambda^{*}_{\text{D}, \omega})$ for the \emph{deterministic} Lagrangian $\cL_{\text{D},\omega}$.
Since the \emph{stochastic} saddle point $(\vtheta^{*}_{\dagger, \omega}, \vlambda^{*}_{\dagger, \omega})$ is not related to the \emph{deterministic} one $(\vtheta^{*}_{\text{D}, \omega}, \vlambda^{*}_{\text{D}, \omega})$, we need to assume the existence of both.

\paragraph{Weak $\psi$-Gradient Domination.}
Assumption~\ref{asm:wgd} is a core component of the theoretical analysis of \cpg, as it enables regularization solely \wrt the Lagrange multipliers~$\vlambda$, as further discussed in Section~\ref{sec:ass}. Rather than directly assuming weak $\psi$-gradient domination on the stochastic Lagrangian~$\cL_{\dagger,0}$, we demonstrate that this property can be inherited from the corresponding assumption on the deterministic Lagrangian~$\cL_{\text{D},0}$, for both exploration paradigms.

\begin{ass}[Weak $\psi$-Gradient Domination on $\cL_{\text{D},0}$] \label{asm:wgd-det}
    Let $\psi \in [1,2]$. There exist $\dalpha > 0$ and $\dbeta \ge 0$ such that, for every $\vtheta \in \Theta$ and $\vlambda \in \mR^{U}_{\ge 0}$, it holds that:
    \begin{align*}
        \left\| \nabla_{\vtheta} \cL_{\text{D},0}(\vtheta, \vlambda) \right\|_2^{\psi} \ge \dalpha \left( \cL_{\text{D},0}(\vtheta, \vlambda) - \min_{\vtheta' \in \Theta} \cL_{\text{D},0}(\vtheta', \vlambda) \right) - \dbeta.
    \end{align*}
\end{ass}
This leads to a characterization analogous to that of Assumption~\ref{asm:wgd}, but imposed on the deterministic Lagrangian~$\cL_{\text{D},0}$ rather than directly on the stochastic one~$\cL_{\dagger,0}$, that just inherits this property, as we later show. Before establishing this inheritance result, we introduce an additional assumption that is required only in the case of \textcolor{vibrantBlue}{AB} exploration.
\begin{ass}[Regularity of $\mu_{\vtheta}$] \label{asm:mu-reg}
    There exists $\LCmu \in \mR_{\ge 0}$ such that, for every $\vs \in \cS$ and $\vtheta,\vtheta' \in \Theta$, the following holds:
    \begin{align*}
        \left\| \mu_{\vtheta}(\vs) - \mu_{\vtheta'}(\vs) \right\|_{2} \le \LCmu \left\| \vtheta - \vtheta' \right\|_{2}.
    \end{align*}
\end{ass}
This assumption only requires that the deterministic policy $\mu_{\vtheta}$ is $\LCmu$-LC \wrt its parameters, which is needed to inherit regularity properties from the deterministic objectives $\Jd$ in the \textcolor{vibrantBlue}{AB} exploration paradigm~\citep{montenegro2024learning}. We are now ready to state the inheritance of the weak $\psi$-gradient domination.
\begin{thr}[Inherited Weak $\psi$-Gradient Domination on $\cL_{\dagger,0}$] \label{thr:wgd-inherited}
    Consider an (hyper)policy complying with Definitions~\ref{def:white_noise_policies} (\textcolor{vibrantBlue}{AB}) or~\ref{def:white_noise_hyperpolicies} (\textcolor{vibrantRed}{PB}). Under Assumptions~\ref{asm:Jd-reg-theta} (\textcolor{vibrantRed}{PB}) or~\ref{asm:Jd-reg-ns} (\textcolor{vibrantBlue}{AB}), \ref{asm:wgd-det}, and~\ref{asm:mu-reg} (\textcolor{vibrantBlue}{AB}), for any $\psi \in [1,2]$, $\vtheta \in \Theta$, and $\vlambda \in \mR^{U}_{\ge 0}$, the following holds:
    \begin{align*}
        \left\| \nabla_{\vtheta} \cL_{\dagger,0}(\vtheta,\vlambda) \right\|_{2}^{\psi} \ge \dalpha \left( \cL_{\dagger,0}(\vtheta,\vlambda) - \min_{\vtheta' \in \Theta}\cL_{\dagger,0}(\vtheta',\vlambda) \right) - \beta_{\dagger}(\sigma,\psi),
    \end{align*}
    where: 
    \begin{align*}
        \pbeta(\sigma,\psi) &\coloneqq \dbeta + \left( 2 \dalpha \LCJdmax + (1 + \| \vlambda \|_{1})^{\psi-1} \LSJdmax^{\psi} \sigma^{\psi-1} \dt^{\psi/2 - 1} \right) (1 + \| \vlambda \|_{1}) \sigma \sqrt{\dt},
    \end{align*}
    and: 
    \begin{align*}
        \abeta(\sigma,\psi) &\coloneqq \dbeta + \left( 2 \dalpha \LCnsmax + (1 + \| \vlambda \|_{1})^{\psi-1} \LCmu^{\psi} \LSJdmax^{\psi} \sigma^{\psi-1} T^{\psi/2} \dt^{\psi/2 - 1} \right) (1 + \| \vlambda \|_{1}) \sigma \sqrt{\da}.
    \end{align*}
\end{thr}
\begin{proof}
    We begin the proof by considering the term $\cL_{\text{D},0}(\vtheta,\vlambda) - \min_{\vtheta' \in \Theta} \cL_{\text{D},0}(\vtheta',\vlambda)$. Given that we consider an (hyper)policy complying with Definitions~\ref{def:white_noise_policies} (\textcolor{vibrantBlue}{AB}) or~\ref{def:white_noise_hyperpolicies} (\textcolor{vibrantRed}{PB}), and being under Assumptions~\ref{asm:Jd-reg-theta} (\textcolor{vibrantRed}{PB}) and~\ref{asm:Jd-reg-ns} (\textcolor{vibrantBlue}{AB}), we can apply Theorem~\ref{thr:L-deploy}, stating that:
    \begin{align*}
        \cL_{\text{D},0}(\vtheta,\vlambda) - \cL_{\dagger,0}(\vtheta,\vlambda) \ge - (1 +\| \vlambda \|_{1}) L_{1\dagger} \sigma \sqrt{d_{\dagger}},
    \end{align*}
    where for $\dagger = \text{\textcolor{vibrantBlue}{A}}$ we have $L_{1\text{\textcolor{vibrantBlue}{A}}} = \LCnsmax$ and $d_{\text{\textcolor{vibrantBlue}{A}}} = \da$, and for $\dagger = \text{\textcolor{vibrantRed}{P}}$ we have $L_{1\text{\textcolor{vibrantRed}{P}}} = \LCJdmax$ and $d_{\text{\textcolor{vibrantRed}{P}}} = \dt$.
    That being said, the following derivation holds:
    \begin{align*}
        &\cL_{\text{D},0}(\vtheta,\vlambda) - \min_{\vtheta' \in \Theta} \cL_{\text{D},0}(\vtheta',\vlambda) \\
        &= \cL_{\text{D},0}(\vtheta,\vlambda) - \min_{\vtheta' \in \Theta} \cL_{\text{D},0}(\vtheta',\vlambda) \pm \cL_{\dagger, 0}(\vtheta,\vlambda) \\
        &\ge - (1 + \| \vlambda \|_{1}) L_{1\dagger} \sigma \sqrt{d_{\dagger}} + \cL_{\dagger, 0}(\vtheta,\vlambda) -  \min_{\vtheta' \in \Theta} \cL_{\text{D},0}(\vtheta',\vlambda) \pm  \min_{\vtheta' \in \Theta} \cL_{\dagger,0}(\vtheta',\vlambda) \\
        &\ge \cL_{\dagger, 0}(\vtheta,\vlambda) - \min_{\vtheta' \in \Theta} \cL_{\dagger,0}(\vtheta',\vlambda) - (1 +\| \vlambda \|_{1}) L_{1\dagger} \sigma \sqrt{d_{\dagger}} + \min_{\vtheta \in \Theta}\left( \cL_{\dagger,0}(\vtheta',\vlambda) -  \cL_{\text{D},0}(\vtheta',\vlambda) \right) \\
        &\ge \cL_{\dagger, 0}(\vtheta,\vlambda) - \min_{\vtheta' \in \Theta} \cL_{\dagger,0}(\vtheta',\vlambda) - 2 (1 + \| \vlambda \|_{1}) L_{1\dagger} \sigma \sqrt{d_{\dagger}},
    \end{align*}
    where we applied Theorem~\ref{thr:L-deploy} twice.

    Thus, starting from the statement of Assumption~\ref{asm:wgd-det}, the following holds:
    \begin{align}
        \left\| \nabla_{\vtheta} \cL_{\text{D},0}(\vtheta, \vlambda) \right\|_2^{\psi} &\ge \dalpha \left( \cL_{\text{D},0}(\vtheta, \vlambda) - \min_{\vtheta' \in \Theta} \cL_{\text{D},0}(\vtheta', \vlambda) \right) - \dbeta \nonumber \\
        &\ge \dalpha \left( \cL_{\dagger, 0}(\vtheta,\vlambda) - \min_{\vtheta' \in \Theta} \cL_{\dagger,0}(\vtheta',\vlambda)  \right) - \dbeta - 2 \dalpha (1 + \| \vlambda \|_{1}) L_{1\dagger} \sigma \sqrt{d_{\dagger}}. \label{eq:wgd-inherited-1}
    \end{align}

    To continue the proof, we need to consider separately the two exploration paradigms, in order to properly exploit the corresponding noise model.

    \textbf{\textcolor{vibrantRed}{PB} Exploration.}~~According to Definition~\ref{def:white_noise_hyperpolicies}, we can rewrite the \textcolor{vibrantRed}{PB} Lagrangian as:
    \begin{align*}
        \textcolor{vibrantRed}{\cL_{\text{P},0}}(\vtheta,\vlambda) = \E_{\vepsilon \sim \Phi_{\dt}}\left[ \cL_{\text{D},0}(\vtheta+\vepsilon,\vlambda) \right].
    \end{align*}
    Thus, given $\alpha \in [0,1]$ and defining $\widetilde{\vtheta}_{\vepsilon} \coloneqq \alpha \vtheta + (1-\alpha)(\vtheta+\vepsilon)$, the following holds:
    \begin{align*}
        \nabla_{\vtheta} \textcolor{vibrantRed}{\cL_{\text{P},0}}(\vtheta,\vlambda) = \E_{\vepsilon \sim \Phi_{\dt}}\left[ \nabla_{\vtheta} \cL_{\text{D},0}(\vtheta+\vepsilon,\vlambda) \right] = \nabla_{\vtheta} \cL_{\text{D},0}(\vtheta,\vlambda) + \E_{\vepsilon \sim \Phi_{\dt}}\left[ \vepsilon^{\top} \nabla_{\vtheta}^{2} \cL_{\text{D},0}(\widetilde{\vtheta}_{\vepsilon},\vlambda) \right],
    \end{align*}
    where we have simply applied the Taylor expansion centered in $\vepsilon=\bm{0}_{\dt}$.
    Now, by applying the Euclidean norm, we have the following:
    \begin{align*}
       \left\|  \nabla_{\vtheta} \textcolor{vibrantRed}{\cL_{\text{P},0}}(\vtheta,\vlambda) \right\|_{2} &= \left\| \nabla_{\vtheta} \cL_{\text{D},0}(\vtheta,\vlambda) + \E_{\vepsilon \sim \Phi_{\dt}}\left[ \vepsilon^{\top} \nabla_{\vtheta}^{2} \cL_{\text{D},0}(\widetilde{\vtheta}_{\vepsilon},\vlambda) \right] \right\|_{2} \\
       &\ge \left\| \nabla_{\vtheta} \cL_{\text{D},0}(\vtheta,\vlambda) \right\|_{2} - \left\| \E_{\vepsilon \sim \Phi_{\dt}}\left[ \vepsilon^{\top} \nabla_{\vtheta}^{2} \cL_{\text{D},0}(\widetilde{\vtheta}_{\vepsilon},\vlambda) \right] \right\|_{2} \\
       &\ge \left\| \nabla_{\vtheta} \cL_{\text{D},0}(\vtheta,\vlambda) \right\|_{2} -  \E_{\vepsilon \sim \Phi_{\dt}}\left[ \left\| \vepsilon^{\top} \nabla_{\vtheta}^{2} \cL_{\text{D},0}(\widetilde{\vtheta}_{\vepsilon},\vlambda) \right\|_{2} \right],
    \end{align*}
    which follows by applying the triangular and Jensen's inequalities.
    Now, by applying the Cauchy-Schwartz inequality as $\E_{\vepsilon \sim \Phi_{\dt}}[\| \vepsilon \|_{2}] \le \sqrt{\E_{\vepsilon \sim \Phi_{\dt}}[\| \vepsilon \|_{2}^{2}]} \le \sigma \sqrt{\dt}$ and by exploiting the fact that, under Assumption~\ref{asm:Jd-reg-theta}:
    \begin{align*}
        \left\| \nabla_{\vtheta}^{2} \cL_{\text{D},0}(\vtheta,\vlambda) \right\|_{2} \le \left\| \nabla_{\vtheta}^{2} \Jdzero(\vtheta) \right\|_{2} + \sum_{i=1}^{U} \lambda_{i} \left\| \nabla_{\vtheta}^{2} \Jd(\vtheta) \right\|_{2} \le \LSJdmax \left( 1 + \sum_{i=1}^{U}\lambda_{i} \right) = \left( 1 + \left\| \vlambda \right\|_{1} \right) \LSJdmax,
    \end{align*}
    we can conclude the following:
    \begin{align*}
         \left\| \nabla_{\vtheta} \cL_{\text{D},0}(\vtheta,\vlambda) \right\|_{2} &\le  \left\|  \nabla_{\vtheta} \textcolor{vibrantRed}{\cL_{\text{P},0}}(\vtheta,\vlambda) \right\|_{2} + \E_{\vepsilon \sim \Phi_{\dt}}\left[ \left\| \vepsilon^{\top} \nabla_{\vtheta}^{2} \cL_{\text{D},0}(\widetilde{\vtheta}_{\vepsilon},\vlambda) \right\|_{2} \right] \\
         &\le \left\|  \nabla_{\vtheta} \textcolor{vibrantRed}{\cL_{\text{P},0}}(\vtheta,\vlambda) \right\|_{2} + \left( 1 + \left\| \vlambda \right\|_{1} \right) \LSJdmax \sigma \sqrt{\dt}.
    \end{align*}

    Now, considering that $\psi \in [1,2]$, by exploiting the superadditivity of $( \cdot )^{\psi}$, we have:
    \begin{align*}
         \left\| \nabla_{\vtheta} \cL_{\text{D},0}(\vtheta,\vlambda) \right\|_{2}^{\psi} \le \left\|  \nabla_{\vtheta} \textcolor{vibrantRed}{\cL_{\text{P},0}}(\vtheta,\vlambda) \right\|_{2}^{\psi} + \left(\left( 1 + \left\| \vlambda \right\|_{1} \right) \LSJdmax \sigma \sqrt{\dt}\right)^{\psi}.
    \end{align*}

    Combining this last result with Equation~\eqref{eq:wgd-inherited-1}, we conclude the inheritance of the weak $\psi$-GD from $\cL_{\text{D},\omega}$ in the \textcolor{vibrantRed}{PB} case:
    \begin{align*}
        \left\|  \nabla_{\vtheta} \textcolor{vibrantRed}{\cL_{\text{P},0}}(\vtheta,\vlambda) \right\|_{2}^{\psi} \ge \dalpha \left( \textcolor{vibrantRed}{\cL_{\text{P},0}}(\vtheta,\vlambda) - \min_{\vtheta' \in \Theta} \textcolor{vibrantRed}{\cL_{\text{P},0}}(\vtheta',\vlambda)  \right) - \pbeta(\sigma,\psi),
    \end{align*}
    where: 
    \begin{align*}
        \pbeta(\sigma,\psi) &\coloneqq \dbeta + \left( 2 \dalpha \LCJdmax + (1 + \| \vlambda \|_{1})^{\psi-1} \LSJdmax^{\psi} \sigma^{\psi-1} \dt^{\psi/2 - 1} \right) (1 + \| \vlambda \|_{1}) \sigma \sqrt{\dt}.
    \end{align*}

    \textbf{\textcolor{vibrantBlue}{AB} Exploration.}~~According to Definition~\ref{def:white_noise_policies}, we can rewrite the \textcolor{vibrantBlue}{AB} Lagrangian as:
    \begin{align*}
        \textcolor{vibrantBlue}{\cL_{\text{A},0}}(\vtheta,\vlambda) = \E_{\uvepsilon \sim \Phi_{\da}^{T}}\left[ \cL_{\text{D},0}(\uvmu_{\vtheta} + \uvepsilon,\vlambda) \right].
    \end{align*}
    Thus, given $\alpha \in [0,1]$ and defining $\widetilde{\uvmu}_{\vtheta} \coloneqq \alpha \uvmu_{\vtheta} + (1-\alpha)(\uvmu_{\vtheta} + \uvepsilon)$, the following holds:
    \begin{align*}
        \nabla_{\vtheta} \textcolor{vibrantBlue}{\cL_{\text{A},0}}(\vtheta,\vlambda) &= \E_{\uvepsilon \sim \Phi_{\da}^{T}}\left[ \nabla_{\vtheta} \cL_{\text{D},0}(\uvmu_{\vtheta} + \uvepsilon,\vlambda) \right] \\
        &= \E_{\uvepsilon \sim \Phi_{\da}^{T}}\left[ \nabla_{\uvmu} \cL_{\text{D},0}(\uvmu,\vlambda) \rvert_{\uvmu = \uvmu_{\vtheta} + \uvepsilon} \nabla_{\vtheta} (\uvmu_{\vtheta} + \uvepsilon) \right] \\
        &= \E_{\uvepsilon \sim \Phi_{\da}^{T}}\left[ \nabla_{\uvmu} \cL_{\text{D},0}(\uvmu,\vlambda) \rvert_{\uvmu = \uvmu_{\vtheta}} \nabla_{\vtheta} \uvmu_{\vtheta} + \uvepsilon^{\top} \nabla_{\uvmu}^{2} \cL_{\text{D},0}(\uvmu,\vlambda) \rvert_{\uvmu = \widetilde{\uvmu}_{\vtheta}} \nabla_{\vtheta} \uvmu_{\vtheta} \right] \\
        &= \nabla_{\vtheta} \cL_{\text{D},0}(\vtheta,\vlambda) + \E_{\uvepsilon \sim \Phi_{\da}^{T}}\left[ \uvepsilon^{\top} \nabla_{\uvmu}^{2} \cL_{\text{D},0}(\uvmu,\vlambda) \rvert_{\uvmu = \widetilde{\uvmu}_{\vtheta}} \nabla_{\vtheta} \uvmu_{\vtheta} \right].
    \end{align*}
    where we simply applied the chain rule and the Taylor expansion centered in $\uvepsilon=\bm{0}_{\da T}$. Now, by applying the Euclidean norm, we have:
    \begin{align*}
        \left\| \nabla_{\vtheta} \textcolor{vibrantBlue}{\cL_{\text{A},0}}(\vtheta,\vlambda) \right\|_{2} &= \left\| \nabla_{\vtheta} \cL_{\text{D},0}(\vtheta,\vlambda) + \E_{\uvepsilon \sim \Phi_{\da}^{T}}\left[ \uvepsilon^{\top} \nabla_{\uvmu}^{2} \cL_{\text{D},0}(\uvmu,\vlambda) \rvert_{\uvmu = \widetilde{\uvmu}_{\vtheta}} \nabla_{\vtheta} \uvmu_{\vtheta} \right] \right\|_{2} \\
        &\ge \left\| \nabla_{\vtheta} \cL_{\text{D},0}(\vtheta,\vlambda) \right\|_{2} -  \left\| \E_{\uvepsilon \sim \Phi_{\da}^{T}}\left[ \uvepsilon^{\top} \nabla_{\uvmu}^{2} \cL_{\text{D},0}(\uvmu,\vlambda) \rvert_{\uvmu = \widetilde{\uvmu}_{\vtheta}} \nabla_{\vtheta} \uvmu_{\vtheta} \right] \right\|_{2} \\
        &\ge \left\| \nabla_{\vtheta} \cL_{\text{D},0}(\vtheta,\vlambda) \right\|_{2} -  \E_{\uvepsilon \sim \Phi_{\da}^{T}}\left[ \left\| \uvepsilon^{\top} \nabla_{\uvmu}^{2} \cL_{\text{D},0}(\uvmu,\vlambda) \rvert_{\uvmu = \widetilde{\uvmu}_{\vtheta}} \nabla_{\vtheta} \uvmu_{\vtheta}  \right\|_{2} \right],
    \end{align*}
    which follows from just applying the triangular and Jensen's inequalities. Now, by applying the Cauchy-Schwartz inequality as $\E_{\uvepsilon \sim \Phi_{\da}^{T}} [\| \uvepsilon \|_{2}] \le \sqrt{\E_{\uvepsilon \sim \Phi_{\da}^{T}} [\| \uvepsilon \|_{2}^{2}]} \le \sigma \sqrt{T\da}$ and exploiting Assumptions~\ref{asm:Jd-reg-ns} and~\ref{asm:mu-reg}, by following the same procedure of the \textcolor{vibrantRed}{PB} case, we obtain:
    \begin{align*}
        \left\| \nabla_{\vtheta} \textcolor{vibrantBlue}{\cL_{\text{A},0}}(\vtheta,\vlambda) \right\|_{2} \ge \left\| \nabla_{\vtheta} \cL_{\text{D},0}(\vtheta,\vlambda) \right\|_{2} - (1 + \| \vlambda \|_{1}) \LCmu \LSnsmax \sigma \sqrt{T \da},
    \end{align*}
    and thus:
    \begin{align*}
        \left\|  \nabla_{\vtheta} \textcolor{vibrantBlue}{\cL_{\text{A},0}}(\vtheta,\vlambda) \right\|_{2}^{\psi} \ge \dalpha \left( \textcolor{vibrantBlue}{\cL_{\text{A},0}}(\vtheta,\vlambda) - \min_{\vtheta' \in \Theta} \textcolor{vibrantBlue}{\cL_{\text{A},0}}(\vtheta',\vlambda)  \right) - \abeta(\sigma,\psi),
    \end{align*}
    where 
    \begin{align*}
        \abeta(\sigma,\psi) &\coloneqq \dbeta + \left( 2 \dalpha \LCnsmax + (1 + \| \vlambda \|_{1})^{\psi-1} \LCmu^{\psi} \LSJdmax^{\psi} \sigma^{\psi-1} T^{\psi/2} \dt^{\psi/2 - 1} \right) (1 + \| \vlambda \|_{1}) \sigma \sqrt{\da},
    \end{align*}
    showing the inheritance of the weak $\psi$-GD from $\cL_{\text{D},\omega}$ in the \textcolor{vibrantBlue}{AB} case too. 
\end{proof}
We highlight that the weak $\psi$-gradient domination property is inherited from the deterministic Lagrangian~$\cL_{\text{D},0}$ with the \emph{same} multiplicative constant~$\dalpha$. Moreover, by setting $\sigma = 0$, one exactly recovers the result stated in Assumption~\ref{asm:wgd-det}. Finally, under the adopted noise model, this property follows directly from regularity assumptions on the deterministic cost functions~$\Jd$ (Assumptions~\ref{asm:Jd-reg-theta} and~\ref{asm:Jd-reg-ns}) and on the deterministic policy~$\mu_{\vtheta}$ (Assumption~\ref{asm:mu-reg}), together with the bounds on the loss incurred when switching off the noise (see Theorem~\ref{thr:L-deploy}).

\paragraph{Regularity of $\cL_{\dagger,0}$.}
Assumption~\ref{asm:L_grad_lip} imposes regularity conditions on the stochastic Lagrangian~$\cL_{\dagger,0}$, which are standard in the literature on the convergence of primal-dual methods~\citep{yang2020minimax}, as previously discussed in Section~\ref{sec:alg_general}. As with the other conditions required to apply Theorem~\ref{thr:convergencePot}, these regularity properties can also be inherited from Assumption~\ref{asm:Jd-reg-theta}, as formalized in the following theorem.
\begin{thr}[Inherited Regularity of $\cL_{\dagger,0}$] \label{thr:inherited-L-reg}
    Consider a (hyper)policy complying with Definitions~\ref{def:white_noise_policies} (\AB) or~\ref{def:white_noise_hyperpolicies} (\PB). Under Assumption~\ref{asm:Jd-reg-theta}, for every $\vlambda,\vlambda' \in \mR^{U}_{\ge 0}$ and $\vtheta,\vtheta' \in \Theta$ the following conditions hold:
    \begin{align*}
        &\left\| \nabla_{\vlambda} \cL_{\dagger,0} (\vtheta,\vlambda) - \nabla_{\vlambda} \cL_{\dagger,0} (\vtheta',\vlambda) \right\|_{2} \le \sqrt{U} \LCJdmax \left\| \vtheta - \vtheta' \right\|_{2}, \\
        &\left\| \nabla_{\vtheta} \cL_{\dagger,0}(\vtheta,\vlambda) - \nabla_{\vtheta} \cL_{\dagger,0}(\vtheta',\vlambda) \right\|_{2} \le \left( 1 + \| \vlambda \|_{1} \right) \LSJdmax \left\| \vtheta - \vtheta' \right\|_{2}, \\
        &\left\|  \nabla_{\vtheta}\cL_{\dagger,0}(\vtheta,\vlambda) - \nabla_{\vtheta}\cL_{\dagger,0}(\vtheta,\vlambda')\right\|_{2} \le \LCJdmax \left\| \vlambda-\vlambda' \right\|_{2}.
    \end{align*}
\end{thr}
\begin{proof}
    We start by proving that $\cL_{\dagger,0}(\cdot,\vlambda)$ is LS \wrt the parameters $\vtheta$, thus, for every $\vlambda \in \mR^{U}_{\ge 0}$ and $\vtheta,\vtheta' \in \Theta$, we aim to find $L_{2} \in \mR_{\ge0}$ such that:
    \begin{align*}
        \left\| \nabla_{\vtheta} \cL_{\dagger,0}(\vtheta,\vlambda) - \nabla_{\vtheta} \cL_{\dagger,0}(\vtheta',\vlambda) \right\|_{2} \le L_{2} \left\| \vtheta - \vtheta' \right\|_{2}.
    \end{align*}
    This can be easily done by expanding the $\nabla_{\vtheta} \cL_{\dagger,0}$ terms:
    \begin{align*}
        \left\| \nabla_{\vtheta} \cL_{\dagger,0}(\vtheta,\vlambda) - \nabla_{\vtheta} \cL_{\dagger,0}(\vtheta',\vlambda) \right\|_{2} &= \left\| \nabla_{\vtheta} \Jgenzero(\vtheta) - \nabla_{\vtheta} \Jgenzero(\vtheta') + \sum_{i=1}^{U} \lambda_{i} \left( \nabla_{\vtheta}\Jgen(\vtheta) - \nabla_{\vtheta}\Jgen(\vtheta') \right) \right\|_{2} \\
        &\le \left\| \nabla_{\vtheta} \Jgenzero(\vtheta) - \nabla_{\vtheta} \Jgenzero(\vtheta') \right\|_{2} + \sum_{i=1}^{U} \lambda_{i} \left\| \nabla_{\vtheta}\Jgen(\vtheta) - \nabla_{\vtheta} \Jgen(\vtheta') \right\|_{2},
    \end{align*}
    by the triangular inequality. Now, we recover Lemmas~D.3 and~D.7 by \citet{montenegro2024learning}, stating that under Assumption~\ref{asm:Jd-reg-theta} both $\Jp$ and $\Ja$ are LS with the same constant of $\Jd$. In our setting, it means that $\Jgen$ is $\LSJdmax$-LS, for every $i \in \dsb{0,U}$. That being said, the following holds:
    \begin{align*}
        \left\| \nabla_{\vtheta} \cL_{\dagger,0}(\vtheta,\vlambda) - \nabla_{\vtheta} \cL_{\dagger,0}(\vtheta',\vlambda) \right\|_{2} &\le \left\| \nabla_{\vtheta} \Jgenzero(\vtheta) - \nabla_{\vtheta} \Jgenzero(\vtheta') \right\|_{2} + \sum_{i=1}^{U} \lambda_{i} \left\| \nabla_{\vtheta}\Jgen(\vtheta) - \nabla_{\vtheta} \Jgen(\vtheta') \right\|_{2} \\
        &\le \left( 1 + \sum_{i=1}^{U} \lambda_{i} \right) \LSJdmax \left\| \vtheta - \vtheta' \right\|_{2} \\
        &= \left( 1 + \| \vlambda \|_{1} \right) \LSJdmax \left\| \vtheta - \vtheta' \right\|_{2},
    \end{align*}
    thus having quantified the smoothness constant.

    We can now proceed by proving that $\nabla_{\vlambda} \cL_{\dagger,0} (\cdot,\vlambda)$ is LC, so we have to find a constant $L_{1} \in \mR_{\ge 0}$ such that, for every $\vlambda \in \mR^{U}_{\ge 0}$ and $\vtheta,\vtheta' \in \Theta$:
    \begin{align*}
        \left\| \nabla_{\vlambda} \cL_{\dagger,0} (\vtheta,\vlambda) - \nabla_{\vlambda} \cL_{\dagger,0} (\vtheta',\vlambda) \right\|_{2} \le L_{1} \left\| \vtheta - \vtheta' \right\|_{2}.
    \end{align*}
    As done in the previous case, we expand the $\nabla_{\vlambda} \cL_{\dagger,0}$ terms:
    \begin{align*}
        \left\| \nabla_{\vlambda} \cL_{\dagger,0} (\vtheta,\vlambda) - \nabla_{\vlambda} \cL_{\dagger,0} (\vtheta',\vlambda) \right\|_{2} &= \left\| \mathbf{J}_{\dagger}(\vtheta) - \mathbf{J}_{\dagger}(\vtheta') \right\|_{2} = \sqrt{\sum_{i=0}^{U} \left( \Jgen(\vtheta) - \Jgen(\vtheta') \right)^{2}} \le \sqrt{U} \LCJdmax \left\| \vtheta - \vtheta' \right\|_{2},
    \end{align*}
    where we just applied the same reasoning of Lemmas~D.3 and~D.7 by \citet{montenegro2024learning} to state that $\Jgen$ is $\LCJdmax$-LS, for every $i \in \dsb{0,U}$.

    Finally, we prove that $\nabla_{\vtheta}\cL_{\dagger,0}(\vtheta,\cdot)$ is LC, finding a constant $L_{3} \in \mR_{\ge 0}$ such that, for every $\vlambda,\vlambda' \in \mR^{U}_{\ge 0}$:
    \begin{align*}
        \left\|  \nabla_{\vtheta}\cL_{\dagger,0}(\vtheta,\vlambda) - \nabla_{\vtheta}\cL_{\dagger,0}(\vtheta,\vlambda')\right\|_{2} \le L_{3} \left\| \vlambda - \vlambda' \right\|_{2}.
    \end{align*}
    As done before, we expand the $\nabla_{\vtheta}\cL_{\dagger,0}$ terms:
    \begin{align*}
         \left\|  \nabla_{\vtheta}\cL_{\dagger,0}(\vtheta,\vlambda) - \nabla_{\vtheta}\cL_{\dagger,0}(\vtheta,\vlambda')\right\|_{2} = \left\| \nabla_{\vtheta} \mathbf{J}_{\dagger}(\vtheta) (\vlambda-\vlambda')\right\|_{2} \le \LCJdmax \left\| \vlambda-\vlambda' \right\|_{2},
    \end{align*}
    where we just exploited the fact that under Assumption~\ref{asm:Jd-reg-theta} $\Jd$ is $\LCJdmax$-LC for every $i \in \dsb{0,U}$.
\end{proof}
We highlight that, as previously discussed in Section~\ref{sec:alg_general}, the only constant that depends on $\cO(\omega^{-1})$ when considering the learning process of \cpg is the smoothness constant of $\cL_{\dagger,0}(\cdot,\vlambda)$. This dependence arises from the term $\| \vlambda \|_{1}$, which is upper bounded by $\cO(\omega^{-1})$ due to the regularization employed in \cpg.

\paragraph{Bounded Estimators' Variances.}
The last condition for convergence we have to discuss is the one of Assumption~\ref{ass:boundedVariance}, requiring that the unbiased estimators $\widehat{\nabla}_{\vtheta} \cL_{\dagger,\omega}$ and $\widehat{\nabla}_{\vlambda} \cL_{\dagger,\omega}$ have bounded variance. Under the specific noise model at hand, this property holds under the following assumption.
\begin{ass}[Bounded Scores of $\Phi$] \label{asm:bounded-noise-scores}
    Let $\Phi \in \Delta(\mR^{d})$ be a white noise complying with Definition~\ref{def:white_noise} with variance bound $\sigma \in \mR_{> 0}$ and density $\phi$. $\phi$ is differentiable in its argument and there exists universal constant $c \in \mR_{> 0}$ such that:
    \begin{align*}
        \E_{\vepsilon \sim \Phi} \left[\| \nabla_{\vepsilon}\log \phi(\vepsilon)\|_{2}^{2} \right] \leq c d \sigma^{-2} \quad \text{and} \quad \E_{\vepsilon \sim \Phi} \left[\| \nabla^{2}_{\vepsilon}\log \phi(\vepsilon)\|_{2} \right] \leq c \sigma^{-2}.
    \end{align*}
\end{ass}
Intuitively, this assumption is equivalent to the more common ones requiring the boundedness of the expected norms of the score function and its gradient~\citep{papini2022smoothing,yuan2022general}.
Note that a zero-mean Gaussian noise  $\Phi = \mathcal{N}(\mathbf{0}_d, \bm{\Sigma})$ fulfills Assumption~\ref{asm:bounded-noise-scores}. Indeed, one has $\nabla_{\vepsilon} \log \phi(\vepsilon) = \bm{\Sigma}^{-1}\vepsilon$ and $\nabla_{\vepsilon}^2 \log \phi(\vepsilon) = \bm{\Sigma}^{-1}$. Thus, $\E[\| \nabla_{\vepsilon} \log \phi(\vepsilon)\|_2^2]  = \text{tr}(\bm{\Sigma}^{-1}) \le d \lambda_{\min}(\bm\Sigma)^{-1}$ and $\E[\| \nabla_{\vepsilon}^2 \log \phi(\vepsilon)\|_2] = \lambda_{\min}(\bm{\Sigma})^{-1}$. In particular, for an isotropic Gaussian $\bm{\Sigma}=\sigma^2 \mathbf{I}$, we have $\lambda_{\min}(\bm{\Sigma}) = \sigma^2$, fulfilling Assumption~\ref{asm:bounded-noise-scores} with $c=1$.

Under Assumption~\ref{asm:bounded-noise-scores}, the variances of the estimators employed in \cpg, which are described in Section~\ref{subsec:alg-specific}, are bounded, as shown by the following lemma.
\begin{lemma}[Bounded Estimators' Variances] \label{lem:bounded-estim-var}
    Consider a (hyper)policy complying with Definitions~\ref{def:white_noise_policies} (\AB) and~\ref{def:white_noise_hyperpolicies} (\PB). Under Assumptions~\ref{asm:mu-reg} (just for \AB) and~\ref{asm:bounded-noise-scores}, the following conditions hold:
    \begin{align*}
        \Var\left[ \widehat{\nabla}_{\vlambda} \cL_{\dagger,\omega}(\vtheta,\vlambda) \right] \le \frac{U (1-\gamma^{T})^{2}}{N (1-\gamma)^{2}} \eqqcolon \vargenlambda \quad \text{and} \quad \Var\left[ \widehat{\nabla}_{\vtheta} \cL_{\dagger,\omega}(\vtheta,\vlambda) \right] \le \frac{\zgentheta (1+\| \vlambda \|_{1})^2}{N \sigma^{2}} \eqqcolon \vargentheta,
    \end{align*}
    where $\zptheta \coloneqq c \dt \left(\frac{1-\gamma^{T}}{1-\gamma}\right)^{2}$ and $\zatheta \coloneqq c \da \LCmu^{2} \left(\frac{1-\gamma^{T}}{1-\gamma}\right)^{3}$.
\end{lemma}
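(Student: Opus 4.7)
The plan is to split the proof into three independent pieces: the dual gradient (which is estimator-structurally identical in both paradigms), the primal gradient in the \textcolor{vibrantRed}{PB} case, and the primal gradient in the \textcolor{vibrantBlue}{AB} case. For the dual gradient, the estimator $\widehat{\nabla}_{\vlambda}\cL_{\dagger,\omega}(\vtheta,\vlambda)=\tfrac{1}{N}\sum_{j=1}^N (\mathbf{C}(\tau_j)-\mathbf{b})-\omega\vlambda$ is just a sample mean of a bounded vector (the shift $\mathbf{b}+\omega\vlambda$ is deterministic). Since $C_i(\tau)\in[0,J_{\max}]$ with $J_{\max}=(1-\gamma^T)/(1-\gamma)$, each coordinate has sample-mean variance at most $J_{\max}^2/N$, and summing the $U$ coordinates yields the claimed $\vargenlambda=U J_{\max}^2/N$. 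This step is routine.

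For the \textcolor{vibrantRed}{PB} primal gradient, I would first exploit the white-noise structure of Definition~\ref{def:white_noise_hyperpolicies}: since $\vtheta=\vrho+\vepsilon$ with $\vepsilon\sim\Phi_{\dt}$, one has the score identity $\nabla_{\vrho}\log\nu_{\vrho}(\vtheta)=-\nabla_{\vepsilon}\log\phi(\vepsilon)\rvert_{\vepsilon=\vtheta-\vrho}$, so Assumption~\ref{asm:bounded-noise-scores} gives $\mathbb{E}[\|\nabla_{\vrho}\log\nu_{\vrho}(\vtheta)\|_2^2]\le c\,\dt/\sigma^2$. Since the $U+1$ estimators share the same sampled pairs $(\vtheta_j,\tau_j)$, I would collapse them into $\widehat{\nabla}_{\vrho}\rLp=\tfrac{1}{N}\sum_j \nabla_{\vrho}\log\nu_{\vrho}(\vtheta_j)\bigl(C_0(\tau_j)+\sum_u\lambda_u C_u(\tau_j)\bigr)$, bound the variance by $\tfrac{1}{N}\mathbb{E}[\|\text{score}\|_2^2\cdot(\text{cost})^2]$, decouple via Cauchy--Schwarz using $|C_0+\sum_u\lambda_u C_u|\le(1+\|\vlambda\|_1)J_{\max}$, and multiply. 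The result is $\vargentheta\le (1+\|\vlambda\|_1)^2\zptheta/(N\sigma^2)$ with $\zptheta=c\,\dt\,J_{\max}^2$, as claimed.

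The \textcolor{vibrantBlue}{AB} primal gradient is the technical core and is where I expect the main difficulty. For a white-noise policy (Definition~\ref{def:white_noise_policies}), the score factors via the chain rule as $\nabla_{\vtheta}\log\pi_{\vtheta}(\bm{a}|\bm{s})=-(\nabla_{\vtheta}\mu_{\vtheta}(\bm{s}))^{\top}\nabla_{\vepsilon}\log\phi(\vepsilon)\rvert_{\vepsilon=\bm{a}-\mu_{\vtheta}(\bm{s})}$, so Assumptions~\ref{asm:mu-reg} and~\ref{asm:bounded-noise-scores} combine to give $\mathbb{E}[\|\nabla_{\vtheta}\log\pi_{\vtheta}(\bm{a}|\bm{s})\|_2^2]\le c\,\LCmu^2\,\da/\sigma^2$. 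Writing the GPOMDP-style combined estimator as $\tfrac{1}{N}\sum_j\sum_{t=0}^{T-1}\bigl(\sum_{l=0}^t\nabla_{\vtheta}\log\pi_{\vtheta}(\bm{a}_l|\bm{s}_l)\bigr)\gamma^t\bigl(c_0+\sum_u\lambda_u c_u\bigr)(\bm{s}_t,\bm{a}_t)$, the main obstacle lies in controlling the second moment of the \emph{inner} score sum $\sum_{l=0}^t\nabla_{\vtheta}\log\pi_{\vtheta}(\bm{a}_l|\bm{s}_l)$. The key observation is that per-step scores are conditionally zero-mean given the history up to step $l-1$, hence they form a martingale-difference sequence, so their second moments add: $\mathbb{E}[\|\sum_{l=0}^t\nabla_{\vtheta}\log\pi_{\vtheta}(\bm{a}_l|\bm{s}_l)\|_2^2]\le (t+1)\,c\,\LCmu^2\,\da/\sigma^2$. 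Plugging this into the outer $\gamma^t$-weighted sum, using $|c_0+\sum_u\lambda_u c_u|\le 1+\|\vlambda\|_1$ and a Cauchy--Schwarz in $t$, will produce one additional factor of $J_{\max}=(1-\gamma^T)/(1-\gamma)$, yielding the claimed $\zatheta=c\,\LCmu^2\,\da\,J_{\max}^3$. A naive triangle-inequality bound on the inner sum would lose the martingale cancellation and inflate the horizon dependence to $J_{\max}^4$, so getting the sharper exponent is what makes this step nontrivial.
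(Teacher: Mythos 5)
Your proposal is correct and follows essentially the same route as the paper: a direct sample-mean bound for the dual gradient, and a Cauchy--Schwarz decoupling of the score and cost factors combined with the per-step score bounds $\E[\|\nabla_{\vrho}\log\nu_{\vrho}\|_2^2]\le c\,\dt\sigma^{-2}$ and $\E[\|\nabla_{\vtheta}\log\pi_{\vtheta}\|_2^2]\le c\,\LCmu^2\da\sigma^{-2}$ (which the paper imports from Lemmas~E.3--E.4 of the cited prior work rather than rederiving from the white-noise structure). The one place you are more explicit than the paper is the \textcolor{vibrantBlue}{AB} inner score sum: the paper's displayed bound $\|\sum_{l\le t} v_l\|_2^2\le\sum_{l\le t}\|v_l\|_2^2$ is only valid in expectation via the conditionally-zero-mean (martingale-difference) property of the scores, which you correctly identify as the step that keeps the horizon dependence at $J_{\max}^3$ rather than $J_{\max}^4$.
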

\begin{proof}
    We start by bounding the variance of $\widehat{\nabla}_{\vlambda} \cL_{\dagger,\omega}(\vtheta,\vlambda) = \widehat{\mathbf{J}}_{\dagger}(\vtheta) - \mathbf{b} - \omega \vlambda$, where $\widehat{\mathbf{J}}_{\dagger}(\vtheta) = (\widehat{J}_{\dagger.0}(\vtheta), \ldots, \widehat{J}_{\dagger.U}(\vtheta))^{\top}$ and $\widehat{J}_{\dagger.i}(\vtheta) = \frac{1}{N} \sum_{j=0}^{N-1} C_{i}(\tau_{j})$ with $\tau_{j} \sim \pA(\cdot, \vtheta)$ (in \AB exploration) or $\tau_{j} \sim \pD(\cdot, \vtheta_{j})$ and $\vtheta_{j} \sim \nu_{\vtheta}$ (in \PB exploration).
    Thus, we can notice that the variance arises just from $\mathbf{J}_{\dagger}(\vtheta)$. Defining $\mathbf{C}(\tau) \coloneqq (C_{1}(\tau), \ldots, C_{U}(\tau))^{\top}$, where $C_{i}(\tau) = \sum_{t=0}^{T-1}\gamma^{t} c_{i}(\vs_{\tau,t},\va_{\tau,t}) \le \frac{1-\gamma^{T}}{1-\gamma}$, the following holds:
    \begin{align*}
        \Var\left[ \widehat{\nabla}_{\vlambda} \cL_{\dagger,\omega}(\vtheta,\vlambda) \right] = \frac{1}{N} \Var\left[ \mathbf{C}(\tau_{1})\right] = \frac{1}{N} \E \left[ \left\|\mathbf{C}(\tau_{1}) \right\|_{2}^{2} \right] \le \frac{U (1-\gamma^{T})^{2}}{N (1-\gamma)^{2}}.
    \end{align*}

    We now bound the variance of $\widehat{\nabla}\cL_{\dagger,\omega}(\vtheta,\vlambda)$, for which we distinguish the \PB and the \AB cases. Starting from \PB exploration, we can express the estimator at hand as:
    \begin{align*}
        \left\| \widehat{\nabla}_{\vtheta} \rLp(\vtheta,\vlambda) \right\|_{2} &\le \frac{1}{N} \sum_{j=0}^{N-1} \left\| \nabla_{\vtheta} \log \nu_{\vtheta}(\vtheta_{j}) \right\|_{2} \left\| C_{0}(\tau_{j}) + \sum_{i=1}^{U} \lambda_{i} C_{i}(\tau_{j}) \right\|_{2} \\
        &\le \frac{(1+\| \vlambda \|_{1}) (1-\gamma^{T})}{N(1-\gamma)} \sum_{j=0}^{N-1} \left\| \nabla_{\vtheta} \log \nu_{\vtheta}(\vtheta_{j}) \right\|_{2}.
    \end{align*}
    Thus, we have the following:
    \begin{align*}
        \Var\left[ \widehat{\nabla}_{\vtheta} \rLp(\vtheta,\vlambda) \right] &= \frac{1}{N} \Var \left[ \nabla_{\vtheta} \log \nu_{\vtheta}(\vtheta_{1}) \left( C_{0}(\tau_{1}) + \sum_{i=1}^{U} \lambda_{i} C_{i}(\tau_{1}) \right) \right] \\
        &= \frac{1}{N} \E \left[ \left\| \nabla_{\vtheta} \log \nu_{\vtheta}(\vtheta_{1}) \left( C_{0}(\tau_{1}) + \sum_{i=1}^{U} \lambda_{i} C_{i}(\tau_{1}) \right) \right\|_{2}^{2} \right] \\
        &\le \frac{(1+\| \vlambda \|_{1})^{2} (1-\gamma^{T})^{2}}{N(1-\gamma)^{2}} \E \left[ \left\| \nabla_{\vtheta} \log \nu_{\vtheta}(\vtheta_{1}) \right\|_{2}^{2} \right].
    \end{align*}
    We now recover Lemma~E.4 by~\citet{montenegro2024learning}, stating that under Assumption~\ref{asm:bounded-noise-scores} $\E[\| \nabla_{\vtheta} \log \nu_{\vtheta}(\vtheta') \|_{2}^{2}] \le c \dt \sigma^{-2}$ for every $\vtheta,\vtheta' \in \Theta$. Thus, we can conclude the following:
    \begin{align*}
        \Var\left[ \widehat{\nabla}_{\vtheta} \rLp(\vtheta,\vlambda) \right] \le
        \frac{c \dt (1+\| \vlambda \|_{1})^{2} (1-\gamma^{T})^{2}}{N \sigma^{2} (1-\gamma)^{2}}.
    \end{align*}

    Switching to the \AB case, we can express the estimator as:
    \begin{align*}
        \widehat{\nabla}_{\vtheta} \rLa (\vtheta,\vlambda) = \frac{1}{N} \sum_{j=0}^{N-1} \sum_{t=0}^{T-1} \left( \sum_{l=0}^{t} \nabla_{\vtheta} \log \pi_{\vtheta}(\va_{\tau_{j},l} \mid \vs_{\tau_{j},l}) \right) \gamma^{t} \left( c_{0}(\vs_{\tau_{j},t}, \va_{\tau_{j},t}) + \sum_{i=1}^{U} \lambda_{i} c_{i}(\vs_{\tau_{j},t}, \va_{\tau_{j},t})\right).
    \end{align*}
    Thus, we have the following:
    \begin{align*}
        &\Var \left[ \widehat{\nabla}_{\vtheta} \rLa (\vtheta,\vlambda) \right] \\
        &= \frac{1}{N} \Var \left[ \sum_{t=0}^{T-1} \left( \sum_{l=0}^{t} \nabla_{\vtheta} \log \pi_{\vtheta}(\va_{\tau_{1},l} \mid \vs_{\tau_{1},l}) \right) \gamma^{t} \left( c_{0}(\vs_{\tau_{1},t}, \va_{\tau_{1},t}) + \sum_{i=1}^{U} \lambda_{i} c_{i}(\vs_{\tau_{1},t}, \va_{\tau_{1},t})\right) \right] \\
        &= \frac{1}{N} \E \left[ \left\| \sum_{t=0}^{T-1} \left( \sum_{l=0}^{t} \nabla_{\vtheta} \log \pi_{\vtheta}(\va_{\tau_{1},l} \mid \vs_{\tau_{1},l}) \right) \gamma^{t} \left( c_{0}(\vs_{\tau_{1},t}, \va_{\tau_{1},t}) + \sum_{i=1}^{U} \lambda_{i} c_{i}(\vs_{\tau_{1},t}, \va_{\tau_{1},t})\right) \right\|_{2}^{2} \right] \\
        &\le \frac{1}{N} \E \left[ \sum_{t=0}^{T-1} \gamma^{t} \left( c_{0}(\vs_{\tau_{1},t}, \va_{\tau_{1},t}) + \sum_{i=1}^{U} \lambda_{i} c_{i}(\vs_{\tau_{1},t}, \va_{\tau_{1},t}) \right)^{2} \left( \sum_{t=0}^{T-1} \gamma^{t} \sum_{l=0}^{t} \left\| \nabla_{\vtheta} \log \pi_{\vtheta}(\va_{\tau_{1},l} \mid \vs_{\tau_{1},l}) \right\|_{2}^{2} \right)\right] \\
        &\le \frac{(1+\| \vlambda \|_{1})^{2} (1-\gamma^{T})}{N (1-\gamma)} \E \left[ \sum_{t=0}^{T-1} \gamma^{t} \sum_{l=0}^{t} \left\| \nabla_{\vtheta} \log \pi_{\vtheta}(\va_{\tau_{1},l} \mid \vs_{\tau_{1},l}) \right\|_{2}^{2} \right].
    \end{align*}
    We now recover the result of Lemma~E.3 by~\citet{montenegro2024constraints} stating that, under Assumptions~\ref{asm:mu-reg} and~\ref{asm:bounded-noise-scores}, it holds that $\E[\| \nabla_{\vtheta} \log \pi_{\vtheta}(\tau) \|_{2}^{2}] \le c \da \LCmu^{2} \sigma^{-2}$ for every $\vtheta \in \Theta$ and trajectory $\tau$.
    Thus, we conclude the following:
    \begin{align*}
        \Var \left[ \widehat{\nabla}_{\vtheta} \rLa (\vtheta,\vlambda) \right] \le \frac{c \da \LCmu^{2}(1+\| \vlambda \|_{1})^{2} (1-\gamma^{T})^{3}}{N \sigma^{2} (1-\gamma)^{3}}.
    \end{align*}
\end{proof}
As previously discussed in Section~\ref{sec:alg_general}, we highlight that while $\vargenlambda = \cO(1)$, $\vargentheta = \cO(\sigma^{-2} \omega^{-2})$ when considering the learning process of \cpg. The dependence of the latter from $\omega^{-2}$ arises from the regularization and projection $\Pi_{\Lambda}$, which ensures that the Lagrange multipliers are bounded by $\cO(\omega^{-1})$.

\subsection{Convergence Analysis} \label{subsec:convergence_deterministic}
We are now almost ready to present the convergence of \cpg to the optimal feasible \emph{deterministic} policy obtained by switching off the stochasticity (\ie setting $\sigma = 0$) at the end of the learning process. Recall that in Theorem~\ref{thr:convergencePot} we analyzed the convergence of the \emph{stochastic} (\ie associated with $\cL_{\dagger,\omega}$) potential function $\cP_{\dagger,k}(\chi)$, after establishing in Theorem~\ref{thr:conversion} that it serves as a tool to assess the last-iterate global convergence of \cpg.
Before proceeding, we formally introduce the \emph{deterministic} (\ie associated with $\cL_{\text{D},\omega}$) potential function. Considering $\omega > 0$, $\chi \in [0,1]$, and to be at a generic $k \in \mathbb{N}$ iterate of \cpg, $\cP_{\text{D},k}(\chi)$ i sdefined as:
\begin{align*}
    \cP_{\text{D},k}(\chi) \coloneqq \E\left[ \max_{\vlambda \in \mR^{U}_{\ge 0}} \cL_{\text{D},\omega}(\vtheta_{k}, \vlambda) - \cL_{\text{D},\omega}(\vtheta^*_{\text{D},\omega},\vlambda^*_{\text{D},\omega}) \right] + \chi \E\left[ \max_{\vlambda \in \mR^{U}_{\ge 0}} \cL_{\text{D},\omega}(\vtheta_{k}, \vlambda) - \cL_{\text{D},\omega}(\vtheta_{k},\vlambda_{k}) \right],
\end{align*}
where $(\vtheta^*_{\text{D},\omega},\vlambda^*_{\text{D},\omega})$ is the saddle point of $\cL_{\text{D},\omega}$.

In this part, we first quantify the loss $|\cP_{\dagger,k}(\chi) - \cP_{\text{D},k}(\chi)|$ between the stochastic and deterministic potential functions at a generic iterate $k \in \dsb{K}$ of \cpg. We then leverage this result to derive the sample complexity required to ensure that the deterministic policy deployed in the last iterate of \cpg is an optimal feasible solution for the problem at hand.

\begin{thr} \label{thr:pot-loss}
    Suppose to run \emph{\cpg} for $k$ iterations. Considering (hyper)policies complying with Definitions~\ref{def:white_noise_policies} (\textcolor{vibrantBlue}{AB}) or~\ref{def:white_noise_hyperpolicies} (\textcolor{vibrantRed}{PB}), under Assumptions~\ref{asm:Jd-reg-theta} (\textcolor{vibrantRed}{PB}) or~\ref{asm:Jd-reg-ns} (\textcolor{vibrantBlue}{AB}), for any $\chi \in [0,1]$, the following result holds:
    \begin{align*}
        \left| \cP_{\dagger,k}(\chi) - \cP_{\text{D},k}(\chi) \right| \le 4 (1+\Lambda_{\max}) L_{1\dagger} \sigma \sqrt{d_{\dagger}},
    \end{align*}
    where $\Lambda_{\max} \coloneqq \omega^{-1} U J_{\max}$, $L_{1\text{\textcolor{vibrantRed}{P}}} \coloneqq \LCJdmax$, $L_{1\text{\textcolor{vibrantBlue}{A}}} \coloneqq \LCnsmax$, $d_{\text{\textcolor{vibrantRed}{P}}}\coloneqq \dt$, and $d_{\text{\textcolor{vibrantBlue}{A}}}\coloneqq \da$.
\end{thr}
\begin{proof}
    Considering to be at iteration $k$ of \cpg, we aim to find an upper bound to the quantity $|\cP_{\dagger,k}(\chi) - \cP_{\text{D},k}(\chi)|$. Let $(\vtheta^{*}_{\text{D},\omega},\vlambda^{*}_{\text{D},\omega})$ be a saddle point of $\cL_{\text{D},\omega}$ and let $(\vtheta^{*}_{\dagger,\omega},\vlambda^{*}_{\dagger,\omega})$ be a saddle point of $\cL_{\dagger,\omega}$. Moreover, since we are considering the learning process of \cpg, the norm of the Lagrange multipliers is bounded by $\| \vlambda \|_{1} \le \sqrt{U} \| \vlambda \|_{2} \le \omega^{-1} U J_{\max} \eqqcolon \Lambda_{\max}$ due to the regularization. 
    The first thing we do is to quantify the quantity $|\cP_{\dagger,k}(\chi) - \cP_{\text{D},k}(\chi)|$:
    \begin{align*}
        &\left|\cP_{\dagger,k}(\chi) - \cP_{\text{D},k}(\chi)\right| \\
        &= \left| \E\left[ \max_{\vlambda \in \mR^{U}_{\ge 0}} \cL_{\dagger,\omega}(\vtheta_{k}, \vlambda) - \cL_{\dagger,\omega}(\vtheta^*_{\dagger,\omega},\vlambda^*_{\dagger,\omega}) \right] + \chi \E\left[ \max_{\vlambda \in \mR^{U}_{\ge 0}} \cL_{\dagger,\omega}(\vtheta_{k}, \vlambda) - \cL_{\text{D},\omega}(\vtheta_{k},\vlambda_{k}) \right] \right.  \\
        &\quad \left. - \E\left[ \max_{\vlambda \in \mR^{U}_{\ge 0}} \cL_{\text{D},\omega}(\vtheta_{k}, \vlambda) - \cL_{\text{D},\omega}(\vtheta^*_{\text{D},\omega},\vlambda^*_{\text{D},\omega}) \right] - \chi \E\left[ \max_{\vlambda \in \mR^{U}_{\ge 0}} \cL_{\text{D},\omega}(\vtheta_{k}, \vlambda) - \cL_{\text{D},\omega}(\vtheta_{k},\vlambda_{k}) \right] \right|.
    \end{align*}
    
    For the sake of readability, we introduce the following quantities:
    \begin{align*}
        &\textsf{A} \coloneqq \max_{\vlambda \in \mR^{U}_{\ge 0}}\cL_{\dagger,\omega}(\vtheta_{k},\vlambda) - \max_{\vlambda \in \mR^{U}_{\ge 0}}\cL_{\text{D},\omega}(\vtheta_{k},\vlambda), \\
        &\textsf{B} \coloneqq \cL_{\text{D},\omega}(\vtheta^{*}_{\text{D},\omega},\vlambda^{*}_{\text{D},\omega}) - \cL_{\dagger,\omega}(\vtheta^{*}_{\dagger,\omega},\vlambda^{*}_{\dagger,\omega}), \\
        &\textsf{C} \coloneqq \cL_{\text{D},\omega}(\vtheta_{k},\vlambda_{k}) - \cL_{\dagger,\omega}(\vtheta_{k},\vlambda_{k}).
    \end{align*}

    The quantity $|\cP_{\dagger,k}(\chi) - \cP_{\text{D},k}(\chi)|$ we aim to bound, can be expressed as:
    \begin{align*}
        \left| \cP_{\dagger,k}(\chi) - \cP_{\text{D},k}(\chi) \right| = \left| (1+\chi) \E\left[ \textsf{A} \right]  + \E\left[ \textsf{B} \right] + \chi \E\left[ \textsf{C} \right] \right|.
    \end{align*}

    By simply applying the triangular and Jensen's inequalities, it holds that:
    \begin{align*}
        \left| \cP_{\dagger,k}(\chi) - \cP_{\text{D},k}(\chi) \right| \le  (1+\chi) \E\left[ \left| \textsf{A} \right| \right]  + \E\left[ \left| \textsf{B} \right| \right] + \chi \E\left[ \left| \textsf{C} \right| \right].
    \end{align*}

    Thus, we can now focus on bounding the terms \textsf{A}, \textsf{B}, and \textsf{C}.
    Starting from \textsf{A}:
    \begin{align*}
        \left| \textsf{A} \right| &= \left| \max_{\vlambda \in \mR^{U}_{\ge 0}}\cL_{\dagger,\omega}(\vtheta_{k},\vlambda) - \max_{\vlambda \in \mR^{U}_{\ge 0}}\cL_{\text{D},\omega}(\vtheta_{k},\vlambda) \right| \\
        &\le \left| \max_{\vlambda \in \mR^{U}_{\ge 0}} \left(\cL_{\dagger,\omega}(\vtheta_{k},\vlambda) - \cL_{\text{D},\omega}(\vtheta_{k},\vlambda) \right) \right| \\
        &\le (1 + \Lambda_{\max}) L_{1\dagger} \sigma \sqrt{d_{\dagger}},
    \end{align*}
    where in the last step we applied the result from Theorem~\ref{thr:L-deploy}. 
    We can now focus on the term \textsf{B}:
    \begin{align*}
        \left| \textsf{B} \right| &= \left| \cL_{\text{D},\omega}(\vtheta^{*}_{\text{D},\omega},\vlambda^{*}_{\text{D},\omega}) - \cL_{\dagger,\omega}(\vtheta^{*}_{\dagger,\omega},\vlambda^{*}_{\dagger,\omega}) \right| \\
        &\le \left| \cL_{\text{D},\omega}(\vtheta^{*}_{\dagger,\omega},\vlambda^{*}_{\text{D},\omega}) - \cL_{\dagger,\omega}(\vtheta^{*}_{\dagger,\omega},\vlambda^{*}_{\text{D},\omega}) \right| \\
        &\le (1 + \Lambda_{\max}) L_{1\dagger} \sigma \sqrt{d_{\dagger}},
    \end{align*}
    where we first exploited the properties of saddle points, then we leveraged Theorem~\ref{thr:L-deploy}.
    Finally, we can focus on the term \textsf{C}:
    \begin{align*}
        \left| \textsf{C} \right| = \left| \cL_{\text{D},\omega}(\vtheta_{k},\vlambda_{k}) - \cL_{\dagger,\omega}(\vtheta_{k},\vlambda_{k}) \right| \le (1+\Lambda_{\max}) L_{1\dagger} \sigma \sqrt{d_{\dagger}},
    \end{align*}
    again by applying Theorem~\ref{thr:L-deploy}. 

    Putting all together, and bounding $\chi \le 1$, we conclude the following:
    \begin{align*}
        \left| \cP_{\dagger,k}(\chi) - \cP_{\text{D},k}(\chi) \right| \le 4 (1+\Lambda_{\max}) L_{1\dagger} \sigma \sqrt{d_{\dagger}}.
    \end{align*}
\end{proof}
Notice that the result from Theorem~\ref{thr:pot-loss} has exactly the same form as the one from Theorem~\ref{thr:L-deploy}, except for the fact that here, since we are dealing exclusively with regularized Lagrangian functions, we directly exploited the fact that the Lagrange multipliers have a norm bounded by $\Lambda_{\max}$.

\begin{restatable}[Sample Complexity for Deterministic Deployment]{thr}{DetSC} \label{thr:det-sc}
    Suppose to run \emph{\cpg} for $K$ iterations employing a (hyper)policy complying with Definitions~\ref{def:white_noise_policies} (\AB) or~\ref{def:white_noise_hyperpolicies} (\PB).
    Suppose to be under Assumptions~\ref{asm:Jd-reg-theta} (\PB) or~\ref{asm:Jd-reg-ns} (\AB), \ref{asm:wgd-det}, \ref{asm:mu-reg} (\AB), and~\ref{asm:bounded-noise-scores}. For $\psi \in [1,2]$, $\chi < 1/5$, sufficiently small $\epsilon$ and $\omega$, and a choice of \emph{constant} learning rates $\zeta_{\vlambda} = \cO(\omega \sigma^{2} \epsilon^{2/\psi})$ and $\zeta_{\vtheta} = \omega \zeta_{\vlambda}$,  whenever $K = \cO(\omega^{-3} \sigma^{-2} \epsilon^{-\frac{4}{\psi}+1})$ and the gradients are estimated, we have that:
    \begin{align*}
        \cP_{\text{D},K}(\chi) \le \epsilon + \frac{\beta_{\dagger}(\sigma,\psi)}{\dalpha} + 4(1+\Lambda_{\max})L_{1\dagger} \sigma \sqrt{d_{\dagger}}, 
    \end{align*}
   where $\beta_{\dagger}(\sigma,\psi)$ is quantified in Theorem~\ref{thr:wgd-inherited}, $\Lambda_{\max} \coloneqq \omega^{-1} U J_{\max}$, $L_{1\text{\textcolor{vibrantRed}{P}}} \coloneqq \LCJdmax$, $L_{1\text{\textcolor{vibrantBlue}{A}}} \coloneqq \LCnsmax$, $d_{\text{\textcolor{vibrantRed}{P}}}\coloneqq \dt$, and $d_{\text{\textcolor{vibrantBlue}{A}}}\coloneqq \da$.
\end{restatable}
\begin{proofsketch}
    Under the considered set of assumptions, we recover the results of Theorems~\ref{thr:wgd-inherited}, \ref{thr:inherited-L-reg}, and~\ref{thr:pot-loss} and of Lemma~\ref{lem:bounded-estim-var}, matching the conditions needed to establish the sample complexity exhibited in Theorem~\ref{thr:convergencePot} for ensuring that $\cP_{\dagger,K}(\chi) \le \epsilon + \frac{\beta_{\dagger}(\sigma.\psi)}{\dalpha}$, where we employed the coefficients of the inherited weak $\psi$-GD (Theorem~\ref{thr:wgd-inherited}).

    In particular, recovering \quotes{Part V: Rates Computation} of the proof of Theorem~\ref{thr:convergencePot} with $\psi \in [1,2]$ and inexact gradients, considering $L_1=\cO(1)$ and $L_2=\cO(\omega^{-1})$ (Theorem~\ref{thr:inherited-L-reg}), while $V_{\dagger,\vtheta} = \cO(\omega^{-2} \sigma^{-2})$ and $V_{\vlambda} = \cO(1)$ (Lemma~\ref{lem:bounded-estim-var}), and selecting $\zeta_{\vlambda} = \cO(\omega \sigma^{2} \epsilon^{2/\psi})$ and $\zeta_{\vtheta} = \cO(\omega^{3} \sigma^{2} \epsilon^{2/\psi})$, we have that, with a sample complexity of order:
    \begin{align*}
        K \le \cO\left( \frac{\log \frac{1}{\epsilon}}{\omega^{3} \sigma^{2} \epsilon^{-1 + 4/\psi}} \right),
    \end{align*}
    it is guaranteed that $\cP_{\dagger,K}(\chi) \le \epsilon + \frac{\beta_{\dagger}(\sigma,\psi)}{\dalpha}$. Now, exploiting the result of Theorem~\ref{thr:pot-loss}, we have that the same sample complexity ensures that:
    \begin{align*}
        \cP_{\text{D},K}(\chi) \le \epsilon + \frac{\beta_{\dagger}(\sigma,\psi)}{\dalpha} + 4(1+\Lambda_{\max})L_{1\dagger} \sigma \sqrt{d_{\dagger}}.
    \end{align*}
\end{proofsketch}

We highlight that all the remarks made for Theorem~\ref{thr:convergencePot} also apply in this case. Moreover, the sample complexity achieved here matches that of Theorem~\ref{thr:convergencePot}, with the exception of an additional $\sigma^{-2}$ factor. This term arises from the characterization of the constant $V_{\dagger,\vtheta} = \cO(\omega^{-2} \sigma^{-2})$ in Lemma~\ref{lem:bounded-estim-var}, due to the specific noise model employed to define the \AB and \PB exploration paradigms.

We also note that Theorem~\ref{thr:conversion} remains applicable in this setting, as it holds for general potential functions $\cP_{\dagger,k}(\chi)$, including the deterministic one $\cP_{\text{D},k}(\chi)$. As discussed in Section~\ref{sec:alg_general}, and consistently with the analysis without deterministic deployment, Theorem~\ref{thr:conversion} suggests choosing $\omega = \cO(\epsilon)$. In the current context, Theorem~\ref{thr:det-sc} further suggests setting $\sigma = \cO(\epsilon)$ in order to ensure that $\cP_{\text{D},K}(\chi) \le \cO(\epsilon) + c$, where $c$ is a constant.

Therefore, by letting both $\omega$ and $\sigma$ scale as $\cO(\epsilon)$, the resulting sample complexity becomes $K = \cO(\epsilon^{4 - 4/\psi} \log \epsilon^{-1})$, as summarized in Table~\ref{tab:det-sc}. We emphasize that this result aligns with the known sample complexity bounds for policy gradient methods under the same noise model in the unconstrained setting~\citep{montenegro2024learning}.

Nonetheless, we remark that the theoretical recommendation of setting $\sigma = \cO(\epsilon)$ may be impractical in real applications. In practice, using such a low level of stochasticity may lead to slower convergence, even though it results in a more accurate deterministic deployment~\citep{montenegro2025convergence}.

Finally, by retrieving the specific quantities of the \AB and \PB exploration paradigms, we recover the known trade-off between them~\citep{metelli2018policy}: \AB may suffer from long interaction horizons or high-dimensional action spaces (large $T$ or $\da$), whereas \PB may suffer high-dimensional parameter spaces (large $\dt$).

\begin{table}[t!]
\begin{center}
{\renewcommand{\arraystretch}{1.5}
\medmuskip=0mu
\thinmuskip=0mu
\thickmuskip=0mu
\begin{tabular}{|c||c|c|c|}
    \hline
    & \multicolumn{3}{c|}{\cellcolor{vibrantOrange!40} \bfseries Deterministic Deployment with Estimated Gradients} \\\cline{2-4}
    & \cellcolor{vibrantOrange!20} $\psi=1$ (GD) & \cellcolor{vibrantOrange!20} $\psi \in (1,2)$ & \cellcolor{vibrantOrange!20} $\psi=2$ (PL)   \\ \hline\hline
    \cellcolor{vibrantGrey!20} \bfseries Fixed $\omega$ and $\sigma$ & $\omega^{-3} \sigma^{-2} \epsilon^{-3}\log(\epsilon^{-1})$ & $\omega^{-3} \sigma^{-2} \epsilon^{1-4/\psi} \log({\epsilon^{-1}})$ & $\omega^{-3} \sigma^{-2} \epsilon^{-1}\log(\epsilon^{-1})$ \\ \hline
    \cellcolor{vibrantGrey!20} \bfseries $\omega= \cO(\epsilon)$ and $\epsilon= \cO(\epsilon)$ & $\epsilon^{-8}\log(\epsilon^{-1})$ & $\epsilon^{-4-4/\psi}\log(\epsilon^{-1})$ & $\epsilon^{-6}\log(\epsilon^{-1})$ \\ \hline
\end{tabular}}
\caption{Summary of the sample complexity results for the deterministic deployment \cpg when considering estimated gradients.}
\label{tab:det-sc}
\end{center}
\end{table}

\section{Related Work} \label{apx:related}

In this section, we review related work primarily focusing on policy optimization methods for CRL, convergence guarantees of primal-dual approaches, and the learning of deterministic policies in constrained environments.

\paragraph{Policy Optimization Approaches for Constrained Reinforcement Learning.}
Policy Optimization based algorithms for Constrained Reinforcement Learning mostly follow \emph{primal}-only or \emph{primal-dual} approaches.
\emph{Primal}-only algorithms~\citep{dalal2018safe,chow2018lyapunov,yu2019convergent,liu2020ipo,xu2021crpo} avoid considering dual variables by focusing on the design of the objective function and by designing the update rules for the policy at hand incorporating the constraint satisfaction part.

The main benefit of employing primal-only algorithms lies in the fact that there is no need to consider another variable to learn, and therefore, no need to tune its learning rate. However, few of the existing methods establish global convergence to an optimal feasible solution. For instance, \cite{xu2021crpo} propose CRPO, an algorithm employing an \emph{unconstrained} policy maximization update taking into account the reward when all the constraints are satisfied, while leveraging on-policy minimization updates in the direction of violated constraint functions. Moreover, it exhibits average global convergence guarantees for the tabular setting.
On the other hand, \emph{primal-dual} algorithms~\citep{chow2017risk,achiam2017constrained,tessler2018reward,stooke2020responsive,ding2020natural,ding2021provably,bai2022achieving,ying2022dual,bai2023achieving,gladin2023algorithm,ding2024last} are the most commonly used and investigated. 
Indeed, the effectiveness of using the primal-dual approach is justified by \cite{paternain2019constrained}, which states that this kind of approach has zero duality gap under Slater's condition when optimizing over the space of all the possible stochastic policies.
Among the reported works, \cite{stooke2020responsive} propose PID Lagrangian, a method to update the dual variable, smoothing the oscillations around the threshold value of the costs during the learning. The practical strength of such a method is that it can be paired with any of the existing policy optimization methods. The other cited works are treated in detail in the next paragraph.

\paragraph{Lagrangian-based Policy Search Convergence Guarantees.} A lot of research effort has been spent on studying the convergence guarantees of primal-dual policy optimization methods. In this field, the goal is to ensure last-iterate convergence guarantees with rates that are dimension-free, \ie independent of the state and action spaces' dimensions, and work with multiple constraints. In the rest of this paragraph, we talk about single time-scale algorithms when the methods at hand prescribe the usage of the same step sizes for both the primal and dual variables' updates.
\cite{felisia2002self} and \cite{bhatnagar2012online} propose primal-dual policy gradient-based methods built upon distinct time scales and relying on nested loops. Such methods only show \emph{asymptotic} convergence guarantees.
\cite{chow2017risk} propose two primal-dual methods ensuring \emph{asymptotic} convergence guarantees. The peculiarity of those methods lies in the fact that their notion of CMDP encapsulates risk-based constraints, introducing an additional learning variable.
Their algorithms have guarantees of \emph{asymptotic} convergence to stationary points.
The recent works by \cite{zheng2022constrained} and \cite{moskovitz2023reload} also propose methods ensuring \emph{asymptotic} global convergence guarantees. These methods exploit occupancy-measure iterates rather than policy iterates.
\cite{ding2020natural} propose NPG-PD, which relies on a natural policy gradient approach and, under Slater's assumption, ensures dimension-dependent \emph{average}-iterate global convergence guarantees in the single-constrained setting with a single time-scale and with exact gradients. This work has been extended by \cite{ding2022convergence}, striking dimension-free rates, but still just guaranteeing \emph{average}-iterate convergence with exact gradients. However, sample-based versions of NPG-PD achieving, under additional assumptions, the same convergence rates are provided by the authors.
Another work ensuring an \textit{average}-iterate rate is the one by~\citet{liu2021policy}.
The latter exhibits a convergence rate of order $\tilde{\cO}(\epsilon^{-1})$, considering to act in tabular CMDPs with softmax policies and having access to exact gradients and to a generative model. \citet{liu2021policy} propose also a sample-based version of their algorithm, keeping the same setting previously described, which ensures a convergence rate on \textit{average} of order $\tilde{\cO}(\epsilon^{-3})$.
Both \cite{ying2022dual} and \cite{gladin2023algorithm} propose algorithms involving regularization. The proposed methods rely on natural policy-based subroutines and show dimension-dependent \emph{last-iterate} global convergence guarantees, relying on two time-scales. These methods work also with multiple constraints.
Additionally, \cite{ding2024last} propose RPG-PD and OPG-PD, exhibiting \emph{last-iterate} global convergence guarantees under Slater's condition in a single-constraint setting. The former is a regularized version of the algorithm proposed by \cite{ding2020natural}, showing \emph{last-iterate} global convergence at a sublinear rate. The latter leverages on the optimistic gradient method~\citep{hsieh2019convergence} to unlock a faster linear convergence rate. These methods show single time-scale dimension-dependent rates and both leverage on exact gradients. However, for RPG-PD there exists an \emph{inexact} version showing, under additional assumptions on the statistical and transfer errors and the relative condition number~\citep[][Assumption 2]{ding2024last}, the same guarantees of the exact one.
Finally, \cite{mondal2024last} introduce PDR-ANPG a primal dual-based regularized accelerated natural policy gradient algorithm that utilizes entropy and quadratic regularizers in the Lagrangian function and a natural policy gradient (NPG) oracle as a subroutine. This method, which is not single time scale, operates in the setting of CMDPs with $|\cS| = +\infty$, $|\cA| < +\infty$, and $U=1$. Under the Slater condition, considering a fisher information matrix (FIM) to be positive definite, and considering a general policy parameterization with a policy-class error $\epsilon_{\text{bias}}>0$, PDR-ANPG exhibits a sample complexity of order $\cO(\epsilon^{-2} \min\{\epsilon^{-2}, (\epsilon_{\text{bias}}^{-1/3})\} \log^{2}\epsilon^{-1})$.
It is worth noticing that all the mentioned works just consider the \emph{action-based} exploration approach for policy optimization, while the \emph{parameter-based} one remains unexplored.
For the sake of clarity, Table~\ref{tab:convergence_comparison} shows a detailed comparison between our approach and the other presented methods exhibiting \emph{last-iterate} global convergence guarantees.

Furthermore, \citet{vaswani2022near} have recently proposed a dimension-dependent lower bound for the sample complexity of $\Omega\left(\epsilon^{-2}\right)$, assuming to be under the Slater condition and considering single-constrained CMDPs with finite state and action spaces.

\setlength\extrarowheight{4pt}
\begin{table}[t]{
    \renewcommand{\arraystretch}{1.5}
        \resizebox{\linewidth}{!}{
        \begin{tabular}{|c||c|c|c|c|c|c|c|c|}
            \hline
            \bfseries Algorithm & \bfseries Dimension-free & \bfseries Setting & \bfseries \makecell{Exploration\\Type} & \bfseries \makecell{Single\\time-scale} & \bfseries Gradients & \bfseries Assumptions & \bfseries \makecell{Sample\\Complexity} & \bfseries \makecell{Iteration\\Complexity} \\
            \hline
            \hline
            \makecell{Dual Descent \\ \citep{ying2022dual}} & \xmark & \makecell{$U \ge 1$ \\ $T=\infty$ \\ Softmax param.} & \textcolor{vibrantBlue}{AB} & \xmark & Inexact & \makecell{Slater\\Sufficient Exploration} & $\cO\left( \epsilon^{-2} \log^2 \epsilon^{-1} \right)$ & $\cO\left( \log^2 \epsilon^{-1} \right)$ \\
            \hline
            \makecell{Cutting-Plane \\ \citep{gladin2023algorithm}} & \xmark & \makecell{$U \ge 1$ \\ $T=\infty$ \\ Softmax param.} & \textcolor{vibrantBlue}{AB} & \xmark & Inexact & \makecell{Slater\\Uniform Ergodicity\\Oracle} & $\cO\left( \epsilon^{-4} \log^3 \epsilon^{-1} \right)$ & $\cO\left( \log^3 \epsilon^{-1} \right)$ \\
            \hline
            \makecell{Exact RPG-PD \\ \citep{ding2024last}} & \xmark & \makecell{$U=1$\\$T=\infty$\\Softmax param.} & \textcolor{vibrantBlue}{AB} & \cmark & \makecell{Exact} & \makecell{Slater} & - & $\cO\left(\epsilon^{-6} \log^2 \epsilon^{-1}\right)$ \\
            \hline
            \makecell{Inexact RPG-PD \\ \citep{ding2024last}} & \xmark & \makecell{$U=1$\\$T=\infty$\\Softmax param.} & \textcolor{vibrantBlue}{AB} & \cmark & \makecell{Inexact} & \makecell{Slater\\Stat. Err. Bounded\\Transf. Err. Bounded\\Cond. Num. $< +\infty$} & $\cO\left(\epsilon^{-6} \log^2 \epsilon^{-1}\right)$  & $\cO\left(\epsilon^{-6} \log^2 \epsilon^{-1}\right)$ \\
            \hline
            \makecell{OPG-PD \\ \citep{ding2024last}} & \xmark & \makecell{$U=1$\\$T=\infty$\\Softmax param.} & \textcolor{vibrantBlue}{AB} & \cmark & Exact & Slater & - & $\cO\left(\log^2 \epsilon^{-1}\right)$ \\
            \hline
            \makecell{PDR-ANPG \\ \citep{mondal2024last}} & \xmark & \makecell{$U=1$\\$T=\infty$\\General param.} & \textcolor{vibrantBlue}{AB} & \xmark & Exact & \makecell{Slater\\NPG Oracle\\$\epsilon_{\text{bias}}>0$\\FIM Positive Definite} & $\cO(\epsilon^{-2} \min\{\epsilon^{-2},(\epsilon_{\text{bias}})^{-1/3}\}\log^{2}\epsilon^{-1})$ & - \\
            \hline
            \rowcolor{lavendermist} 
            \Gape[0pt][2pt]{\makecell{\cpg\\(\texttt{This work})}} & \cmark & \Gape[0pt][2pt]{\makecell{$U \ge 1$\\$T \in \mathbb{N} \cup \{ \infty\}$\\General param.}} & \textcolor{vibrantBlue}{AB} and \textcolor{vibrantRed}{PB} & \xmark & \Gape[0pt][2pt]{\makecell{Exact\\Inexact}} & \Gape[0pt][2pt]{\makecell{Asm. \ref{asm:assunzione},\\ \ref{asm:wgd}, \ref{asm:L_grad_lip}, \ref{ass:boundedVariance}}} & Table~\ref{tab:summary} & Table~\ref{tab:summary} \\
            \hline
            \hline
            \makecell{Lower Bound \\ \citep{vaswani2022near}} & \xmark & \makecell{$U=1$\\$T=\infty$} & - & - & Inexact & Slater & $\Omega\left(\epsilon^{-2}\right)$ & -\\
            \hline
        \end{tabular}
    }}

    \vspace{0.3cm}
    
    \caption{Comparison among primal-dual methods ensuring last-iterate global convergence guarantees.}
    \label{tab:convergence_comparison}
\end{table}

\paragraph{Learning Deterministic Policies in CMDPs.}
While policy gradient methods have been extensively studied in the context of CRL, most existing approaches focus exclusively on stochastic policies, whereas the study of deterministic policies for CMDPs has received comparatively little attention. Deterministic policies, however, are crucial for real-world applications where reliability, safety, and predictability are essential. Despite this, very few works have tackled the problem of learning deterministic policies in CRL settings, particularly in continuous-state and continuous-action CMDPs.
A recent contribution in this direction is presented by \citet{rozada2024deterministic}, who introduce the \textit{Deterministic Policy Gradient Primal-Dual} (D-PGPD) algorithm, a novel method designed to directly learn deterministic policies in CMDPs. The proposed approach leverages an \emph{entropy-regularized Lagrangian} formulation, where the primal update performs a proximal-point-type ascent step solving a quadratic-regularized maximization sub-problem, while the dual update performs a gradient descent step solving a quadratic-regularized minimization sub-problem. The theoretical contribution of this work is the proof that the exact version of D-PGPD achieves asymptotically an $\epsilon$-optimal solution in $\cO(\epsilon^{-6})$ iterations, making it one of the first primal-dual methods that directly optimize deterministic policies in  CRL. Moreover, an approximated version of D-PGPD (namely AD-PGPD), incorporating function approximation, achieves the same convergence rate under the assumption that the approximation error satisfies $\epsilon_{\text{approx}} = \cO(\epsilon^4)$. These rates hold under the assumption that the model of the environment is known. 
A key limitation of the proposed approach is that it considers only a single constraint, restricting its applicability to more complex multi-constrained CMDPs commonly found in real-world applications. Additionally, unlike stochastic policy-based CRL approaches, where exploration is driven by the inherent randomness of the policy, D-PGPD learns a deterministic policy directly, relying entirely on the environment to provide the required exploration. While this design ensures stable and consistent policy execution, it also presents a major drawback: the lack of explicit exploration mechanisms significantly limits the applicability of D-PGPD in practice, as it may struggle in environments where intrinsic randomness is insufficient to ensure adequate state-action space coverage.
To extend D-PGPD to a model-free setting, the authors propose a sample-based version of AD-PGPD, which leverages rollouts for policy evaluation. However, this approach introduces a significant complexity increase: the model-free algorithm requires $\cO(\epsilon^{-18})$ rollouts to compute an $\epsilon$-optimal policy, making it substantially less practical for large-scale real-world tasks.

\section{Numerical Validation} \label{sec:experiments}
In this section, we empirically validate the theoretical results established throughout the paper. Further experimental details are reported in Appendix~\ref{apx:exp}.\footnote{The code to reproduce the experiments is available at~\url{https://github.com/MontenegroAlessandro/MagicRL/tree/constraints}.}

Before proceeding, we clarify that \cpgpe and \cpgae refer to the \PB and \AB variants of \cpg, respectively. We denote by \dcpgpe and \dcpgae their corresponding versions with deterministic deployment, obtained by switching off the stochasticity (\ie setting $\sigma = 0$). Accordingly, the curves shown for \dcpgpe and \dcpgae correspond to the performances and costs of the deterministic policies encountered during training by \cpgpe and \cpgae, respectively.

The section is organized as follows. Section~\ref{subsec:comparison} presents comparisons between the proposed algorithms and state-of-the-art baselines, Section~\ref{subsec:exp_deterministic} investigates the effects of deterministic deployment on policies learned via \cpg methods, and Section~\ref{subsec:sensitivity} provides a sensitivity analysis on the impact of the regularization parameter~$\omega$.

\subsection{Comparison Against Baselines}
\label{subsec:comparison}
\paragraph{Comparison in DGWW.} We compare our proposal \cpgae against the sample-based versions of NPG-PD~\citep[][Appendix H]{ding2020natural} and RPG-PD~\citep[][Appendix C.9]{ding2024last}.
The environment in which the methods are tested is the Discrete Grid World with Walls (DGWW, see Appendix~\ref{apx:exp}) with a horizon of $T=100$.
In this experiment, all the methods aim to learn the parameters of a tabular softmax policy with $196$ parameters, maximizing the trajectory reward while considering a single constraint on the average trajectory cost, for which we set a threshold $b=0.2$. 
All the methods were run for $K=3000$ iterations with a batch size of $N=10$ trajectories per iteration, and with constant learning rates.
In particular, for both \cpgae and NPG-PD, we employed $\zeta_{\vtheta} = 0.01$ and $\zeta_{\vlambda} = 0.1$, while for RPG-PD we selected $\zeta_{\vtheta} = 0.01$ and $\zeta_{\vlambda} = 0.01$.
For \cpgae and RPG-PD, we used a regularization constant $\omega=10^{-4}$.
All the details about the experimental setting are summarized in Table~\ref{tab:exp-dgww}.
We would like to stress that, as prescribed by the respective convergence theorems, we chose a two-timescale learning rate approach for \cpgae and a single-timescale one for RPG-PD.
Figures~\ref{subfig:ding_comparison-dgww-perf} and~\ref{subfig:ding_comparison-dgww-cost} show the performance curves (\ie the one associated with the objective function and the one for the costs). 
As can be noticed, \cpgae manages to strike the objective of the constrained optimization problem with less trajectories. 
Indeed, the sample-based NPG-PD requires to estimate the value and the action-value functions for all the states and state-action pairs, resulting in analyzing $|\cS| + |\cS| |\cA|$ additional trajectories \wrt \cpgae for every iteration of the algorithm.
The sample-based RPG-PD also requires additional trajectories to be analyzed, which, in practice, for a correct learning behavior, result to be the same in number as the extra ones analyzed by NPG-PD.
In this environment, \dcpgae exhibits almost the same behavior of \cpgae, thus meaning that the encountered stochastic policies do not meet a significant loss in performances and costs when switching off the stochasticity.

\begin{figure}[t!]
    \centering
    \subfloat[Performance Comparison in DGWW.\label{subfig:ding_comparison-dgww-perf}]{
        \includegraphics[width=0.38\textwidth]{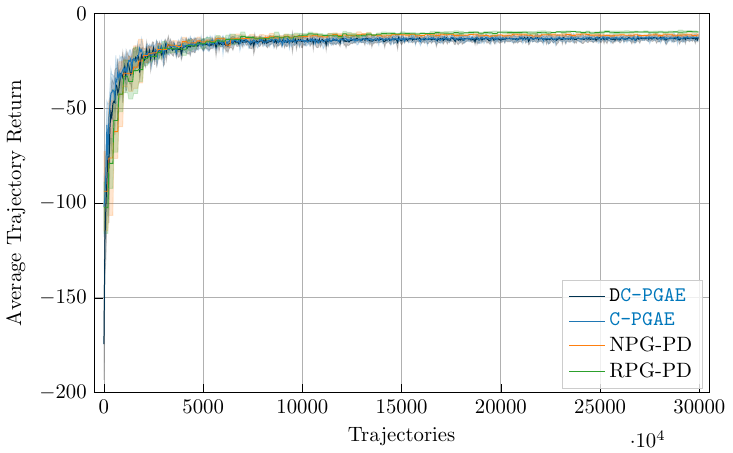}
    }
    \hspace{0.5cm}
    \subfloat[Performance Comparison in LQR.\label{subfig:ding_comparison-lqr-perf}]{
        \includegraphics[width=0.38\textwidth]{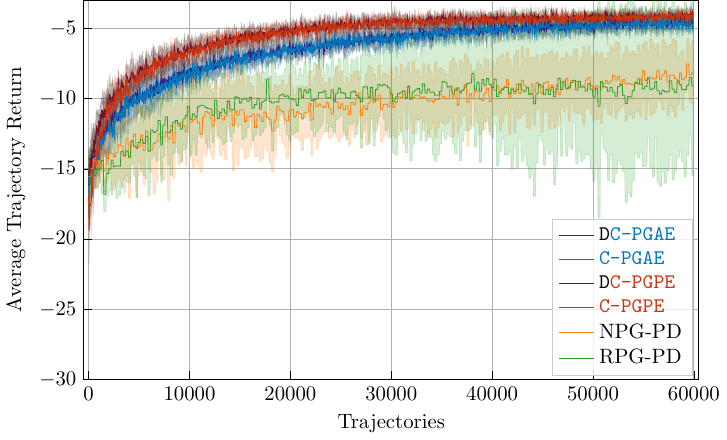}
    } 

    \vspace{0.8cm}
    
    \subfloat[Cost Comparison in DGWW.\label{subfig:ding_comparison-dgww-cost}]{
        \includegraphics[width=0.38\textwidth]{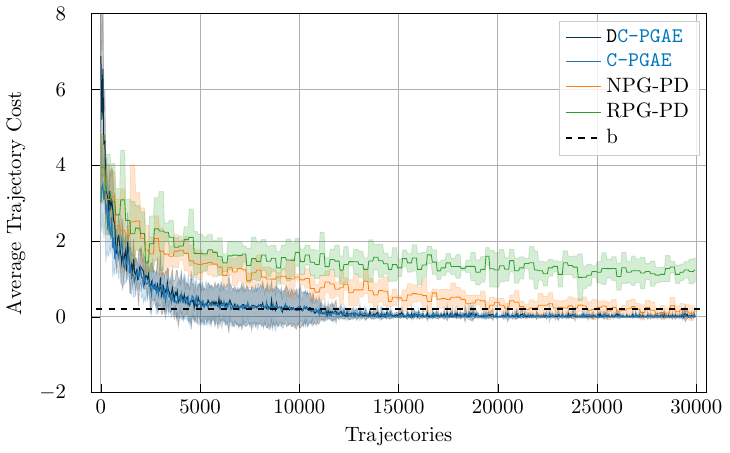}
    }
    \hspace{0.5cm}
    \subfloat[Cost Comparison in LQR.\label{subfig:ding_comparison-lqr-cost}]{
        \includegraphics[width=0.38\textwidth]{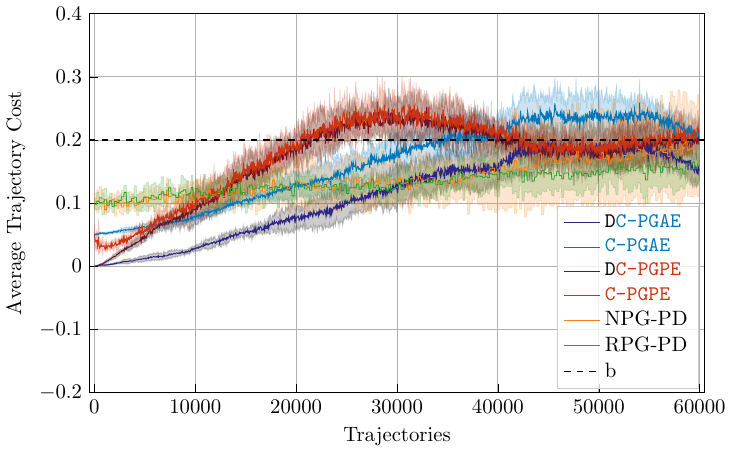}
    }

    \caption{Average return and cost curves in the \emph{CostLQR} and \emph{DGWW} environments (5 runs, mean $\pm$ $95\%$ C.I.).}
    \label{fig:ding_comparison}
\end{figure}

\paragraph{Comparison in LQR.} We compare our proposals \cpgae and \cpgpe against the continuous sample-based version of NPG-PD~\citep[][Algorithm 1]{ding2022convergence} with works with generic policy parameterizations. 
In the following, we refer to this version of NPG-PD as NPG-PD2.
Moreover, we added a ridge-regularized version of NPG-PD2, which we call RPG-PD2, to resemble the type of regularization we employed for our proposed methods.
For all the regularized methods (\ie \cpgae, \cpgpe, and RPG-PD2) we selected as regularization constant $\omega=10^{-4}$.
The setting for this experiment considers a bidimensional LQR environment with a single cost over the provided actions (see Appendix~\ref{apx:exp}) and with a fixed horizon $T=50$.
Here, the methods aim at maximizing the average reward over trajectories, while keeping the average cost over trajectories under the threshold $b=0.9$.
In particular, \cpgae learns the parameters of a linear gaussian policy with a variance $\sigma_{\text{A}}^2 = 10^{-3}$ and employing a learning rate schedule governed by the Adam scheduler~\citep{kingma2015adam} with $\zeta_{\vtheta,0}=0.001$ and $\zeta_{\vlambda,0}=0.01$.
\cpgpe learns the parameters of a Gaussian hyperpolicy, with a variance $\sigma_{\text{P}}^2 = 10^{-3}$, which samples the parameters of a deterministic linear policy. It employs a learning rate schedule also governed by Adam with $\zeta_{\vrho,0}=0.001$ and $\zeta_{\vlambda,0}=0.01$.
Both \cpgae and \cpgpe were run for $K=6000$ iterations with a batch of $N=100$ trajectories per iteration.
NPG-PD2 and RPG-PD2 are both actor-critic methods which were run for $K=1000$ iterations with a batch size of $N=600$ trajectories per iteration. In particular, among the trajectories of the reported batch size, $N_1 = 500$ was used for the inner critic-loop, while $N_2 = 100$ for performance and cost estimations. The inner loop step size was selected as a constant, as prescribed by the original algorithm, and with a value $\alpha=10^{-5}$. Furthermore, since such methods were designed for infinite-horizon discounted environments, we tested them on the same LQR as for \cpgae and \cpgpe, but leaving $T=1000$ and $\gamma=0.98$ (the effective horizon is $(1-\gamma)^{-1}=50$). The step sizes for the primal and dual variables updates were governed by Adam with $\zeta_{\vtheta,0}=0.003$ and $\zeta_{\vlambda,0}=0.01$. As for \cpgae, both NPG-PD2 and RPG-PD2 aimed at learning the parameters of a linear Gaussian policy, with variance $\sigma_{\text{A}}^2 = 10^{-3}$.
All the details about this experiment are summarized in Table~\ref{tab:exp-cost-lqr}.
Figures~\ref{subfig:ding_comparison-lqr-perf} and~\ref{subfig:ding_comparison-lqr-cost} report the learning curves for the average return and the cost over trajectories.
As can be seen, our methods manage to solve the constrained optimization problem at hand by leveraging on less trajectories.
Indeed, NPG-PD2 and RPG-PD2 suffer the inner critic loop, which adds additional trajectories to be analyzed per iteration (in this specific case $N_1=500$).
We stress that the actor-critic methods were very sensible to the hyperparameter selection, especially to the length and the step size of the inner loop.
When considering \dcpgpe and \dcpgae, \ie the deterministic policy curves associated with \cpgpe and \cpgae respectively, we observe that their overall behavior remains comparable. However, a notable difference arises in the cost curve under \AB exploration, where a significant reduction in the incurred costs leads the resulting deterministic policies to consistently satisfy the cost constraint.

\paragraph{Comparison in RobotWorld.} We evaluate \cpgae and \cpgpe against the sample-based versions of AD-PGPD~\citep{rozada2024deterministic} and PGDual~\citep{zaho2021pgdual, brunke2021pgdual} in the \emph{RobotWorld} environment (see Appendix~\ref{apx:exp}), which is a modification of the \emph{CostLQR} one with quadratic reward and cost functions and where the agents are allowed to control both velocity and acceleration. The \cpg algorithms operate with a finite horizon of $T = 100$ and a discount factor of $\gamma = 1$, while AD-PGPD and PGDual use an infinite horizon $T = 1000$ with $\gamma = 0.99$. Also in this case, the aim is to maximize the performance function while keeping the unique cost function under the threshold $b = 1000$. We highlight that this experiment is the same presented in~\citep{rozada2024deterministic} to which we added our methods. That being said, both AD-PGPD and PGDual as prescribed in~\citep{rozada2024deterministic}. \cpgpe employs a linear gaussian hyperpolicy with a variance $\sigma_{\text{P}} = 10^{-6}$, collecting a batch of $N=100$ trajectories per iteration, using a regularization parameter $\omega=10^{-4}$, and learning rate schedules governed by Adam~\citep{kingma2015adam} with initial values $\zeta_{\vtheta,0}=5\cdot10^{-6}$ and $\zeta_{\vlambda,0}=5\cdot10^{-3}$.
\cpgae employs a linear gaussian stochastic policy with a variance $\sigma_{\text{A}} = 5\cdot10^{-2}$, collecting a batch of $N=100$ trajectories per iteration, using a regularization parameter $\omega=10^{-4}$, and learning rate schedules governed by Adam~\citep{kingma2015adam} with initial values $\zeta_{\vtheta,0}=5\cdot10^{-6}$ and $\zeta_{\vlambda,0}=10^{-4}$. Both \cpgpe and \cpgae were run for $K = 10000$ iterations. All the hyperparameters are further presented in Table~\ref{tab:exp-cost-robotworld}.
Figure~\ref{fig:ding_comparison_deterministic} presents the learning curves for performance and cost across different algorithms. The results highlight that \cpg-based methods consistently achieve better constraint satisfaction while maintaining competitive performance. Notably, \cpgae and \cpgpe and their deterministic deployment counterparts \dcpgae and \dcpgpe show faster convergence compared to AD-PGPD and PGDual, which exhibit significant variance and instability in both performance and cost. Furthermore,  deterministic variants of \cpg demonstrate a lower constraint violation than the stochastic counterpart, especially when dealing with \PB exploration.

\begin{figure}[t!]
    \centering
    \subfloat[Performance Comparison.]{
        \includegraphics[width=0.38\linewidth]{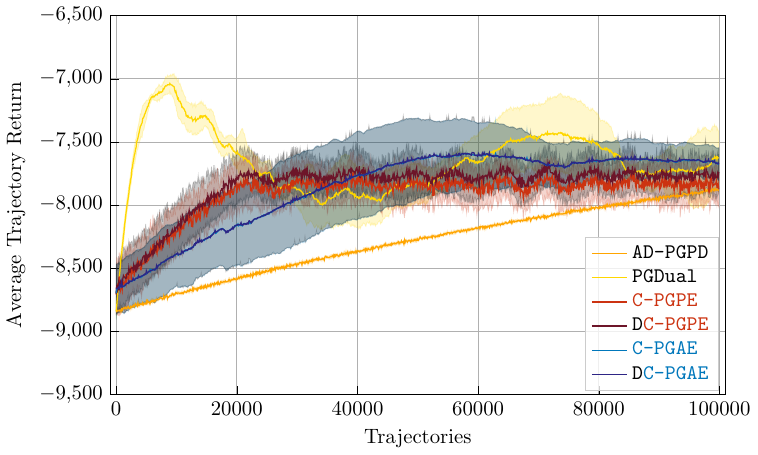}
        \label{fig:ding_performance_comparison}
    }
    \hspace{0.5cm}
    \subfloat[Cost Comparison.]{
        \includegraphics[width=0.38\linewidth]{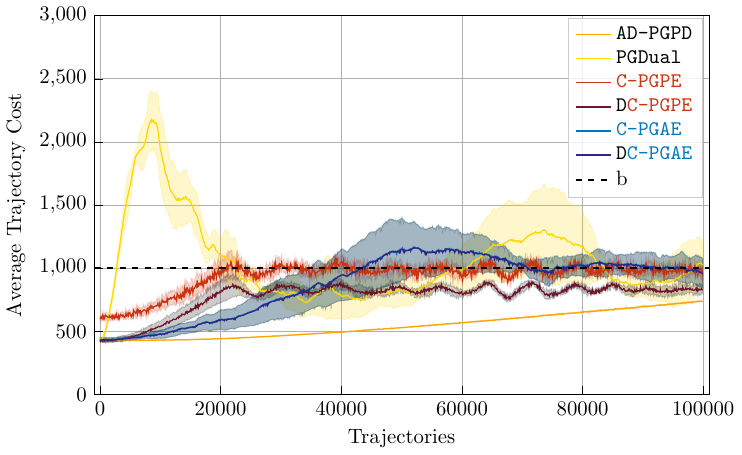}
        \label{fig:ding_cost_comparison}
    }
    \caption{Average return and cost curves in \emph{RobotWorld} (5 runs, mean $\pm$ $95\%$ C.I.).}
    \label{fig:ding_comparison_deterministic}
\end{figure}

\subsection{Deterministic Deployment Study}
\label{subsec:exp_deterministic}

In this experiment, we empirically analyze the deterministic deployment of both \cpgae and \cpgpe when learning by employing a fixed stochasticity ($\sigma > 0$) and then deploying a deterministic policy switching off the stochasticity ($\sigma=0$) of the last parameterization encountered while learning. In particular, here we consider the \emph{CostSwimmer-v4} and the \emph{CostHopper-v4} environments (see Appendix~\ref{apx:exp} for details) with $T=100$ and $\gamma=1$. The employed version for the environments resemble the original one from the MuJoCo control suite~\citep{todorov2012mujoco}, but introducing a cost function representing the energy associated with the control action.
In this set of experiments, we study the difference in performance and cost when switching from a stochastic (hyper)policy to a deterministic policy at the end of the learning. In this case, we averaged the last $100$ iterates to evaluate the actual deterministic deployment. Additionally, we conducted this deployment loss study for diverse values of stochasticity $\sigma$. Both \cpgpe and \cpgae were run for $K=3000$ iterations with a batch size of $N=100$ trajectories collected per iteration. The learning rates were governed by Adam~\citep{kingma2015adam}, the regularization parameter was set to $\omega=10^{-4}$, and both the methods employed linear Gaussian (hyper)policies with variances $\sigma^{2} \in \{0.01, 0.05, 0.1, 0.5, 1\}$. Further details on the setting employed for this set of experiments are presented in Tables~\ref{tab:exp-cost-swimmer} and~\ref{tab:exp-cost-hopper}.
In Figures~\ref{fig:variance_study_cpgae_swimmer} and~\ref{fig:variance_study_cpgae_hopper} it is possible to note that, as the stochasticity parameter $\sigma$ grows, the distance of $\textcolor{vibrantBlue}{J_{\text{A}, 1}}(\vtheta_K)$ and $\textcolor{vibrantRed}{J_{\text{P}, 1}}(\vtheta_K)$ from ${J_{\text{D}, 1}}(\vtheta_K)$ increases, while the distance between $\textcolor{vibrantBlue}{J_{\text{A}, 0}}(\vtheta_K)$ and $\textcolor{vibrantRed}{J_{\text{P}, 0}}(\vtheta_K)$ from ${J_{\text{D}, 0}}(\vtheta_K)$ shows the same straightforward behavior only in \AB exploration---in \PB exploration, this is respected only when the learned policy has meaningful performance values. Furthermore, the impact of different exploration paradigms on the learning curves can be observed. In \cpgae, where noise is injected at each time step, the variance is significantly higher compared to \cpgpe, where noise is sampled only once at the beginning of each trajectory. This distinction results in more stable learning dynamics for \cpgpe. However, \cpgpe also exhibits sensitivity to the magnitude of the injected noise, which can negatively affect its learning capabilities when the noise level is too high.
Finally, we highlight that empirically there exists an \quotes{optimal} value for the stochasticity $\sigma$ leading to a parameterization $\vtheta_{K}$ resulting in a deterministic policy maximizing the performance while staying below the cost threshold. 

\begin{figure}[t!]
    \centering
    \subfloat[Performance Comparison on \cpgae.]{
        \includegraphics[width=0.38\textwidth]{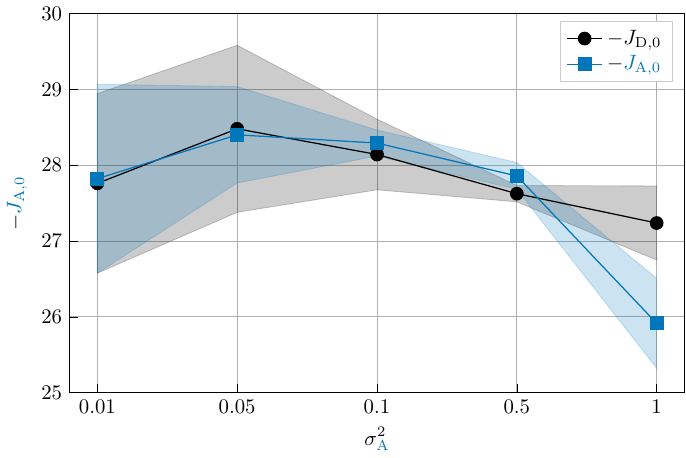}
    }
    \hspace{0.5cm}
    \subfloat[Cost Comparison on \cpgae.]{
        \includegraphics[width=0.38\textwidth]{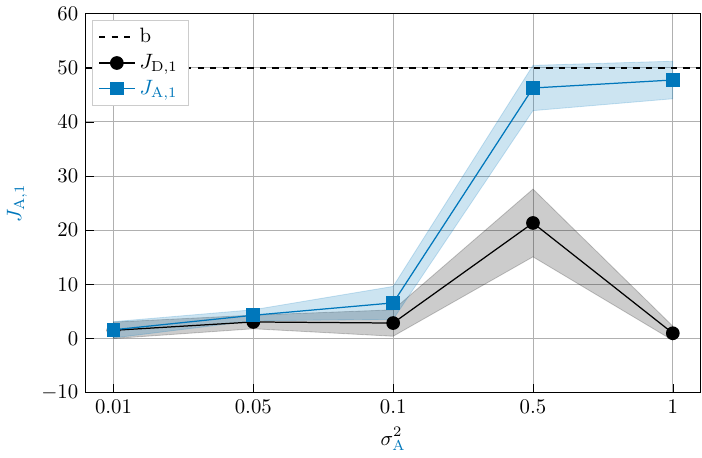}
    } 

    \vspace{0.5cm}
    
    \subfloat[Performance Comparison on \cpgpe.]{
        \includegraphics[width=0.38\textwidth]{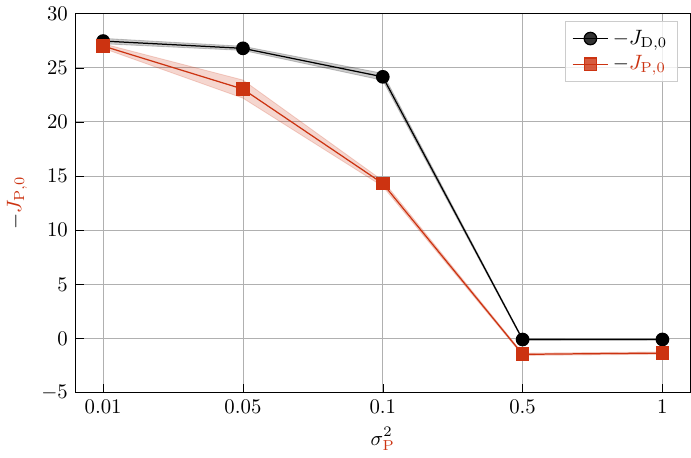}
    }
    \hspace{0.5cm}
    \subfloat[Cost Comparison on \cpgpe.]{
        \includegraphics[width=0.38\textwidth]{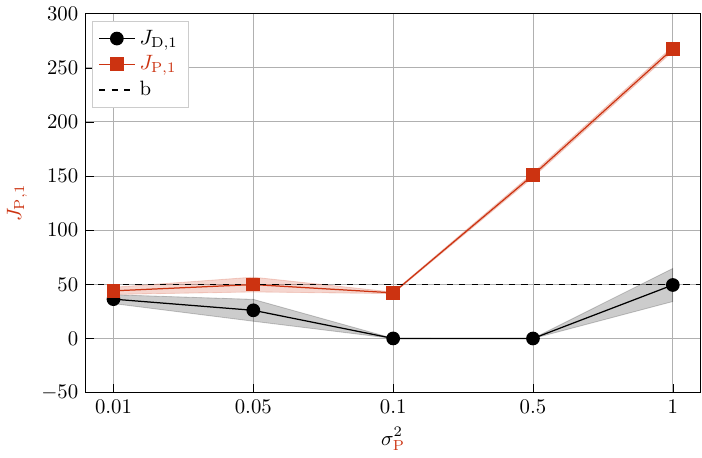}
    }

    \caption{Deterministic Deployment Study on \emph{CostSwimmer-v4} (5 runs, mean $\pm$ $95\% $ C.I.).}
    \label{fig:variance_study_cpgae_swimmer}
\end{figure}
\begin{figure}[t!]
    \centering
    \subfloat[Performance Comparison on \cpgae.]{
        \includegraphics[width=0.38\textwidth]{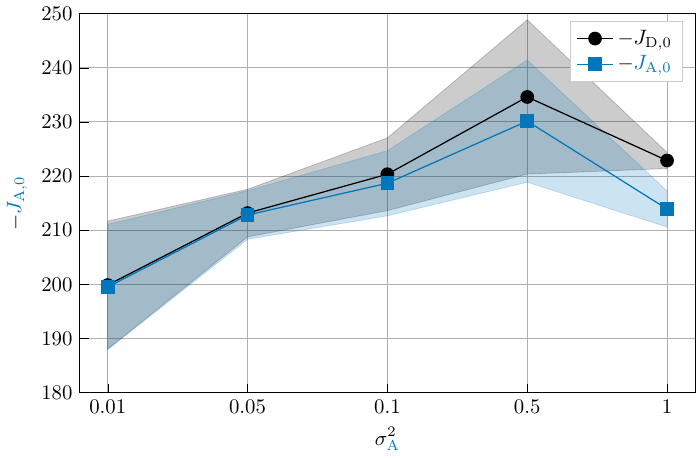}
    }
    \hspace{0.5cm}
    \subfloat[Cost Comparison on \cpgae.]{
        \includegraphics[width=0.38\textwidth]{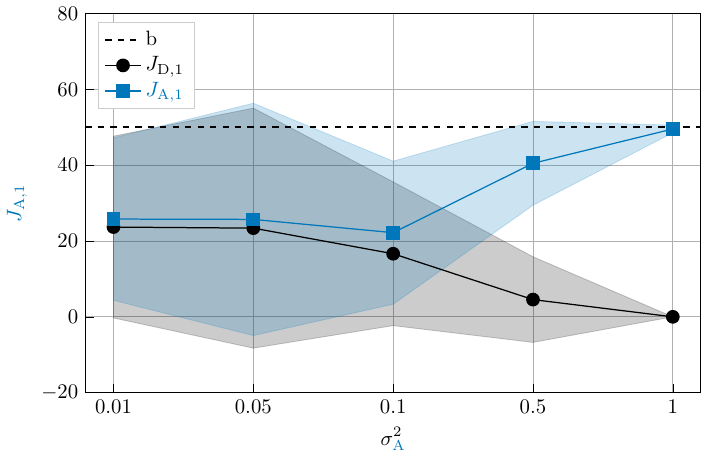}
    } 

    \vspace{0.5cm}
    
    \subfloat[Performance Comparison on \cpgpe.]{
        \includegraphics[width=0.38\textwidth]{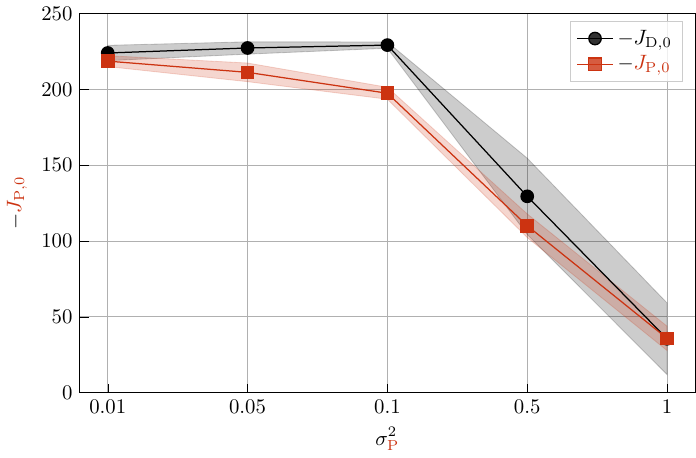}
    }
    \hspace{0.5cm}
    \subfloat[Cost Comparison on \cpgpe.]{
        \includegraphics[width=0.38\textwidth]{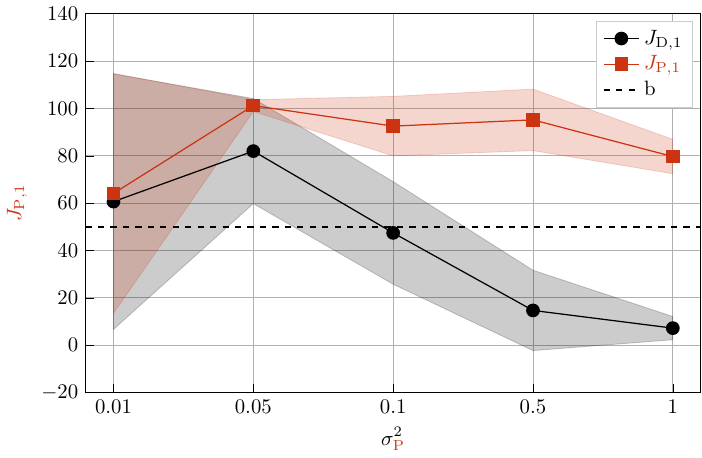}
    }

    \caption{Deterministic Deployment Study on \emph{CostHopper-v4} (5 runs, mean $\pm$ $95\% $ C.I.).}
    \label{fig:variance_study_cpgae_hopper}
\end{figure}

\subsection{Regularization Sensitivity Study}
\label{subsec:sensitivity}

In this last experiment, we study the sensitivity of \cpgae and \cpgpe \wrt the regularization term $\omega$.
We tested the algorithms on a bidimensional \emph{CostLQR} environment (see Appendix~\ref{apx:exp} for details). For the environment at hand, we considered a horizon $T=50$.
We run both algorithms for $K=10000$ iterations, with a batch size of $N=100$ trajectories per iteration, and with a varying regularization term such that $\omega \in \{0, 10^{-4}, 10^{-2}\}$.
We considered a single constraint on the average trajectory cost, for which we set a threshold $b=0.2$.
For the step size schedules, we employed Adam~\citep{kingma2015adam} with initial rates $\zeta_{\vtheta,0} = 10^{-3}$ and $\zeta_{\vlambda,0} = 10^{-2}$.
Moreover, in this specific experiment, \cpgae employed a linear gaussian policy with variance $\sigma_{\text{A}}^2 = 10^{-3}$. On the other hand, \cpgpe employed a linear Gaussian hyperpolicy with variance $\sigma_{\text{P}}^2 = 10^{-3}$ over a linear deterministic policy.
The experimental setting is summarized in Table~\ref{tab:exp-reg-lqr}.
Figures~\ref{fig:sensitiviy_cpgpe} and~\ref{fig:sensitiviy_cpgae} and show the Lagrangian curves, the performance ones (\ie the one associated with the objective function), and the cost-related ones. 
From the shown curves it is possible to notice that, for both \cpgae and \cpgpe, a higher regularization ($\omega=10^{-2}$) corresponds to a higher bias \wrt the constraint satisfaction. 
This bias is compliant with what is shown by Theorem~\ref{thr:conversion}. Indeed, the higher the regularization, the stricter the constraint threshold should be made. These considerations also hold for both \dcpgpe and \dcpgae. In particular, we highlight that, while for \PB exploration the stochastic and deterministic curves are almost the same, for \AB exploration the deterministic curve related to the cost is always under the cost threshold. 
Finally, we report in Figure~\ref{fig:reg_study_lam} the evolution of the values of the Lagrangian multipliers $\vlambda$ during the learning. As expected from the theory, for both \cpgae and \cpgpe a higher regularization leads to smaller values of $\vlambda$. Moreover, we empirically notice that \cpgae reaches higher values of $\vlambda$ \wrt the ones seen by \cpgpe.

\begin{figure}[t!]
    \centering
    
     \subfloat[Lagrangian Curves (\cpgpe).]{
        \includegraphics[width=0.38\textwidth]{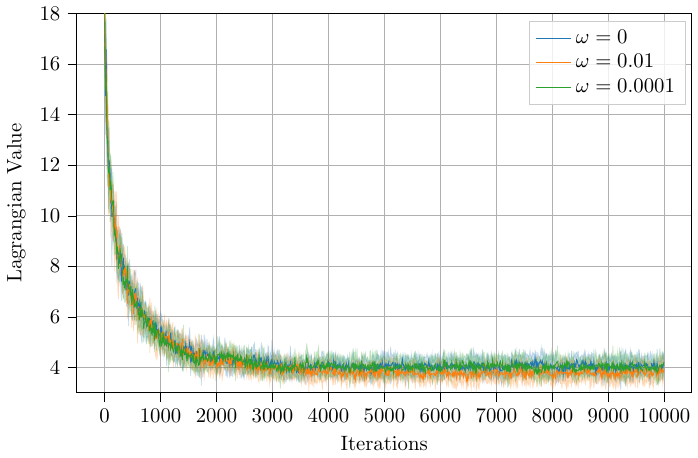}
    }
    \hspace{0.5cm}
    \subfloat[Lagrangian Curves (\dcpgpe).]{
        \includegraphics[width=0.38\textwidth]{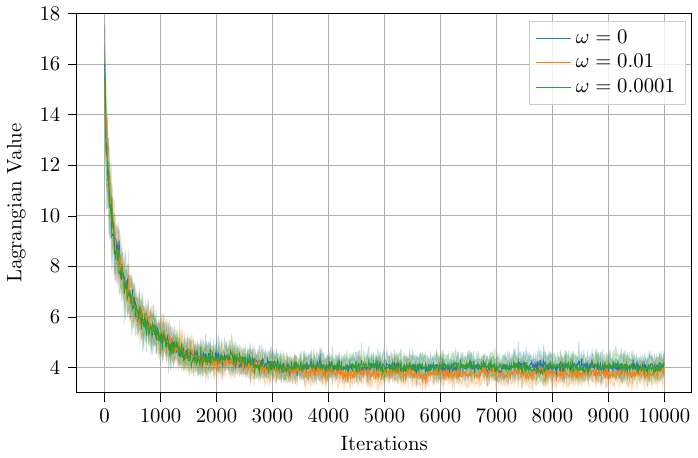}
    }

    \vspace{0.5cm}

    \subfloat[Performance Curves (\cpgpe).]{
        \includegraphics[width=0.38\textwidth]{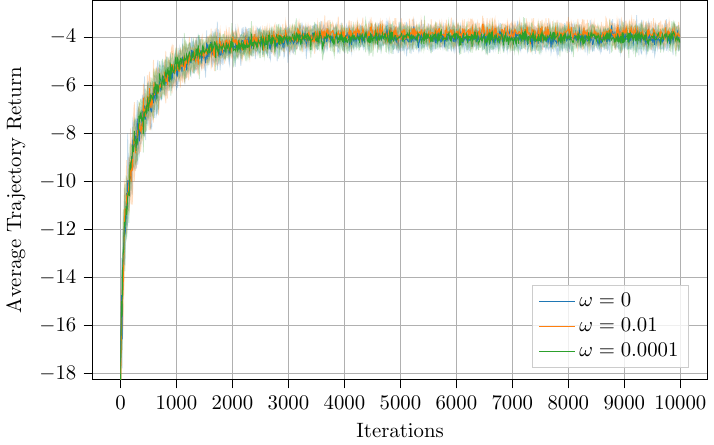}
    }
    \hspace{0.5cm}
    \subfloat[Performance Curves (\dcpgpe).]{
        \includegraphics[width=0.38\textwidth]{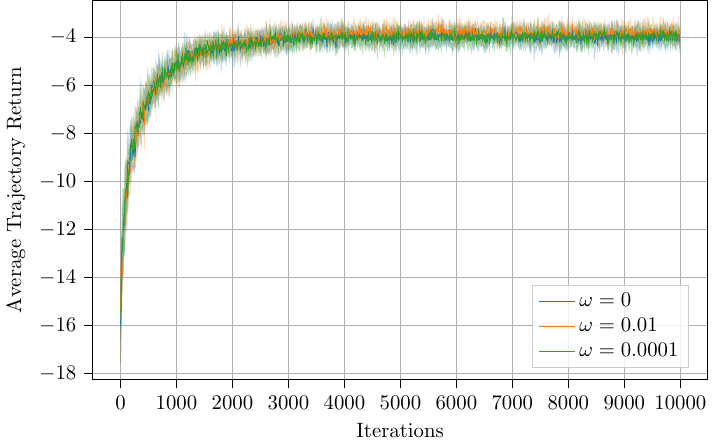}
    } 

    \vspace{0.5cm}
   
    \subfloat[Cost Curves (\cpgpe).]{
        \includegraphics[width=0.38\textwidth]{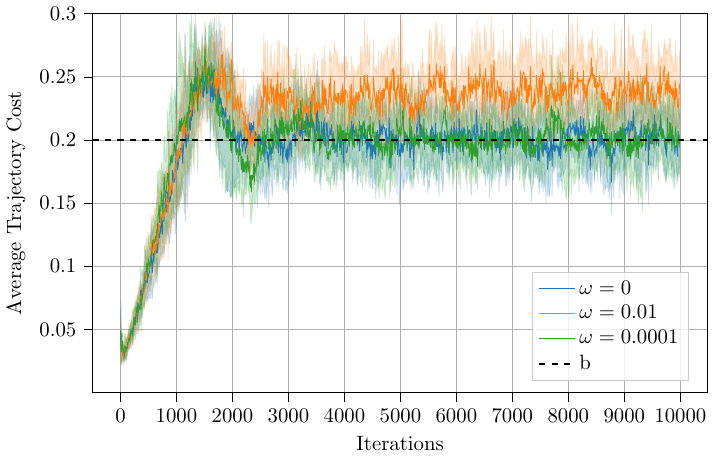}
    }
    \hspace{0.5cm}
    \subfloat[Cost Curves (\dcpgpe).]{
        \includegraphics[width=0.38\textwidth]{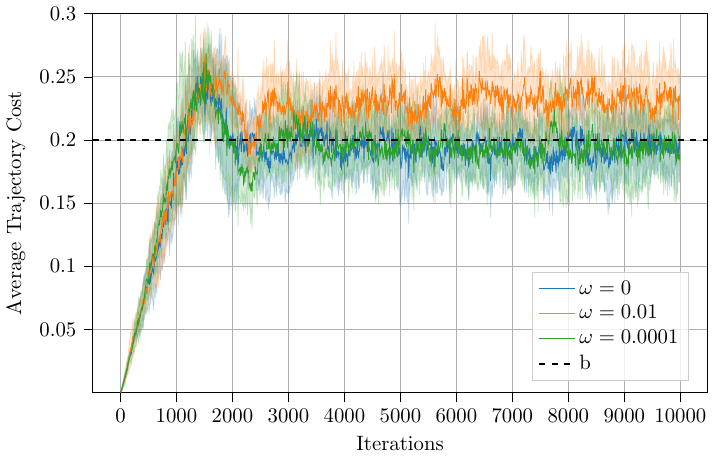}
    } 

    \caption{Sensitivity study on \emph{CostLQR} of \cpgpe with regularization values $\omega \in \{0, 10^{-2}, 10^{-4}\}$ (5 runs, mean $\pm$ $95\% $ C.I.).}
    \label{fig:sensitiviy_cpgpe}
\end{figure}

\begin{figure}[t!]
    \centering
    
     \subfloat[Lagrangian Curves (\cpgae).]{
        \includegraphics[width=0.38\textwidth]{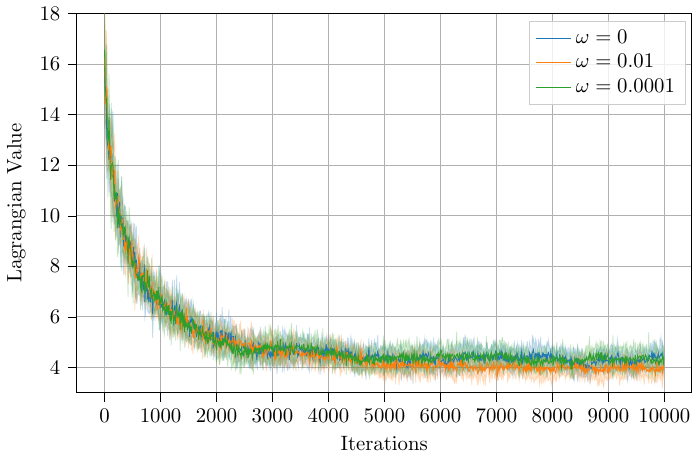}
    }
    \hspace{0.5cm}
    \subfloat[Lagrangian Curves (\dcpgae).]{
        \includegraphics[width=0.38\textwidth]{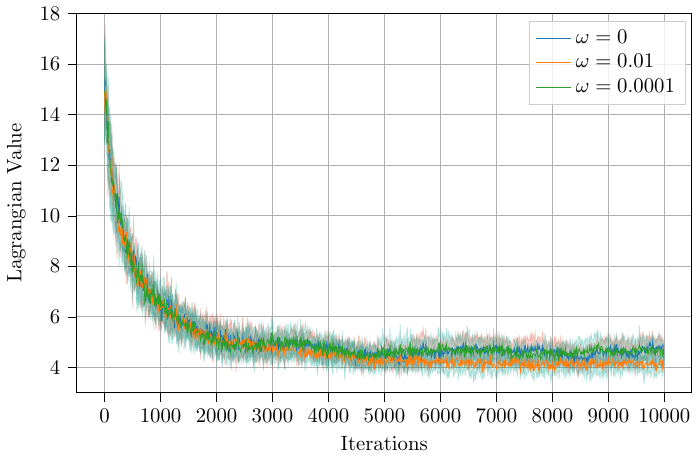}
    }

    \vspace{0.5cm}

    \subfloat[Performance Curves (\cpgae).]{
        \includegraphics[width=0.38\textwidth]{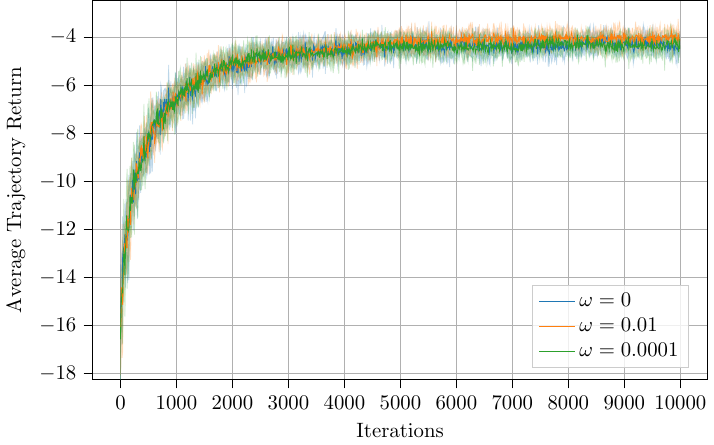}
    }
    \hspace{0.5cm}
    \subfloat[Performance Curves (\dcpgae).]{
        \includegraphics[width=0.38\textwidth]{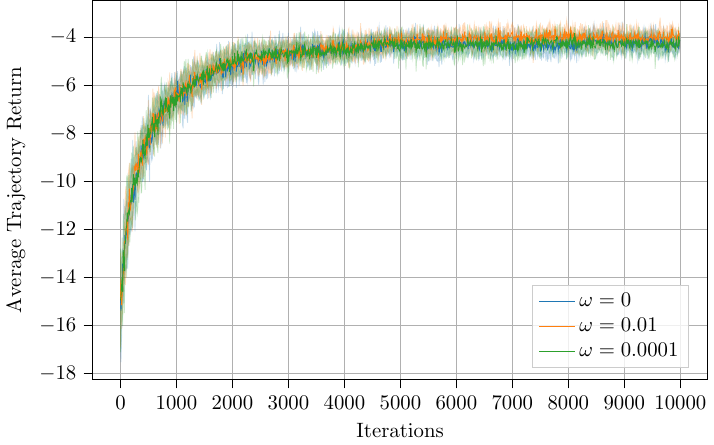}
    } 

    \vspace{0.5cm}
   
    \subfloat[Cost Curves (\cpgae).]{
        \includegraphics[width=0.38\textwidth]{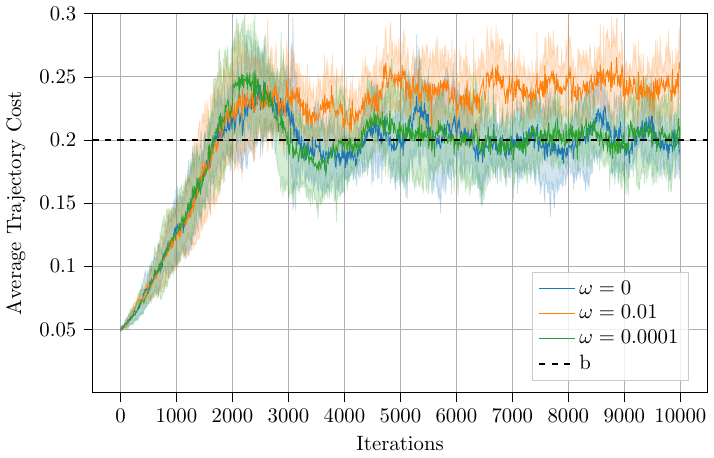}
    }
    \hspace{0.5cm}
    \subfloat[Cost Curves (\dcpgae).]{
        \includegraphics[width=0.38\textwidth]{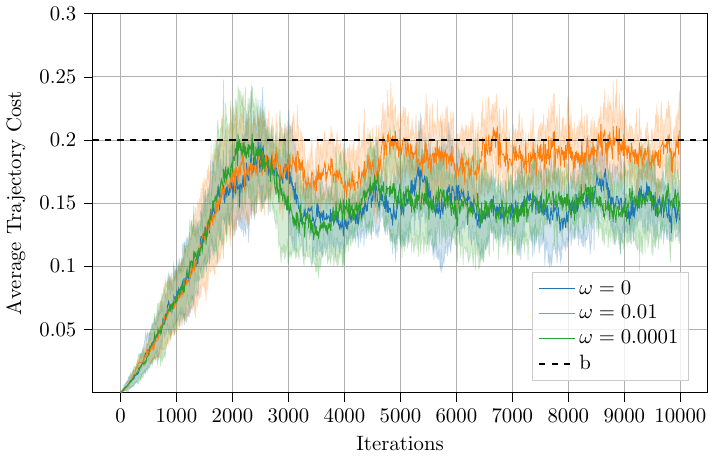}
    } 

    \caption{Sensitivity study on \emph{CostLQR} of \cpgae with regularization values $\omega \in \{0, 10^{-2}, 10^{-4}\}$ (5 runs, mean $\pm$ $95\% $ C.I.).}
    \label{fig:sensitiviy_cpgae}
\end{figure}

\begin{figure}[t!]
    \centering
    
     \subfloat[Lambda Curves (\cpgpe).]{
        \includegraphics[width=0.38\textwidth]{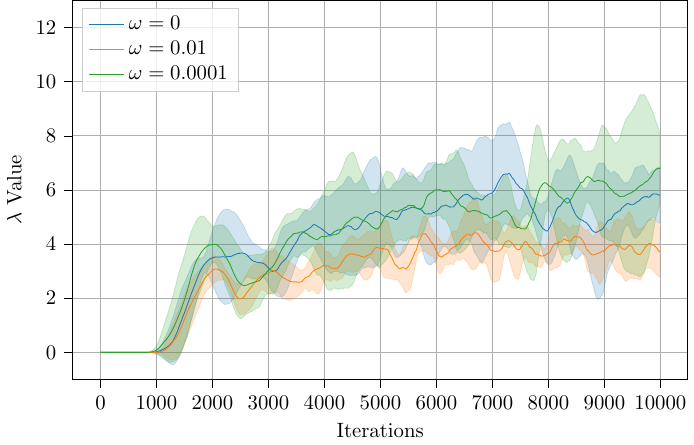}
        \label{fig:performance_swimmer_cpgae}
    }
    \hspace{0.5cm}
    \subfloat[Lambda Curves (\cpgae).]{
        \includegraphics[width=0.38\textwidth]{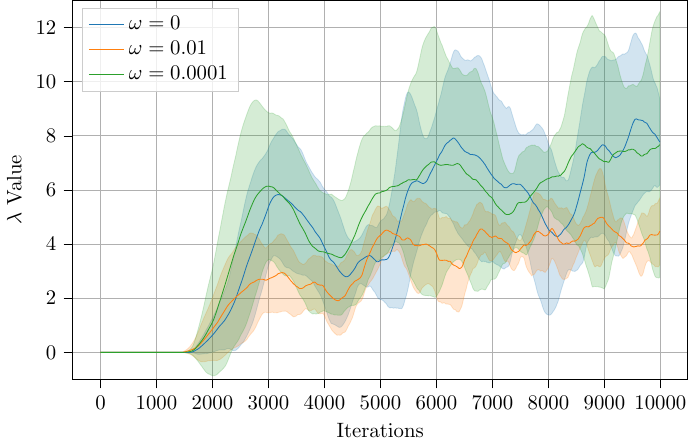}
    }

    \caption{$\lambda$ curves for \cpgpe and \cpgae on \emph{CostLQR} with \cpgae with regularization values $\omega \in \{0, 10^{-2}, 10^{-4}\}$ (5 runs, mean $\pm$ $95\%$ C.I.).}
    \label{fig:reg_study_lam}
\end{figure}
\section{Conclusion} 
\label{sec:conclusions}

In this work, we proposed a general framework for addressing continuous CRL problems via \emph{primal-dual policy-based} algorithms, employing an alternating ascent–descent scheme. Our \emph{exploration-agnostic} algorithm, \cpg, provides \emph{dimension-free, global, last-iterate} convergence guarantees under the standard weak gradient domination assumption.
Furthermore, we reinterpreted both \emph{action-based} and \emph{parameter-based} exploration paradigms as white-noise perturbations applied to parametric deterministic policies, either at the action level or at the parameter one. Under this noise model, we established all the conditions required to ensure last-iterate convergence of \cpg, and we proved that \cpg converges to an optimal deterministic policy when trained via a stochastic (hyper)policy and the stochasticity is subsequently switched off at the end of the learning phase.
We validated our theoretical findings by comparing our methods against state-of-the-art baselines and demonstrating their effectiveness, particularly in the deployment of deterministic policies.
Future research should aim to improve \emph{sample complexity} of \cpg, with the goal of matching the lower bounds established by~\citet{vaswani2022near}. Another promising direction is the development of \emph{single time-scale algorithms} that retain the same convergence guarantees.
Finally, our analysis assumes a fixed level of stochasticity~$\sigma$, which must be set on the order of~$\cO(\epsilon)$ to guarantee convergence to an optimal deterministic policy under the adopted noise model. However, this assumption may be impractical in real-world scenarios, where stochasticity is often either learned or gradually annealed during training. This gap between theory and practice has recently been addressed by~\citet{montenegro2025convergence} in the unconstrained setting; extending this line of work to the constrained case remains an important direction for future research.

\clearpage

\appendix
\section{Omitted Proofs} \label{apx:proofs}

\begin{lemma}[Regularization Bias on Saddle Points - 1]\label{lemma:boundLagrangeMult}
Under Assumption~\ref{asm:assunzione}, for every $\omega \ge  0$, let $(\vupsilon^*_{\omega}, \vlambda^*_{\omega})$ be a saddle point of $\cL_{\omega}$, it holds that: 
\begin{align*}
	0 \le \cL_0(\vupsilon^*_0,\vlambda^*_0)- \cL_0(\vupsilon^*_\omega,\vlambda^*_\omega) \le \frac{\omega}{2} \left( \|\vlambda^*_0\|_2^2 - \|\vlambda^*_\omega\|_2^2\right).
\end{align*}
\end{lemma}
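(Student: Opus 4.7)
The plan is to combine the two saddle-point inequalities (one for $\cL_0$, one for $\cL_\omega$) with the identity $\cL_\omega(\vupsilon,\vlambda) = \cL_0(\vupsilon,\vlambda) - \tfrac{\omega}{2}\|\vlambda\|_2^2$. Observe that $\vlambda^*_0 \in \mathbb{R}^U_{\ge 0}$ by Assumption~\ref{asm:assunzione} and $\vlambda^*_\omega \in \Lambda \subset \mathbb{R}^U_{\ge 0}$ by construction, so each multiplier is a valid test point in the other saddle-point's dual variable; likewise $\vupsilon^*_0, \vupsilon^*_\omega \in \mathcal{V}$ are valid test points for the primal side. The entire proof is a cross-substitution of these admissible test points, no quantitative estimate is required.

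For the lower bound $0 \le \cL_0(\vupsilon^*_0,\vlambda^*_0) - \cL_0(\vupsilon^*_\omega,\vlambda^*_\omega)$, I would first use the primal (left) half of the saddle-point inequality for $(\vupsilon^*_\omega,\vlambda^*_\omega)$ at the test point $\vupsilon = \vupsilon^*_0$, giving $\cL_\omega(\vupsilon^*_\omega,\vlambda^*_\omega) \le \cL_\omega(\vupsilon^*_0,\vlambda^*_\omega)$. Since both sides carry the \emph{same} penalty $\tfrac{\omega}{2}\|\vlambda^*_\omega\|_2^2$, this reduces to $\cL_0(\vupsilon^*_\omega,\vlambda^*_\omega) \le \cL_0(\vupsilon^*_0,\vlambda^*_\omega)$. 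Then I would invoke the dual (right) half of the saddle-point inequality for $(\vupsilon^*_0,\vlambda^*_0)$ at the test point $\vlambda = \vlambda^*_\omega$, namely $\cL_0(\vupsilon^*_0,\vlambda^*_\omega) \le \cL_0(\vupsilon^*_0,\vlambda^*_0)$. Chaining these two inequalities delivers the non-negativity claim.

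For the upper bound, I would apply the dual (right) half of the saddle-point inequality for $(\vupsilon^*_\omega,\vlambda^*_\omega)$ at the test point $\vlambda = \vlambda^*_0$ to get $\cL_\omega(\vupsilon^*_\omega,\vlambda^*_0) \le \cL_\omega(\vupsilon^*_\omega,\vlambda^*_\omega)$. Unfolding the quadratic penalty on both sides yields
\begin{equation*}
\cL_0(\vupsilon^*_\omega,\vlambda^*_0) - \cL_0(\vupsilon^*_\omega,\vlambda^*_\omega) \le \tfrac{\omega}{2}\bigl(\|\vlambda^*_0\|_2^2 - \|\vlambda^*_\omega\|_2^2\bigr).
\end{equation*}
Finally, the primal (left) half of the saddle-point inequality for $(\vupsilon^*_0,\vlambda^*_0)$ at $\vupsilon = \vupsilon^*_\omega$ gives $\cL_0(\vupsilon^*_0,\vlambda^*_0) \le \cL_0(\vupsilon^*_\omega,\vlambda^*_0)$, which I would substitute on the left-hand side of the displayed inequality to obtain exactly the stated upper bound.

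There is no real obstacle: the proof is a symmetric exchange of the roles of $(\vupsilon^*_0,\vlambda^*_0)$ and $(\vupsilon^*_\omega,\vlambda^*_\omega)$ in four saddle-point inequalities. The only subtlety worth a line of justification is that taking the maximum of $\cL_\omega(\vupsilon^*_\omega,\cdot)$ over $\Lambda$ versus over $\mathbb{R}^U_{\ge 0}$ is equivalent, since $\Lambda$ was chosen precisely to contain $\vlambda^*_\omega$ (as argued in Section~\ref{sec:reglag}), so plugging in $\vlambda^*_0$ as a test point is legitimate even when $\vlambda^*_0 \notin \Lambda$ (indeed this merely produces a lower value of the concave quadratic, which is all the saddle inequality needs).
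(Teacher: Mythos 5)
Your proof is correct and follows essentially the same route as the paper's: both arguments are the same cross-substitution of the four saddle-point inequalities with cancellation of the $\tfrac{\omega}{2}\|\vlambda\|_2^2$ terms, the only difference being that the paper chains everything into a single string of inequalities while you split the lower and upper bounds into two separate chains. Your explicit remark that testing $\cL_\omega(\vupsilon^*_\omega,\cdot)$ at $\vlambda^*_0 \notin \Lambda$ is still legitimate (because the unconstrained maximizer of the concave quadratic already lies in $\Lambda$) is in fact slightly more careful than the paper, which substitutes $\vlambda^*_0$ into an inequality it only stated for $\vlambda \in \Lambda$ without comment.
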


\begin{proof}
From the fact that $(\vupsilon^*_{\omega}, \vlambda^*_{\omega})$ is a saddle point of $\cL_\omega$, we have  for every ${\vupsilon} \in \mathcal{V}$ and ${\vlambda} {\in \Lambda}$:
\begin{align}
    & \cL_{\omega}({\vupsilon}, \vlambda^*_{\omega}) \ge \cL_{\omega}(\vupsilon^*_{\omega}, \vlambda^*_{\omega}) \ge \cL_{\omega}(\vupsilon^*_{\omega}, {\vlambda}) \\
    & \iff \cL_0({\vupsilon}, \vlambda^*_{\omega}) - \frac{\omega}{2} \left\| \vlambda^*_{\omega} \right\|_2^2 \ge \cL_0(\vupsilon^*_{\omega}, \vlambda^*_{\omega}) - \frac{\omega}{2} \left\| \vlambda^*_{\omega} \right\|_2^2 \ge \cL_0(\vupsilon^*_{\omega}, {\vlambda}) - \frac{\omega}{2} \left\| {\vlambda} \right\|_2^2 \\
    & \iff \cL_0({\vupsilon}, \vlambda^*_{\omega}) \ge \cL_0(\vupsilon^*_{\omega}, \vlambda^*_{\omega}) \ge \cL_0(\vupsilon^*_{\omega}, {\vlambda}) + \frac{\omega}{2} \left( \left\| \vlambda^*_{\omega} \right\|_2^2 - \left\| {\vlambda} \right\|_2^2 \right). \label{eq:-002}
\end{align}
From the fact that $(\vupsilon^*_{0}, \vlambda^*_{0})$ is a saddle point of $\cL_0$, we have  for every ${\vupsilon} \in \mathcal{V}$ and ${\vlambda} {\in \Lambda}$:
\begin{align}\label{eq:-001}
    \cL_0({\vupsilon}, \vlambda^*_0) \ge &\cL_0(\vupsilon^*_0, \vlambda^*_0) \ge \cL_0(\vupsilon^*_0, {\vlambda}).
\end{align}
By setting $(\vupsilon,\vlambda) \leftarrow (\vupsilon^*_\omega, \vlambda^*_\omega)$ in Equation~\eqref{eq:-001} and $(\vupsilon,\vlambda) \leftarrow (\vupsilon^*_0, \vlambda^*_0)$
in Equation~\eqref{eq:-002}, we obtain:
\begin{align}
     \cL_0(\vupsilon^*_{\omega}, \vlambda^*_0) & \ge \cL_0(\vupsilon^*_0, \vlambda^*_0) \ge \cL_0(\vupsilon^*_0, \vlambda^*_{\omega})\\
    & =  \cL_0(\vupsilon^*_0, \vlambda^*_{\omega}) \ge \cL_0(\vupsilon^*_{\omega}, \vlambda^*_{\omega}) \ge \cL_0(\vupsilon^*_{\omega}, \vlambda^*_0) + \frac{\omega}{2} \left( \left\| \vlambda^*_{\omega} \right\|_2^2 - \left\| \vlambda^*_0 \right\|_2^2 \right) \\
    &\ge \cL_0(\vupsilon^*_0, \vlambda^*_0) + \frac{\omega}{2} \left( \left\| \vlambda^*_{\omega} \right\|_2^2 - \left\| \vlambda^*_0 \right\|_2^2 \right),
\end{align}
thus:
\begin{align}
    \cL_0(\vupsilon^*_0, \vlambda^*_0) \ge \cL_0(\vupsilon^*_{\omega}, \vlambda^*_{\omega}) \ge \cL_0(\vupsilon^*_0, \vlambda^*_0) + \frac{\omega}{2} \left( \left\| \vlambda^*_{\omega} \right\|_2^2 - \left\| \vlambda^*_0 \right\|_2^2 \right).
\end{align}
\end{proof}

\begin{lemma}[Regularization Bias on Saddle Points - 2] \label{lemma:regularizationBias}
Under Assumption~\ref{asm:assunzione}, for every $\omega \ge  0$, it holds that:
\begin{align*}
	0 \le \min_{\vupsilon \in \mathcal{V}} \max_{\vlambda {\in \Lambda}} \cL_{0}(\vupsilon,\vlambda) - 	\min_{\vupsilon \in \mathcal{V}} \max_{\vlambda {\in \Lambda}} \cL_{\omega}(\vupsilon,\vlambda) \le \frac{\omega}{2} \| \vlambda^*_0 \|_2^2.
\end{align*}
\end{lemma}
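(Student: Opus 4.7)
The proof will establish the two inequalities separately, both of which follow fairly directly from what is already available.

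\textbf{Lower bound.} Since $\omega \ge 0$, the definition $\cL_{\omega}(\vupsilon,\vlambda) = \cL_{0}(\vupsilon,\vlambda) - \frac{\omega}{2}\|\vlambda\|_2^2$ gives the pointwise inequality $\cL_{\omega}(\vupsilon,\vlambda) \le \cL_{0}(\vupsilon,\vlambda)$ for all $(\vupsilon,\vlambda) \in \mathcal{V} \times \Lambda$. Taking $\max_{\vlambda \in \Lambda}$ and then $\min_{\vupsilon \in \mathcal{V}}$ on both sides preserves this inequality and yields the nonnegativity of the gap.

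\textbf{Upper bound.} The plan is to express both $\min$-$\max$ values at their saddle points and then invoke Lemma~\ref{lemma:boundLagrangeMult}. Concretely, under Assumption~\ref{asm:assunzione} we have $\min_{\vupsilon}\max_{\vlambda}\cL_{0}(\vupsilon,\vlambda) = \cL_{0}(\vupsilon^*_0,\vlambda^*_0)$, and for the regularized problem (whose saddle point exists because $\cL_{\omega}(\vupsilon,\cdot)$ is strongly concave and $\Lambda$ is compact) we have
\begin{align*}
\min_{\vupsilon}\max_{\vlambda}\cL_{\omega}(\vupsilon,\vlambda) = \cL_{\omega}(\vupsilon^*_\omega,\vlambda^*_\omega) = \cL_{0}(\vupsilon^*_\omega,\vlambda^*_\omega) - \tfrac{\omega}{2}\|\vlambda^*_\omega\|_2^2.
\end{align*}
Subtracting these two identities, the gap of interest equals $\cL_{0}(\vupsilon^*_0,\vlambda^*_0) - \cL_{0}(\vupsilon^*_\omega,\vlambda^*_\omega) + \tfrac{\omega}{2}\|\vlambda^*_\omega\|_2^2$. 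Applying Lemma~\ref{lemma:boundLagrangeMult}, which bounds the first two terms by $\tfrac{\omega}{2}(\|\vlambda^*_0\|_2^2 - \|\vlambda^*_\omega\|_2^2)$, the $\tfrac{\omega}{2}\|\vlambda^*_\omega\|_2^2$ terms cancel and we obtain exactly $\tfrac{\omega}{2}\|\vlambda^*_0\|_2^2$.

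\textbf{Expected obstacle.} The argument is essentially a bookkeeping exercise once Lemma~\ref{lemma:boundLagrangeMult} is in hand, so there is no genuine technical obstacle. The only subtlety worth spelling out is the existence of the regularized saddle point $(\vupsilon^*_\omega,\vlambda^*_\omega)$, which is needed to pass from $\min_{\vupsilon}\max_{\vlambda}\cL_{\omega}$ to an evaluation at a specific point; this is justified by the strong concavity of $\cL_\omega(\vupsilon,\cdot)$ together with the compact choice of $\Lambda$ (which, as noted in the body, is constructed so that the closed-form dual optimum $\vlambda^*(\vupsilon) = \omega^{-1}(\mathbf{J}(\vupsilon)-\mathbf{b})^+$ lies in $\Lambda$ for every $\vupsilon$).
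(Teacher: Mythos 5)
Your proof is correct. The lower bound is the same one-line observation the paper makes ($\cL_\omega \le \cL_0$ pointwise). For the upper bound you take a genuinely different, and arguably cleaner, route: you evaluate both minimax values at their saddle points, expand $\cL_\omega(\vupsilon^*_\omega,\vlambda^*_\omega) = \cL_0(\vupsilon^*_\omega,\vlambda^*_\omega) - \tfrac{\omega}{2}\|\vlambda^*_\omega\|_2^2$, and apply the two-sided inequality of Lemma~\ref{lemma:boundLagrangeMult} so that the $\tfrac{\omega}{2}\|\vlambda^*_\omega\|_2^2$ terms cancel exactly. The paper instead restricts the dual variable to the two-point set $\Lambda^*=\{\vlambda^*_0,\vlambda^*_\omega\}$, bounds the difference of the two restricted minimaxes by $\max_{\vupsilon,\vlambda\in\Lambda^*}|\cL_0(\vupsilon,\vlambda)-\cL_\omega(\vupsilon,\vlambda)| = \tfrac{\omega}{2}\max\{\|\vlambda^*_0\|_2^2,\|\vlambda^*_\omega\|_2^2\}$, and only then invokes Lemma~\ref{lemma:boundLagrangeMult} to conclude $\|\vlambda^*_\omega\|_2\le\|\vlambda^*_0\|_2$. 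Both arguments rest entirely on Lemma~\ref{lemma:boundLagrangeMult} and yield the same constant; yours uses the full strength of its inequality and avoids the detour through a restricted dual set, while the paper's uses only the norm-monotonicity consequence.

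One caveat on your "expected obstacle" remark: strong concavity of $\cL_\omega(\vupsilon,\cdot)$ together with compactness of $\Lambda$ guarantees that the inner maximum is attained and that $H_\omega$ is well defined, but it does not by itself yield a saddle point of $\cL_\omega$ — that additionally requires minimax equality, which fails in general without convexity in $\vupsilon$. Lemma~\ref{lemma:boundLagrangeMult} needs a true saddle point (it uses $\cL_\omega(\vupsilon^*_\omega,\vlambda^*_\omega)\le\cL_\omega(\vupsilon,\vlambda^*_\omega)$ as well), and the paper simply postulates its existence in that lemma's statement. Your proof is therefore on the same footing as the paper's, but the justification you sketch for the existence of $(\vupsilon^*_\omega,\vlambda^*_\omega)$ is not sufficient on its own and is better stated as an assumption inherited from Lemma~\ref{lemma:boundLagrangeMult}.
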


\begin{proof}
The first inequality follows from the observation that $\cL_{0}(\vupsilon,\vlambda) \ge \cL_{\omega}(\vupsilon,\vlambda)$ for every $\omega \ge 0$.
For the second inequality, let us denote as $(\vupsilon^*_{\omega}, \vlambda^*_{\omega})$ the saddle point for $\cL_{\omega}$
and let $\Lambda^* = \left\{ \vlambda^*_{0}, \vlambda^*_{\omega} \right\}$.
We have:
\begin{align}
    \cL_{0}(\vupsilon^*_{0}, \vlambda^*_{0}) - \cL_{\omega}(\vupsilon^*_{\omega}, \vlambda^*_{\omega}) &=   \min_{\vupsilon \in \mathcal{V}} \max_{\vlambda \in \Lambda^*} \cL_{0}(\vupsilon, \vlambda) - \min_{\vupsilon \in \mathcal{V}} \max_{\vlambda \in \Lambda^*} \cL_{\omega}(\vupsilon, \vlambda) \ \\
    &\le \max_{\vupsilon \in \mathcal{V}} \left| \max_{\vlambda \in \Lambda^*} \cL_{0}(\vupsilon, \vlambda) - \max_{\vlambda \in \Lambda^*} \cL_{\omega}(\vupsilon, \vlambda) \right| \\
    &= \max_{\vupsilon \in \mathcal{V}, \vlambda \in \Lambda^*} \left| \cL_{0}(\vupsilon, \vlambda) - \cL_{\omega}(\vupsilon, \vlambda) \right| \\
    &= \frac{\omega}{2} \max \left\{ \left\| \vlambda_{0}^* \right\|_2^2; \; \left\| \vlambda_{\omega}^* \right\|_2^2 \right\} \\
    &= \frac{\omega}{2} \left\| \vlambda^*_{0}\right\|_2^2,
\end{align}
where we used Lemma~\ref{lemma:boundLagrangeMult} to conclude that $\left\| \vlambda^*_{0}\right\|_2^2 \ge \left\| \vlambda^*_{\omega}\right\|_2^2$.
\end{proof}

\begin{lemma}[Objective bound and Constraint violation]\label{lemma:33}
    Under Assumption~\ref{asm:assunzione}, for every $\omega \ge  0$, letting $(\vupsilon^*_{\omega}, \vlambda^*_{\omega})$ be a saddle point of $\cL_{\omega}$, it holds that: 
    \begin{align}
        & 0 \le J_0(\vupsilon^*_0)  - J_0(\vupsilon^*_\omega)  \le \omega \| \vlambda^*_0\|_2^2, \\
        & \| (\mathbf{J}(\vupsilon^*_\omega)- \mathbf{b})^+\|_2 \le \omega  \| \vlambda^*_0\|_2.
    \end{align}
\end{lemma}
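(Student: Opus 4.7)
The plan is to handle the two inequalities separately, treating the constraint-violation bound first since it feeds directly into the objective-gap bound.

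\textbf{Constraint violation.} At any saddle point of $\cL_\omega$, the dual component maximizes $\cL_\omega(\vupsilon^*_\omega,\cdot)$ over $\Lambda$, and this maximizer was already computed in Section~\ref{sec:reglag} in closed form: $\vlambda^*_\omega = \vlambda^*(\vupsilon^*_\omega) = \frac{1}{\omega}(\mathbf{J}(\vupsilon^*_\omega)-\mathbf{b})^+$. Taking $\ell_2$ norms on both sides yields $\|(\mathbf{J}(\vupsilon^*_\omega)-\mathbf{b})^+\|_2 = \omega \|\vlambda^*_\omega\|_2$. To replace $\|\vlambda^*_\omega\|_2$ by $\|\vlambda^*_0\|_2$, I invoke Lemma~\ref{lemma:boundLagrangeMult}, whose second inequality forces $\|\vlambda^*_\omega\|_2^2 \le \|\vlambda^*_0\|_2^2$ (otherwise the chain $0 \le \frac{\omega}{2}(\|\vlambda^*_0\|_2^2-\|\vlambda^*_\omega\|_2^2)$ would be violated). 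This delivers the second claim.

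\textbf{Lower bound on the objective gap.} Since $\vupsilon^*_0$ is feasible for the original COP, $\mathbf{J}(\vupsilon^*_0) \le \mathbf{b}$, and so the penalty term in the primal function vanishes: $H_\omega(\vupsilon^*_0) = J_0(\vupsilon^*_0)$. By optimality of $\vupsilon^*_\omega$ for $H_\omega$, and since the penalty term is nonnegative,
\begin{align*}
J_0(\vupsilon^*_\omega) \;\le\; H_\omega(\vupsilon^*_\omega) \;\le\; H_\omega(\vupsilon^*_0) \;=\; J_0(\vupsilon^*_0).
\end{align*}

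\textbf{Upper bound on the objective gap.} I use the saddle-point inequality for $(\vupsilon^*_0,\vlambda^*_0)$ evaluated at $\vupsilon^*_\omega$ in the first argument, $\cL_0(\vupsilon^*_0,\vlambda^*_0) \le \cL_0(\vupsilon^*_\omega,\vlambda^*_0)$, which unpacks to
\begin{align*}
J_0(\vupsilon^*_0) + \langle\vlambda^*_0,\mathbf{J}(\vupsilon^*_0)-\mathbf{b}\rangle \;\le\; J_0(\vupsilon^*_\omega) + \langle\vlambda^*_0,\mathbf{J}(\vupsilon^*_\omega)-\mathbf{b}\rangle.
\end{align*}
Complementary slackness gives $\langle\vlambda^*_0,\mathbf{J}(\vupsilon^*_0)-\mathbf{b}\rangle = 0$: indeed, feasibility of $\vupsilon^*_0$ and $\vlambda^*_0 \ge 0$ force the inner product to be $\le 0$, while the other saddle-point inequality $\cL_0(\vupsilon^*_0,\mathbf{0}) \le \cL_0(\vupsilon^*_0,\vlambda^*_0)$ forces it to be $\ge 0$. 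For the remaining term, nonnegativity of $\vlambda^*_0$ yields $\langle\vlambda^*_0,\mathbf{J}(\vupsilon^*_\omega)-\mathbf{b}\rangle \le \langle\vlambda^*_0,(\mathbf{J}(\vupsilon^*_\omega)-\mathbf{b})^+\rangle$, and Cauchy--Schwarz together with the constraint-violation bound already proved above gives $\langle\vlambda^*_0,(\mathbf{J}(\vupsilon^*_\omega)-\mathbf{b})^+\rangle \le \|\vlambda^*_0\|_2 \cdot \omega\|\vlambda^*_0\|_2 = \omega\|\vlambda^*_0\|_2^2$. Rearranging closes the proof.

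There is no genuine obstacle: the only subtlety is justifying complementary slackness at $(\vupsilon^*_0,\vlambda^*_0)$ from the bare saddle-point assumption (Assumption~\ref{asm:assunzione}), which is handled by evaluating the saddle-point inequality at $\vlambda = \mathbf{0}$ and combining with primal feasibility. The entire argument relies only on the saddle-point characterizations and the closed form of $\vlambda^*(\vupsilon)$ already derived in Section~\ref{sec:reglag}, together with Lemma~\ref{lemma:boundLagrangeMult}.
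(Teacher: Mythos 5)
Your proof is correct, but it reaches the two claims by a genuinely different route than the paper. The paper's proof is a single algebraic manipulation: it writes $\cL_0(\vupsilon^*_\omega,\vlambda^*_\omega) = J_0(\vupsilon^*_\omega) + \tfrac{1}{\omega}\|(\mathbf{J}(\vupsilon^*_\omega)-\mathbf{b})^+\|_2^2$ via the closed form of $\vlambda^*_\omega$, plugs this into the two-sided bound of Lemma~\ref{lemma:boundLagrangeMult}, and then rearranges the resulting three-member inequality to extract both the objective gap and the constraint violation simultaneously. You instead decompose into three independent arguments: the constraint-violation bound falls out in one line from $\|(\mathbf{J}(\vupsilon^*_\omega)-\mathbf{b})^+\|_2 = \omega\|\vlambda^*_\omega\|_2$ together with the norm monotonicity $\|\vlambda^*_\omega\|_2 \le \|\vlambda^*_0\|_2$ (which is indeed an immediate corollary of Lemma~\ref{lemma:boundLagrangeMult}); the lower bound on the gap comes from optimality of $\vupsilon^*_\omega$ for the primal function $H_\omega$ and feasibility of $\vupsilon^*_0$; and the upper bound comes from the saddle inequality $\cL_0(\vupsilon^*_0,\vlambda^*_0)\le\cL_0(\vupsilon^*_\omega,\vlambda^*_0)$, complementary slackness at $(\vupsilon^*_0,\vlambda^*_0)$ (which you correctly justify from the bare saddle-point assumption by testing $\vlambda=\mathbf{0}$), and Cauchy--Schwarz combined with the already-proved constraint bound. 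Your version is longer but more modular and more transparent about where each hypothesis enters --- it only needs the corollary $\|\vlambda^*_\omega\|_2\le\|\vlambda^*_0\|_2$ rather than the full sandwich of Lemma~\ref{lemma:boundLagrangeMult}, and it makes explicit the complementary-slackness step that the paper's rearrangement hides; the paper's version is more economical and avoids Cauchy--Schwarz (which here happens to give the same constant $\omega\|\vlambda^*_0\|_2^2$). Both arguments implicitly assume $\omega>0$ for the closed forms, consistent with how the paper itself handles the $\omega=0$ case.
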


\begin{proof}
    Since $(\vupsilon^*_{0}, \vlambda^*_{0})$ is a saddle point of $\cL_0$, it holds that $\vupsilon^*_{0}$ is feasible and, consequently, $\cL_0(\vupsilon^*_{0}, \vlambda^*_{0})=J_0(\vupsilon^*_0)$. Moreover, let $\omega > 0$: since $(\vupsilon^*_{\omega}, \vlambda^*_{\omega})$ is a saddle point of $\cL_\omega$ it holds that $\vlambda^*_\omega = \vlambda^*(\vupsilon^*_\omega) = \Pi_{\Lambda}\left(\frac{1}{\omega} (\mathbf{J}(\vupsilon^*_\omega)- \mathbf{b})  \right) = \frac{1}{\omega} (\mathbf{J}(\vupsilon^*_\omega)- \mathbf{b})^+$, \hll{since $\frac{1}{\omega} \|(\mathbf{J}(\vupsilon^*_\omega)- \mathbf{b})^+\|_2 \le \omega^{-1}\sqrt{U} J_{\max}$.} Thus, we have:
    \begin{align}
        \cL_0(\vupsilon^*_{\omega}, \vlambda^*_{\omega}) = J_0(\vupsilon^*_\omega) + \langle\vlambda^*_\omega, \mathbf{J}(\vupsilon^*_\omega)- \mathbf{b}\rangle = J_0(\vupsilon^*_\omega) + \frac{1}{\omega} \| (\mathbf{J}(\vupsilon^*_\omega)- \mathbf{b})^+\|_2^2.
    \end{align}
    From Lemma~\ref{lemma:boundLagrangeMult}, we have:
    \begin{align}
        0 \le J_0(\vupsilon^*_0)  - J_0(\vupsilon^*_\omega) - \frac{1}{\omega} \| (\mathbf{J}(\vupsilon^*_\omega)- \mathbf{b})^+\|_2^2 \le \frac{\omega}{2} \| \vlambda^*_0\|_2^2 - \frac{1}{2\omega} \| (\mathbf{J}(\vupsilon^*_\omega)- \mathbf{b})^+\|_2^2.
    \end{align}
    By summing $\frac{1}{\omega} \| (\mathbf{J}(\vupsilon^*_\omega)- \mathbf{b})^+\|_2^2 $ to all members, we have:
    \begin{align}
        \frac{1}{\omega} \| (\mathbf{J}(\vupsilon^*_\omega)- \mathbf{b})^+\|_2^2 \le J_0(\vupsilon^*_0)  - J_0(\vupsilon^*_\omega)  \le \frac{\omega}{2} \| \vlambda^*_0\|_2^2 + \frac{1}{2\omega} \| (\mathbf{J}(\vupsilon^*_\omega)- \mathbf{b})^+\|_2^2.\label{c3.1}
    \end{align}
    Now taking the first and last member, we conclude:
    \begin{align}
        \| (\mathbf{J}(\vupsilon^*_\omega)- \mathbf{b})^+\|_2^2 \le \omega^2  \| \vlambda^*_0\|_2^2 .
    \end{align}
    Since $\frac{1}{\omega} \| (\mathbf{J}(\vupsilon^*_\omega)- \mathbf{b})^+\|_2^2\ge 0$ and plugging the latter inequality into the third member of~\eqref{c3.1} we obtain:
    \begin{align}
        0 \le J_0(\vupsilon^*_0)  - J_0(\vupsilon^*_\omega)  \le \omega \| \vlambda^*_0\|_2^2.
    \end{align}
\end{proof}

\begin{lemma}[Weak $\psi$-Gradient Domination on $H_\omega(\vupsilon)$]\label{lemma:wgdH}
Under Assumption~\ref{asm:wgd}, if $\omega > 0$, for every $\vupsilon \in \mathcal{V}$ and $\vlambda {\in \Lambda}$, it holds that:
    \begin{align}\label{eq:wgdH}
        \left\| \nabla_{\vupsilon} H_{\omega}(\vupsilon) \right\|_2^{\psi} \ge \alpha_{1} \left( H_{\omega}(\vupsilon) - \min_{\vupsilon' \in \mathcal{V}} H_{\omega}(\vupsilon') \right) - \beta_{1}.
    \end{align}
\end{lemma}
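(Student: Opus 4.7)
The plan is to exploit the closed-form of the primal function $H_\omega$ together with Danskin's envelope theorem so that Assumption~\ref{asm:wgd}, transferred from $\cL_0$ to $\cL_\omega$ (which is immediate since the regularizer $-\tfrac{\omega}{2}\|\vlambda\|_2^2$ is independent of $\vupsilon$ and therefore leaves both the gradient w.r.t.\ $\vupsilon$ and the gap $\cL_\omega(\vupsilon,\vlambda)-\min_{\vupsilon'}\cL_\omega(\vupsilon',\vlambda)$ unchanged), can be applied at the specific dual vector $\vlambda^*(\vupsilon)$.

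First, I would fix $\vupsilon \in \mathcal{V}$ and recall that, thanks to the strong concavity of $\cL_\omega(\vupsilon,\cdot)$ and the fact that $\frac{1}{\omega}(\mathbf{J}(\vupsilon)-\mathbf{b})^+ \in \Lambda$ (by the choice of the projection radius), the maximizer $\vlambda^*(\vupsilon) = \Pi_\Lambda\bigl(\tfrac{1}{\omega}(\mathbf{J}(\vupsilon)-\mathbf{b})\bigr)$ is unique and attained in the interior of $\Lambda$ in the relevant sense. Danskin's theorem then gives $\nabla_\vupsilon H_\omega(\vupsilon) = \nabla_\vupsilon \cL_\omega(\vupsilon,\vlambda^*(\vupsilon))$. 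One can alternatively verify this directly by differentiating the closed form $H_\omega(\vupsilon) = J_0(\vupsilon) + \tfrac{1}{2\omega}\|(\mathbf{J}(\vupsilon)-\mathbf{b})^+\|_2^2$, which yields exactly $\nabla_\vupsilon J_0(\vupsilon) + \sum_{i=1}^U \lambda_i^*(\vupsilon)\nabla_\vupsilon J_i(\vupsilon)$.

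Second, I would apply Assumption~\ref{asm:wgd} (in its version for $\cL_\omega$) at the dual vector $\vlambda = \vlambda^*(\vupsilon) \in \Lambda$. This gives
\begin{align*}
\|\nabla_\vupsilon H_\omega(\vupsilon)\|_2^\psi
&= \|\nabla_\vupsilon \cL_\omega(\vupsilon,\vlambda^*(\vupsilon))\|_2^\psi \\
&\ge \alpha_1\Bigl(\cL_\omega(\vupsilon,\vlambda^*(\vupsilon)) - \min_{\vupsilon'\in\mathcal{V}}\cL_\omega(\vupsilon',\vlambda^*(\vupsilon))\Bigr) - \beta_1.
\end{align*}
By the very definition of $\vlambda^*(\vupsilon)$, the first term inside the parentheses equals $H_\omega(\vupsilon)$. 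For the second one, the min-max inequality yields $\min_{\vupsilon'}\cL_\omega(\vupsilon',\vlambda^*(\vupsilon)) \le \min_{\vupsilon'}\max_{\vlambda\in\Lambda}\cL_\omega(\vupsilon',\vlambda) = \min_{\vupsilon'} H_\omega(\vupsilon')$, so the whole parenthesized difference is lower bounded by $H_\omega(\vupsilon) - \min_{\vupsilon'} H_\omega(\vupsilon')$, yielding the claim.

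The argument is essentially a one-shot transfer, so there is no substantial technical obstacle; the only delicate point is justifying the envelope-type identity $\nabla_\vupsilon H_\omega(\vupsilon) = \nabla_\vupsilon \cL_\omega(\vupsilon,\vlambda^*(\vupsilon))$ on the (possibly non-smooth) kink of the $(\cdot)^+$ operator. This is handled by the closed-form computation of $\nabla_\vupsilon H_\omega$ above, since the chain rule through $(\cdot)^+$ carries no extra term: where a component $J_i(\vupsilon)-b_i$ vanishes, both the corresponding $\lambda_i^*(\vupsilon)$ and the derivative of $(\cdot)^+$ are zero, so the gradient formula remains valid everywhere on $\mathcal{V}$.
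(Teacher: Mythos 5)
Your proposal is correct and follows essentially the same route as the paper's proof: establish the envelope identity $\nabla_{\vupsilon} H_\omega(\vupsilon) = \nabla_{\vupsilon}\cL_\omega(\vupsilon,\vlambda^*(\vupsilon))$ (the paper cites Lemma~\ref{lemma:smoothH} where you invoke Danskin's theorem and the closed form), apply Assumption~\ref{asm:wgd} at $\vlambda^*(\vupsilon)$, and bound $\min_{\vupsilon'}\cL_\omega(\vupsilon',\vlambda^*(\vupsilon)) \le H^*_\omega$ via the min--max inequality. Your extra remark on why the chain rule through $((\cdot)^+)^2$ is smooth at the kink is a welcome clarification but does not change the argument.
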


\begin{proof}
	If $\omega > 0$, the dual variable exist finite since the maximization problem over $\vlambda$ is concave:
	\begin{align*}
	\vlambda^*(\vupsilon) = \argmax_{\vlambda {\in \Lambda}} \cL_{\omega}(\vupsilon, \vlambda).
	\end{align*}
	Thus, we have from Lemma~\ref{lemma:smoothH} that$\nabla_{\vupsilon} H_\omega(\vupsilon) = \nabla_{\vupsilon} \cL_{\omega}(\vupsilon, \vlambda)\rvert_{\vlambda=\vlambda^*(\vupsilon)}$ and by Assumption~\ref{asm:wgd} we have the following:
    \begin{align}
        \left\| \nabla_{\vupsilon} H_\omega(\vupsilon) \right\|_2 & = \left\| \nabla_{\vupsilon} \cL_{\omega}(\vupsilon, \vlambda)\rvert_{\vlambda=\vlambda^*(\vupsilon)} \right\|_2 \\
        & \ge \alpha_{1} \left( \cL_{\omega}(\vupsilon, \vlambda^*(\vupsilon)) - \min_{{\vupsilon'\in\mathcal{V}}} \cL_{\omega}({\vupsilon'}, \vlambda^*(\vupsilon)) \right) - \beta_{1} \\
        & \ge \alpha_{1} \left( H_{\omega}(\vupsilon) - \min_{{\vupsilon'\in\mathcal{V}}} \max_{\vlambda {\in \Lambda}} \cL_{\omega}({\vupsilon'}, \vlambda) \right) - \beta_{1} \\
        & =  \alpha_{1} \left( H_{\omega}(\vupsilon) - H^*_\omega  \right) - \beta_{1}.
    \end{align}
\end{proof}

\begin{lemma}\label{lemma:propLLambda}
Let $\omega> 0$ and  $\vupsilon \in \mathcal{V}$. The following statements hold:
\begin{itemize}
\item $\cL_\omega(\vupsilon,\cdot)$ is $\omega$-smooth, i.e., for every  $\vlambda,\vlambda' {\in \Lambda}$ it holds that:
\begin{align*}
\left| \nabla_{\vlambda} \cL_{\omega}(\vupsilon,\vlambda') -  \nabla_{\vlambda} \cL_{\omega}(\vupsilon,\vlambda) \right| \le \omega  \left\| \vlambda - \vlambda' \right\|_2^2 
\end{align*}
\item  $\cL_\omega(\vupsilon,\cdot)$ satisfies the PL condition, i.e., for every  $\vlambda {\in \Lambda}$ it holds that:
\begin{align*}
	\|\nabla_{\vlambda} \cL_{\omega} (\vupsilon, \vlambda) \|_2^2 \ge \omega \left( \max_{\vlambda' {\in \Lambda}} \cL_{\omega} (\vupsilon,\vlambda') - \cL_{\omega}(\vupsilon,\vlambda)  \right). 
\end{align*}
\item $\cL_\omega(\vupsilon,\cdot)$ satisfies the \emph{error bound} (EB) condition, i.e., for every  $\vlambda,\vlambda' {\in \Lambda}$ it holds that:
	\begin{align*}
		\| \nabla_{\vlambda} \cL_{\omega}(\vupsilon,\vlambda) \| \ge \frac{\omega}{2} \|\vlambda^*(\vupsilon) -  \vlambda\|_2,
	\end{align*}
	where $\vlambda^*(\vupsilon) = \argmax_{\vlambda \in \Lambda} \cL_\omega(\vupsilon, \vlambda)$.
 \item $\cL_\omega(\vupsilon,\cdot)$ satisfies the \emph{quadratic growth} (QG) condition, i.e., for every  $\vlambda,\vlambda' {\in \Lambda}$ it holds that:
	\begin{align*}
		H_{\omega}(\vupsilon)  - \cL_{\omega}(\vupsilon,\vlambda)   \ge \frac{\omega}{4} \|\vlambda^*(\vupsilon) -  \vlambda\|_2,
	\end{align*}
	where $\vlambda^*(\vupsilon) = \argmax_{\vlambda \in \Lambda} \cL_\omega(\vupsilon, \vlambda)$.
\end{itemize} 
\end{lemma}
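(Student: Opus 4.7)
My plan is to exploit the fact that, with $\vupsilon$ fixed, $\cL_{\omega}(\vupsilon, \cdot)$ is simply a concave quadratic: explicitly,
\[
\cL_{\omega}(\vupsilon,\vlambda) = J_0(\vupsilon) + \langle \vlambda, \mathbf{J}(\vupsilon) - \mathbf{b}\rangle - \frac{\omega}{2}\|\vlambda\|_2^2,
\]
so that $\nabla_{\vlambda}\cL_{\omega}(\vupsilon,\vlambda) = \mathbf{J}(\vupsilon) - \mathbf{b} - \omega\vlambda$ and $\nabla^2_{\vlambda}\cL_{\omega}(\vupsilon,\vlambda) = -\omega\mathbf{I}$. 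From this observation the four claims reduce to standard facts about $\omega$-strongly concave quadratics, with the only nontrivial ingredient being the handling of the constrained maximizer $\vlambda^*(\vupsilon) \in \Lambda$ via the variational inequality (first-order optimality on the convex set $\Lambda$).

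For the $\omega$-smoothness claim I would just compute directly: $\nabla_{\vlambda}\cL_{\omega}(\vupsilon,\vlambda) - \nabla_{\vlambda}\cL_{\omega}(\vupsilon,\vlambda') = -\omega(\vlambda - \vlambda')$, whose Euclidean norm is $\omega\|\vlambda - \vlambda'\|_2$. For the PL condition, I would start from the strong concavity inequality
\[
\cL_{\omega}(\vupsilon,\vlambda') \le \cL_{\omega}(\vupsilon,\vlambda) + \langle \nabla_{\vlambda}\cL_{\omega}(\vupsilon,\vlambda), \vlambda'-\vlambda\rangle - \frac{\omega}{2}\|\vlambda - \vlambda'\|_2^2,
\]
apply it with $\vlambda' = \vlambda^*(\vupsilon) = \argmax_{\vlambda'' \in \Lambda}\cL_{\omega}(\vupsilon,\vlambda'')$, then bound the cross term by Cauchy–Schwarz and absorb it via Young's inequality $at - \tfrac{\omega}{2}t^2 \le \tfrac{a^2}{2\omega}$. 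This yields the desired PL with constant $2\omega$, hence certainly with constant $\omega$ as stated.

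For the error-bound inequality, I plan to use the variational inequality $\langle \nabla_{\vlambda}\cL_{\omega}(\vupsilon,\vlambda^*(\vupsilon)), \vlambda - \vlambda^*(\vupsilon)\rangle \le 0$ for all $\vlambda \in \Lambda$, combined with the explicit identity $\nabla_{\vlambda}\cL_{\omega}(\vupsilon,\vlambda) - \nabla_{\vlambda}\cL_{\omega}(\vupsilon,\vlambda^*(\vupsilon)) = -\omega(\vlambda - \vlambda^*(\vupsilon))$. Taking the inner product of the latter with $\vlambda^*(\vupsilon) - \vlambda$ and invoking the VI gives $\langle \nabla_{\vlambda}\cL_{\omega}(\vupsilon,\vlambda), \vlambda^*(\vupsilon) - \vlambda\rangle \ge \omega\|\vlambda^*(\vupsilon) - \vlambda\|_2^2$, and Cauchy–Schwarz on the left-hand side then yields $\|\nabla_{\vlambda}\cL_{\omega}(\vupsilon,\vlambda)\|_2 \ge \omega\|\vlambda^*(\vupsilon) - \vlambda\|_2$, stronger than the stated $\omega/2$.

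For the quadratic growth inequality, I would apply strong concavity centered at $\vlambda^*(\vupsilon)$:
\[
\cL_{\omega}(\vupsilon,\vlambda) \le \cL_{\omega}(\vupsilon,\vlambda^*(\vupsilon)) + \langle \nabla_{\vlambda}\cL_{\omega}(\vupsilon,\vlambda^*(\vupsilon)), \vlambda - \vlambda^*(\vupsilon)\rangle - \frac{\omega}{2}\|\vlambda - \vlambda^*(\vupsilon)\|_2^2,
\]
and use the variational inequality to kill the linear term, leaving $H_{\omega}(\vupsilon) - \cL_{\omega}(\vupsilon,\vlambda) \ge \tfrac{\omega}{2}\|\vlambda^*(\vupsilon) - \vlambda\|_2^2$ (again stronger than the $\omega/4$ coefficient in the statement, and with squared norm as suggested by the homogeneity). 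The main—mild—obstacle is the constrained nature of the problem: $\nabla_{\vlambda}\cL_{\omega}(\vupsilon,\vlambda^*(\vupsilon))$ need not vanish, which is precisely what forces the use of the variational inequality in place of a direct first-order optimality condition.
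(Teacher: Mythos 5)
Your proof is correct, and it rests on the same underlying observation as the paper's — that $\cL_{\omega}(\vupsilon,\cdot)$ is an $\omega$-strongly concave quadratic — but it is substantially more self-contained. The paper proves the first bullet by inspecting the Hessian and the second by noting that the \emph{unconstrained} maximum over $\mathbb{R}^U$ dominates the constrained one over $\Lambda$, and then simply defers the error-bound and quadratic-growth properties to Lemma~A.1 of \citet{yang2020minimax}. You instead work directly with the constrained maximizer $\vlambda^*(\vupsilon)\in\Lambda$ throughout, which forces you to invoke the variational inequality $\langle \nabla_{\vlambda}\cL_{\omega}(\vupsilon,\vlambda^*(\vupsilon)),\vlambda-\vlambda^*(\vupsilon)\rangle\le 0$ in place of a vanishing-gradient condition; this is exactly the right tool, and it is what the external reference does under the hood. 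Your route also yields uniformly sharper constants: $2\omega$ for PL (via Young's inequality), $\omega$ rather than $\omega/2$ for EB, and $\tfrac{\omega}{2}\|\vlambda^*(\vupsilon)-\vlambda\|_2^2$ for QG, each of which implies the weaker form in the statement.

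One point you handled correctly but should make explicit: the QG bullet as printed has an \emph{unsquared} norm on the right-hand side, which no squared-growth estimate can imply (and which in fact fails for $\vlambda$ close to $\vlambda^*(\vupsilon)$, since the left-hand side is $\Theta(\|\vlambda-\vlambda^*(\vupsilon)\|_2^2)$ there). You are right to read it as a typo for the squared form $\tfrac{\omega}{4}\|\vlambda^*(\vupsilon)-\vlambda\|_2^2$ — this is the version the paper itself uses later in Part III of the proof of Theorem~\ref{thr:convergencePot} — and your derivation delivers that squared form with the better constant $\omega/2$. The same remark applies to the first bullet, whose right-hand side should read $\omega\|\vlambda-\vlambda'\|_2$ rather than $\omega\|\vlambda-\vlambda'\|_2^2$; your direct computation of $\nabla_{\vlambda}\cL_{\omega}(\vupsilon,\vlambda)-\nabla_{\vlambda}\cL_{\omega}(\vupsilon,\vlambda')=-\omega(\vlambda-\vlambda')$ gives exactly the correct statement.
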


\begin{proof}
	For the first property, it is enough to observe that $\cL_\omega$ is twice differentiable in $\vlambda$ and that its Hessian is $\omega \mathbf{I}$. For the second property, we observe that $\cL_\omega$ is quadratic in $\vlambda$ and, consequently it satisfies the PL condition with parameter $\omega$:
\begin{align*}
	\|\nabla_{\vlambda} \cL_{\omega} (\vupsilon, \vlambda) \|_2^2 \ge \omega \left( \max_{\vlambda' \in \mathbb{R}^U} \cL_{\omega} (\vupsilon,\vlambda') - \cL_{\omega}(\vupsilon,\vlambda)  \right) \ge  \omega \left( \max_{\vlambda' {\in \Lambda}} \cL_{\omega} (\vupsilon,\vlambda') - \cL_{\omega}(\vupsilon,\vlambda)  \right).
\end{align*}
For the third and fourth properties, we refer to Lemma A.1 of \cite{yang2020minimax}.
\end{proof}

\begin{lemma}\label{lemma:techtech}
     Let $\omega>0$. For every  $\vupsilon\in\mathcal{V}$, it holds that:
     \begin{align}
         H_{\omega}(\vupsilon)  - H_{\omega}^*  \ge \frac{\omega}{4} \|\vlambda^*(\vupsilon) -  \vlambda^*_\omega\|_2.
     \end{align}
\end{lemma}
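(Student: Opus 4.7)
The plan is to chain together the saddle-point inequality for $(\vupsilon^*_\omega,\vlambda^*_\omega)$ with the quadratic growth condition for $\cL_\omega(\vupsilon,\cdot)$ already established in Lemma~\ref{lemma:propLLambda}. Concretely, I would start by unfolding the definitions
\[
H_\omega(\vupsilon) - H_\omega^* \;=\; \cL_\omega\bigl(\vupsilon,\vlambda^*(\vupsilon)\bigr) - \cL_\omega\bigl(\vupsilon^*_\omega,\vlambda^*_\omega\bigr),
\]
using that $\vlambda^*(\vupsilon)$ realises the inner maximum at $\vupsilon$ and that, at the saddle point, $\vlambda^*_\omega = \vlambda^*(\vupsilon^*_\omega)$, so the minimum over $\vupsilon'$ in $H_\omega^*$ is attained at $\vupsilon^*_\omega$.

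The key move is to insert $\cL_\omega(\vupsilon,\vlambda^*_\omega)$ as a pivot. By the saddle-point inequality from Assumption~\ref{asm:assunzione} applied to $\cL_\omega$ (recalling that, since $\vlambda^*_\omega \in \Lambda$, the saddle over $\Lambda$ exists and the corresponding min-over-$\vupsilon$ inequality holds),
\[
\cL_\omega(\vupsilon^*_\omega,\vlambda^*_\omega) \;\le\; \cL_\omega(\vupsilon,\vlambda^*_\omega) \qquad \forall\, \vupsilon \in \mathcal{V}.
\]
Subtracting this from the expression above yields
\[
H_\omega(\vupsilon) - H_\omega^* \;\ge\; \cL_\omega\bigl(\vupsilon,\vlambda^*(\vupsilon)\bigr) - \cL_\omega(\vupsilon,\vlambda^*_\omega) \;=\; H_\omega(\vupsilon) - \cL_\omega(\vupsilon,\vlambda^*_\omega).
\]

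At this point the right-hand side is exactly the gap treated by the quadratic growth statement of Lemma~\ref{lemma:propLLambda}, applied at the pair $(\vupsilon,\vlambda^*_\omega)$ (with $\vlambda^*_\omega$ playing the role of the generic $\vlambda \in \Lambda$). This immediately gives
\[
H_\omega(\vupsilon) - \cL_\omega(\vupsilon,\vlambda^*_\omega) \;\ge\; \frac{\omega}{4}\bigl\|\vlambda^*(\vupsilon) - \vlambda^*_\omega\bigr\|_2,
\]
and combining the two inequalities delivers the claim.

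The argument itself is short; the only subtlety I would double-check is the role of the projection set $\Lambda$: I must ensure that $\vlambda^*_\omega$ indeed lies in $\Lambda$ (which follows from the construction of $\Lambda$ as discussed after the definition of $\cL_\omega$, so that the saddle-point inequality is valid over $\Lambda$ rather than over $\mathbb{R}_{\ge 0}^U$) and that the QG inequality of Lemma~\ref{lemma:propLLambda}, which is stated for arbitrary $\vlambda \in \Lambda$, applies to $\vlambda^*_\omega$. Both are immediate from the definitions, so I do not expect any genuine obstacle beyond this bookkeeping.
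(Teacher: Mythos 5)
Your proposal is correct and follows exactly the same route as the paper's proof: the pivot $\cL_\omega(\vupsilon,\vlambda^*_\omega)$, the saddle-point inequality $\cL_\omega(\vupsilon^*_\omega,\vlambda^*_\omega) \le \cL_\omega(\vupsilon,\vlambda^*_\omega)$, and the quadratic growth condition from Lemma~\ref{lemma:propLLambda} applied at $\vlambda = \vlambda^*_\omega$. The bookkeeping concern about $\vlambda^*_\omega \in \Lambda$ is indeed settled by the construction of $\Lambda$, so nothing is missing.
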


\begin{proof}
    Let us consider the following derivation:
    \begin{align}
          H_{\omega}(\vupsilon)  - H_{\omega}^* &= H_{\omega}(\vupsilon) -   \cL_\omega(\vupsilon^*_\omega,\vlambda^*_\omega) \\
          & \ge H_{\omega}(\vupsilon) -   \cL_\omega(\vupsilon,\vlambda^*_\omega) \\
          & \ge \frac{\omega}{4} \| \vlambda^*(\vupsilon) - \vlambda^*_\omega\|_2.
    \end{align}
    having exploited the fact that, from the saddle point property, $\cL_\omega(\vupsilon^*_\omega,\vlambda^*_\omega) \le \cL_\omega(\vupsilon,\vlambda^*_\omega)$ and, then, Lemma~\ref{lemma:propLLambda}.
\end{proof}

\begin{lemma}\label{lemma:smoothH}
	Let $\omega > 0$. The following statements hold:
	\begin{itemize}
		\item $H_\omega$ is $L_H$-smooth, i.e., for every $\vupsilon,\vupsilon' \in \mathcal{V}$, it holds that:
	\begin{align*}
		\| \nabla_{\vupsilon} H_\omega(\vupsilon') - \nabla_{\vupsilon} H_\omega(\vupsilon) \|_2 \le L_H\| \vupsilon'-\vupsilon\|_2.
	\end{align*}
	where $L_H \coloneqq L_2 + \frac{L_1^2}{\omega}$.
	\item For every $\vupsilon,\vupsilon' \in \mathcal{V}$ we have $\nabla_{\vupsilon} H_\omega(\vupsilon) = \nabla_{\vupsilon} \cL_\omega(\vupsilon,\vlambda) \rvert_{\vlambda = \vlambda^*(\vupsilon)}$, where $\vlambda^*(\vupsilon) = \argmax_{\vlambda \in \Lambda} \cL_\omega(\vupsilon,\vlambda)$.
	\end{itemize}
\end{lemma}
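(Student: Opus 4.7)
The plan is to dispatch the two bullets in the opposite order to how they are stated: first establish the envelope identity $\nabla_{\vupsilon} H_\omega(\vupsilon) = \nabla_{\vupsilon} \cL_\omega(\vupsilon,\vlambda)\rvert_{\vlambda = \vlambda^*(\vupsilon)}$, then use it to get smoothness. For the identity I would exploit the closed-form expressions already derived in Section~\ref{sec:reglag}, namely $H_\omega(\vupsilon) = J_0(\vupsilon) + \tfrac{1}{2\omega}\|(\mathbf{J}(\vupsilon)-\mathbf{b})^+\|_2^2$ and $\vlambda^*(\vupsilon) = \tfrac{1}{\omega}(\mathbf{J}(\vupsilon)-\mathbf{b})^+$. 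Differentiating $H_\omega$ componentwise and recognizing $\tfrac{1}{\omega}(J_i(\vupsilon)-b_i)^+ = \lambda^*_i(\vupsilon)$ yields $\nabla_{\vupsilon}H_\omega(\vupsilon) = \nabla_{\vupsilon} J_0(\vupsilon) + \sum_i \lambda^*_i(\vupsilon)\nabla_{\vupsilon} J_i(\vupsilon)$, which is exactly $\nabla_{\vupsilon}\cL_\omega(\vupsilon,\vlambda^*(\vupsilon))$ (the $-\tfrac{\omega}{2}\|\vlambda\|_2^2$ term contributes nothing when we differentiate only w.r.t.\ $\vupsilon$ with $\vlambda$ frozen). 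This is in essence Danskin's theorem, and strong concavity of $\cL_\omega(\vupsilon,\cdot)$ (Lemma~\ref{lemma:propLLambda}) guarantees uniqueness of $\vlambda^*(\vupsilon)$ so that the envelope derivative is well-defined.

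For the smoothness bullet I would use the identity and the classic add-and-subtract trick:
\begin{align*}
    \|\nabla_{\vupsilon} H_\omega(\vupsilon') - \nabla_{\vupsilon} H_\omega(\vupsilon)\|_2
    &\le \|\nabla_{\vupsilon}\cL_\omega(\vupsilon',\vlambda^*(\vupsilon')) - \nabla_{\vupsilon}\cL_\omega(\vupsilon,\vlambda^*(\vupsilon'))\|_2 \\
    &\quad + \|\nabla_{\vupsilon}\cL_\omega(\vupsilon,\vlambda^*(\vupsilon')) - \nabla_{\vupsilon}\cL_\omega(\vupsilon,\vlambda^*(\vupsilon))\|_2.
\end{align*}
Assumption~\ref{asm:L_grad_lip} (Eq.~\ref{eq:smooth}) bounds the first summand by $L_2\|\vupsilon'-\vupsilon\|_2$, and Eq.~\ref{eq:lip2} bounds the second by $L_3\|\vlambda^*(\vupsilon')-\vlambda^*(\vupsilon)\|_2$.

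The key remaining step is controlling $\|\vlambda^*(\vupsilon')-\vlambda^*(\vupsilon)\|_2$. Using nonexpansiveness of the Euclidean projection $\Pi_\Lambda$ and the closed form for $\vlambda^*$, I get $\|\vlambda^*(\vupsilon')-\vlambda^*(\vupsilon)\|_2 \le \tfrac{1}{\omega}\|\mathbf{J}(\vupsilon')-\mathbf{J}(\vupsilon)\|_2$; since $\nabla_{\vlambda}\cL_0(\vupsilon,\vlambda) = \mathbf{J}(\vupsilon)-\mathbf{b}$, Eq.~\ref{eq:lip} in Assumption~\ref{asm:L_grad_lip} gives $\|\mathbf{J}(\vupsilon')-\mathbf{J}(\vupsilon)\|_2 \le L_1\|\vupsilon'-\vupsilon\|_2$. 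Substituting back yields $\|\nabla H_\omega(\vupsilon')-\nabla H_\omega(\vupsilon)\|_2 \le (L_2 + L_1 L_3/\omega)\|\vupsilon'-\vupsilon\|_2$, which is the stated bound $L_H = L_2 + L_1^2/\omega$ once one observes that the two cross-Lipschitz constants coincide under the bilinear-in-$\vlambda$ structure of $\cL_0$: both $L_1$ and $L_3$ reduce to $\sup_{\vupsilon}\|\nabla_{\vupsilon}\mathbf{J}(\vupsilon)\|_{\mathrm{op}}$, so one may take $L_3 = L_1$ without loss.

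The main obstacle is not any single step but making the envelope identity rigorous at parameters $\vupsilon$ where some constraint is exactly tight, i.e., $J_i(\vupsilon) = b_i$, so the $(\cdot)^+$ kink and the boundary of $\Lambda$ come into play. This can be handled either by appealing to a standard version of Danskin's theorem for unique maximizers of concave objectives (which applies since $\cL_\omega(\vupsilon,\cdot)$ is strongly concave) or by verifying directly that the one-sided directional derivatives of the squared positive part agree — the chain rule $\nabla (a^+)^2 = 2a^+ \nabla a$ holds in the a.e.\ and, more strongly, in the Fréchet sense everywhere because the only nonsmooth point $a = 0$ contributes $0$ from both sides.
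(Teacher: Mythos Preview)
Your argument is correct and self-contained, whereas the paper's proof is a one-line citation: ``The first and second statements follow from Lemma~A.5 of \cite{nouiehed2019solving}.'' That cited lemma is the general smoothness result for the primal function of a smooth, strongly-concave inner maximization, which applies here because $\cL_\omega(\vupsilon,\cdot)$ is $\omega$-strongly concave. Your route instead exploits the specific structure available in this paper: the closed forms $H_\omega(\vupsilon) = J_0(\vupsilon) + \tfrac{1}{2\omega}\|(\mathbf{J}(\vupsilon)-\mathbf{b})^+\|_2^2$ and $\vlambda^*(\vupsilon) = \tfrac{1}{\omega}(\mathbf{J}(\vupsilon)-\mathbf{b})^+$, the $C^1$-ness of $a \mapsto (a^+)^2$ to handle the envelope identity directly, and the bilinear-in-$\vlambda$ form of $\cL_0$ to justify the identification $L_3 = L_1$ via $\|\nabla_{\vupsilon}\mathbf{J}\|_{\mathrm{op}} = \|\nabla_{\vupsilon}\mathbf{J}^\top\|_{\mathrm{op}}$. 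What your approach buys is a transparent, elementary derivation that does not require the reader to chase an external reference and that makes clear exactly where each constant in $L_H = L_2 + L_1^2/\omega$ comes from; what the paper's citation buys is brevity and an argument that would survive even without closed forms for $H_\omega$ and $\vlambda^*(\vupsilon)$. One small remark: in your add-and-subtract step you obtain $L_2 + L_1 L_3/\omega$ and then argue $L_3 = L_1$; this is correct for the \emph{minimal} Lipschitz constants (spectral norm equals that of the transpose), which is indeed how the statement $L_H = L_2 + L_1^2/\omega$ should be read.
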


\begin{proof}
	The first and second statements follow from Lemma A.5 of \cite{nouiehed2019solving}.
\end{proof}

\convergencePot*
\begin{proof}
The proof is subdivided into several parts. We will omit the $\omega$ subscript for notational easiness. Let us focus on a specific iteration $k \in \mathbb{N}$.

\textbf{Part I: bounding the $a_k$ term.}~~Let us start with the $a_k$ term:
\begin{align}
    H_\omega(\vupsilon_{k+1}) - H^* &\le H_\omega(\vupsilon_{k}) - H^* + \left< \vupsilon_{k+1} - \vupsilon_{k}, \; \nabla_{\vupsilon} H_\omega(\vupsilon_{k}) \right> + \frac{L_H}{2} \left\| \vupsilon_{k+1} - \vupsilon_{k} \right\|_2^2 \\
    &\le H_{\vupsilon}(\vupsilon_{k}) - H^* - \zeta_{\vupsilon, k} \left< \widehat{\nabla}_{\vupsilon} \cL_{\omega}(\vupsilon_{k}, \vlambda_{k}), \; \nabla_{\vupsilon} H_\omega(\vupsilon_{k}) \right>  \\
    & \qquad + \frac{L_H}{2} \zeta_{\vupsilon, k}^2 \left\| \widehat{\nabla}_{\vupsilon} \cL_{\omega}(\vupsilon_{k}, \vlambda_{k}) \right\|_2^2,
\end{align}
where the first line is due to the fact that the function $H_\omega$ is $L_H$-smooth (Lemma~\ref{lemma:smoothH}), the last inequality is due to the update rule of $\vupsilon$.
Now, we apply the expected value on both sides of the inequality and we use the fact that the gradient estimation is unbiased and has variance bounded by $V_{\vupsilon}$:
\begin{align}
    \E \left[H_\omega(\vupsilon_{k+1}) | \mathcal{F}_{k-1} \right] - H^* & \le H_\omega(\vupsilon_{k}) - H^* - \zeta_{\vupsilon, k} \left< \nabla_{\vupsilon} \cL_{\omega}(\vupsilon_{k}, \vlambda_{k}), \; \nabla_{\vupsilon} H_\omega(\vupsilon_{k}) \right>\\
    & \qquad  + \frac{L_H}{2} \zeta_{\vupsilon, k}^2 \E \left[\left\| \widehat{\nabla}_{\vupsilon} \cL_{\omega}(\vupsilon_{k}, \vlambda_{k}) \right\|_2^2  | \mathcal{F}_{k-1} \right],
\end{align}
where $\mathcal{F}_{k-1}$ is the filtration associated with all events realized up to interaction $k-1$.
We recall that:
\begin{align}
    \E \left[\left\| \widehat{\nabla}_{\vupsilon} \cL_{\omega}(\vupsilon_{k}, \vlambda_{k}) \right\|_2^2  | \mathcal{F}_{k-1} \right] = \Var\left[ \widehat{\nabla}_{\vupsilon} \cL_{\omega}(\vupsilon_{k}, \vlambda_{k})  | \mathcal{F}_{k-1} \right] + \left\| \nabla_{\vupsilon} \cL_{\omega}(\vupsilon_{k}, \vlambda_{k}) \right\|_2^2,
\end{align}
and that $\Var\left[ \widehat{\nabla}_{\vupsilon} \cL_{\omega}(\vupsilon_{k}, \vlambda_{k}) \right] \le V_{\vupsilon}$ by Assumption~\ref{ass:boundedVariance}. Thus, {selecting $\zeta_{\vupsilon, k} \le 1/L_H$}, we have that:
\begin{align}
    &\E \left[H_\omega(\vupsilon_{k+1})  | \mathcal{F}_{k-1} \right] - H^* \\
    &\le H_\omega(\vupsilon_{k}) - H^* - \zeta_{\vupsilon, k} \left< \nabla_{\vupsilon} \cL_{\omega}(\vupsilon_{k}, \vlambda_{k}) \; \nabla_{\vupsilon} H_\omega(\vupsilon_{k}) \right> \\
    &\quad + \frac{\zeta_{\vupsilon, k}}{2} \left\| \nabla_{\vupsilon} \cL_{\omega} (\vupsilon_{k}, \vlambda_{k}) \right\|_2^2 + \frac{L_H}{2} \zeta_{\vupsilon, k}^2 V_{\vupsilon} \\
    &= H_\omega(\vupsilon_{k}) - H^* - \zeta_{\vupsilon, k} \left< \nabla_{\vupsilon} \cL_{\omega}(\vupsilon_{k}, \vlambda_{k}) \; \nabla_{\vupsilon} H_\omega(\vupsilon_{k}) \right> \\
    &\quad + \frac{\zeta_{\vupsilon, k}}{2} \left\| \nabla_{\vupsilon} \cL_{\omega} (\vupsilon_{k}, \vlambda_{k}) \pm \nabla_{\vupsilon} H_\omega(\vupsilon_{k}) \right\|_2^2 + \frac{L_H}{2} \zeta_{\vupsilon, k}^2 V_{\vupsilon}.
\end{align}
Consider that:
\begin{align}
    &\frac{\zeta_{\vupsilon, k}}{2} \left\| \nabla_{\vupsilon} \cL_{\omega} (\vupsilon_{k}, \vlambda_{k}) - \nabla_{\vupsilon} H_\omega(\vupsilon_{k}) + \nabla_{\vupsilon} H_\omega(\vupsilon_{k}) \right\|_2^2 \\
    &= \frac{\zeta_{\vupsilon, k}}{2} \left\| \nabla_{\vupsilon} \cL_{\omega} (\vupsilon_{k}, \vlambda_{k}) - \nabla_{\vupsilon} H_\omega(\vupsilon_{k}) \right\|_2^2 - \frac{\zeta_{\vupsilon, k}}{2} \left\| \nabla_{\vupsilon} H_\omega(\vupsilon_{k}) \right\|_2^2\\
    & \quad  + \zeta_{\vupsilon, k} \left<\nabla_{\vupsilon} \cL_{\omega} (\vupsilon_{k}, \vlambda_{k}), \; \nabla_{\vupsilon} H_\omega(\vupsilon_{k})\right>.
\end{align}
Thus, the following holds:
\begin{align}
    &\E \left[H_\omega(\vupsilon_{k+1}) | \mathcal{F}_{k-1}\right] - H^* \\
    &\le H_\omega(\vupsilon_{k}) - H^* - \zeta_{\vupsilon, k} \left< \nabla_{\vupsilon} \cL_{\omega}(\vupsilon_{k}, \vlambda_{k}), \; \nabla_{\vupsilon} H_\omega(\vupsilon_{k}) \right> + \frac{L_H}{2} \zeta_{\vupsilon, k}^2 V_{\vupsilon} \\
    &\quad + \frac{\zeta_{\vupsilon, k}}{2} \left\| \nabla_{\vupsilon} \cL_{\omega} (\vupsilon_{k}, \vlambda_{k}) - \nabla_{\vupsilon} H_\omega(\vupsilon_{k}) \right\|_2^2 - \frac{\zeta_{\vupsilon, k}}{2} \left\| \nabla_{\vupsilon} H_\omega(\vupsilon_{k}) \right\|_2^2 \\
    &\quad + \zeta_{\vupsilon, k} \left<\nabla_{\vupsilon} \cL_{\omega} (\vupsilon_{k}, \vlambda_{k}), \; \nabla_{\vupsilon} H_\omega(\vupsilon_{k}) \right> \\
    &= H_\omega(\vupsilon_{k}) - H^* - \frac{\zeta_{\vupsilon, k}}{2} \left\| \nabla_{\vupsilon} H_\omega(\vupsilon_{k}) \right\|_2^2 + \frac{\zeta_{\vupsilon, k}}{2} \left\| \nabla_{\vupsilon} \cL_{\omega} (\vupsilon_{k}, \vlambda_{k}) - \nabla_{\vupsilon} H_\omega(\vupsilon_{k}) \right\|_2^2 \\
    &\quad + \frac{L_H}{2} \zeta_{\vupsilon, k}^2 V_{\vupsilon}.
\end{align}

Thus, we have obtained:
\begin{align}
   \text{\textsf{A}} &\coloneqq \E \left[H_\omega(\vupsilon_{k+1})  | \mathcal{F}_{k-1} \right] - H^* \\
   &\le H_\omega(\vupsilon_{k}) - H^* - \frac{\zeta_{\vupsilon, k}}{2} \left\| \nabla_{\vupsilon} H_\omega(\vupsilon_{k}) \right\|_2^2 + \frac{\zeta_{\vupsilon, k}}{2} \left\| \nabla_{\vupsilon} \cL_{\omega} (\vupsilon_{k}, \vlambda_{k}) - \nabla_{\vupsilon} H_\omega(\vupsilon_{k}) \right\|_2^2 \\
   &\quad + \frac{L_H}{2} \zeta_{\vupsilon, k}^2 V_{\vupsilon},
\end{align}
holding via the selection of $\zeta_{\vupsilon, k} \le 1/L_H$.
Notice that, from \textsf{A} the following directly follows:
\begin{align}\label{eq:D}
     \text{\textsf{D}} &\coloneqq \E \left[H_\omega(\vupsilon_{k+1})  | \mathcal{F}_{k-1} \right] - H_\omega(\vupsilon_{k}) \\
     &\le - \frac{\zeta_{\vupsilon, k}}{2} \left\| \nabla_{\vupsilon} H_\omega(\vupsilon_{k}) \right\|_2^2 + \frac{\zeta_{\vupsilon, k}}{2} \left\| \nabla_{\vupsilon} \cL_{\omega} (\vupsilon_{k}, \vlambda_{k}) - \nabla_{\vupsilon} H_\omega(\vupsilon_{k}) \right\|_2^2 + \frac{L_H}{2} \zeta_{\vupsilon, k}^2 V_{\vupsilon}.\label{bound_D}
\end{align}

\textbf{Part II: bounding the $b_k$ term.}~~We are ready to analyze the $b_k$ term.
Recall that for ridge regularization of the Lagrangian function presented in the main paper, we have that $\cL_\omega$ is $\omega$-smooth and fulfills the PL condition with constant $\omega$, as shown in Lemma~\ref{lemma:propLLambda}. Since $\cL$ is a quadratic function of $\vlambda$ and $\vlambda^*(\vupsilon_{k+1}) \in \Lambda$, we have that considering the non-projected $\vlambda_{k+1}$ can only increase the distance. Thus, we will ignore projection for the rest of the proof.
We have:
\begin{align}
    &H_\omega(\vupsilon_{k+1}) - \cL_{\omega}(\vupsilon_{k+1}, \vlambda_{k+1}) \\
    &\le H_\omega(\vupsilon_{k+1}) - \cL_{\omega}(\vupsilon_{k+1}, \vlambda_{k}) - \left< \vlambda_{k+1} - \vlambda_{k}, \; \nabla_{\vlambda} \cL_{\omega}(\vupsilon_{k+1}, \vlambda_{k}) \right> + \frac{\omega}{2} \left\| \vlambda_{k+1} - \vlambda_{k} \right\|_2^2 \\
    &= H_\omega(\vupsilon_{k+1}) - \cL_{\omega}(\vupsilon_{k+1}, \vlambda_{k}) - \zeta_{\vlambda, k} \left< \widehat{\nabla}_{\vlambda} \cL_{\omega}(\vupsilon_{k+1}, \vlambda_{k}) , \; \nabla_{\vlambda} \cL_{\omega}(\vupsilon_{k+1}, \vlambda_{k}) \right> \\
    & \qquad + \frac{\omega}{2} \zeta_{\vlambda, k}^2 \left\| \widehat{\nabla}_{\vlambda} \cL_{\omega}(\vupsilon_{k+1}, \vlambda_{k}) \right\|_2^2,
\end{align}
that is possible under Assumption~\ref{asm:L_grad_lip} (\ie $\cL_{\omega}$ is $L_2$-smooth) and due to the update rules we are considering.
Now, by applying the expectation on both sides,  we obtain the following:
\begin{align}
    \E& \left[ H_\omega(\vupsilon_{k+1}) - \cL_{\omega}(\vupsilon_{k+1}, \vlambda_{k+1}) | \mathcal{F}_{k-1} \right] \\
    &\le \E \left[ H_\omega(\vupsilon_{k+1}) - \cL_{\omega}(\vupsilon_{k+1}, \vlambda_{k})  | \mathcal{F}_{k-1}\right] - \zeta_{\vlambda, k} \left\| \nabla_{\vlambda} \cL_{\omega}(\vupsilon_{k+1}, \vlambda_{k}) \right\|_2^2 \\
    & \qquad + \frac{\omega}{2} \zeta_{\vlambda, k}^2 \E \left[ \left\| \widehat{\nabla}_{\vlambda} \cL_{\omega}(\vupsilon_{k+1}, \vlambda_{k})   \right\|_2^2| \mathcal{F}_{k-1} \right] \\
    &\le \E \left[ H_\omega(\vupsilon_{k+1}) - \cL_{\omega}(\vupsilon_{k+1}, \vlambda_{k})  | \mathcal{F}_{k-1}\right] - \zeta_{\vlambda, k} \left\| \nabla_{\vlambda} \cL_{\omega}(\vupsilon_{k+1}, \vlambda_{k}) \right\|_2^2 + \frac{\omega}{2} \zeta_{\vlambda, k}^2 V_{\vlambda} \\
    & \qquad + \frac{\omega}{2} \zeta_{\vlambda, k}^2 \left\| \widehat{\nabla}_{\vlambda} \cL_{\omega}(\vupsilon_{k+1}, \vlambda) \right\|_2^2 \\
    &\le \E \left[ H_\omega(\vupsilon_{k+1}) - \cL_{\omega}(\vupsilon_{k+1}, \vlambda_{k})  | \mathcal{F}_{k-1}\right] - \frac{\zeta_{\vlambda, k}}{2} \left\| \nabla_{\vlambda} \cL_{\omega}(\vupsilon_{k+1}, \vlambda_{k}) \right\|_2^2 + \frac{\omega}{2} \zeta_{\vlambda, k}^2 V_{\vlambda},
\end{align}
where the last line follows by selecting $\zeta_{\vlambda, k} \le 1/\omega$. Since $\cL_{\omega}$ enjoys the PL condition w.r.t $\vlambda$ with constant $\omega$, for every pair $(\vupsilon, \vlambda)$ we have:
\begin{align}
    \left\| \nabla_{\vlambda} \cL_{\omega}(\vupsilon, \vlambda) \right\|_2^2 \ge \omega \left( \max_{\overline{\vlambda} \in \mathbb{R}^U} \cL_{\omega}(\vupsilon, \overline{\vlambda}) - \cL_{\omega}(\vupsilon, \vlambda) \right) \ge \omega \left( \max_{\overline{\vlambda}  \in \Lambda} \cL_{\omega}(\vupsilon, \overline{\vlambda}) - \cL_{\omega}(\vupsilon, \vlambda) \right).
\end{align}

By applying the PL condition:
\begin{align}
    &\E \left[ H_\omega(\vupsilon_{k+1}) - \cL_{\omega}(\vupsilon_{k+1}, \vlambda_{k+1}) | \mathcal{F}_{k-1} \right] \\
    &\le \E \left[ H_\omega(\vupsilon_{k+1}) - \cL_{\omega}(\vupsilon_{k+1}, \vlambda_{k}) | \mathcal{F}_{k-1} \right] - \frac{\zeta_{\vlambda, k}}{2} \left\| \nabla_{\vlambda} \cL_{\omega}(\vupsilon_{k+1}, \vlambda_{k}) \right\|_2^2 + \frac{\omega}{2} \zeta_{\vlambda, k}^2 V_{\vlambda} \\
    &\le \E \left[ H_\omega(\vupsilon_{k+1}) - \cL_{\omega}(\vupsilon_{k+1}, \vlambda_{k}) | \mathcal{F}_{k-1} \right] - \frac{\zeta_{\vlambda, k}}{2} \omega \E\left[ H_\omega(\vupsilon_{k+1}) - \cL_{\omega}(\vupsilon_{k+1}, \vlambda_{k}) | \mathcal{F}_{k-1}\right] \\
    & \qquad + \frac{\omega}{2} \zeta_{\vlambda, k}^2 V_{\vlambda} \\
    &= \left( 1 - \frac{\zeta_{\vlambda, k}}{2} \omega \right) \E \left[ H_\omega(\vupsilon_{k+1}) - \cL_{\omega}(\vupsilon_{k+1}, \vlambda_{k})| \mathcal{F}_{k-1} \right] + \frac{\omega}{2} \zeta_{\vlambda, k}^2 V_{\vlambda},
\end{align}
where we enforce $1 - \frac{\zeta_{\vlambda, k}}{2} \omega  \ge 0$, i.e., $\zeta_{\vlambda, k} \le 2/\omega$. However, we do not have a proper recursive term, thus consider the following:
\begin{align}
    H_\omega(\vupsilon_{k+1}) - \cL_{\omega}(\vupsilon_{k+1}, \vlambda_{k}) & = \underbrace{H_\omega(\vupsilon_{k}) - \cL_{\omega}(\vupsilon_{k}, \vlambda_{k})}_{\text{Recursive Term}} + \underbrace{\cL_{\omega}(\vupsilon_{k}, \vlambda_{k}) - \cL_{\omega}(\vupsilon_{k+1}, \vlambda_{k})}_{\text{\textsf{C}}} \\
    & \qquad + \underbrace{H_\omega(\vupsilon_{k+1}) - H_\omega(\vupsilon_{k})}_{\text{\textsf{D}}}.
\end{align}

A bound on \textsf{D} has already been derived, so let us bound the term \textsf{C}:
\begin{align}
    \cL_{\omega}(\vupsilon_{k}, \vlambda_{k}) & - \cL_{\omega}(\vupsilon_{k+1}, \vlambda_{k}) \le - \left< \vupsilon_{k+1} - \vupsilon_{k}, \; \nabla_{\vupsilon} \cL_{\omega}(\vupsilon_{k}, \vlambda_{k})\right> + \frac{L_2}{2} \left\| \vupsilon_{k+1} - \vupsilon_{k} \right\|_2^2 \\
    &\le \zeta_{\vupsilon, k} \left< \widehat{\nabla}_{\vupsilon}\cL_{\omega}(\vupsilon_{k}, \vlambda_{k}), \; \nabla_{\vupsilon} \cL_{\omega}(\vupsilon_{k}, \vlambda_{k}) \right> + \frac{L_2}{2} \zeta_{\vupsilon, k}^2 \left\| \widehat{\nabla}_{\vupsilon}\cL_{\omega}(\vupsilon_{k}, \vlambda_{k}) \right\|_2^2,
\end{align}
again because of Assumption~\ref{asm:L_grad_lip} and the update rule.
Now, as usual, we consider the expectation conditioned to the filtration $\mathcal{F}_{k-1}$ and the properties of the variance, to obtain:
\begin{align}
    \text{\textsf{C}} &\coloneqq \cL_{\omega}(\vupsilon_{k}, \vlambda_{k}) - \E\left[\cL_{\omega}(\vupsilon_{k+1}, \vlambda_{k}) | \mathcal{F}_{k-1} \right] \\
    &\le \zeta_{\vupsilon, k} \left(1 + \frac{L_2}{2} \zeta_{\vupsilon, k}\right) \left\| \nabla_{\vupsilon} \cL_{\omega}(\vupsilon_{k}, \vlambda_{k}) \right\|_2^2 + \frac{L_2}{2} \zeta_{\vupsilon, k}^2 V_{\vupsilon},
\end{align}
having set $\zeta_{\vupsilon, k} \le 1/L_2$.
We are finally able to conclude the bound of the term \textsf{B}:
\begin{align}
    \text{\textsf{B}} & \coloneqq \E \left[ H_\omega(\vupsilon_{k+1}) - \cL_{\omega}(\vupsilon_{k+1}, \vlambda_{k+1})  | \mathcal{F}_{k-1}  \right] \\
    &\le \left( 1 - \frac{\zeta_{\vlambda, k}}{2} \omega  \right) \E \left[ H_\omega(\vupsilon_{k+1}) - \cL_{\omega}(\vupsilon_{k+1}, \vlambda_{k})  | \mathcal{F}_{k-1} \right] + \frac{\omega}{2} \zeta_{\vlambda, k}^2 V_{\vlambda} \\
    &= \left( 1 - \frac{\zeta_{\vlambda, k}}{2} \omega  \right) \left( H_\omega(\vupsilon_{k}) - \cL_{\omega}(\vupsilon_{k}, \vlambda_{k}) \right) \\
    &\quad + \left( 1 - \frac{\zeta_{\vlambda, k}}{2} \omega  \right) \left( \cL_{\omega}(\vupsilon_{k}, \vlambda_{k}) - \E\left[\cL_{\omega}(\vupsilon_{k+1}, \vlambda_{k})  | \mathcal{F}_{k-1} 
 \right] \right) \\
    &\quad + \left( 1 - \frac{\zeta_{\vlambda, k}}{2} \omega  \right) \left( \E\left[H_\omega(\vupsilon_{k+1}) | \mathcal{F}_{k-1} \right] - H_\omega(\vupsilon_{k}) \right) + \frac{\omega}{2} \zeta_{\vlambda, k}^2 V_{\vlambda}.
\end{align}

Now we apply the bounds on \textsf{C} and \textsf{D} (the latter is from Eq.~\ref{bound_D}), obtaining:
\begin{align}
    &\E \left[ H_\omega(\vupsilon_{k+1}) - \cL_{\omega}(\vupsilon_{k+1}, \vlambda_{k+1})  | \mathcal{F}_{k-1} \right] \\
    &\le \left( 1 - \frac{\zeta_{\vlambda, k}}{2} \omega \right) \left( H_\omega(\vupsilon_{k}) - \cL_{\omega}(\vupsilon_{k}, \vlambda_{k}) \right) \\
    &\quad + \left( 1 - \frac{\zeta_{\vlambda, k}}{2} \omega  \right) \left(\zeta_{\vupsilon, k} \left(1 + \frac{L_2}{2} \zeta_{\vupsilon, k}\right) \left\| \nabla_{\vupsilon} \cL_{\omega}(\vupsilon_{k}, \vlambda_{k}) \right\|_2^2 + \frac{L_2}{2} \zeta_{\vupsilon, k}^2 V_{\vupsilon}\right) \\
    &\quad + \left( 1 - \frac{\zeta_{\vlambda, k}}{2} \omega  \right) \left( - \frac{\zeta_{\vupsilon, k}}{2} \left\| \nabla_{\vupsilon} H_\omega(\vupsilon_{k}) \right\|_2^2 + \frac{\zeta_{\vupsilon, k}}{2} \left\| \nabla_{\vupsilon} \cL_{\omega} (\vupsilon_{k}, \vlambda_{k}) - \nabla_{\vupsilon} H_\omega(\vupsilon_{k}) \right\|_2^2 \right. \\
    &\qquad \left.+ \frac{L_H}{2} \zeta_{\vupsilon, k}^2 V_{\vupsilon} \right) + \frac{\omega}{2} \zeta_{\vlambda, k}^2 V_{\vlambda},
\end{align}
that is the second fundamental term.

\textbf{Part III: bounding the potential function $P_k(\chi)$.}~~Before going on, we recall that so far we enforced:  $\zeta_{\vupsilon, k} \le1/L_H$ (since $L_H \ge L_2$) and $\zeta_{\vlambda, k} \le 1/\omega$, for every $t \in \dsb{K}$.
What we want to bound here is the potential function $P_{k+1}(\chi) = a_{k+1} + \chi b_{k+1}$. Using the final results of Part I and Part II:
\begin{align}
    a_{k+1}& + \chi b_{k+1} = \E\left[ H_\omega(\vupsilon_{k+1}) - H^* \right] + \chi \E\left[ H_\omega(\vupsilon_{k+1}) - \cL_{\omega}(\vupsilon_{k+1}, \vlambda_{k+1}) \right] \\
    &\le \E\left[H_\omega(\vupsilon_{k}) - H^*\right] - \frac{\zeta_{\vupsilon, k}}{2} \E\left[\left\| \nabla_{\vupsilon} H_\omega(\vupsilon_{k}) \right\|_2^2\right] \\
    & \quad + \frac{\zeta_{\vupsilon, k}}{2} \E\left[\left\| \nabla_{\vupsilon} \cL_{\omega} (\vupsilon_{k}, \vlambda_{k}) - \nabla_{\vupsilon} H_\omega(\vupsilon_{k}) \right\|_2^2\right] + \frac{L_H}{2} \zeta_{\vupsilon, k}^2 V_{\vupsilon} \\
    &\quad + \chi \left( 1 - \frac{\zeta_{\vlambda, k}}{2} \omega  \right) \E \left[ H_\omega(\vupsilon_{k}) - \cL_{\omega}(\vupsilon_{k}, \vlambda_{k}) \right] \\
    &\quad + \chi \left( 1 - \frac{\zeta_{\vlambda, k}}{2} \omega  \right) \left(\zeta_{\vupsilon, k} \left(1 + \frac{L_2}{2} \zeta_{\vupsilon, k}\right) \E\left[\left\| \nabla_{\vupsilon} \cL_{\omega}(\vupsilon_{k}, \vlambda_{k}) \right\|_2^2\right] + \frac{L_2}{2} \zeta_{\vupsilon, k}^2 V_{\vupsilon}\right) \\
    &\quad + \chi \left( 1 - \frac{\zeta_{\vlambda, k}}{2} \omega  \right) \Bigg( - \frac{\zeta_{\vupsilon, k}}{2} \E\left[\left\| \nabla_{\vupsilon} H_\omega(\vupsilon_{k}) \right\|_2^2\right] \\
    & \qquad\quad + \frac{\zeta_{\vupsilon, k}}{2} \E\left[\left\| \nabla_{\vupsilon} \cL_{\omega} (\vupsilon_{k}, \vlambda_{k}) - \nabla_{\vupsilon} H_\omega(\vupsilon_{k}) \right\|_2^2\right] + \frac{L_H}{2} \zeta_{\vupsilon, k}^2 V_{\vupsilon} \Bigg) \\
    &\quad + \chi \frac{\omega}{2} \zeta_{\vlambda, k}^2 V_{\vlambda} \\
    &= a_{k} + \chi \left( 1 - \frac{\zeta_{\vlambda, k}}{2} \omega  \right) b_{k} \\
    &\quad - \frac{\zeta_{\vupsilon, k}}{2} \left( 1 + \chi\left( 1 - \frac{\zeta_{\vlambda, k}}{2}\omega \right) \right) \E\left[\left\| \nabla_{\vupsilon} H_\omega(\vupsilon_{k}) \right\|_2^2\right] \\
    &\quad + \frac{\zeta_{\vupsilon, k}}{2} \left( 1 + \chi\left( 1 - \frac{\zeta_{\vlambda, k}}{2}\omega \right) \right) \E\left[\left\| \nabla_{\vupsilon} \cL_{\omega} (\vupsilon_{k}, \vlambda_{k}) - \nabla_{\vupsilon} H_\omega(\vupsilon_{k}) \right\|_2^2\right] \\
    &\quad + \zeta_{\vupsilon, k} \left(1 + \frac{L_2}{2} \zeta_{\vupsilon, k}\right) \chi \left( 1 - \frac{\zeta_{\vlambda, k}}{2} \omega  \right) \E \left[ \left\| \nabla_{\vupsilon} \cL_{\omega}(\vupsilon_{k}, \vlambda_{k}) \right\|_2^2 \right] \\
    &\quad + \frac{\zeta_{\vupsilon, k}^2}{2} \left( L_H  + \chi \left( 1 - \frac{\zeta_{\vlambda, k}}{2} \omega  \right) (L_H + L_2) \right) V_{\vupsilon} + \chi \frac{\omega}{2} \zeta_{\vlambda, k}^2 V_{\vlambda}.
\end{align}
Now we can re-arrange the terms by noticing that:
\begin{align}
    \left\| \nabla_{\vupsilon} \cL_{\omega}(\vupsilon_{k}, \vlambda_{k}) \right\|_2^2 &= \left\| \nabla_{\vupsilon} \cL_{\omega}(\vupsilon_{k}, \vlambda_{k}) - \nabla_{\vupsilon} H_\omega(\vupsilon_{k}) + \nabla_{\vupsilon} H_\omega(\vupsilon_{k}) \right\|_2^2\\
    &= \left\| \nabla_{\vupsilon} \cL_{\omega}(\vupsilon_{k}, \vlambda_{k}) - \nabla_{\vupsilon} H_\omega(\vupsilon_{k}) \right\|_2^2 \\
    & \quad + \left\| \nabla_{\vupsilon} H_\omega(\vupsilon_{k}) \right\|_2^2 + 2 \left< \nabla_{\vupsilon} \cL_{\omega}(\vupsilon_{k}, \vlambda_{k}) - \nabla_{\vupsilon} H_\omega(\vupsilon_{k}), \; \nabla_{\vupsilon} H_\omega(\vupsilon_{k}) \right> \\
    &\le 2\left\| \nabla_{\vupsilon} \cL_{\omega}(\vupsilon_{k}, \vlambda_{k}) - \nabla_{\vupsilon} H_\omega(\vupsilon_{k}) \right\|_2^2 + 2\left\| \nabla_{\vupsilon} H_\omega(\vupsilon_{k}) \right\|_2^2,
\end{align}
where the last inequality holds by Young's inequality. Then we can write what follows:
\begin{align}
    &a_{k+1} + \chi b_{k+1} \\
    &\le a_{k} + \chi \left( 1 - \frac{\zeta_{\vlambda, k}}{2} \omega \right) b_{k} \\
    &\quad + \left(2 \zeta_{\vupsilon, k} \left(1 + \frac{L_2}{2} \zeta_{\vupsilon, k}\right) \chi \left( 1 - \frac{\zeta_{\vlambda, k}}{2} \omega \right) \right. \\
    &\qquad \left. - \frac{\zeta_{\vupsilon, k}}{2} \left( 1 + \chi\left( 1 - \frac{\zeta_{\vlambda, k}}{2}\omega\right) \right) \right) \E\left[\left\| \nabla_{\vupsilon} H_\omega(\vupsilon_{k}) \right\|_2^2\right] \\
    &\quad + \left( 2 \zeta_{\vupsilon, k} \left(1 + \frac{L_2}{2} \zeta_{\vupsilon, k}\right) \chi \left( 1 - \frac{\zeta_{\vlambda, k}}{2} \omega \right) + \frac{\zeta_{\vupsilon, k}}{2} \left( 1 + \chi\left( 1 - \frac{\zeta_{\vlambda, k}}{2}\omega\right) \right) \right) \\
    & \qquad \cdot \E\left[\left\| \nabla_{\vupsilon} \cL_{\omega} (\vupsilon_{k}, \vlambda_{k}) - \nabla_{\vupsilon} H_\omega(\vupsilon_{k}) \right\|_2^2\right] \\
    &\quad + \frac{\zeta_{\vupsilon, k}^2}{2} \left( L_H  + \chi \left( 1 - \frac{\zeta_{\vlambda, k}}{2} \omega \right) (L_H + L_2) \right) V_{\vupsilon} + \chi \frac{\omega}{2} \zeta_{\vlambda, k}^2 V_{\vlambda}.
\end{align}

Let us now proceed to bound $\left\| \nabla_{\vupsilon} \cL_{\omega} (\vupsilon_{k}, \vlambda_{k}) - \nabla_{\vupsilon} H_\omega(\vupsilon_{k}) \right\|_2^2$. 
By Lemma~\ref{lemma:smoothH}, we have that $\nabla_{\vupsilon} H_\omega(\vupsilon) = \nabla_{\vupsilon} \cL_{\omega}(\vupsilon, \vlambda^*(\vupsilon))$ for every $\vlambda^*(\vupsilon) \in \argmax_{\overline{\vlambda} \in \Lambda} \cL_{\omega}(\vupsilon, \overline{\vlambda})$, thus we can write:
\begin{align}
    \left\| \nabla_{\vupsilon} \cL_{\omega} (\vupsilon_{k}, \vlambda_{k}) - \nabla_{\vupsilon} H_\omega(\vupsilon_{k}) \right\|_2^2 &= \left\| \nabla_{\vupsilon} \cL_{\omega} (\vupsilon_{k}, \vlambda_{k}) - \nabla_{\vupsilon} \cL_{\omega} (\vupsilon_{k}, \vlambda^*(\vupsilon_{k})) \right\|_2^2 \\
    &\le L_3^2 \left\|\vlambda^*(\vupsilon_{k}) - \vlambda_{k} \right\|_2^2,
\end{align}
since Assumption~\ref{asm:L_grad_lip} holds.

For a fixed value of $\vupsilon$, by Lemma~\ref{lemma:propLLambda} it follows that $\cL_{\omega}(\vupsilon, \cdot)$ satisfies the quadratic growth condition (since it satisfies the PL condition), for which the following holds:
\begin{align}
    \left\| \vlambda^*(\vupsilon_{k}) - \vlambda_{k} \right\|_2^2 \le \frac{4}{\omega} \left( H_\omega(\vupsilon_{k}) - \cL_{\omega}(\vupsilon_{k}, \vlambda_{k})\right),
\end{align}
and thus we have:
\begin{align}
    \left\| \nabla_{\vupsilon} \cL_{\omega} (\vupsilon_{k}, \vlambda_{k}) - \nabla_{\vupsilon} H_\omega(\vupsilon_{k}) \right\|_2^2 \le \frac{4 L_3^2}{\omega} \left( H_\omega(\vupsilon_{k}) - \cL_{\omega}(\vupsilon_{k}, \vlambda_{k})\right).
\end{align}

By applying the total expectation, it trivially follows:
\begin{align}
    \E\left[ \left\| \nabla_{\vupsilon} \cL_{\omega} (\vupsilon_{k}, \vlambda_{k}) - \nabla_{\vupsilon} H_\omega(\vupsilon_{k}) \right\|_2^2 \right] \le \frac{4 L_3^2}{\omega} \E\left[ H_\omega(\vupsilon_{k}) - \cL_{\omega}(\vupsilon_{k}, \vlambda_{k}) \right] = \frac{4 L_3^2}{\omega} b_{k}.
\end{align}
Thus, we have:
\begin{align}\label{eq:finalPoint1}
    &a_{k+1} + \chi b_{k+1} \\
    &\le a_{k} + \chi \left( 1 - \frac{\zeta_{\vlambda, k}}{2} \omega \right) b_{k} \\
    &\quad + \left(2 \zeta_{\vupsilon, k} \left(1 + \frac{L_2}{2} \zeta_{\vupsilon, k}\right) \chi \left( 1 - \frac{\zeta_{\vlambda, k}}{2} \omega \right) \right. \\
    &\qquad \left. - \frac{\zeta_{\vupsilon, k}}{2} \left( 1 + \chi\left( 1 - \frac{\zeta_{\vlambda, k}}{2}\omega\right) \right) \right) \E\left[\left\| \nabla_{\vupsilon} H_\omega(\vupsilon_{k}) \right\|_2^2\right] \\
    &\quad + \left( 2 \zeta_{\vupsilon, k} \left(1 + \frac{L_2}{2} \zeta_{\vupsilon, k}\right) \chi \left( 1 - \frac{\zeta_{\vlambda, k}}{2} \omega \right) + \frac{\zeta_{\vupsilon, k}}{2} \left( 1 + \chi\left( 1 - \frac{\zeta_{\vlambda, k}}{2}\omega\right) \right) \right)\frac{4 L_3^2}{\omega} b_{k}  \\
    &\quad + \frac{\zeta_{\vupsilon, k}^2}{2} \left( L_H  + \chi \left( 1 - \frac{\zeta_{\vlambda, k}}{2} \omega \right) (L_H + L_2) \right) V_{\vupsilon} + \chi \frac{\omega}{2} \zeta_{\vlambda, k}^2 V_{\vlambda}.
\end{align}

\textbf{Part IV: apply the $\psi$-gradient domination.}~~Now we need to bound the term $\left\| \nabla H_\omega(\vupsilon_{k}) \right\|_2^2$. We consider Assumption~\ref{asm:wgd} and we get: $ \left\| \nabla_{\vupsilon} H_\omega(\vupsilon_{k}) \right\|_2^\psi \ge \alpha_{1} \left( H_\omega(\vupsilon_{k}) - H^* \right) - \beta_{1}$. By defining $\widetilde{H}^* \coloneqq H^* +  \beta_1/\alpha_1$, we also have:
\begin{align}
    \left\| \nabla_{\vupsilon} H_\omega(\vupsilon) \right\|_2^\psi \ge \alpha_1 &\max\left\{0, \; H_\omega(\vupsilon) - \widetilde{H}^* \right\} \\
    &\implies \nonumber \\
    \left\| \nabla_{\vupsilon} H_\omega(\vupsilon) \right\|_2^2 \ge \alpha_1^{\frac{2}{\psi}} &\max\left\{0, \; H_\omega(\vupsilon) - \widetilde{H}^* \right\}^{\frac{2}{\psi}}. \nonumber
\end{align}
If we apply the total expectation on both sides of the inequality, we get:
\begin{align}
    \E \left[\left\| \nabla_{\vupsilon} H_\omega(\vupsilon) \right\|_2^2\right] &\ge \alpha_1^{\frac{2}{\psi}} \E\left[\max\left\{0, \; H_\omega(\vupsilon) - \widetilde{H}^* \right\}^{\frac{2}{\psi}}\right] \\
    &\ge \alpha_1^{\frac{2}{\psi}} \E\left[\max\left\{0, \; H_\omega(\vupsilon) - \widetilde{H}^* \right\}\right]^{\frac{2}{\psi}} \\
    &\ge \alpha_1^{\frac{2}{\psi}} \max\left\{0, \; \E \left[H_\omega(\vupsilon) - \widetilde{H}^*\right] \right\}^{\frac{2}{\psi}},
\end{align}
which is achieved by a double application of Jensen's inequality, since $z^{2/\psi}$ is convex for $\psi\in[1,2]$ and $z\ge 0$, and the maximum is convex.
Let us start from Equation~\eqref{eq:finalPoint1}:
  \begin{align}
    &a_{k+1} + \chi b_{k+1} \\
    &\le a_{k} + \chi \left( 1 - \frac{\zeta_{\vlambda, k}}{2} \omega \right) b_{k} \\
    &\quad + \underbrace{\left(2 \zeta_{\vupsilon, k} \left(1 + \frac{L_2}{2} \zeta_{\vupsilon, k}\right) \chi \left( 1 - \frac{\zeta_{\vlambda, k}}{2} \omega \right) - \frac{\zeta_{\vupsilon, k}}{2} \left( 1 + \chi\left( 1 - \frac{\zeta_{\vlambda, k}}{2}\omega\right) \right) \right)}_{\eqqcolon -C} \\ 
    &\qquad \cdot \E\left[\left\| \nabla H_\omega(\vupsilon_{k}) \right\|_2^2\right] \\
    &\quad + \left( 2 \zeta_{\vupsilon, k} \left(1 + \frac{L_2}{2} \zeta_{\vupsilon, k}\right) \chi \left( 1 - \frac{\zeta_{\vlambda, k}}{2} \omega \right) + \frac{\zeta_{\vupsilon, k}}{2} \left( 1 + \chi\left( 1 - \frac{\zeta_{\vlambda, k}}{2}\omega\right) \right) \right)\frac{4 L_3^2}{\omega} b_{k}  \\
    &\quad + \underbrace{\frac{\zeta_{\vupsilon, k}^2}{2} \left( L_H  + \chi \left( 1 - \frac{\zeta_{\vlambda, k}}{2} \omega \right) (L_H + L_2) \right) V_{\vupsilon} + \chi \frac{\omega}{2} \zeta_{\vlambda, k}^2 V_{\vlambda}}_{\eqqcolon V}.
\end{align}
We first enforce the negativity of $-C$. To this end:
\begin{align}
    &- C =  \left(2 \zeta_{\vupsilon, k} \underbrace{\left(1 + \frac{L_2}{2} \zeta_{\vupsilon, k}\right)}_{\le 3/2} \chi {\left( 1 - \frac{\zeta_{\vlambda, k}}{2} \omega \right)} - \frac{\zeta_{\vupsilon, k}}{2} \left( 1 + \chi{\left( 1 - \frac{\zeta_{\vlambda, k}}{2}\omega\right)} \right) \right)\\
    & \le \zeta_{\vupsilon, k} \left(3 \chi {\left( 1 - \frac{\zeta_{\vlambda, k}}{2} \omega \right)} - \frac{1}{2} \left( 1 + \chi{\left( 1 - \frac{\zeta_{\vlambda, k}}{2}\omega\right)} \right) \right) \\
    & \le  \frac{\zeta_{\vupsilon, k} }{2} \left( 5 \chi \underbrace{\left( 1 -  \frac{\zeta_{\vlambda, k}}{2} \omega \right)}_{\le 1} - 1 \right) \le \frac{\zeta_{\vupsilon, k} }{2} \left( 5 \chi  - 1 \right) \le 0.
\end{align}
Thus, it is enough to enforce $5\chi -1 \le 0 \implies \chi \le 1/5$.  We now plug in the gradient domination inequalities:
  \begin{align}
    &a_{k+1} + \chi b_{k+1} \\
    &\le a_{k} + \chi \left( 1 - \frac{\zeta_{\vlambda, k}}{2} \omega \right) b_{k} -C \alpha_1^{\frac{2}{\psi}} \max\left\{0; \; \E \left[H_\omega(\vupsilon) - \widetilde{H}^*\right] \right\}^{\frac{2}{\psi}} + V\\
    &\quad + \left( 2 \zeta_{\vupsilon, k} \left(1 + \frac{L_2}{2} \zeta_{\vupsilon, k}\right) \chi \left( 1 - \frac{\zeta_{\vlambda, k}}{2} \omega \right) + \frac{\zeta_{\vupsilon, k}}{2} \left( 1 + \chi\left( 1 - \frac{\zeta_{\vlambda, k}}{2}\omega\right) \right) \right)\frac{4 L_3^2}{\omega} b_{k} .
\end{align}
Now we introduce the symbol $\widetilde{a}_k \coloneqq \E \left[ H_\omega(\vupsilon_{k}) - \widetilde{H}^* \right] = a_k -\beta_1/\alpha_1$, to get:
  \begin{align}
    &\widetilde{a}_{k+1} + \chi b_{k+1} \\
    &\le \widetilde{a}_{k} -C \alpha_1^{\frac{2}{\psi}} \max\left\{0, \widetilde{a}_{k} \right\}^{\frac{2}{\psi}} + V \\
    & +\left(\chi \left( 1 - \frac{\zeta_{\vlambda, k}}{2} \omega \right)  + \left( 2 \zeta_{\vupsilon, k} \left(1 + \frac{L_2}{2} \zeta_{\vupsilon, k}\right) \chi \left( 1 - \frac{\zeta_{\vlambda, k}}{2} \omega \right) \right. \right. \\
    &\qquad \left. \left. + \frac{\zeta_{\vupsilon, k}}{2} \left( 1 + \chi\left( 1 - \frac{\zeta_{\vlambda, k}}{2}\omega\right) \right) \right)\frac{4 L_3^2}{\omega} \right) b_{k}.
\end{align}

For what follows, we call $B$ the term that is multiplying $b_{k}$:
\begin{align}
    B &\coloneqq \chi \left( 1 - \frac{\zeta_{\vlambda, k}}{2} \omega \right)  + \left( 2 \zeta_{\vupsilon, k} \left(1 + \frac{L_2}{2} \zeta_{\vupsilon, k}\right) \chi \left( 1 - \frac{\zeta_{\vlambda, k}}{2} \omega \right) \right. \\
    &\quad \left. + \frac{\zeta_{\vupsilon, k}}{2} \left( 1 + \chi\left( 1 - \frac{\zeta_{\vlambda, k}}{2}\omega\right) \right) \right)\frac{4 L_3^2}{\omega}
\end{align}
Let refer to $\widetilde{a}_{k} + \chi b_{k}$ as $\widetilde{P}_t(\chi)$ with $\chi \in (0,1)$. For the sake of clarity, we re-write our main inequality as:
\begin{align}
    \widetilde{P}_t(\chi) = \widetilde{a}_{k+1} + \chi b_{k+1} &\le \widetilde{a}_{k} + B b_{k} - C \max\left\{ 0; \; \widetilde{a}_{k} \right\}^{\frac{2}{\psi}} + V.
\end{align}
Then, from Lemma~\ref{lemma:ineqMax}, having set $a \leftarrow \widetilde{a}_k$ and $b \leftarrow \chi b_k$, we have:
\begin{align}
    \widetilde{P}_{t+1}(\chi) = \widetilde{a}_{k+1} + \chi b_{k+1} & \le \widetilde{a}_{k} + B b_{k} +  C (\chi b_k)^{\frac{2}{\psi}}   - 2^{1- \frac{2}{\psi}} C \max\left\{ 0, \; \widetilde{a}_{k} + \chi b_k \right\}^{\frac{2}{\psi}} + V.
\end{align}
By choosing $\chi$ so that $\chi b_k \le 1$, i.e., $\chi \le 1/ \max_{k \in [K]} b_k$, we have:
\begin{align}
    \widetilde{P}_{t+1}(\chi) = \widetilde{a}_{k+1} + \chi b_{k+1} & \le \widetilde{a}_{k} + B b_{k} +  C (\chi b_k)^{\frac{2}{\psi}}   - 2^{1- \frac{2}{\psi}} C \max\left\{ 0, \; \widetilde{a}_{k} + \chi b_k \right\}^{\frac{2}{\psi}} + V \\
    & \le \widetilde{a}_{k} + (B +  \chi C) b_k   - 2^{1- \frac{2}{\psi}} C \max\left\{ 0, \; \widetilde{a}_{k} + \chi b_k \right\}^{\frac{2}{\psi}} + V\\
    & = \widetilde{P}_t(B +  \chi C)  - 2^{1- \frac{2}{\psi}} C \max\left\{ 0, \; \widetilde{P}_{t}(\chi) \right\}^{\frac{2}{\psi}} + V. 
\end{align}

To unfold the recursion, we need to ensure that $B+\chi C \le \chi$, which leads to a condition relating the two learning rates:
\begin{align}
    &B + \chi C \\
    &= \chi \left( 1 - \frac{\zeta_{\vlambda, k}}{2} \omega \right)  + \left( 2 \zeta_{\vupsilon, k} \underbrace{\left(1 + \frac{L_2}{2} \zeta_{\vupsilon, k}\right)}_{\le 3/2} \chi \underbrace{\left( 1 - \frac{\zeta_{\vlambda, k}}{2} \omega \right)}_{\le 1} \right. \\
    &\qquad \left. + \frac{\zeta_{\vupsilon, k}}{2} \left( 1 + \chi\underbrace{ \left( 1 - \frac{\zeta_{\vlambda, k}}{2}\omega\right)}_{\le 1} \right) \right)\frac{4 L_3^2}{\omega} \\
    &\quad + \chi {{\left(\underbrace{-2 \zeta_{\vupsilon, k} \left(1 + \frac{L_2}{2} \zeta_{\vupsilon, k}\right) \chi \left( 1 - \frac{\zeta_{\vlambda, k}}{2} \omega \right)}_{\le 0} + \frac{\zeta_{\vupsilon, k}}{2} \left( 1 + \chi\underbrace{\left( 1 - \frac{\zeta_{\vlambda, k}}{2}\omega\right)}_{\le 1} \right) \right)} \alpha_1^{\frac{2}{\psi}}} \\
    &\le \chi - \chi \frac{\zeta_{\vlambda, k}}{2} \omega + \zeta_{\vupsilon, k} \left(\frac{2L_3^2}{\omega} \left( 1+7\chi \right) + \frac{1+\chi}{2} \alpha_2^{\frac{2}{\psi}} \right) \le \chi \\
    &\implies \zeta_{\vupsilon, k} \le \frac{\omega^2 \chi \zeta_{\vlambda, k}}{(1+\chi)\omega \alpha_1^{\frac{2}{\psi}}  + 4L^2_3(1+7\chi)},
\end{align}
where we exploited $\zeta_{\vupsilon,k} \le 1/L_2$ and $\zeta_{\vlambda, k} \le 2/\omega$. Thus, we have:
\begin{align}
    \widetilde{P}_{k+1}(\chi) \le \widetilde{P}_k(\chi)  - 2^{1- \frac{2}{\psi}} C \max\left\{ 0, \; \widetilde{P}_{k}(\chi) \right\}^{\frac{2}{\psi}} + V.
\end{align}
Collecting all conditions on the learning rates, we have:
\begin{align}
    & \zeta_{\vupsilon, k} \le \min\left\{ \frac{1}{L_H} , \frac{1}{L_2}, \frac{\omega^2 \chi \zeta_{\vlambda, k}}{(1+\chi)\omega \alpha_1^{\frac{2}{\psi}}  + 4L^2_3(1+7\chi)} \right\} , \\
    & \zeta_{\vlambda, k} \le \min \left\{  \frac{1}{\omega} , \frac{2}{\omega} \right\} = \frac{1}{\omega}.
\end{align}
As a further simplification, let us observe that:
\begin{align}
    C & = \left(-2 \zeta_{\vupsilon, k} {\underbrace{\left(1 + \frac{L_2}{2} \zeta_{\vupsilon, k}\right)}_{\le 3/2} } \chi {\left( 1 - \frac{\zeta_{\vlambda, k}}{2} \omega \right)} + \frac{\zeta_{\vupsilon, k}}{2} \left( 1 + \chi{\left( 1 - \frac{\zeta_{\vlambda, k}}{2}\omega\right)} \right) \right)\alpha_1^{\frac{2}{\psi}} \\
    & \ge \frac{\zeta_{\vupsilon, k}}{2} \left(1 + 5 \left( 1 - \frac{\zeta_{\vlambda, k}}{2} \omega \right) \chi \right) \alpha_1^{\frac{2}{\psi}}\ge  \frac{\zeta_{\vupsilon, k}\alpha_1^{\frac{2}{\psi}}}{2}.
\end{align}

\begin{align}
    V & = \frac{\zeta_{\vupsilon, k}^2}{2} \left( L_H  + \chi \left( 1 - \frac{\zeta_{\vlambda, k}}{2} \omega \right) (L_H + L_2) \right) V_{\vupsilon} + \chi \frac{\omega}{2} \zeta_{\vlambda, k}^2 V_{\vlambda} \\
    & \le \frac{\zeta_{\vupsilon, k}^2}{2}\left((1+2\chi)L_2 + \hll{(1+\chi)}\frac{L_1^2}{\omega}\right) V_{\vupsilon} + \chi  \frac{\omega}{2}\zeta_{\vlambda, k}^2 V_{\vlambda} \eqqcolon \widetilde{V}.
\end{align}
Denoting with $\widetilde{C} \eqqcolon  2^{1- \frac{1}{\psi}} \frac{\zeta_{\vupsilon, k}\alpha_1^{\frac{2}{\psi}}}{2} $, we are going to study the recurrence:
\begin{align}
    \widetilde{P}_{k+1}(\chi) \le \widetilde{P}_k(\chi)  - \widetilde{C} \max\left\{ 0, \; \widetilde{P}_{k}(\chi) \right\}^{\frac{2}{\psi}} + \widetilde{V}.
\end{align}

\textbf{Part V: Rates Computation}

\emph{Part V(a): Exact gradients}~~We consider the case $\widetilde{V}=0$. Let us start with $\psi = 2$. From Lemma~\ref{lemma:recVar0}, we have:
\begin{align}
    & \widetilde{P}_{K}(\xi) \le \left(1 - \widetilde{C} \right)^K \widetilde{P}_{0}(\xi) \le \epsilon \\
    & \implies K \le \frac{\log \frac{\widetilde{P}_{0}(\xi)}{\epsilon}}{\log \frac{1}{1 - \widetilde{C}}} \le \widetilde{C}^{-1} \log \frac{\widetilde{P}_{0}(\xi)}{\epsilon} = \frac{2\log\frac{\widetilde{P}_{0}(\xi)}{\epsilon} }{ 2^{1- \frac{1}{\psi}} \zeta_{\vrho, t}\alpha_1^{\frac{2}{\psi}}}
\end{align}
The inequality on $K$ holds under the conditions:
\begin{align}
    \widetilde{C} &\le \frac{2}{\psi \widetilde{P}_0(\chi)^{\frac{2}{\psi}-1}} \implies  \zeta_{\vupsilon, k} \le  \frac{2^{1+\frac{2}{\psi}}}{\psi \alpha_1^{\frac{2}{\psi}} \widetilde{P}_0(\chi)^{\frac{2}{\psi}-1}}, \\
    \zeta_{\vupsilon, k} &\le \min\left\{ \frac{1}{L_H} , \frac{1}{L_2}, \frac{\omega^2 \chi \zeta_{\vlambda, k}}{(1+\chi)\omega \alpha_1^{\frac{2}{\psi}}  + 4L^2_3(1+7\chi)} \right\} \\
    &= \min\left\{ \frac{1}{L_2 + \frac{L_1^2}{\omega}}, \frac{\omega^2 \chi \zeta_{\vlambda, k}}{(1+\chi)\omega \alpha_1^{\frac{2}{\psi}}  + 4L^2_3(1+7\chi)} \right\}, \\
    \zeta_{\vlambda, k} &\le  \frac{1}{\omega},
\end{align}
where the first one derives from the hypothesis of Lemma~\ref{lemma:recVar0} and the other two from the conditions on the learning rates derived in the previous parts. We set:
\begin{align*}
    \zeta_{\vlambda, k} &= \omega^{-1}, \\
    \zeta_{\vupsilon, k} &= \min\left\{ \frac{2^{1+\frac{2}{\psi}}}{\psi \alpha_1^{\frac{2}{\psi}} \widetilde{P}_0(\chi)^{\frac{2}{\psi}-1}}, \frac{1}{L_2 + \frac{L_1^2}{\omega}} , \frac{\omega \chi }{(1+\chi)\omega \alpha_1^{\frac{2}{\psi}}  + 4L^2_3(1+7\chi)} \right\} = \cO(\omega).
\end{align*}
Thus, the sample complexity becomes $K = \cO \left( \omega^{-1} \log \frac{1}{\epsilon} \right)$.

Consider now $\psi \in [1,2)$.  We have from Lemma~\ref{lemma:recVar0}:
\begin{align}
    &\widetilde{P}_{K}(\chi) \le \left( \left(\frac{2}{\psi} - 1\right) \widetilde{C} K \right)^{-\frac{\psi}{2-\psi}} \le \epsilon \\
    & \implies  K \le \frac{\psi}{2-\psi} \widetilde{C}^{-1} \epsilon^{-\frac{2}{\psi} + 1} = \frac{2\psi}{(2-\psi) { 2^{1- \frac{1}{\psi}} \zeta_{\vupsilon, k}\alpha_1^{\frac{2}{\psi}}} } \epsilon^{-\frac{2}{\psi} + 1},
\end{align}
holding under the same conditions as before. With the same choices of learning rates, we obtain the sample complexity $K = \cO \left( \omega^{-1} \epsilon^{-\frac{2}{\psi}+1} \right)$ as sample complexity.

\emph{Part V(b): Estimated gradients}~~We consider $\widetilde{V} >0$. In this case, from Lemma~\ref{lemma:recVg}, we have: 
\begin{align}
    &\widetilde{P}_{K}(\chi) \le \left( 1 - \widetilde{C}^{1-\frac{\psi}{2}} \widetilde{V}^{\frac{\psi}{2}} \right)^K \widetilde{P}_{0}(\chi)  + \left( \frac{\widetilde{V}}{\widetilde{C}} \right)^{\frac{\psi}{2}}. 
\end{align}
We enforce both terms to be smaller or equal to $\epsilon/2$. With the first one, we can evaluate the sample complexity:
\begin{align}
    & \left( 1 - \widetilde{V}^{1-\frac{\psi}{2}} \widetilde{C}^{\frac{\psi}{2}} \right)^K \widetilde{P}_{0}(\chi) \le \frac{\epsilon}{2} \\
    & \implies K \le \frac{\log \frac{2\widetilde{P}_{0}(\chi)}{\epsilon}}{\widetilde{V}^{1-\frac{\psi}{2}} \widetilde{C}^{\frac{\psi}{2}}} \\
    & \qquad \qquad = \frac{\log \frac{2\widetilde{P}_{0}(\chi)}{\epsilon}}{\left(\frac{\zeta_{\vupsilon, k}^2}{2}\left((1+2\chi)L_2 + \hll{(1+\chi)}\frac{L_1^2}{\omega}\right) V_{\vupsilon} + \chi \frac{\omega}{2}\zeta_{\vlambda, k}^2 V_{\vlambda} \right)^{1-\frac{\psi}{2}} \left(   2^{1- \frac{1}{\psi}} \frac{\zeta_{\vupsilon, k}\alpha_1^{\frac{2}{\psi}}}{2}   \right)^{\frac{\psi}{2}}}
    \label{eq:K_stochastic}
\end{align}
Regarding the second one, we have:
\begin{align}
    \left( \frac{\widetilde{V}}{\widetilde{C}} \right)^{\frac{\psi}{2}} \le \frac{\epsilon}{2} \implies   \left( \frac{\frac{\zeta_{\vupsilon, k}^2}{2}\left((1+2\chi)L_2 + \hll{(1+\chi)}\frac{L_1^2}{\omega}\right) V_{\vupsilon} + \chi \frac{\omega}{2}\zeta_{\vlambda, k}^2 V_{\vlambda} }{   2^{1- \frac{1}{\psi}} \frac{\zeta_{\vupsilon, k}\alpha_1^{\frac{2}{\psi}}}{2}} \right)^{\frac{\psi}{2}} \le \frac{\epsilon}{2}
\end{align}
By enforcing the relation between the two learning rates, we set $\zeta_{\vupsilon, k} = \cO(\omega^2 \zeta_{\vlambda, k})$. By enforcing the previous inequality, \hll{recalling that $L_2 \le \cO(\omega^{-1})$ and $V_{\vupsilon}\le \cO(\omega^{-2})$}, we obtain $\zeta_{\vlambda} = \cO(  \omega \epsilon^{2/\psi})$, from which $\zeta_{\vupsilon} = \cO(\omega^3  \epsilon^{2/\psi})$. Substituting these values into the sample complexity upper bound, we get \hll{(highlighting the terms possibly depending on $\omega$)}:
\hll{
\begin{align}
    K & \le \cO \left(  \frac{ \log \frac{1}{\epsilon}}{((L_2 + \omega^{-1} ) V_{\vupsilon} \zeta_{\vupsilon}^2 +  \omega \zeta_{\vlambda}^2 )^{1-\psi/2} \zeta_{\vupsilon}^{\psi/2}  }\right) \\
    & = \cO \left(   \frac{ \log \frac{1}{\epsilon}}{((L_2 + \omega^{-1} ) V_{\vupsilon} (\omega^3  \epsilon^{2/\psi})^2 +  \omega (\omega \epsilon^{2/\psi})^2 )^{1-\psi/2} (\omega^3 \epsilon^{2/\psi})^{\psi/2}  } \right)\\
    & \le  \cO \left(   \frac{ \log \frac{1}{\epsilon}}{\omega^{3}  \epsilon^{4/\psi - 1}} \right),
\end{align}
having bounded the sum at the denominator with the second addendum.
}
\end{proof}

\DetSC*
\begin{proof}
    Under the considered set of assumptions, we recover the results of Theorems~\ref{thr:wgd-inherited}, \ref{thr:inherited-L-reg}, and~\ref{thr:pot-loss} and of Lemma~\ref{lem:bounded-estim-var}, matching the conditions needed to establish the sample complexity exhibited in Theorem~\ref{thr:convergencePot} for ensuring that $\cP_{\dagger,K}(\chi) \le \epsilon + \frac{\beta_{\dagger}(\sigma.\psi)}{\dalpha}$, where we employed the coefficients of the inherited weak $\psi$-GD (Theorem~\ref{thr:wgd-inherited}).

    In particular, recovering \quotes{Part V: Rates Computation} of the proof of Theorem~\ref{thr:convergencePot} with $\psi \in [1,2]$ and inexact gradients, to compute the sample complexity needed to ensure last-iterate global convergence of $\cP_{\dagger,K}(\chi)$, we have to ensure the following conditions:
    \begin{enumerate}[label=(\textit{\roman*}).]
        \item $K \le \frac{\log \frac{2\widetilde{P}_{\dagger,0}(\chi)}{\epsilon}}{\left(\frac{\zeta_{\vtheta}^2}{2}\left((1+2\chi)L_2 + \hll{(1+\chi)}\frac{L_1^2}{\omega}\right) V_{\dagger,\vtheta} + \chi \frac{\omega}{2}\zeta_{\vlambda}^2 V_{\vlambda} \right)^{1-\frac{\psi}{2}} \left(2^{1- \frac{1}{\psi}} \frac{\zeta_{\vtheta}\dalpha^{\frac{2}{\psi}}}{2} \right)^{\frac{\psi}{2}}}$,
        \item $\left( \frac{\frac{\zeta_{\vtheta}^2}{2}\left((1+2\chi)L_2 + \hll{(1+\chi)}\frac{L_1^2}{\omega}\right) V_{\dagger,\vtheta} + \chi \frac{\omega}{2}\zeta_{\vlambda}^2 V_{\vlambda} }{   2^{1- \frac{1}{\psi}} \frac{\zeta_{\vtheta}\dalpha^{\frac{2}{\psi}}}{2}} \right)^{\frac{\psi}{2}} \le \frac{\epsilon}{2}$,
    \end{enumerate}
    where $L_1=\cO(1)$ and $L_2=\cO(\omega^{-1})$ are quantified in Theorem~\ref{thr:inherited-L-reg}, while $V_{\dagger,\vtheta} = \cO(\omega^{-2} \sigma^{-2})$ and $V_{\vlambda} = \cO(1)$ are quantified in Lemma~\ref{lem:bounded-estim-var}.

    Now, considering $(ii)$, by enforcing the relation between the two learning rates specified in Theorem~\ref{thr:convergencePot}, we can set $\zeta_{\vtheta} = \cO(\omega^{2} \zeta_{\vlambda})$. Exploiting the characterization of $L_1$ and $L_2$ (Theorem~\ref{thr:inherited-L-reg}) and the one of $V_{\dagger,\vtheta}$ and $V_{\vlambda}$ (Lemma~\ref{lem:bounded-estim-var}), we have the following:
    \begin{align*}
        \left( \frac{\frac{\zeta_{\vtheta}^2}{2}\left((1+2\chi)L_2 + \hll{(1+\chi)}\frac{L_1^2}{\omega}\right) V_{\dagger,\vtheta} + \chi \frac{\omega}{2}\zeta_{\vlambda}^2 V_{\vlambda} }{   2^{1- \frac{1}{\psi}} \frac{\zeta_{\vtheta}\dalpha^{\frac{2}{\psi}}}{2}} \right)^{\frac{\psi}{2}} &= \cO \left( \frac{\omega^{4} \zeta_{\vlambda}^{2} \left( \omega^{-1} + \omega^{-1} \right) \omega^{-2} \sigma^{-2} + \omega \zeta_{\vlambda}^{2}}{\omega^{2} \zeta_{\vlambda}} \right)^{\frac{\psi}{2}} \\
        &= \cO \left( \frac{\omega \zeta_{\vlambda}^{2} \sigma^{-2} + \omega \zeta_{\vlambda}^{2}}{\omega^{2} \zeta_{\vlambda}} \right)^{\frac{\psi}{2}} \\
        &= \cO \left( \zeta_{\vlambda} \omega^{-1} \sigma^{-2} \right)^{\frac{\psi}{2}}.
    \end{align*}
    Thus, enforcing $\cO \left( \zeta_{\vlambda} \omega^{-1} \sigma^{-2} \right)^{\frac{\psi}{2}} \le \frac{\epsilon}{2}$, we have that $\zeta_{\vlambda} = \cO(\omega \sigma^{2} \epsilon^{2/\psi})$, thus implying $\zeta_{\vtheta} = \cO(\omega^{3} \sigma^{2} \epsilon^{2/\psi})$.
    
    If we now consider the term $(i)$, by exploiting the form of $L_1,L_2,V_{\dagger,\vtheta}$, and $V_{\vlambda}$ and the choice of $\zeta_{\vlambda}$ and $\zeta_{\vtheta}$, we have the following:
    \begin{align*}
        K &\le \frac{\log \frac{2\widetilde{P}_{\dagger,0}(\chi)}{\epsilon}}{\left(\frac{\zeta_{\vtheta}^2}{2}\left((1+2\chi)L_2 + \hll{(1+\chi)}\frac{L_1^2}{\omega}\right) V_{\dagger,\vtheta} + \chi \frac{\omega}{2}\zeta_{\vlambda}^2 V_{\vlambda} \right)^{1-\frac{\psi}{2}} \left(2^{1- \frac{1}{\psi}} \frac{\zeta_{\vtheta}\dalpha^{\frac{2}{\psi}}}{2} \right)^{\frac{\psi}{2}}} \\
        &= \cO\left( \frac{\log \frac{1}{\epsilon}}{\left( \omega \zeta_{\vlambda}^{2} \sigma^{-2} \right)^{1-\psi/2} \left( \omega^{2} \zeta_{\vlambda} \right)^{\psi/2}} \right) \\
        &= \cO\left( \frac{\log \frac{1}{\epsilon}}{\omega^{1+\psi/2} \zeta_{\vlambda}^{2-\psi/2} \sigma^{\psi-2}} \right) \\
        &= \cO\left( \frac{\log \frac{1}{\epsilon}}{\omega^{3} \sigma^{2} \epsilon^{-1 + 4/\psi}} \right),
    \end{align*}
    where in the last line, we exploited the fact that $\zeta_{\vlambda} = \cO(\omega \sigma^{2} \epsilon^{2/\psi})$.

    This iteration complexity, which naturally translates into a sample complexity given that we can employ a constant batch size $N$, ensures that $\cP_{\dagger,K}(\chi) \le \epsilon + \frac{\beta_{\dagger}(\sigma,\psi)}{\dalpha}$. By leveraging the result of Theorem~\ref{thr:pot-loss}, we have that the same sample complexity ensures that:
    \begin{align*}
        \cP_{\text{D},K}(\chi) \le \epsilon + \frac{\beta_{\dagger}(\sigma,\psi)}{\dalpha} + 4(1+\Lambda_{\max})L_{1\dagger} \sigma \sqrt{d_{\dagger}}.
    \end{align*}
\end{proof}

\section{Technical Lemmas}

\begin{lemma}\label{lemma:ineqMax}
Let $a \in \mathbb{R}$, $b\ge 0$, and $\psi \in [1,2]$. It holds that:
\begin{align}
    \max\{0,a\}^{\frac{2}{\psi}} \ge 2^{1-\frac{2}{\psi}} \max\{0,a + b\}^{\frac{2}{\psi}} - b^{\frac{2}{\psi}} .
\end{align}
\end{lemma}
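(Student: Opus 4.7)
Let me set $p \coloneqq 2/\psi$, so that $\psi \in [1,2]$ translates to $p \in [1,2]$. Rearranging the claim, I want to show
\[
\max\{0,a\}^{p} + b^{p} \;\ge\; 2^{1-p}\,\max\{0,a+b\}^{p}.
\]
The plan is a short case analysis on the signs of $a$ and $a+b$, with the only non-trivial input being the convexity of $x \mapsto x^{p}$ on $[0,\infty)$, which holds because $p\ge 1$. Convexity (or equivalently Jensen's inequality at midpoint $1/2$) gives the \emph{power-mean} inequality
\[
(x+y)^{p} \;\le\; 2^{\,p-1}(x^{p}+y^{p}) \qquad \text{for all } x,y\ge 0,
\]
which will do all the work in the main case.

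\textbf{Cases.} First, when $a\ge 0$ I have $\max\{0,a\}=a$ and, since $b\ge 0$, $\max\{0,a+b\}=a+b$. The desired inequality becomes $a^{p}+b^{p}\ge 2^{1-p}(a+b)^{p}$, which is exactly the power-mean inequality above applied with $x=a$, $y=b$. Second, when $a<0$ and $a+b\le 0$, both $\max$ terms on the left and right vanish and the inequality reduces to $b^{p}\ge 0$, which is immediate. Third, when $a<0$ and $a+b>0$, I have $\max\{0,a\}=0$ and $0<a+b\le b$. Since $p\ge 1$ implies both that $x\mapsto x^p$ is nondecreasing on $[0,\infty)$ and that $2^{1-p}\le 1$, I can estimate $2^{1-p}(a+b)^{p}\le 2^{1-p} b^{p}\le b^{p}$, which is what is needed.

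\textbf{Obstacles.} There is no real obstacle: the statement is an elementary real-analysis inequality and the case split is exhaustive. The only subtlety is remembering that $p=2/\psi$ lives in $[1,2]$ (not $(0,1)$), so that $x^p$ is convex rather than concave on $[0,\infty)$; this is what makes the factor $2^{1-p}\le 1$ appear on the correct side of the inequality. I would also briefly note at the end of the proof that the bound is tight (achieved for $a=b\ge 0$ in the first case), which is reassuring and explains why the constant $2^{1-2/\psi}$ cannot be improved.
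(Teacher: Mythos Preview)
Your proof is correct and follows essentially the same approach as the paper: a three-way case split on the sign of $a$ and of $a+b$, with the nontrivial case $a\ge 0$ handled by the power-mean (convexity) inequality $(x+y)^{p}\le 2^{\,p-1}(x^{p}+y^{p})$ and the mixed case $-b<a<0$ handled by monotonicity together with $2^{1-p}\le 1$. Your substitution $p=2/\psi$ and the rearranged presentation are a bit cleaner than the paper's, but the underlying argument is identical.
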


\begin{proof}
Let us consider the following derivation:
\begin{align}
    \max\{0,a\}^{\frac{2}{\psi}} & = \begin{cases}
        a^{\frac{2}{\psi}} & \text{if } a > 0\\
        0 & \text{otherwise}
    \end{cases}\\
    & \ge \begin{cases}
        2^{1-\frac{2}{\psi}} (a + b)^{\frac{2}{\psi}} -  b^{\frac{2}{\psi}} & \text{if } a > 0\\
        0 & \text{otherwise}
    \end{cases} \\
    & = \begin{cases}
        2^{1-\frac{2}{\psi}} (a + b)^{\frac{2}{\psi}} -  b^{\frac{2}{\psi}} & \text{if } a > 0\\
        0 & \text{if } - b < a  \le 0\\
        0 & \text{otherwise}
    \end{cases} \\
    & \ge \begin{cases}
        2^{1-\frac{2}{\psi}} (a + b)^{\frac{2}{\psi}} - b^{\frac{2}{\psi}} & \text{if } a > 0\\
        2^{1-\frac{2}{\psi}} (a + b)^{\frac{2}{\psi}} - b^{\frac{2}{\psi}} & \text{if } - b < a  \le 0\\
        - b^{\frac{2}{\psi}}& \text{otherwise}
    \end{cases}  \\
    & = \begin{cases}
        2^{1-\frac{2}{\psi}} (a + b)^{\frac{2}{\psi}} - b^{\frac{2}{\psi}} & \text{if } a+b>0\\
        - b^{\frac{2}{\psi}}& \text{otherwise}
    \end{cases}\\
    & = 2^{1-\frac{2}{\psi}} \max\{0,a + b\}^{\frac{2}{\psi}} - b^{\frac{2}{\psi}},  \\
\end{align}
where the first inequality follows from $(x+y)^{\frac{2}{\psi}} \le 2^{\frac{2}{\psi}-1} (x^{\frac{2}{\psi}}+y^{\frac{2}{\psi}})$ for $x,y \ge 0$, from Holder's inequality; the second inequality from observing that $2^{1-\frac{2}{\psi}} (a + b)^{\frac{2}{\psi}} - b^{\frac{2}{\psi}} \le (2^{1- \frac{2}{\psi}} - 1) b^{\frac{2}{\psi}} \le 0$ for $-b<a\le 0$. 
\end{proof}

\section{Recurrences}
In this section, we provide auxiliary results about convergence rate of a certain class of recurrences that will be employed for the convergence analysis of the proposed algorithms. Specifically, we study the recurrence:
\begin{align}
    r_{k+1} \le r_k - a \max\{0,r_k\}^{\phi} + b
\end{align}
for $a > 0$, $b \ge 0$, and $\phi \in [1,2]$. To this end, we consider the helper sequence:
\begin{align}
     \begin{cases}
        \rho_0 = r_0 \\
        \rho_{k+1} =  \rho_k - a \max\{0,\rho_k\}^{\phi} + b
    \end{cases}
\end{align}
The line of the proof follows that of \cite{montenegro2024learning}. Let us start showing that for sufficiently small $a$, the sequence $\rho_k$ upper bounds $r_k$.

\begin{lemma}\label{lem:aux_1}
    If $a \le \frac{1}{\phi \rho_k^{\phi-1}}$ for every $k \ge 0$, then, $r_k \le \rho_k$ for every $k \ge 0$.
\end{lemma}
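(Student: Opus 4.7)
\begin{proofsketch}
The plan is to proceed by induction on $k$, using the auxiliary single-variable function
\begin{align*}
    f(x) \coloneqq x - a \max\{0, x\}^\phi,
\end{align*}
so that the two recurrences read $r_{k+1} \le f(r_k) + b$ and $\rho_{k+1} = f(\rho_k) + b$. The base case $k=0$ is immediate since by construction $r_0 = \rho_0$. For the inductive step, assuming $r_k \le \rho_k$, it suffices to establish that $f(r_k) \le f(\rho_k)$, for then
\begin{align*}
    r_{k+1} \le f(r_k) + b \le f(\rho_k) + b = \rho_{k+1}.
\end{align*}

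The crux is therefore showing that $f$ is non-decreasing on the interval $[r_k, \rho_k]$ under the assumption $a \le (\phi \rho_k^{\phi-1})^{-1}$. The plan is to compute the derivative of $f$ piecewise: for $x < 0$ one has $f'(x) = 1 > 0$, while for $x \ge 0$ one has $f'(x) = 1 - a\phi x^{\phi-1}$. Since $\phi \in [1,2]$, the map $x \mapsto x^{\phi-1}$ is non-decreasing on $[0,\infty)$, so on $[0, \rho_k]$ we have $a \phi x^{\phi - 1} \le a \phi \rho_k^{\phi-1} \le 1$ by the hypothesis, which gives $f'(x) \ge 0$ on all of $[0, \rho_k]$. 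Splitting into the three subcases (both $r_k, \rho_k \ge 0$; $r_k < 0 \le \rho_k$ with monotonicity chained across $x=0$; both strictly negative where $f$ is affine with slope $1$) yields $f(r_k) \le f(\rho_k)$ in every case, closing the induction.

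The main obstacle is essentially bookkeeping: making sure the condition $a \le (\phi \rho_k^{\phi-1})^{-1}$ is interpreted correctly when $\rho_k \le 0$ (in which case the right-hand side can be taken as $+\infty$, or the relevant piece of $f$ is already affine and the condition is vacuous) and handling the boundary exponent $\phi = 1$, where $x^{\phi-1} \equiv 1$ and the condition collapses to $a \le 1$. Once these edge cases are settled, the monotonicity-of-$f$ argument transports the inductive hypothesis from step $k$ to step $k+1$ without further work.
\end{proofsketch}
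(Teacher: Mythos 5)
Your proposal is correct and follows essentially the same route as the paper: induction on $k$ with base case $\rho_0 = r_0$, and the inductive step carried by the monotonicity of $f(x) = x - a\max\{0,x\}^{\phi}$ on the relevant range, which is exactly the condition $a \le (\phi \rho_k^{\phi-1})^{-1}$, i.e.\ $\rho_k \le (a\phi)^{-1/(\phi-1)}$. Your treatment of the edge cases ($\rho_k \le 0$, $\phi = 1$) is slightly more explicit than the paper's, but the argument is the same.
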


\begin{proof}
    By induction on $k$. For $k=0$, the statement holds since $\rho_0 = r_0$. Suppose the statement holds for every $j \le k$, we prove that it holds for $k+1$:
    \begin{align}
         \rho_{k+1} & =  \rho_k - a\max\{0,\rho_k\}^\phi + b \\
         & \ge r_k - a\max\{0,r_k\}^\phi + b \\
         & \ge r_{k+1},
    \end{align}
    where the first inequality holds by the inductive hypothesis and by observing that the function $f(x) = x -  a\max\{0,x\}^\phi$ is non-decreasing in $x$ when $a \le \frac{1}{\phi \rho_k^{\phi-1}}$. Indeed, if $x < 0$, then $f(x) = x$, which is non-decreasing; if $x \ge 0$, we have $f(x) = x -  a x^\phi$, that can be proved to be non-decreasing in the interval $\left[0,(a\phi)^{-\frac{1}{\phi-1}}\right]$ simply by studying the sign of the derivative. Thus, we enforce the following requirement to ensure that $\rho_k$ falls in the non-decreasing region: 
    \begin{align}
        \rho_{k} \le (a\phi)^{-\frac{1}{\phi-1}} \implies a \le \frac{1}{\phi \rho_k^{\phi-1}}.
    \end{align}
    So does $r_k$ by the inductive hypothesis. 
\end{proof}

Thus, from now on, we study the properties of the sequence $\rho_{k}$. Let us note that, if $\rho_k$ is convergent, then it converges to the fixed-point $\overline{\rho}$ computed as follows:
\begin{align}
    \overline{\rho} =  \overline{\rho} - a\max\{0,\overline{\rho}\}^\phi + b \implies \overline{\rho}  = \left(\frac{b}{a} \right)^{\frac{1}{\phi}},
\end{align}
having retained the positive solution of the equation only, since the negative one never attains the maximum $\max\{0,\overline{\rho}\}$.
Let us now study the monotonicity properties of the sequence $\rho_k$. 

\begin{lemma}\label{lem:aux_2}
    The following statements hold:
    \begin{itemize}
        \item If $r_0 > \overline{\rho}$ and $a \le \frac{1}{\phi r_0^{\phi-1}}$, then for every $k \ge 0$ it holds that: $\overline{\rho} \le \rho_{k+1} \le \rho_k$.
        \item If $r_0 < \overline{\rho}$ and $a \le \frac{1}{\phi \overline{\rho}^{\phi-1}}$, then for every $k \ge 0$ it holds that: $\overline{\rho} \ge \rho_{k+1} \ge \rho_k$.
    \end{itemize}
\end{lemma}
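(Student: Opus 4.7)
The proof is a direct induction on $k$, handled separately for the two cases, together with a careful analysis of the scalar map $g(x) \coloneqq x - a \max\{0,x\}^{\phi} + b$, of which $\rho_{k+1} = g(\rho_k)$ is just the iterate. The key analytic input is that on $[0,(a\phi)^{-1/(\phi-1)}]$ the map $g$ is non-decreasing, since $g'(x) = 1 - a\phi x^{\phi-1} \ge 0$ exactly on this interval, and that $\overline{\rho} = (b/a)^{1/\phi}$ is the unique positive fixed point of $g$, satisfying $b = a \overline{\rho}^{\phi}$.

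\textbf{Case 1 ($r_0 > \overline{\rho}$).} Here the step-size assumption $a \le 1/(\phi r_0^{\phi-1})$ is equivalent to $r_0 \le (a\phi)^{-1/(\phi-1)}$, so $g$ is non-decreasing on $[0,r_0]$. I will induct on $k$ with the joint invariant $\overline{\rho} \le \rho_{k+1} \le \rho_k \le r_0$. The base case $\rho_0 = r_0 > \overline{\rho}$ is given. For the inductive step, monotonicity follows from $\rho_{k+1} - \rho_k = b - a \rho_k^{\phi} \le b - a \overline{\rho}^{\phi} = 0$, using $\rho_k \ge \overline{\rho}$. The lower bound follows by monotonicity of $g$ on $[0,r_0]$: since $\overline{\rho} \le \rho_k \le r_0$ lies in the non-decreasing region, $\rho_{k+1} = g(\rho_k) \ge g(\overline{\rho}) = \overline{\rho}$.

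\textbf{Case 2 ($r_0 < \overline{\rho}$).} The step-size condition $a \le 1/(\phi \overline{\rho}^{\phi-1})$ is equivalent to $\overline{\rho} \le (a\phi)^{-1/(\phi-1)}$, so $g$ is non-decreasing on $[0,\overline{\rho}]$; moreover it implies $b = a \overline{\rho}^{\phi} \le \overline{\rho}/\phi \le \overline{\rho}$, a fact I will need at the transition from negative to non-negative iterates. I will induct with the invariant $\rho_k \le \rho_{k+1} \le \overline{\rho}$, splitting the inductive step according to the sign of $\rho_k$. If $\rho_k < 0$, then $\max\{0,\rho_k\}^{\phi}=0$, so $\rho_{k+1} = \rho_k + b \ge \rho_k$ (since $b \ge 0$) and $\rho_{k+1} < b \le \overline{\rho}$. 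If $0 \le \rho_k \le \overline{\rho}$, monotonicity again follows from $\rho_{k+1} - \rho_k = b - a \rho_k^{\phi} \ge 0$, and the upper bound from $\rho_{k+1} = g(\rho_k) \le g(\overline{\rho}) = \overline{\rho}$ via the non-decreasing property of $g$ on $[0,\overline{\rho}]$.

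\textbf{Main obstacle.} The only delicate point is Case 2 when $r_0$ is allowed to be negative: one must verify that a single step from $\rho_k<0$ cannot overshoot $\overline{\rho}$, and this is exactly where the step-size condition is used to convert $a \le 1/(\phi\overline{\rho}^{\phi-1})$ into $b \le \overline{\rho}/\phi$. Once that observation is made, the induction closes without difficulty, and the overall proof reduces to bookkeeping on the monotonicity of $g$ together with the fixed-point identity $g(\overline{\rho}) = \overline{\rho}$.
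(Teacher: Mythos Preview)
Your proof is correct and follows the natural induction-on-$k$ argument using monotonicity of the one-step map $g(x) = x - a\max\{0,x\}^\phi + b$ on $[0,(a\phi)^{-1/(\phi-1)}]$ together with the fixed-point identity $g(\overline{\rho})=\overline{\rho}$; this is essentially the same approach the paper adopts, which simply defers to \cite[Lemma~F.3]{montenegro2024learning} for the identical standard argument. The only cosmetic point is that at $\phi=1$ the expression $(a\phi)^{-1/(\phi-1)}$ is undefined, but your reasoning still goes through since then $g'(x)=1-a\ge 0$ for all $x\ge 0$ under the assumed step-size bound.
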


\begin{proof}
    The proof is analogous to that of \cite[][Lemma F.3]{montenegro2024learning}.
\end{proof}

From now on, we focus on the case in which $r_0 \ge \overline{\rho}$, since, as we shall see later, the opposite case is irrelevant for the convergence guarantees.
We now consider two cases: $b=0$ and $b>0$.

\subsection{Analysis when $b=0$}
From the policy optimization perspective, this case corresponds to the one in which the gradients are exact (no variance). Recall that here $\overline{\rho}=0$. We have the following convergence result.

\begin{lemma}\label{lemma:recVar0}
    If $a \le \frac{1}{\phi r_0^{\phi-1}}$, $r_0 \ge 0$, and $b=0$ it holds that:
    \begin{align}
        \rho_{k+1} \le \begin{cases}
            (1-a)^{k+1} r_0 & \text{if } \phi = 1 \\
             \min\left\{r_0,((\phi-1)a(k+1))^{-\frac{1}{\phi-1}}\right\} &\text{if } \phi \in (1,2]
        \end{cases}.
    \end{align}
\end{lemma}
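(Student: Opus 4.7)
The plan is to split into the two cases $\phi=1$ and $\phi\in(1,2]$ and, in each, exploit the monotonicity results already established for the helper sequence. Since $r_0\ge 0 = \overline{\rho}$ and $a\le 1/(\phi r_0^{\phi-1})$, Lemma~\ref{lem:aux_2} gives $0\le \rho_{k+1}\le \rho_k\le r_0$ for every $k\ge 0$. In particular $\max\{0,\rho_k\}=\rho_k$, so the recurrence collapses to $\rho_{k+1}=\rho_k-a\rho_k^{\phi}$, which is the clean form to analyze.

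For $\phi=1$ the recurrence becomes the linear contraction $\rho_{k+1}=(1-a)\rho_k$, with $1-a\in[0,1]$ guaranteed by $a\le 1$ (a consequence of the assumption for $\phi=1$). A trivial induction from $\rho_0=r_0$ yields $\rho_{k+1}=(1-a)^{k+1}r_0$, which is the first branch of the claim.

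For $\phi\in(1,2]$ the bound $\rho_{k+1}\le r_0$ follows at once from the monotonicity above. The bound of order $k^{-1/(\phi-1)}$ is obtained by passing to the inverse power $\rho_k^{-(\phi-1)}$. Writing $\rho_{k+1}=\rho_k(1-a\rho_k^{\phi-1})$, the key analytic step is the elementary inequality $(1-x)^{-(\phi-1)}\ge 1+(\phi-1)x$, valid for $x\in[0,1)$ and $\phi-1\in(0,1]$, applied with $x=a\rho_k^{\phi-1}$. This yields the telescoping relation
\[
\rho_{k+1}^{-(\phi-1)}\ \ge\ \rho_k^{-(\phi-1)}+(\phi-1)a.
\]
Unrolling from $k=0$ and dropping the non-negative term $\rho_0^{-(\phi-1)}$ gives $\rho_{k+1}^{-(\phi-1)}\ge (\phi-1)a(k+1)$, and inverting produces $\rho_{k+1}\le ((\phi-1)a(k+1))^{-1/(\phi-1)}$. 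Combining with $\rho_{k+1}\le r_0$ delivers the $\min$ expression in the statement.

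The only delicate point is justifying that $x=a\rho_k^{\phi-1}\in[0,1]$ so that the inequality $(1-x)^{-(\phi-1)}\ge 1+(\phi-1)x$ actually applies. This follows by chaining the standing hypothesis $a\le 1/(\phi r_0^{\phi-1})$ with the monotone decrease $\rho_k\le r_0$ from Lemma~\ref{lem:aux_2}, giving $a\rho_k^{\phi-1}\le 1/\phi\le 1$. Once this range condition is in place the rest is a routine telescoping argument, so no serious obstacle is expected beyond bookkeeping.
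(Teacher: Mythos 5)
Your proof is correct, but it takes a genuinely different route from the paper for the case $\phi\in(1,2]$. The paper proves the bound $\rho_{k+1}\le((\phi-1)a(k+1))^{-1/(\phi-1)}$ by direct induction on the explicit formula: it substitutes the inductive hypothesis into $\rho_k-a\rho_k^{\phi}$, isolates a residual term $(*)$, and shows $(*)\le 0$ via the computation $\sup_{k\ge1}\bigl(k-(k+1)(k/(k+1))^{\phi/(\phi-1)}\bigr)=\tfrac{1}{\phi-1}$. You instead pass to the reciprocal power and telescope: from $\rho_{k+1}=\rho_k(1-a\rho_k^{\phi-1})$ and the Bernoulli-type inequality $(1-x)^{-(\phi-1)}\ge 1+(\phi-1)x$ (valid here since $x=a\rho_k^{\phi-1}\le a r_0^{\phi-1}\le 1/\phi<1$, as you verify), you get $\rho_{k+1}^{-(\phi-1)}\ge\rho_k^{-(\phi-1)}+(\phi-1)a$, and summing gives the claim. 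Your argument is arguably cleaner — it avoids the supremum computation, replaces the induction with a one-line telescope, and in fact yields the slightly sharper bound $\rho_{k+1}\le\bigl(r_0^{-(\phi-1)}+(\phi-1)a(k+1)\bigr)^{-1/(\phi-1)}$, which dominates both entries of the $\min$ simultaneously. The only bookkeeping item you should make explicit is positivity of $\rho_k$ before taking reciprocals: if $r_0=0$ the sequence is identically zero and the bound is trivial, and if $r_0>0$ then $\rho_{k+1}=\rho_k(1-a\rho_k^{\phi-1})>0$ follows inductively from $a\rho_k^{\phi-1}\le 1/\phi<1$, so the reciprocal is always defined. The $\phi=1$ case and the $\rho_{k+1}\le r_0$ branch match the paper's treatment via Lemma~\ref{lem:aux_2}.
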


\begin{proof}
Since $r_0 \ge 0 = \overline{\rho}$, from Lemma~\ref{lem:aux_2}, we know that $\rho_k \ge 0$ and, thus, $\max\{0,\rho_{k}\} = \rho_k$. For $\phi = 1$, we have:
\begin{align}
    \rho_{k+1} = \rho_k - a \rho_k = (1-a)\rho_k = (1-a)^{k+1} \rho_0=(1-a)^{k+1} r_0.
\end{align}
For $\phi \in (1,2]$, we have:
\begin{align}
    \rho_{k+1} = \rho_k - a \rho_k^\phi.
\end{align}
We proceed by induction. For $k=0$, the statement hold since $\rho_0=r_0$ and $r_0 \le (\phi a)^{-\frac{1}{\psi-1}} \le ((\phi-1) a)^{-\frac{1}{\psi-1}}$ from the condition on the learning rate. Suppose the thesis holds for $j\le k$, we prove it for $k+1$. $\rho_{k+1} \le r_0$ by monotonicity, and, from the inductive hypothesis:
\begin{align}
    \rho_{k+1} = \rho_k - a \rho_k^\phi & \le (\phi a k)^{-\frac{1}{\phi-1}} - a  (\phi a k)^{-\frac{\phi}{\phi-1}} \\
    & =  \underbrace{(\phi a k)^{-\frac{1}{\phi-1}} - (\phi a (k+1))^{-\frac{1}{\phi-1}} - a  (\phi a k)^{-\frac{\phi}{\phi-1}}}_{(*)} + (\phi a (k+1))^{-\frac{1}{\phi-1}}.
\end{align}
We now prove that $(*)$ is non-positive:
\begin{align}
    (*) & = ((\phi-1) a k)^{-\frac{1}{\phi-1}} - ((\phi-1) a (k+1))^{-\frac{1}{\phi-1}} - a  ((\phi-1) a k)^{-\frac{\phi}{\phi-1}} \\
    & = ((\phi-1) a
    )^{-\frac{1}{\phi-1}} k^{-\frac{\phi}{\phi-1}} \underbrace{\left( k - (k+1) \left(\frac{k}{k+1} \right)^{\frac{\phi}{\phi-1}} \right)}_{\le \frac{1}{\phi-1}} - a^{-\frac{1}{\phi-1}} ((\phi-1) k)^{-\frac{\phi}{\phi-1}} \\
    & \le a^{-\frac{1}{\phi-1}}  k^{-\frac{\phi}{\phi-1}} (\phi-1)^{-\frac{1}{\phi-1}} \left( \frac{1}{\phi-1} - \frac{1}{\phi-1} \right) \le 0,
\end{align}
having observed that:
\begin{align}
    \sup_{k \ge 1} \left( k - (k+1) \left(\frac{k}{k+1} \right)^{\frac{\phi}{\phi-1}} \right) = \lim_{k \rightarrow + \infty} \left( k - (k+1) \left(\frac{k}{k+1} \right)^{\frac{\phi}{\phi-1}} \right) = \frac{1}{\phi-1}.
\end{align}

\end{proof}

\subsection{Analysis for $b>0$}
From the policy optimization perspective, this corresponds to the case in which the gradients are estimated, i.e., the variance is positive. In this case, we proceed considering the helper sequence:
\begin{align}
     \begin{cases}
        \eta_{0}= \rho_0 \\
        \eta_{k+1} =   \left(1- a\overline{\rho}^{\phi-1} \right) \eta_k + b & \text{if } k \ge 0
    \end{cases}.
\end{align}

We show that the sequence $\eta_{k}$ upper bounds $\rho_k$ when $\rho_0 = r_0 \ge \overline{\rho}$.
\begin{lemma}
    If $r_0> \overline{\rho}$ and $a \le \frac{1}{\phi r_0^{\phi-1}}$, then, for every $k \ge 0$, it holds that $\eta_k \ge \rho_k$.
\end{lemma}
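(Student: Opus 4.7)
The plan is to argue by induction on $k$, exploiting the fact that $\rho_k$ is trapped above $\overline{\rho}$ (from Lemma~\ref{lem:aux_2}) so the $\max\{0,\cdot\}$ in its update can be dropped, and then to compare the two recurrences term by term.

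\textbf{Setup.} The base case $k=0$ is immediate since $\eta_0 = \rho_0$ by definition. For the inductive step, assume $\eta_k \ge \rho_k$. Because $r_0 > \overline{\rho}$ and $a \le \frac{1}{\phi r_0^{\phi-1}}$, Lemma~\ref{lem:aux_2} yields $\rho_k \ge \overline{\rho} > 0$ for all $k$, so $\max\{0,\rho_k\}^\phi = \rho_k^\phi$. Thus the two recurrences simplify to
\begin{align*}
    \rho_{k+1} &= \rho_k - a \rho_k^{\phi} + b, \\
    \eta_{k+1} &= \eta_k - a \overline{\rho}^{\phi-1} \eta_k + b,
\end{align*}
where I have used $a \overline{\rho}^{\phi} = b$ to rewrite the $\eta$-recurrence in a parallel form.

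\textbf{Key step.} Subtracting, and letting $\delta_k \coloneqq \eta_k - \rho_k \ge 0$, I would compute
\begin{align*}
    \eta_{k+1} - \rho_{k+1} = \delta_k + a\bigl( \rho_k^{\phi} - \overline{\rho}^{\phi-1} \eta_k \bigr) = \delta_k\bigl(1 - a\overline{\rho}^{\phi-1}\bigr) + a\bigl(\rho_k^{\phi} - \overline{\rho}^{\phi-1} \rho_k\bigr).
\end{align*}
The second bracket is non-negative because $\rho_k \ge \overline{\rho}$ implies $\rho_k^{\phi-1} \ge \overline{\rho}^{\phi-1}$ (using $\phi \ge 1$), so $\rho_k^\phi \ge \overline{\rho}^{\phi-1}\rho_k$. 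The first bracket is non-negative provided $a\overline{\rho}^{\phi-1} \le 1$, which follows from the hypothesis on $a$: since $\overline{\rho} \le r_0$,
\begin{align*}
    a \overline{\rho}^{\phi-1} \le a r_0^{\phi-1} \le \tfrac{1}{\phi} \le 1.
\end{align*}
Hence $\eta_{k+1} - \rho_{k+1} \ge 0$, closing the induction.

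\textbf{Expected obstacle.} The only non-routine point is recognising the algebraic rearrangement that isolates the factor $(1 - a\overline{\rho}^{\phi-1})$ multiplying $\delta_k$ while leaving a second term of the form $\rho_k^\phi - \overline{\rho}^{\phi-1}\rho_k$; once this is in place, monotonicity of $x \mapsto x^{\phi-1}$ on $[\overline{\rho},\infty)$ and the learning-rate condition $a \le (\phi r_0^{\phi-1})^{-1}$ both come into play naturally, and no additional calculations are needed.
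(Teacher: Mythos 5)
Your proof is correct. The paper itself gives no argument here---it only defers to Lemma~F.4 of \citet{montenegro2024learning}---so your write-up actually supplies the missing details. The induction is sound: Lemma~\ref{lem:aux_2} (applicable precisely under the stated hypotheses $r_0>\overline{\rho}$ and $a\le \frac{1}{\phi r_0^{\phi-1}}$) gives $\rho_k\ge\overline{\rho}\ge 0$, which legitimizes dropping the $\max\{0,\cdot\}$; the identity $\eta_{k+1}-\rho_{k+1}=\bigl(1-a\overline{\rho}^{\phi-1}\bigr)\delta_k+a\rho_k\bigl(\rho_k^{\phi-1}-\overline{\rho}^{\phi-1}\bigr)$ is the right decomposition, with the first term nonnegative since $a\overline{\rho}^{\phi-1}\le a r_0^{\phi-1}\le 1/\phi\le 1$ and the second nonnegative by monotonicity of $t\mapsto t^{\phi-1}$ on $[0,\infty)$ for $\phi\in[1,2]$. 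This is the standard comparison argument one would expect the cited lemma to use, so there is nothing further to reconcile.
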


\begin{proof}
    The proof is analogous to that of \cite[][Lemma F.4]{montenegro2024learning}.
\end{proof}

Thus, we can provide the convergence guarantee.

\begin{lemma}\label{lemma:recVg}
    If $a \le \frac{1}{\phi r_0^{\phi-1}}$, $r_0 \ge 0$, and $b>0$ it holds that:
    \begin{align}
        \eta_{k+1} \le \left(1 - b^{1-\frac{1}{\phi}}a^{\frac{1}{\phi}}  \right)^{k+1} + \left(\frac{b}{a}\right)^{\frac{1}{\phi}}.
    \end{align}
\end{lemma}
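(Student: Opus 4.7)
\textbf{Proof plan for Lemma~\ref{lemma:recVg}.} The recurrence
\[
\eta_{k+1} = \bigl(1 - a\overline{\rho}^{\phi-1}\bigr)\eta_{k} + b
\]
is affine in $\eta_k$, so the main idea is simply to unroll it in closed form and then re-express the two constants $a\overline{\rho}^{\phi-1}$ and $b/(a\overline{\rho}^{\phi-1})$ using the identity $\overline{\rho}^{\phi} = b/a$ (which is exactly the defining equation of the fixed point $\overline{\rho}$). No clever trick is needed; the work is essentially bookkeeping.

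\textbf{Step 1: Unrolling.} Let $q \coloneqq 1 - a\overline{\rho}^{\phi-1}$. The plan is to show first that $q \in [0,1)$, which follows from the hypothesis $a \le 1/(\phi r_0^{\phi-1})$ together with $r_0 \ge \overline{\rho}$ (the only regime of interest, since otherwise Lemma~\ref{lem:aux_2} already gives convergence from below), giving $a\overline{\rho}^{\phi-1} \le 1/\phi \le 1$. Then a standard induction on $k$ yields
\[
\eta_{k+1} = q^{\,k+1}\eta_{0} + b\sum_{j=0}^{k} q^{j} = q^{\,k+1}\eta_{0} + b\cdot\frac{1-q^{\,k+1}}{1-q}.
\]

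\textbf{Step 2: Simplifying the constants.} By definition of $\overline{\rho}$,
\[
\frac{b}{1-q} = \frac{b}{a\overline{\rho}^{\phi-1}} = \frac{b\,\overline{\rho}}{a\,\overline{\rho}^{\phi}} = \frac{b\,\overline{\rho}}{b} = \overline{\rho} = \Bigl(\tfrac{b}{a}\Bigr)^{1/\phi},
\]
and
\[
1-q = a\overline{\rho}^{\phi-1} = a\cdot\Bigl(\tfrac{b}{a}\Bigr)^{(\phi-1)/\phi} = a^{1/\phi}\,b^{1-1/\phi}.
\]
Substituting both identities into the closed form gives
\[
\eta_{k+1} = \bigl(1 - a^{1/\phi}b^{1-1/\phi}\bigr)^{k+1}\eta_{0} + \Bigl(\tfrac{b}{a}\Bigr)^{1/\phi}\bigl(1 - (1 - a^{1/\phi}b^{1-1/\phi})^{k+1}\bigr),
\]
and bounding the last factor by $1$ (legitimate because $q\in[0,1)$) yields the claimed inequality.

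\textbf{Main obstacle.} There is no deep obstacle here; the only point that requires a little care is certifying $q \in [0,1)$ so that the geometric-series formula applies and the $(1-q^{k+1})$ factor can be dropped without flipping the inequality. This is where the hypothesis $a \le 1/(\phi r_0^{\phi-1})$ is used, via the earlier observation (from Lemma~\ref{lem:aux_2} and the discussion preceding the statement) that the only regime in which $\eta_k$ is needed is $r_0 \ge \overline{\rho}$, so that $\overline{\rho}^{\phi-1} \le r_0^{\phi-1}$ and hence $a\overline{\rho}^{\phi-1} \le 1/\phi$. Everything else is direct algebra using $\overline{\rho}^{\phi} = b/a$.
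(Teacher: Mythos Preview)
Your proposal is correct and follows essentially the same approach as the paper's own proof: unroll the affine recurrence $\eta_{k+1} = q\eta_k + b$ with $q = 1 - a\overline{\rho}^{\phi-1}$, bound the resulting geometric sum, and simplify the constants via $\overline{\rho} = (b/a)^{1/\phi}$. You are in fact slightly more careful than the paper, both in retaining the $\eta_0$ factor on the first term (the paper's stated bound drops it, evidently a typo, since the factor reappears as $\widetilde{P}_0(\chi)$ when the lemma is invoked in the proof of Theorem~\ref{thr:convergencePot}) and in explicitly checking $q \in [0,1)$ before summing the series.
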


\begin{proof}
    By unrolling the recursion:
    \begin{align}
        \eta_{k+1} & =   \left(1- a\overline{\rho}^{\phi-1} \right) \eta_k + b \\
        & = \left(1 - a\overline{\rho}^{\phi-1} \right)^{k+1} r_0 + b \sum_{j=0}^k\left(1 - a\overline{\rho}^{\phi-1} \right)^j \\
        & \le \left(1 - a \overline{\rho}^{\phi-1} \right)^{k+1} r_0 + b \sum_{j=0}^{+\infty}\left(1 - a\overline{\rho}^{\phi-1} \right)^j \\
        & = \left(1 - b^{1-\frac{1}{\phi}}a^{\frac{1}{\phi}}  \right)^{k+1} + \frac{b}{a\overline{\rho}^{\phi-1}} \\
        & = \left(1 - b^{1-\frac{1}{\phi}}a^{\frac{1}{\phi}}  \right)^{k+1} + \left(\frac{b}{a}\right)^{\frac{1}{\phi}}.
    \end{align}
\end{proof}
\section{Experimental Details} \label{apx:exp}

\subsection{Employed Policies and Hyperpolicies}

\paragraph{Linear Gaussian Policy.}
A \emph{linear parametric gaussian} policy $\pi_{\vtheta}: \cS \times \cA \to \Delta(\cA)$ with variance $\sigma^2$ samples the actions as $a_t \sim \mathcal{N}(\vtheta^\top \bm{s}_t, \sigma^2 I_{d_{\cS}})$, where $\bm{s}_t$ is the observed state at time $t$ and $\vtheta$ is the parameter vector.

\paragraph{Tabular Softmax Policy.}
A \emph{tabular softmax} policy $\pi_{\vtheta}: \cS \times \cA \to \Delta(\cA)$ with a constant temperature $\tau$ is such that:
\begin{align*}
    \pi_{\vtheta}(\bm{a}_{j} | \bm{s}_{i}) = \frac{\exp\left( \frac{\vtheta_{i,j}}{\tau} \right)}{\sum_{z=1}^{|\cA|} \exp\left( \frac{\vtheta_{i,z}}{\tau} \right)},
\end{align*}
where $\vtheta_{i,j}$ is the parameter associated with the $i$-th state and the $j$-th action. Notice that the total number of parameters for this kind of policy is $|\cS| |\cA|$.

\paragraph{Linear Deterministic Policy.}
A \emph{linear parametric deterministic} policy $\mu_{\vtheta}: \cS \to \cA$ samples the actions as $\bm{a}_t = \vtheta^\top \bm{s}_t$, where $\bm{s}_t$ is the observed state at time $t$ and $\vtheta$ is the parameter vector.

\paragraph{Gaussian Hyperpolicy.}
A \emph{parametric gaussian} hyperpolicy $\nu_{\vrho} \in \Delta(\Theta)$ with variance $\sigma^2$ samples the parameters $\vtheta$ for the underlying generic parametric policy $\pi_{\vtheta}$ as $\vtheta_{t} \sim \mathcal{N}(\vrho, \sigma^2 I_{\dt})$, where $\vrho$ is the parameter vector for the hyperpolicy.

\subsection{Environments}
\paragraph{Discrete Grid World with Walls.}
Discrete Grid World with Walls (DGWW) is a simple discrete environment we employed to compare \cpgae against the sample-based versions of NPG-PD~\citep[][Appendix H]{ding2020natural} and RPG-PD~\citep[][Appendix C.9]{ding2024last}.
DGWW is a grid-like bidimensional environment in which an agent can assume only integer coordinate positions and in which an agent can play four actions stating whether to go up, right, left, or down.
The goal is to reach the center of the grid performing the minimum amount of steps, begin the initial state uniformly sampled among the four vertices of the grid.
The agent is rewarded negatively and proportionally to its distance from the center, where the reward is $0$.
Around the goal state there is a \quotes{U-shaped} obstacle with an opening on the top side.
In particular, when the agent lands in a state in which the wall is present, it receives a cost of $1$, otherwise the cost signal is always equal to $0$.
In our experiments, we employed a DGWW environment of such a kind, with $|\cS| = 49$, \ie with each dimension with length equal to $7$.

\paragraph{Linear Quadratic Regulator with Costs.}
The Linear Quadratic Regulator~\citep[LQR,][]{anderson2007optimal} is a continuous environment we employed in the regularization sensitivity study of \cpgae and \cpgpe, and in the comparison among the same algorithms against the sample-based version of NPG-PD2~\cite[][Algorithm 1]{ding2022convergence} and its ridge-regularized version RPG-PD2 (not provided by the authors, but designed by us).
LQR is a dynamical system governed by the following state evolution:
\begin{align*}
    \bm{s}_{t+1} = A \bm{s}_t + B \bm{a}_{t},
\end{align*}
where $A \in \mR^{d_{\cS} \times d_{\cS}}$ and $B \in \mR^{d_{\cS} \times d_{\cA}}$.

In the standard version of the environment, the reward is computed at each step as:
\begin{align*}
    r_t = - \bm{s}_t^\top R \bm{s}_t - \bm{a}_t^\top Q \bm{a}_t,
\end{align*}
where $R \in \mR^{d_{\cS} \times d_{\cS}}$ and $Q \in \mR^{d_{\cA} \times d_{\cA}}$.

We modified this version of the LQR environment introducing costs.
In particular, in our \emph{CostLQR}, the state evolution is treated as in the original case, while the reward at step $t$ is computed as:
\begin{align*}
    r_t = - \bm{s}_t^\top R \bm{s}_t,
\end{align*}
where $R \in \mR^{d_{\cS} \times d_{\cS}}$.
Moreover, we added a cost signal $c$ which is computed as follows at every time step $t$:
\begin{align*}
    c_t = \bm{a}_t^\top Q \bm{a}_t,
\end{align*}
where $Q \in \mR^{d_{\cA} \times d_{\cA}}$.

In our experiments, we consider a \emph{CostLQR} environment whose main characteristics are reported in Table~\ref{tab:env_spec}.

Additionally, we considered a uniform initial state distribution in $[-3,3]$ and the following matrices:
\begin{align*}
    A = B = 0.9 \left[\begin{matrix} 1 & 0 \\ 0 & 1\end{matrix}\right], \qquad Q = \left[\begin{matrix} 0.9 & 0 \\ 0 & 0.1\end{matrix}\right], \qquad R = \left[\begin{matrix} 0.1 & 0 \\ 0 & 0.9\end{matrix}\right].
\end{align*}

\paragraph{MuJoCo with Costs.}
For our experiments on risk minimization, we utilized environments from the MuJoCo control suite~\citep{todorov2012mujoco}, which offers a variety of continuous control environments. To tailor these environments to our specific requirements, we introduced a cost function that represents the energy associated with the control actions. In standard MuJoCo environments, a portion of the reward is typically calculated as the cost of the control action, which is proportional to the deviation of the chosen action from predefined action bounds.
In our MuJoCo modification, at each time step, we make the environment return a cost computed as:
\begin{align*}
    \left\| \bm{a}_t - \min\left\{ \max\left\{\bm{a}_t, a_{\min}\right\}, a_{\max} \right\}\right\|_2,
\end{align*}
where $a_{\min}$ and $a_{\max}$ are, respectively, the bounds for the minimum and maximum value for each component of the action vector.
Then, the action $\min\left\{ \max\left\{\bm{a}_t, a_{\min}\right\}, a_{\max} \right\}$ is passed to the environment.
In our experiment, we consider \emph{Swimmer-v4} and \emph{Hopper-v4} MuJoCo environments, whose main features are summarized in Table~\ref{tab:env_spec}.

\paragraph{RobotWorld.}
For the comparison among \cpg, AD-PGPD~\citep{rozada2024deterministic}, and PGDual~\citep{zaho2021pgdual,brunke2021pgdual}, we employed the \emph{Robot World} environment~\citep{rozada2024deterministic}. This environment is a modification of the \emph{CostLQR} one in which both reward and cost functions are given by the following quadratic functions:
\begin{align*}
    r(\bm{s}, \bm{a}) &=  \left<G_1; \left| \bm{s} \right|\right> + \left<R_1; \left| \bm{a} \right|\right> - \frac{1}{2} \|\bm{a}\|^2_2 , \\
    c(\bm{s}, \bm{a}) &=  \left< G_2; \vs^{2} \right> + \left< R_2; \va^{2} \right>,
\end{align*}
where for every $\bm{x} \in \mR^{n}$ we define $|\bm{x}| \coloneqq (|x_{1}|,\ldots,|x_{n}|)^{\top}$ and $\bm{x}^2 \coloneqq (x_{1}^2,\ldots,x_{n}^2)^{\top}$. In the setting we considered for our experiments, we employed the following values for $G_{1}, G_{2}, R_{1}$, and $R_{2}$:

\begin{align*}
    G_1 = - (1, 1, 0.001, 0.001), \quad 
    G_2 = - (0.001, 0.001, 1, 1), \quad \text{and} \quad R_1 = R_2 = - (0.01, 0.01).
\end{align*}

Furthermore, differently from the usual LQR environment, in \emph{RobotWorld} the agent is allowed to control both velocity and acceleration of the agent.

\begin{table}[t]
    \centering
    
    \renewcommand{\arraystretch}{1.2}
    \begin{tabular}{|c||c|c|c|c|}
         \hline
         \rowcolor{gray!20}
         \textbf{Environment} & \textbf{State Dim.} $d_{\cS}$ & \textbf{Action Dim.} $d_{\cA}$ & \textbf{Action Range} $[a_{\min}, a_{\max}]$ & \textbf{State Range} $[s_{\min}, s_{\max}]$\\
         \hline
         \hline
         CostLQR & $2$ & $2$ & $(-\infty,+\infty)$ & $(-\infty,+\infty)$\\
         \hline
         Swimmer-v4 & $8$ & $2$ & $[-1,1]$ & $(-\infty,+\infty)$\\
         \hline
         Hopper-v4 & $11$ & $3$ & $[-1,1]$ & $(-\infty,+\infty)$\\
         \hline
         RobotWorld & $4$ & $2$ & $(-\infty,+\infty)$ & $(-\infty,+\infty)$\\
         \hline
    \end{tabular}
    \caption{Main features of \emph{CostLQR}, \emph{Swimmer-v4}, \emph{Hopper-v4}, and \emph{RobotWorld}.}
    \label{tab:env_spec}
\end{table}

\subsection{Experimental Details}
\begin{table}[t]
    \centering
    \renewcommand{\arraystretch}{1.2}
    \begin{tabular}{|l|l|}
        \hline
        \rowcolor{gray!20}
        \multicolumn{2}{|c|}{\bfseries Details for the Comparison in \emph{DGWW} Experiment} \\
        \hline \hline
        Environment & \emph{DGWW} \\
        Horizon & $T=100$ \\
        \hline
        Policy & Tabular Softmax \\
        \hline
        Constraint Threshold & $b=0.2$ \\
        \hline
        Iterations & $K=3000$ \\
        Batch Size & $N=10$ \\
        \hline
        Learning Rates \cpgae & $\zeta_{\vtheta} = 0.01$ and $\zeta_{\vlambda} = 0.1$ \\
        Learning Rates NPG-PD & $\zeta_{\vtheta} = 0.01$ and $\zeta_{\vlambda} = 0.1$ \\
        Learning Rates RPG-PD & $\zeta_{\vtheta} = 0.01$ and $\zeta_{\vlambda} = 0.01$ \\
        \hline
        Regularization \cpgae & $\omega = 10^{-4}$ \\
        Regularization RPG-PD & $\omega = 10^{-4}$ \\
        \hline
    \end{tabular}

    \vspace{0.2cm}

    \caption{Details for the comparison of \cpgae against NPG-PD and RPG-PD in DGWW (Section~\ref{subsec:comparison}).}
    \label{tab:exp-dgww}
\end{table}

\begin{table}[t]
    \centering
    \renewcommand{\arraystretch}{1.2}
    \begin{tabular}{|l|l|}
        \hline
        \rowcolor{gray!20}
        \multicolumn{2}{|c|}{\bfseries Details for the Comparison in \emph{CostLQR} Experiment} \\
        \hline \hline
        Environment & Bidimensional \emph{CostLQR} \\
        Horizon & $T=50$ \\
        \hline
        (Hyper)Policy & Linear Gaussian with $\sigma^{2} = 10^{-3}$ \\
        \hline
        Constraint Threshold & $b=0.9$ \\
        \hline
        Iterations (\cpgpe and \cpgae) & $K=6000$ \\
        Iterations (NPG-PD2 and RPG-PD2) & $K=1000$ \\
        Batch Size (\cpgpe and \cpgae) & $N=100$ \\
        Batch Size (NPG-PD2 and RPG-PD2) & $N=600$ \\
        \hline
        Learning Rate (Adam) \cpgpe & $\zeta_{\vrho,0} = 10^{-3}$ and $\zeta_{\vlambda,0} = 10^{-2}$ \\
        Learning Rates (Adam) \cpgae & $\zeta_{\vtheta,0} = 10^{-3}$ and $\zeta_{\vlambda,0} = 10^{-2}$ \\
        Learning Rates (Adam) NPG-PD2 & $\zeta_{\vtheta,0} = 3 \cdot 10^{-3}$ and $\zeta_{\vlambda,0} = 10^{-2}$ \\
        Learning Rates (Adam) RPG-PD2 & $\zeta_{\vtheta,0} = 3 \cdot 10^{-3}$ and $\zeta_{\vlambda,0} = 10^{-2}$ \\
        \hline
        Regularization \cpgpe & $\omega = 10^{-4}$ \\
        Regularization \cpgae & $\omega = 10^{-4}$ \\
        Regularization RPG-PD2 & $\omega = 10^{-4}$ \\
        \hline
    \end{tabular}

    \vspace{0.2cm}

    \caption{Details for the comparison of \cpgpe and \cpgae against NPG-PD2 and RPG-PD2 in a bidimensional \emph{CostLQR} (Section~\ref{subsec:comparison}).}
    \label{tab:exp-cost-lqr}
\end{table}

\begin{table}[t]
    \centering
    \renewcommand{\arraystretch}{1.2}
    \begin{tabular}{|l|l|}
        \hline
        \rowcolor{gray!20}
        \multicolumn{2}{|c|}{\bfseries Details for the Comparison in \emph{RobotWorld} Experiment} \\
        \hline \hline
        Environment & \emph{RobotWorld} \\
        Horizon & $T=100$ \\
        \hline
        Hyperpolicy (\cpgpe) & Linear Gaussian with $\sigma^{2} = 10^{-6}$ \\
        Policy (\cpgae) & Linear Gaussian with $\sigma^{2} = 5 \cdot 10^{-2}$ \\
	Policy (AD-PGPD and PGDual) & Linear Deterministic \\
        \hline
        Constraint Threshold & $b=1000$ \\
        \hline
        Iterations (\cpgpe and \cpgae) & $K=10^{3}$ \\
        Iterations (AD-PGPD and PGDual) & $K=4 \cdot 10^{4}$ \\
        Batch Size (\cpgpe and \cpgae) & $N=100$ \\
        Batch Size (AD-PGPD and PGDual) & $N=400$ \\
        \hline
        Learning Rates (Adam) \cpgpe & $\zeta_{\vrho,0} = 5 \cdot 10^{-6}$ and $\zeta_{\vlambda,0} = 5 \cdot10^{-3}$ \\
        Learning Rates (Adam) \cpgae & $\zeta_{\vtheta,0} = 5 \cdot 10^{-6}$ and $\zeta_{\vlambda,0} = 10^{-4}$ \\
        Learning Rates (Adam) AD-PGPD & $\zeta_{\vtheta,0} = 10^{-5}$ and $\zeta_{\vlambda,0} = 10^{-5}$ \\
        Learning Rates (Adam) PGDual & $\zeta_{\vtheta,0} = 10^{-4}$ and $\zeta_{\vlambda,0} = 10^{-5}$ \\
        \hline
         Regularization (\cpgpe and \cpgae)  & $\omega = 10^{-4}$  \\
        Regularization AD-PGPD & $\omega = 2 \cdot 10^{-1}$ \\
        \hline
    \end{tabular}

    \vspace{0.2cm}

    \caption{Details for the comparison of \cpgpe and \cpgae against AD-PGPD and PGDual in the RobotWorld Environment (Section~\ref{subsec:comparison}).}
    \label{tab:exp-cost-robotworld}
\end{table}

\begin{table}[t]
    \centering
    \renewcommand{\arraystretch}{1.2}
    \begin{tabular}{|l|l|}
        \hline
        \rowcolor{gray!20}
        \multicolumn{2}{|c|}{\bfseries Details for the Deterministic Deployment in \emph{CostSwimmer-v4}} \\
        \hline \hline
        Environment & \emph{CostSwimmer-v4} \\
        Horizon & $T=100$ \\
        \hline
        (Hyper)policy (\cpgpe and \cpgae) & Linear Gaussian with $\sigma^{2} \in \{10^{-2}, 5\cdot10^{-2}, 10^{-1}, 5 \cdot 10^{-1}, 1\}$ \\
       
        \hline
        Constraint Threshold & $b=50$ \\
        \hline
        Iterations (\cpgpe and \cpgae) & $K=3 \cdot 10^{3}$ \\
        Batch Size (\cpgpe and \cpgae) & $N=100$ \\
        \hline
        Learning Rates (Adam) \cpgpe & $\zeta_{\vrho,0} = 10^{-3}$ and $\zeta_{\vlambda,0} = 10^{-2}$ \\
        Learning Rates (Adam) \cpgae & $\zeta_{\vtheta,0} = 10^{-3}$ and $\zeta_{\vlambda,0} = 10^{-2}$ \\
        \hline
         Regularization (\cpgpe and \cpgae)  & $\omega = 10^{-4}$  \\
        \hline
    \end{tabular}

    \vspace{0.2cm}

    \caption{Details for the comparison of \cpgpe and \cpgae in \emph{CostSwimmer-v4} (Section~\ref{subsec:exp_deterministic}).}
    \label{tab:exp-cost-swimmer}
\end{table}

\begin{table}[t]
    \centering
    \renewcommand{\arraystretch}{1.2}
    \begin{tabular}{|l|l|}
        \hline
        \rowcolor{gray!20}
        \multicolumn{2}{|c|}{\bfseries Details for the Deterministic Deployment in \emph{CostHopper-v4}} \\
        \hline \hline
        Environment & \emph{CostHopper-v4} \\
        Horizon & $T=100$ \\
        \hline
        (Hyper)policy (\cpgpe and \cpgae) & Linear Gaussian with $\sigma^{2} \in \{10^{-2}, 5\cdot10^{-2}, 10^{-1}, 5 \cdot 10^{-1}, 1\}$ \\
       
        \hline
        Constraint Threshold & $b=50$ \\
        \hline
        Iterations (\cpgpe and \cpgae) & $K=3 \cdot 10^{3}$ \\
        Batch Size (\cpgpe and \cpgae) & $N=100$ \\
        \hline
        Learning Rates (Adam) \cpgpe & $\zeta_{\vrho,0} = 10^{-2}$ and $\zeta_{\vlambda,0} = 10^{-1}$ \\
        Learning Rates (Adam) \cpgae & $\zeta_{\vtheta,0} = 10^{-2}$ and $\zeta_{\vlambda,0} = 10^{-1}$ \\
        \hline
        Regularization (\cpgpe and \cpgae)  & $\omega = 10^{-4}$  \\
        \hline
    \end{tabular}

    \vspace{0.2cm}

    \caption{Details for the comparison of \cpgpe and \cpgae in \emph{CostHopper-v4} (Section~\ref{subsec:exp_deterministic}).}
    \label{tab:exp-cost-hopper}
\end{table}

\begin{table}[t]
    \centering
    \renewcommand{\arraystretch}{1.2}
    \begin{tabular}{|l|l|}
        \hline
        \rowcolor{gray!20}
        \multicolumn{2}{|c|}{\bfseries Details for the Regularization Sensitivity Study in \emph{CostLQR}} \\
        \hline \hline
        Environment & Bidimensional \emph{CostLQR} \\
        Horizon & $T=50$ \\
        \hline
        (Hyper)Policy & Linear Gaussian with $\sigma^{2} = 10^{-3}$ \\
        \hline
        Constraint Threshold & $b=0.2$ \\
        \hline
        Iterations & $K=10^{4}$ \\
        Batch Size & $N=100$ \\
        \hline
        Learning Rate (Adam) \cpgpe & $\zeta_{\vrho,0} = 10^{-3}$ and $\zeta_{\vlambda,0} = 10^{-2}$ \\
        Learning Rates (Adam) \cpgae & $\zeta_{\vtheta,0} = 10^{-3}$ and $\zeta_{\vlambda,0} = 10^{-2}$ \\
        \hline
        Regularization & $\omega \in \left\{0, 10^{-4}, 10^{-2}\right\} 10^{-4}$ \\
        \hline
    \end{tabular}

    \vspace{0.2cm}

    \caption{Details for the regularization sensitivity study of \cpgpe and \cpgae in a bidimensional \emph{CostLQR} (Section~\ref{subsec:sensitivity}).}
    \label{tab:exp-reg-lqr}
\end{table}

\clearpage

\bibliographystyle{cas-model2-names}
\bibliography{biblio}

\end{document}